\documentclass[twoside]{article}

\usepackage{hyperref}

\usepackage[accepted]{aistats2026}

\usepackage[utf8]{inputenc} %
\usepackage[T1]{fontenc}    %
\usepackage{lmodern}
\usepackage{booktabs}       %
\usepackage{amsfonts}       %
\usepackage{nicefrac}       %
\usepackage{microtype}      %
\usepackage{xcolor}         %
\usepackage{amsthm}
\usepackage{amsmath}
\usepackage{extarrows}
\usepackage{enumitem}
\usepackage{verbatim}
\usepackage{wrapfig}
\usepackage{pifont}
\usepackage[textsize=tiny]{todonotes}
\usepackage{amssymb}
\usepackage{thmtools}
\usepackage{thm-restate}
\usepackage{notations}
\usepackage{placeins}
\usepackage{subcaption}
\usepackage{makecell}
\usepackage{float}
\usepackage{url}
\theoremstyle{plain}
\newtheorem{theorem}{Theorem}[section]
\newtheorem{proposition}[theorem]{Proposition}
\newtheorem{lemma}[theorem]{Lemma}

\theoremstyle{definition}
\newtheorem{definition}[theorem]{Definition}

\newtheorem{example}[theorem]{Example}
\theoremstyle{remark}

\usepackage[capitalize,noabbrev]{cleveref}
\crefname{example}{Example}{Examples}
\crefname{theorem}{Theorem}{Theorems}
\crefname{lemma}{Lemma}{Lemmas}
\Crefname{lemma}{Lemma}{Lemmas}
\crefname{remark}{Remark}{Remarks}
\crefname{proposition}{Proposition}{Propositions} 
\crefname{corollary}{Corollary}{Corollaries}
\crefname{definition}{Definition}{Definitions}
\crefname{assumption}{Assumption}{Assumptions}

\newcommand{\eqd}{\overset{d}{=}}

\usepackage[round]{natbib}

\begin{document}

\twocolumn[

\aistatstitle{Identifiability of Potentially Degenerate Gaussian Mixture Models With Piecewise Affine Mixing}

\aistatsauthor{ Danru Xu \And S\'ebastien Lachapelle \And  Sara Magliacane}

\aistatsaddress{ University of Amsterdam \And  Samsung AI Lab, Montreal \And Saarland University \& \\ University of Amsterdam} ]

\begin{abstract}
  Causal representation learning (CRL) aims to identify the underlying latent variables from high-dimensional observations, even when variables are dependent with each other. We study this problem for latent variables that follow a potentially degenerate Gaussian mixture distribution and that are only observed through the transformation via a piecewise affine mixing function. We provide a series of progressively stronger identifiability results for this challenging setting in which the probability density functions are ill-defined because of the potential degeneracy. For identifiability up to permutation and scaling, we leverage a sparsity regularization on the learned representation. Based on our theoretical results, we propose a two-stage method to estimate the latent variables by enforcing sparsity and Gaussianity in the learned representations. Experiments on synthetic and image data highlight our method’s effectiveness in recovering the ground-truth latent variables.
\end{abstract}

\section{INTRODUCTION}
\label{sec: introduction}

Recovering latent variables
from high-dimensional observations, such as images, videos, or text, is essential for building reliable and interpretable 
models \citep{bengio2013representation}. While traditional approaches, e.g., (nonlinear) ICA \citep{COMON1994,hyvarinen2013independent,hyvarinen2016unsupervised,hyvarinen2017nonlinear}, identify independent or conditionally independent variables, 
in many real-world settings latent variables often exhibit complex dependencies, e.g., because they are causally related.

Causal representation learning (CRL)~\citep{scholkopf2021toward} aims to identify latent variables with arbitrary dependencies from observations. 
Many CRL works consider additional data or information to identify causal variables, e.g., interventions \citep{brehmer2022weakly,lippe2022citris, varici2023score,von2023nonparametric, buchholz2023learning, zhang2023identifiability, ahuja2023interventional}, actions or temporal structure \citep {lippe2023biscuit,lachapelle2022disentanglement, lachapelle2024nonparametric}, multi-view structures \citep{von2021self,ahuja2023multi, yao2023multi,morioka2023causal}, grouping of observations by sparsity patterns \citep{xu2024sparsity}, hierarchical structures \citep{kong2023identification,kong2024learning} or conditions on the causal graphs \citep{zhang2024causal,dong2024on,song2024causal}.

In our case, we do not assume any additional data or information besides the observations, but we instead consider a set of parametric assumptions on the latent variables $\Zb$ and mixing function $\fb$ through which we get the observation $\Xb=\fb(\Zb)$. In particular, we focus on latent variables that follow a mixture of Gaussian distributions with a piecewise affine mixing function.
Similarly to us, \citet{kivva2022identifiability} studies the identifiability of non-degenerate Gaussian Mixture Model (GMM) latent variables with piecewise affine mixing, but to achieve element-wise identifiability they require that all pairs of latent variables must be conditionally independent given an (unobserved) auxiliary variable. 

We achieve the same identifiability result without requiring this auxiliary variable, while extending the setting to latent variables that follow a potentially degenerate Gaussian Mixture Model (pdGMM), i.e., a GMM in which the components can be potentially degenerate Gaussians, but we instead require a sparsity assumption.
This is inspired by the success of sparse (low-rank, i.e., degenerate) representations.
In particular, in many real-world settings, mixtures of low-dimensional subspaces provide a more accurate model than full-rank representations \citep{ldrdd2025}.

We provide a series of progressively stronger identifiability results under reasonable assumptions in this challenging setting, in which the probability density functions are ill-defined and hence  previous results cannot be applied. As a final step, we leverage a sparsity regularization on the learned representation, inspired by multi-task learning \citep{lachapelle2023synergies} and partially observable CRL \citep{xu2024sparsity} to prove identifiability up to permutation, scaling and translation, i.e., that we learn a disentangled representation of the latent variables.
We highlight our contributions:

\begin{itemize}[noitemsep, topsep=0pt]
\item We provide an identifiability result for pdGMMs, showing that the full distribution can be uniquely identified from an open subset that intersects the support of every component (Thm.~\ref{thm:iden-pdgmm-openset}). This is a critical step in our other results, but is also of independent interest due to its proof strategy;
\item We introduce a series of identifiability results for recovering latent variables that follow a pdGMMs distribution from observations (Thm.~\ref{thm:affine-identi-comp}, ~\ref{thm:global-affine-identi}, ~\ref{thm: PD identi}), showing that under progressively stronger assumptions, latent variables can be recovered up to (i) affine transformation within components (ATwC), (ii) global affine transformation (AT), and finally, (iii) permutation and scaling (PS), without interventions or supervision;

\item Finally, we propose a two-stage algorithm that implements these theoretical results and evaluate it on numerical and image datasets.
\end{itemize}

\section{PROBLEM SETTING AND BACKGROUND}
\label{sec: Problem set up}

We consider a setting with latent random variables $\Zb=(Z_1,...,Z_n) \in \Zcal \subseteq \RR^n$ that have arbitrary dependences with each other. Given only a set of \emph{observations} $\Xb \in \Xcal \subseteq \RR^d$ generated by an unknown mixing function $\fb: \Zcal \rightarrow \Xcal$, our goal is to recover the latent variables $\Zb$.
In general, we cannot fully recover these latent variables, but we can identify them up to an equivalence class. We first summarize two types of identifiability presented in previous work \citep{COMON1994, khemakhem2020variational,kivva2022identifiability, lachapelle2023synergies}. 
\begin{definition}
\label{def: idents}
Given an observation $\Xb = \fb(\Zb)$, a learned representation $\gb(\Xb)$ identifies 
$\Zb$ \emph{up to permutation and scaling (PS)} when there exists a permutation matrix $P$ and an invertible
diagonal matrix $D$ %
such that $\gb(\Xb)= PD\Zb$ almost surely. A weaker notion of identifiability is \emph{up to a global affine transformation (AT)}, which instead means that there exists a global invertible \emph{affine transformation} $\hb$ such that $\gb(\Xb)= \hb(\Zb)$ almost surely.
\end{definition} %

In this paper, we assume that the observations $\Xb$ are generated by an injective continuous piecewise affine mixing function $\fb$ (Def.~\ref{def: continue piecewise affine}). We also assume that $\Zb$ follows a \emph{potentially
degenerate Gaussian mixture model} (pdGMM) with unknown parameters, i.e., a Gaussian mixture model that explicitly allows for degenerate components (with singular covariance matrices). Gaussian Mixture Models (GMM) are a popular method in clustering. While the standard definition also allows for degenerate components, in practice there is often the assumption that the components are not degenerate. Since in this paper we focus on this setting, we make this choice explicit by slightly modifying their name and defining them as follows:

\begin{restatable}{definition}{defpdgmm}
\label{def: reduce gmm}
 A variable $\Zb$ follows a potentially degenerate Gaussian mixture model (pdGMM) if its distribution is $ P= \sum_{j=1}^J \lambda_j N(\mub_j, \Sigmab_j)$, where for each component $j\in [J]$ the mean is $\mub_j\in \RR^n$ and the covariance matrix is $\Sigmab_j\in \RR^{n\times n}$, such that determinant $|\Sigmab_j|\ge 0$. We assume that the weights $\lambda_j>0$ and $\sum_{j=1}^J \lambda_j=1$. If each component is distinct, i.e., for every $i \neq j \in [J]$ we have $(\mub_i,\Sigmab_i)\neq (\mub_j,\Sigmab_j)$, then we say that the pdGMM is in a \emph{reduced form}.
 \end{restatable}

Other works have already studied latent variables that follow a non-degenerate GMM distribution and considered the problem of identifying these variables from observation. In particular,
\citet{kivva2022identifiability} provide the identifiability up to AT of non-degenerate GMMs latent variables with piecewise linear mixing functions.  
For a stronger notion of identifiability,  identifiability up to PS, \citet{kivva2022identifiability} require that all pairs of latent variables must be conditionally independent given an auxiliary variable $U$, which is an assumption we will not make. 
Crucially, these results rely on the \emph{analyticity} of the probability density function (pdf) of Gaussian distributions, which does not hold for degenerate Gaussians, since their pdf is no longer well-defined. This means we cannot apply these results directly.

In this challenging setting, we instead leverage results from a different line of work focusing on sparsity
\citep{lachapelle2023synergies,xu2024sparsity} to prove the identifiability up to PS and a constant translation for piecewise affine mixing functions. \citet{lachapelle2023synergies} focus on multi-task learning with sparse task-specific predictors, while \citet{xu2024sparsity} focus on partially observable causal representation learning settings, where each observation only captures a small number of ``active'' latent variables. 
While these works focus on sparsity, their motivation is related to our choice to allow for potential degenerate representations, since sparse (i.e., low-rank) representations can be seen as representations in which some of the components are degenerate.
In many real-world settings we have (nearly)degenerate structures, such as high collinearity or strong dependencies among features, or intrinsically low-rank manifolds. This is typical for high-dimensional data that require large numbers of sparse features such as language models \citep{cunningham2023sparse,gao2024scaling,marks2024sparse}.

\section{IDENTIFIABILITY OF PDGMMS}
\label{sec: theorems}

\begin{figure}[t]
\centering
\includegraphics[width=0.25\textwidth]{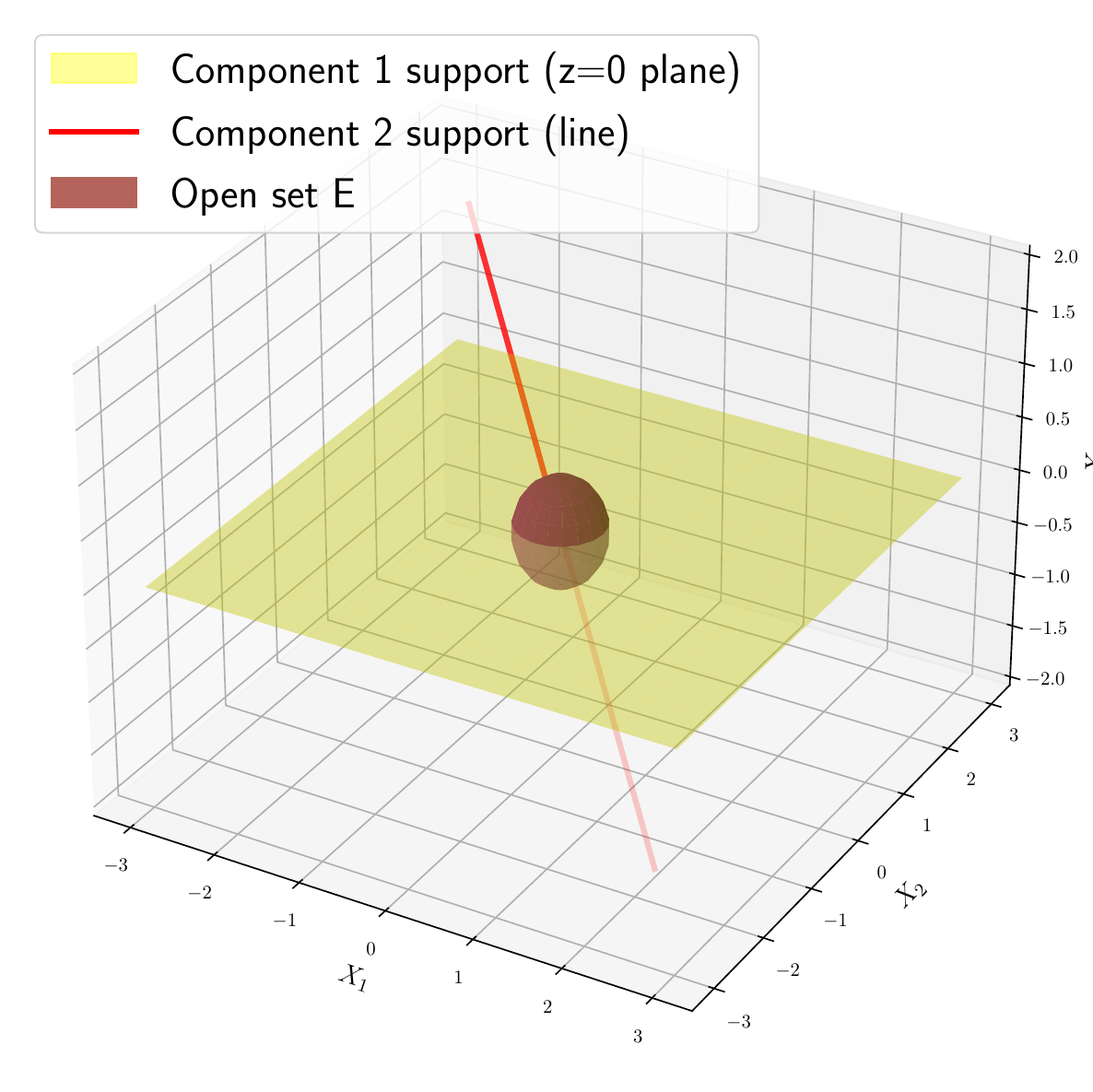}
\caption{
Example of a 3-dimensional pdGMM with 2 components: a rank-2 component with support represented by the yellow plane, and a rank-1 component with support represented by the red line. Knowing the distribution on the open set $E$ (in brown), intersecting both components, is sufficient to identify the pdGMM. 
}
\label{fig:visualization-thm2.3}
\end{figure}

In this section, we provide a series of theoretical results that prove progressively stronger notions of identifiability for latent variables following a potentially degenerate Gaussian Mixture Model (pdGMM) distribution with a piecewise affine mixing function. We provide all proofs in \cref{app: proof}. 
As a first step, we focus on the case in which the pdGMM variables are observed and prove that if two pdGMM variables have the same distribution on an open set, then they have the same distribution over all of $\mathbb{R}^n$, i.e., they are \emph{identified}. The intuition is illustrated in the example in Fig.~\ref{fig:visualization-thm2.3}. In addition to being of independent interest, this result is an important step for our other results.

\subsection{Identifiability from an open subset}
\label{subsec: identi pdGMM from open set}

Identifiability results for (pd)GMMs from the whole domain are well-established \citep{yakowitz1968identifiability}, proving that if two (pd)GMMs are the same on their domain, then they have the same components and weights up to a permutation. These results include the potentially degenerate case. On the other hand, in our quest to prove the identifiability of \emph{latent variables} that follow a pdGMM only from observations, we need an identifiability result for observed pdGMMs on a \emph{subdomain}, and more precisely, an open subset. We highlight this result in this subsection, because it is of independent interest beyond the CRL setting.

Since we cannot rely on equality of pdfs because of potential degeneracy, we first need a definition of when two distributions are considered equal on a subdomain. 

\begin{restatable}%
{definition}{defeqd}
\label{def: eqd}
    Let $\mu: \mathcal{F} \rightarrow [0,1]$ and $\mu': \mathcal{F} \rightarrow [0,1]$ be the probability measures induced by $\Xb$ and $\Xb'$, respectively, where $\mathcal{F}$ is the Borel sigma-algebra over $\mathbb{R}^n$. 
    Given a subset $E \in \Fcal$, we define the restriction of $\mu$ to $E$, denoted by $\mu_E: \Fcal \rightarrow [0,1]$, as 
    $\mu_E(A) := \mu(A \cap E), \forall A \in \mathcal{F}$.
    We say that $\Xb$ and $\Xb'$ are equal in distribution over a subdomain $E$, denoted as $\Xb \eqd \Xb'$ on $E$, when $\mu_E = \mu'_E$.
\end{restatable}

We show that the equality of two pdGMMs on an open subset of $\RR^n$ that intersects the support of every Gaussian component in a pdGMM implies the equality of pdGMMs on the whole domain.

\begin{restatable}[Identifiability of pdGMMs from open set]{theorem}{thmidenpdgmmopenset}
\label{thm:iden-pdgmm-openset}
Consider two pdGMMs in $\RR^n$ in reduced form,
\begin{equation}
    \Xb \sim \sum_{j=1}^J \lambda_j N(\mub_j, \Sigmab_j)\quad \text{and}\quad \Xb' \sim \sum_{j=1}^{J'} \lambda_j' N(\mub_j', \Sigmab_j'), 
\end{equation}
where $rank(\Sigmab_j)>0, \forall j\in[J]$ and $rank(\Sigmab'_j)>0, \forall j\in[J']$.
Let $\Xcal_j$ be the supports of $N(\mub_j, \Sigmab_j)$. If there exists an open set $E$ such that for all $j\in[J]$, $\Xcal_j\cap E\neq \emptyset$ and 
$\Xb\eqd\Xb'$ on $E$, then $\Xb\eqd\Xb'$ on the whole domain, $J=J'$, and there exists a permutation $\pi: [J] \rightarrow [J']$ such that $(\lambda_j,\mub_j, \Sigmab_j)=(\lambda'_{\pi(j)},\mub'_{\pi(j)}, \Sigmab'_{\pi(j)})$. %
\end{restatable}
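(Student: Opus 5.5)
The plan is to sort the components by the affine subspace that carries them, and reduce to the classical non-degenerate case on each subspace separately. For each component $j$, the support $\Xcal_j$ is the affine subspace $V_j := \mub_j + \mathrm{col}(\Sigmab_j)$, of dimension $r_j = rank(\Sigmab_j) \geq 1$. Relative to the $r_j$-dimensional Lebesgue measure $\ell_{V_j}$ on $V_j$, the component $N(\mub_j,\Sigmab_j)$ has a strictly positive, real-analytic density: it is a non-degenerate Gaussian in affine coordinates on $V_j$. The same holds for the components of $\Xb'$, with subspaces $V'_k$.

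The basic geometric fact I will use is the following. If $V$ and $W$ are affine subspaces and $W \not\subseteq V$, then $W\cap V$ is a proper affine subspace of $W$ (or empty), so $\ell_W(W \cap V)=0$. The same argument shows that if $V \not\subseteq W$, then the measure $\ell_W$ restricted to $V$ is singular with respect to $\ell_V$. Together these give a decomposition statement: for a fixed affine subspace $V$, the part of $\mu_E(\cdot\cap V)$ that is absolutely continuous with respect to $\ell_V$ comes only from the components whose support is exactly $V$.
\begin{itemize}
\item Components supported on some $W$ with $V\not\subseteq W$ put zero mass on $V$.
\item Components supported on some $W \supsetneq V$ also put zero mass on $V$.
\item Components supported on some $W\subsetneq V$ are $\ell_V$-singular.
\end{itemize}
Hence, by uniqueness of the Lebesgue decomposition, $\mu_E=\mu'_E$ implies the following for every affine subspace $V$: the sum of the densities of the components of $\Xb$ supported on $V$, weighted by their $\lambda_j$, agrees $\ell_V$-a.e.\ on $V\cap E$ with the corresponding sum for $\Xb'$.

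Now fix a subspace $V = V_j$ arising from $\Xb$. The set $V\cap E$ is nonempty by hypothesis and is relatively open in $V$. On $V$, the two weighted sums are finite Gaussian mixtures with full-rank covariances in $r_j$-dimensional coordinates. They agree a.e.\ on a nonempty open set, and both are real-analytic and continuous, so they agree on all of $V$ by the identity theorem for real-analytic functions. Applying the classical identifiability of finite non-degenerate Gaussian mixtures \citep{yakowitz1968identifiability} on $V$, together with reduced form, gives two conclusions. First, the components of $\Xb$ on $V$ and the components of $\Xb'$ on $V$ are in bijection. Second, matched components have equal weights, means and covariances; the covariances agree as $n\times n$ matrices because both are pushed forward from the same coordinates on $V$. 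If no component of $\Xb'$ lives on $V$, the right-hand side is zero while the left-hand side is positive on $V\cap E$, which is a contradiction; so this case is ruled out.

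Doing this for every distinct $V_j$ produces an injection from $[J]$ into $[J']$ that preserves $(\lambda,\mub,\Sigmab)$. Since $\sum_j\lambda_j=1=\sum_k\lambda'_k$ and all weights are positive, $\Xb'$ can have no unmatched components. This covers in particular components of $\Xb'$ whose support misses $E$, on which the hypothesis gives no direct information. So the injection is a bijection $\pi$, $J=J'$, and $\Xb\eqd\Xb'$ on all of $\RR^n$.

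The step I expect to be the main obstacle is the measure-theoretic decomposition in the first paragraph. It has to be done carefully for subspaces of mixed dimensions, including the case where a lower-dimensional support of $\Xb'$ sits inside $V$. It also requires extracting an honest equality of densities from an equality of measures restricted to $E$. I would handle it by applying the Lebesgue decomposition with respect to $\ell_V$ to both sides of $\mu_E(\cdot\cap V)=\mu'_E(\cdot\cap V)$, and invoking the zero-mass and singularity facts component by component.
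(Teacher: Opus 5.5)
Your proof is correct, and it takes a genuinely different route from the paper's.

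\textbf{The paper's route.} The paper never works intrinsically on the supports. It first proves the result under an extra hypothesis, namely that all supports share a common point. For that case it picks a generic $A\in\RR^{n\times k}$ that preserves the ranks of all components of rank at most $k$ and keeps equal-rank components distinct. It then pushes both restricted mixtures forward by $A^T$ and isolates the top-rank components with a hand-made Lebesgue-decomposition lemma. Next it restricts to a common open image and applies the non-degenerate open-set theorem of Kivva et al.\ in $\RR^{k}$. Differentiating the resulting identities in $A$ over an open set recovers $(\mub_j,\Sigmab_j)$, and the argument peels off ranks from the top down. Finally, it removes the common-point hypothesis by localizing to small balls around points of each $\Xcal_j\cap E$.

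\textbf{Your route.} You take the Lebesgue decomposition with respect to the intrinsic measure $\ell_V$ on each support subspace $V$, so you group components by support rather than by rank. This buys several things:
\begin{itemize}
\item Equal-rank components on different subspaces never interact. You therefore need neither the rank-preserving-projection lemmas, nor the derivative trick and the bookkeeping that a single matching works for every projection $A$, nor the common-intersection reduction.
\item Parameters come out directly, because affine coordinates on $V$ are injective.
\item Your weight-sum argument explicitly disposes of components of $\Xb'$ whose supports miss $E$. The paper's ball-by-ball localization leaves this case implicit.
\end{itemize}
What the paper's route gains is only that it can cite the Euclidean open-set theorem as a black box. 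You redo that step on $V$ via the identity theorem plus Yakowitz--Spragins. When you do, note that the two weighted sums integrate to the same total mass over $V$, so you can normalize before invoking the identifiability result for probability mixtures.

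\textbf{One slip to fix.} Your first bullet should read $W\not\subseteq V$, not $V\not\subseteq W$. As written it is false: a component with $W\subsetneq V$ satisfies $V\not\subseteq W$, yet it carries all its mass on $V$, which contradicts your third bullet. What you actually need, and what is true, is that every component with $W\neq V$ contributes an $\ell_V$-singular (possibly zero) measure to $\mu_E(\cdot\cap V)$.
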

We provide the proof in App.~\ref{app: identi pdgmm openset} and describe a proof sketch here. The main challenge is the ill-defined pdf of pdGMMs.  \cite{kivva2022identifiability} provides similar results for non-degenerate GMM, but their proof strategy relies on the analyticity of the pdf. However, in the degenerate case, the density does not exist on $\RR^n$, so their proof cannot be applied. 
We overcome this issue by projecting a pdGMM in $\RR^n$ into a sequence of lower-dimensional spaces $\RR^k$, where $k\in [n]$, such that each degenerate Gaussian component becomes non-degenerate in at least one image space. This enables us to apply classical identifiability results to recover the parameters of Gaussian components grouped by their ranks \citep{yakowitz1968identifiability,kivva2022identifiability}.

To make this strategy work, the projections must satisfy two key properties. First, they must preserve the rank of each component’s covariance matrix, thereby maintaining the intrinsic structure and information of the components. 
Second, a single projection is insufficient to fully recover the parameters of a pdGMM from lower-dimensional spaces; 
instead, a sufficiently rich set of rank-preserving projections is required to capture all distinguishable parameterizations. We therefore choose to work with a dense subset of the space of projections, and prove that such set of projections always exists.

\subsection{Identifiability of latent pdGMMs}
\label{subsec: latent identi}
In this section, we introduce our main results (Thm.~\ref{thm:affine-identi-comp}, \ref{thm:global-affine-identi}, and \ref{thm: PD identi}), which form a set of increasingly stronger assumptions on the distribution of $\Zb$, each yielding a correspondingly stronger identifiability.
As a first step we introduce a weak notion of identifiability for mixture-based latent variables (i.e., the distribution is a mixture of multiple components), \emph{identifiability up to affine transformations within components (ATwC)}.
This notion resembles the identifiability up to an affine transformation (AT) from literature, but it provides even less guarantees, since the recovered variables identify the ground truth latent variables only up to an affine function $\hb$ within the support of each component, but without any guarantee that $\hb$ is affine across all components, thus sacrificing the interpretability of the learned representations. 
We provide \cref{exam: local affine to global} to illustrate the difference between \emph{ATwC} and \emph{AT}.
Formally, we define identifiability up to ATwC as:

\begin{definition}
\label{def: ident_comp}
The ground truth mixture-based representation vector $\Zb$ is said to be \emph{identified up to affine transformations within components (ATwC)} by a learned representation vector %
$\gb(\Xb)$ when there exists an invertible transformation $\hb$, such that $\gb(\Xb)= \hb(\Zb)$ almost surely and $\hb$ is \emph{affine} on each $\Zcal_j$, where $\Zcal_j$ is the support of $j$-th component.
\end{definition}

To prove identifiability up to ATwC, we introduce a genericity condition on pdGMMs.
We consider pdGMMs for which the supports of components with the same rank $k$ may overlap, but there is at least one common point in the intersection where the components can be distinguished, i.e., %
their projections onto the non-degenerate subspace do not have the same norm.
This rules out special cases where two components are geometrically indistinguishable in their overlapping region, which we prove in App.~\ref{discussion:genericity} is a measure zero set in the parameter space under consideration.
Intuitively, this means that this assumption only rules out highly symmetric or finetuned cases.
We formalize it as follows:

\begin{restatable}[Genericity of pdGMMs]{assumption}{assequalnorm}
\label{ass: no equ norm}
Consider a pdGMM in reduced form $\Zb \sim \sum_{j=1}^J \lambda_j N(\mub_j, \Sigmab_j)$. 
Let $\Jcal_k:=\{j\in [J]: rank(\Sigmab_j)=k)\}$ be the indices of the components with rank $k$. For any $k\leq n$ and any subset $\Jcal_k^0\subseteq \Jcal_k$, suppose that the intersection of the supports of the components $\Zcal_j$ is non-empty, i.e., $\cap_{j\in \Jcal_k^0}\Zcal_j\neq \emptyset$. 
We assume there exists at least one point $\zb_0\in \cap_{j\in \Jcal_k^0}\Zcal_j$ s.t. the Mahalanobis distance \footnote{Mahalanobis distance of a point $\zb_0$ from a distribution $N(\mub_j, \Sigmab_j)$ is $\sqrt{(\zb_0-\mub_j)^T\Sigmab_j^{-1}(\zb_0-\mub_j)}$. When $\Sigmab_j$ is singular, we use the Moore-Penrose inverse for $\Sigmab_j^{-1}$} of $\zb_0$ to the distributions of any two distinct Gaussian components $i,j\in \Jcal_k^0$ is not the same.

\end{restatable}

Based on Ass.~\ref{ass: no equ norm}, we show that we can learn a pdGMM representation up to \emph{ATwC} with a perfect reconstruction and enforcing Gaussianity on representations. %

\begin{restatable}[Identifiability up to ATwC of pdGMM latent variables]{theorem}{thmaffineidenticomp}
\label{thm:affine-identi-comp}
Assume that $\Zb$ is a pdGMM and the observation $\Xb = \fb(\Zb)$ follows the data-generating process in Sec.~\ref{sec: Problem set up} for an injective continuous piecewise affine mixing function $\fb$, and $\Zb$ %
satisfies Ass.~\ref{ass: no equ norm}.
Let $\gb: \Xcal \rightarrow \mathbb{R}^n$ be an invertible continuous piecewise affine function and $\hat\fb: \mathbb{R}^n \rightarrow \mathbb{R}^d$ be an invertible piecewise affine function onto its image. 
If both conditions hold,
\begin{align}
    &\mathbb{E}\norm{\Xb - \hat\fb(\gb(\Xb))}^2_2 = 0 \,,
    \label{eq:zero_reconstruction} \text{and}\\ 
    &\gb(\Xb) \sim \sum_{j=1}^{J'} \lambda'_j N(\mub'_j, \Sigmab'_j),
    \label{eq:gaussianity}
\end{align}
for appropriate values of $\mub'_j, \Sigmab'_j$  (in reduced form), 
then J'=J and $\Zb$ is identified by $\gb(\Xb)$ up to affine transformation within components, i.e., %
$\gb \circ \fb$ is an invertible affine function on $\Zcal_j$, $j\in[J]$.
\end{restatable}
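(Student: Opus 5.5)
Set $\hb := \gb\circ\fb$ and $\hat\Zb := \gb(\Xb) = \hb(\Zb)$. The plan is to use Theorem~\ref{thm:iden-pdgmm-openset} to match the components of $\Zb$ with those of $\hat\Zb$, and then Ass.~\ref{ass: no equ norm} to show that $\hb$ is affine on the whole support of each component.

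\textbf{Step 1: $\hb$ is invertible, continuous and piecewise affine.} Since $\fb$ is injective, continuous and piecewise affine, and $\gb$ is invertible, continuous and piecewise affine, $\hb$ is also invertible, continuous and piecewise affine. Hence there are finitely many closed polyhedral pieces $\{Q_r\}$ covering $\RR^n$ with $\hb(\zb) = A_r\zb + \bb_r$ on $Q_r$. The condition \eqref{eq:zero_reconstruction} only guarantees that $\hat\fb\circ\gb$ is the identity on the support of $\Xb$, and that is all the argument needs. Invertibility of $\hb$ forces each $A_r$ restricted to $\mathrm{aff}(\Zcal_j)$ to be injective on any piece meeting $\Zcal_j$ in a relatively open set.

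\textbf{Step 2: local matching of components.} Choose a point $\zb^*$ in the interior of some piece $Q_r$ and a small open ball $E\subset \mathrm{int}\,Q_r$ around it. On $E$ the map $\hb$ equals the affine map $\ab_r(\zb)=A_r\zb+\bb_r$, so $\hat\Zb \eqd \ab_r(\Zb)$ on $\hb(E)$, which is open because $\hb$ is invertible and continuous.

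Theorem~\ref{thm:iden-pdgmm-openset} needs $E$ to meet every component support, which fails in general. I would therefore prove a variant restricted to the components whose supports meet $E$. It should follow from the same projection argument, since the other components put zero mass near $E$ if $E$ is chosen away from their closed affine supports. This variant shows that these components of $\ab_r(\Zb)$ coincide, up to a permutation and with matching weights, with those components of $\hat\Zb$ whose supports meet $\hb(E)$. Consequently each component $j$ of $\Zb$ is sent to a specific component $\pi(j)$ of $\hat\Zb$, with $N(\mub'_{\pi(j)},\Sigmab'_{\pi(j)}) = (\ab_r)_\# N(\mub_j,\Sigmab_j)$ and ranks preserved. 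Mass counting then gives a bijection $\pi$, so $J=J'$.

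\textbf{Step 3: consistency across pieces (main obstacle).} A priori the matching $\pi$ and the affine map could change as one moves between pieces $Q_r$ inside $\Zcal_j$. The Gaussian $N(\mub'_{\pi(j)},\Sigmab'_{\pi(j)})$ is fixed globally, so for two adjacent pieces $r,s$ meeting $\Zcal_j$ we get $(\ab_r)_\#N_j = (\ab_s)_\# N_j$ whenever $\pi$ is the same on both. Together with continuity of $\hb$ across the common face, this should force $\ab_r=\ab_s$ on $\mathrm{aff}(\Zcal_j)$. The reason is that two affine maps agreeing on a codimension-one face of $\Zcal_j$, and both pushing $N_j$ to the same Gaussian, coincide up to a reflection across that face, and the reflection is excluded by injectivity of $\hb$.

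The danger is that the label $\pi(j)$ switches at a point where several same-rank components overlap. This is exactly where Ass.~\ref{ass: no equ norm} enters. On an overlap region, Mahalanobis distances are preserved by affine pushforward, i.e., $d_{\pi(j)}(\hb(\zb)) = d_j(\zb)$. At the point $\zb_0$ supplied by the assumption the distances to distinct components differ, so the labelling is forced near $\zb_0$. Continuity then propagates it along the connected set $\Zcal_j$, which is an affine subspace. I expect making this propagation rigorous to be the hardest part: one must handle pieces that meet $\Zcal_j$ only in lower-dimensional sets, presumably by an open-dense argument on the relative interior.

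Once $\pi$ and the affine map are shown to be constant along $\Zcal_j$, it follows that $\hb=\gb\circ\fb$ is a single invertible affine map on each $\Zcal_j$. This is the ATwC conclusion of Definition~\ref{def: ident_comp}.
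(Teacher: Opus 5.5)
Your overall route is the paper's: the statement is reduced there to Thm.~\ref{thm: affine for components}, components are matched locally through Thm.~\ref{thm:iden-pdgmm-openset}, and invariance of Mahalanobis distances plus Ass.~\ref{ass: no equ norm} stops the matching from switching. Your reflection-plus-injectivity argument for adjacent regions is a nice self-contained replacement for the paper's final appeal to Thm.~\ref{adap_thm:identif-inv-affine} and Lemma~\ref{lemma: linearity of f for degenerate}. Two steps, however, do not go through as written.

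The first is Step~2. It only produces a match for components whose support meets the interior of some linearity region $Q_r$, but a degenerate support can sit entirely inside the bending locus of $\hb$. Take $n=2$, rank-one components with $\Zcal_1=\{z_2=0\}$ and $\Zcal_2=\{z_2=1\}$, $\fb=\mathrm{id}$, and $\gb=\hb$ with $\hb(\zb)=\zb$ for $z_2\ge 0$ and $\hb(\zb)=(z_1,2z_2)$ for $z_2<0$. All hypotheses hold, yet no ball $E\subset\mathrm{int}\,Q_r$ meets $\Zcal_1$; the conclusion is true here, but your argument never sees component $1$. So ``each component $j$ is sent to $\pi(j)$'', the count $J=J'$, and affineness on such a $\Zcal_j$ are not established. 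In general such a support meets several regions only through their facets, and that is exactly where something must be proved. The paper treats this as a separate case (Case~2.2 in the proof of Thm.~\ref{thm: affine for components}, a component charging only a facet). You need an analogous step, e.g.\ restricting all measures to the affine hull of that facet, where transverse components are null, and rerunning the matching there.

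The second is Step~3: ``the labelling is forced near $\zb_0$; continuity then propagates it'' is not a valid mechanism. The label is a discrete datum attached to linearity regions, and continuity of $\hb$ allows it to jump at a crease point. The map in Example~\ref{exa: violate genericity} is continuous on the supports and jumps at the crossing point. One good point says nothing about crease points far from it. What does hold is local. If two regions meeting at $\zb\in\Zcal_j$ send $j$ to different targets, one finds another same-rank component $j^*$ with $\zb\in\Zcal_{j^*}$ and $d_j(\zb)=d_{j^*}(\zb)$; this is the paper's Case~2 computation via $\ab_r(\zb)=\ab_s(\zb)$ and Lemma~\ref{lemma: orthogonal matrix}. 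So the assumption must be used at \emph{every} crease point of $\Zcal_j$, as the paper does for each point of $\Scal_{\Zcal}$ before pasting with Lemma~\ref{lemma: pasting measures}.

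Because Ass.~\ref{ass: no equ norm} supplies only one good point per overlap, you must also rule out jumps on the exceptional set $\{d_j=d_{j^*}\}\cap\Zcal_j\cap\Zcal_{j^*}$. The paper is terse here too: it takes the good points' neighbourhoods to cover $\Zcal_{j_0}$. Two observations close it:
\begin{itemize}
\item That set is the zero locus of a quadratic not identically zero on the overlap. So for $\Zcal_{j^*}\neq\Zcal_j$ it has codimension at least two in $\Zcal_j$ and cannot contain a codimension-one face between two linearity cells of $\Zcal_j$.
\item For $\Zcal_{j^*}=\Zcal_j$ it can be a hyperplane (equal covariances), and there you need injectivity again. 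The map $\phi=(\ab_r|_{\Zcal_j})^{-1}\circ\ab_s|_{\Zcal_j}$ permutes the finitely many Gaussians supported on $\Zcal_j$, fixes the common face pointwise, and by injectivity of $\hb$ does not swap its sides. Hence some power $\phi^m$ fixes each of these Gaussians while still fixing the face and its sides, which forces $\phi^m=\mathrm{id}$. A finite-order affine map fixing a hyperplane pointwise without swapping its sides is itself the identity.
\end{itemize}
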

We provide a proof in App.~\ref{app: affine identi}.
While Thm~\ref{thm:affine-identi-comp} obtains the identifiability up to ATwC, this result does not guarantee that there is a single \emph{global affine transformation} $\hb$ such that $\gb(\Xb) = \hb(\Zb)$ for all components supports.
To achieve this stronger identifiability, the identifiability up to AT, %
we need an additional condition on $\Zb$. The following assumption requires 
that the supports of all components intersect at at least one point $\zb^0$ and that  
each of these subspaces is spanned by a subset of a shared global basis.

\begin{restatable}[Common basis and translation vector]{assumption}{asscommonbasis}
\label{ass: common basis}
Let $\Zcal_1, \dots, \Zcal_J \subseteq \RR^n$ be affine spaces such that there exists $\zb^0 \in \bigcap_{j=1}^J \Zcal_j$. We assume there exists a basis of $\RR^n$, denoted by $\{\zb^1, \dots, \zb^n\}$, such that for all $j \in [J]$, there exists $\Kcal_j \subseteq [n]$ s.t. $\{\zb^k \mid k \in \Kcal_j\}$ is a basis for the vector space $\Zcal_j-\zb^0$.
\end{restatable}

This assumption always holds for non-degenerate components. For degenerate components, a violation can happen because the components are not intersecting, or if their individual bases do not form a global shared basis. In App.~\ref{discussion:common basis} we discuss a concrete scenario in which this assumption holds, where only a subset of latent variables are ``active'' in each observation sample, e.g., they are captured in an image, while others are ``inactive'', i.e., they are masked by a constant. In this setting, the intersection of the supports is the vector of all the constant values for each variable, which will form a well-defined global shared basis.
If this assumption only holds for a subset of components, then we can achieve identifiability up to AT for this subset, while for the rest can only achieve identifiability up to ATwC using the previous result.
We provide an example to show the necessity of this assumption in App.~\ref{discussion:common basis} and use it to achieve identifiability up to AT:

\begin{restatable}[Identifiability up to AT of pdGMM latent variables]{theorem}{thmglobalaffineidenti}
\label{thm:global-affine-identi}
We assume the same conditions as Thm.~\ref{thm:affine-identi-comp}, and that the supports of components $\Zcal_1, \dots, \Zcal_J$ satisfy the common basis assumption (Ass.~\ref{ass: common basis}), 
then the representation $\gb(\Xb)$ identifies $\Zb$ up to a single global affine transformation across components, i.e., 
$\gb \circ \fb$ is affine on all of $\Zcal$.%
\end{restatable}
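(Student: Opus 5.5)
The plan is to combine the componentwise affine identifiability of Thm.~\ref{thm:affine-identi-comp} with the common-point and common-basis structure of Ass.~\ref{ass: common basis}. The idea is to glue the local affine maps into a single affine map. Throughout, let $\hb := \gb\circ\fb$ and let $\Zcal=\bigcup_{j}\Zcal_j$ be the support of $\Zb$.

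\textbf{Step 1 (local maps in a common chart).} By Thm.~\ref{thm:affine-identi-comp}, $J'=J$ and, for each $j\in[J]$, $\hb$ is an invertible affine map on $\Zcal_j$. Since $\zb^0\in\bigcap_j \Zcal_j$, we can write every local map in the same chart centered at $\zb^0$:
\begin{equation*}
\hb(\zb)=\hb(\zb^0)+A_j(\zb-\zb^0),\qquad \zb\in\Zcal_j,
\end{equation*}
where $A_j$ is a linear map defined on $V_j:=\Zcal_j-\zb^0=\mathrm{span}\{\zb^k: k\in\Kcal_j\}$. All $J$ maps share the same offset $\hb(\zb^0)$, because $\hb$ is a single function and $\zb^0$ lies in every component.

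\textbf{Step 2 (consistency on shared basis vectors).} Suppose $k\in\Kcal_i\cap\Kcal_j$. Then the whole line $\{\zb^0+t\zb^k: t\in\RR\}$ lies in $\Zcal_i\cap\Zcal_j$, and on this line the two expressions from Step 1 evaluate the same function $\hb$. Hence $tA_i\zb^k=tA_j\zb^k$ for all $t$, so $A_i\zb^k=A_j\zb^k$. It follows that the vector $\ab_k:=A_j\zb^k$ is well defined for every $k\in\Kcal:=\bigcup_j\Kcal_j$, independently of the chosen $j$.

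\textbf{Step 3 (global map).} Define a linear map $A$ on $\RR^n$ by prescribing it on the global basis: $A\zb^k=\ab_k$ for $k\in\Kcal$, and $A\zb^k$ chosen freely for $k\notin\Kcal$. By linearity and Step 2, $A|_{V_j}=A_j$ for every $j$. Therefore $\hb(\zb)=\hb(\zb^0)+A(\zb-\zb^0)$ on every $\Zcal_j$, and hence on all of $\Zcal$. This is exactly the claim that $\gb\circ\fb$ is affine on $\Zcal$, so $\gb(\Xb)=\hb(\Zb)$ almost surely with a single affine $\hb$.

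\textbf{Main obstacle: invertibility.} I expect the difficult part to be showing that $A$ can be chosen invertible, as required by the notion of AT in Def.~\ref{def: idents}. The directions $\zb^k$ with $k\notin\Kcal$ carry no probability mass, so their images can be chosen freely. The real issue is whether $\{\ab_k\}_{k\in\Kcal}$ is linearly independent. Injectivity of $\hb$ on each $V_j$ only gives independence of $\{\ab_k\}_{k\in\Kcal_j}$ for each $j$ separately. Injectivity of $\hb$ on the union $\bigcup_j V_j$ is weaker than injectivity on the sum $\sum_j V_j$: for example, three coordinate lines in $\RR^3$ can be mapped injectively into a plane. In that case the glued affine map would exist but fail to be invertible.

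To close this gap, my first approach would be to use that $\fb$ and $\gb$ are defined beyond the support: $\fb$ is injective continuous piecewise affine on $\RR^n$ and $\gb$ is invertible. Then $\hb$ is an injective continuous piecewise affine map on a neighbourhood of $\zb^0$. On the finitely many affine pieces of $\hb$ meeting $\zb^0$, $\hb$ is affine and, by injectivity, has full rank $n$. I would then argue, by continuity of $\hb$ and the fact that every line $\zb^0+t\zb^k$ passes through these pieces near $\zb^0$, that these pieces must agree with $A$ on the directions $\zb^k$, $k\in\Kcal$. This would force $\{\ab_k\}_{k\in\Kcal}$ to be linearly independent. If that argument is too delicate in general, the fallback is to state the result as affine identifiability on $\Zcal$ (as in the theorem statement) and add linear independence of $\{\ab_k\}$ as a remark-level condition.
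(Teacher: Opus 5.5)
Your Steps 1--3 are correct and follow the paper's proof. The paper takes componentwise affinity from Thm.~\ref{thm:affine-identi-comp} and then applies Prop.~\ref{prop:globally_affine}, which proceeds exactly as you do. It writes each local map relative to the common point $\zb^0$ and reads off the coefficient of $\zb^k$ from the values of $\gb\circ\fb$ on $\zb^0+t\zb^k$, so that coefficient is the same for every component containing that direction. It then defines one linear map on the common basis. The paper stops there: it proves exactly ``$\gb\circ\fb$ is affine on $\Zcal$'' and never addresses invertibility.

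Your concern about invertibility is legitimate, but your first approach cannot work, and in fact the gap cannot be closed under the stated hypotheses. Near $\zb^0$, each ray $\zb^0\pm t\zb^k$ does leave through some full-rank piece $H_P$ with $H_P\zb^k=\ab_k$. However, different directions may leave through different pieces, so no single invertible $H_P$ has to agree with $A$ on all $\zb^k$ at once.

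Here is a counterexample satisfying every hypothesis. Take $n=3$ and $\Zb\sim\frac13\sum_{k=1}^3N(k\,\eb^k,\eb^k(\eb^k)^T)$, so $\Zcal_k=\RR\eb^k$.
Ass.~\ref{ass: common basis} holds with $\zb^0=\mathbf{0}$ and $\Kcal_k=\{k\}$. Ass.~\ref{ass: no equ norm} holds because the Mahalanobis distances of $\mathbf{0}$ to the three components are $1,2,3$.

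Let $\psi$ be a piecewise linear self-homeomorphism of the octahedron boundary $\{\|\zb\|_1=1\}$ with $\psi(\pm\eb^1)=\pm\eb^1$, $\psi(\pm\eb^2)=\pm\eb^2$ and $\psi(\pm\eb^3)=\pm\frac12(\eb^1+\eb^2)$. Such a $\psi$ exists because PL homeomorphisms of a $2$-sphere act transitively on ordered tuples of distinct points. Set $\hb(\zb):=\|\zb\|_1\,\psi(\zb/\|\zb\|_1)$ and $\hb(\mathbf{0})=\mathbf{0}$. This is a continuous piecewise linear homeomorphism of $\RR^3$: it is linear on the cone over each simplex on which $\psi$ is affine. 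It is also linear on each coordinate axis.

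Now take $\fb=\mathrm{id}$, $\gb=\hb$ and $\hat\fb=\hb^{-1}$. Reconstruction is perfect, and $\gb(\Xb)\sim\frac13\sum_kN(k\ab_k,\ab_k\ab_k^T)$ with $\ab_1=\eb^1$, $\ab_2=\eb^2$, $\ab_3=\frac12(\eb^1+\eb^2)$, which is a reduced pdGMM. Yet $\Zcal$ affinely spans $\RR^3$, so the only affine map agreeing with $\gb\circ\fb$ on $\Zcal$ is the linear map with columns $\ab_1,\ab_2,\ab_3$. That map is singular, so no invertible affine map as in Def.~\ref{def: idents} exists.

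So your fallback is the correct formulation, and it is exactly what the theorem's ``i.e.'' clause and the paper's proof deliver. If you want invertibility, a clean sufficient condition follows from your own observation. Injectivity of $\gb\circ\fb$ on $\Zcal_i\cup\Zcal_j$ gives $\ker A\cap(V_i+V_j)=\{\mathbf{0}\}$ for your $V_j=\Zcal_j-\zb^0$. Hence $A$ is invertible as soon as $\Kcal_i\cup\Kcal_j=[n]$ for some pair $i,j$, for example when some component is non-degenerate.
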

We provide a proof in App.~\ref{app: global affine}. Identifiability up to an affine transformation (\emph{AT}) still allows for arbitrary mixing of dimensions through general affine transformations. In many applications, we prefer representations where individual dimensions have separate and interpretable meanings. This motivates us to move to identifiability up to \emph{permutation and scaling} (PS). Permutation and scaling are fundamental indeterminacies
that typically cannot be solved without supervision or auxiliary information \citep{hyvarinen2024identifiability}. To get this stronger result, we require a stricter assumption on $\Zb$.
\begin{restatable}{assumption}{assstandcommonbasis}
\label{ass: standard common basis}
Let $\Zcal_1, \dots, \Zcal_J \subseteq \RR^n$ be the supports of the components of $\Zb$. We assume:
\begin{itemize}[noitemsep, topsep=0pt]
    \item{} [Common Standard Basis] For all $j \in [J]$, each $\Zcal_j$ is a vector space, and there exists an index set $\Kcal_j \subseteq [n]$ such that $\{\eb^k \mid k \in \Kcal_j\}$ forms a basis for $\Zcal_j$, where $\{\eb^1, \dots, \eb^n\}$ denotes the standard basis of $\RR^n$ (one-hot vectors).
    \item{} [Sufficient Support Basis Index Variability] For every $i \in [n]$, the union of the support index sets $\Kcal_j$ that do not include $i$ covers all other standard basis directions:
    $$\bigcup_{j \in [J] \,\mid\, i \notin \Kcal_j} \Kcal_j = [n] \setminus \{i\}.$$
\end{itemize}
\end{restatable}

This setting is consistent with the sparsity principle proposed by \cite{xu2024sparsity}, which shows that by enforcing sparsity of the transformed variables (as expressed in Eq.~\ref{eq:sparsity_constraint}), the corresponding transformation is an element-wise affine function. 
The Common Standard Basis assumption is a special case of the earlier Common Basis Assumption (Ass.~\ref{ass: common basis}) where the global basis is fixed to the standard basis and zero translation vector is applied. This implies that the realizations of $\Zb$ will have some coordinates equal to 0 with probability greater than zero. For example, assuming $n=3$ with $\Kcal_1 = \{1\}$ and $\Kcal_2 = \{2,3\}$, we have that samples from the component $j=1$ will have the form $(\zb_1, 0, 0)$ and samples from the component $j=2$ will have the form $(0, \zb_2, \zb_3)$. In other words, the realizations of $\Zb$ are sparse, with nonzero coordinates given by $\Kcal_j$. 

The Sufficient Support Basis Index Variability was originally proposed by \citet{lachapelle2023synergies} in the context of sparse multitask learning. 
Here, we adapt it into our context and use it to avoid cases in which certain latent variables are always degenerate together across all components, since then we would not be able to disentangle them from the observations. As we show in App.~\ref{discussion:standard common basisb}, even if this assumption does not hold strictly in a setting, our framework still supports a weaker form of
identifiability: block-wise identifiability, which allows disentanglement across blocks of latent variables, even if
variables within each block may remain entangled. This relaxation still enables meaningful interpretability and
structure in learned representations.
We can now prove our final theoretical result that shows when we can disentangle, or identify pdGMM latent variables up to PS. 

\begin{restatable}[Identifiability up to PS of pdGMM latent variables]{theorem}{thmPDidenti}
\label{thm: PD identi}
We assume the same conditions as Thm.~\ref{thm:affine-identi-comp} and that the supports of components $\Zcal_1, \dots, \Zcal_J$ satisfy Ass.~\ref{ass: standard common basis}. If the following holds:  
\begin{align}
    \mathbb{E}\norm{\gb(\Xb)}_0 \leq \mathbb{E}\norm{\Zb}_0,
    \label{eq:sparsity_constraint}
\end{align} 
where $\norm{\cdot}_0$ means L0 norm, then the representation $\gb(\Xb)$ identifies $\Zb$ up to PST, i.e., 
$\gb \circ \fb$ is a permutation combined with an element-wise linear transformation on $\Zcal$. %
\end{restatable}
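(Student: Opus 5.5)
Since Ass.~\ref{ass: standard common basis} is a special case of Ass.~\ref{ass: common basis} (take $\zb^0=\mathbf{0}$ and $\zb^k=\eb^k$), I would first invoke Thm.~\ref{thm:global-affine-identi}. Together with $\zb^0=\mathbf{0}\in\Zcal_j$ for all $j$, this gives that $\hb:=\gb\circ\fb$ is a single invertible affine map $\hb(\zb)=A\zb+\bb$ on $\Zcal=\bigcup_j \Zcal_j$. The rest of the proof is then purely linear-algebraic, in the spirit of \citet{lachapelle2023synergies,xu2024sparsity}.

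\textbf{Step 1: show $\bb=\mathbf{0}$ and reduce to a statement about the support of $A$.} Each component $j$ is Gaussian with full rank on the subspace $\Zcal_j=\mathrm{span}\{\eb^k: k\in\Kcal_j\}$. Hence, conditional on component $j$, almost surely $\norm{\Zb}_0=|\Kcal_j|$, i.e.\ no coordinate in $\Kcal_j$ vanishes. Likewise, $\norm{A\Zb+\bb}_0$ is almost surely equal to the number of rows $r$ for which the affine function $\zb_{\Kcal_j}\mapsto A_{r,\Kcal_j}\zb_{\Kcal_j}+b_r$ is not identically zero; a nonzero affine function vanishes only on a null set. This number is at least $\mathrm{rank}(A_{:,\Kcal_j})=|\Kcal_j|$, because $A$ is invertible, so at least $|\Kcal_j|$ rows of $A_{:,\Kcal_j}$ are nonzero. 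Summing over components with weights $\lambda_j$ gives $\mathbb{E}\norm{\gb(\Xb)}_0\ge\mathbb{E}\norm{\Zb}_0$. Eq.~\eqref{eq:sparsity_constraint} then forces equality for every $j$, so that, for each $j$,
\begin{equation}
\bigl|\{r:\ A_{r,\Kcal_j}\neq \mathbf{0}\ \text{or}\ b_r\neq 0\}\bigr| = |\Kcal_j|.
\end{equation}
If $\mathbf{0}$ is the support of some component ($\Kcal_j=\emptyset$) this directly gives $\bb=\mathbf{0}$. Otherwise I would argue through the counting: a row with $b_r\neq 0$ counts in every component. Combining this with the variability assumption, which ensures that rows are exercised by components missing each index, should force $\bb=\mathbf{0}$. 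This part I would check carefully, and if necessary settle for ``up to translation'' (PST).

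\textbf{Step 2: the combinatorial core.} Let $S_j:=\{r: A_{r,\Kcal_j}\neq \mathbf{0}\}$, so $|S_j|=|\Kcal_j|$. Since $A$ is invertible, the submatrix $A_{S_j,\Kcal_j}$ is square and invertible, and $A_{r,\Kcal_j}=\mathbf{0}$ for $r\notin S_j$. Fix a permutation $\sigma$ with $A_{\sigma(i),i}\neq 0$ for all $i$, which exists because $\det A\neq 0$. Following the proof technique of \citet{lachapelle2023synergies}, I would show that for $k\in\Kcal_j$ we have $\sigma(k)\in S_j$, and then that $\sigma(\Kcal_j)=S_j$. This holds after possibly re-choosing $\sigma$ via a Hall/matching argument on the bipartite support graph, using that each $A_{S_j,\Kcal_j}$ has full rank. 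Now fix a column $i$ and a row $r\neq\sigma(i)$; we want $A_{r,i}=0$. By Sufficient Support Basis Index Variability, $\sigma^{-1}(r)\in\Kcal_j$ for some $j$ with $i\notin\Kcal_j$. Hmm, this alone constrains only the columns in $\Kcal_j$. The right form of the argument is instead the standard one: each row $r$ has support $\{c: A_{r,c}\neq 0\}$, and every $\Kcal_j$ containing that support's ``owner'' must also contain the rest, and so on. Concretely, I would prove the claim that $A_{r,i}\neq0$ implies that every $\Kcal_j\ni i$ satisfies $\sigma^{-1}(r)\in\Kcal_j$, and conversely. The converse follows from $S_j=\sigma(\Kcal_j)$: if $i\in\Kcal_j$ then $r\in S_j$, so $\sigma^{-1}(r)\in\Kcal_j$. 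Hence $\sigma^{-1}(r)$ lies in every $\Kcal_j$ containing $i$. Now suppose $\sigma^{-1}(r)=:m\neq i$. The variability assumption applied to the index $m$, together with the symmetric statement obtained by swapping roles, yields some $j$ with $i\in\Kcal_j$ and $m\notin\Kcal_j$, which is a contradiction. So $A=P D$ for a permutation matrix $P$ and an invertible diagonal $D$.

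I expect the main obstacle to be Step 2, specifically establishing $S_j=\sigma(\Kcal_j)$ simultaneously for all $j$ with a single permutation $\sigma$, and applying the variability condition in the correct direction. My first attempt would be to copy the argument of \citet{lachapelle2023synergies} (their Theorem on sparse multitask identifiability), where this exact step is handled by Leibniz-formula arguments on $\det A$.
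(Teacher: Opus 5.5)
Your proposal follows the paper's route. Ass.~\ref{ass: standard common basis} is the special case $\zb^0=\mathbf{0}$, $\zb^k=\eb^k$ of Ass.~\ref{ass: common basis}, so Thm.~\ref{thm:global-affine-identi} makes $\gb\circ\fb$ a single affine map on $\Zcal$; the rest is the sparsity argument of \citet{lachapelle2023synergies}, which the paper only cites as Lemma~\ref{lemma: element_wise v} and which you reprove correctly. Your final contradiction uses variability at $m=\sigma^{-1}(r)$, and this directly yields some $j$ with $i\in\Kcal_j$ and $m\notin\Kcal_j$, so neither the ``converse'' nor the ``symmetric statement'' is needed.

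The step you flag as the main obstacle is immediate. For any permutation $\sigma$ obtained from the Leibniz formula, $\sigma(\Kcal_j)\subseteq S_j$ and $|\sigma(\Kcal_j)|=|\Kcal_j|=|S_j|$ give $\sigma(\Kcal_j)=S_j$ for all $j$ at once, with no Hall argument. Your Step~1 then also closes: $\{r: b_r\neq 0\}\subseteq\bigcap_j S_j=\sigma\bigl(\bigcap_j\Kcal_j\bigr)=\emptyset$, because variability forces every index to be missing from some $\Kcal_j$ (for $n\ge 2$). So you obtain the element-wise \emph{linear} form stated in the theorem, slightly more than the affine conclusion of the cited lemma.
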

We provide a proof in App.~\ref{app: PD identi}. Interestingly, a special case of this result also solves an open question in previous work on partially observed causal representation learning (CRL) \citep{xu2024sparsity}. In that setting the identifiability results required that the component index $j$ is observed for each observation (that is the group index), which is a strong assumption. Modelling this partially observable CRL problem with mixture models allows one to consider settings in which we do not know how to group the observations, i.e., a more realistic partially observable CRL setting.

\section{IMPLEMENTATION}
\label{sec: implementation}
We implement our theoretical results in two stages:
the identifiability up to AT in Thm.~\ref{thm:global-affine-identi} serves as a prerequisite for achieving the identifiability up to PS in Thm.~\ref{thm: PD identi}.
In the first stage, we use an autoencoder structure to estimate the latent variables directly from observations $\{\xb^i\}_{i\in[N]}$ by minimizing the reconstruction error (Eq.~\ref{eq:zero_reconstruction}), where $\gb_{\psi_1}$ and $\hat{\fb}_{\theta_1}$ denote the encoder and decoder, respectively. To prevent the latent codes from drifting arbitrarily and keep them clustered around the Gaussian prior,
we include the $L_2$ norm in the loss function. According to Thm.~\ref{thm:global-affine-identi}, the representations learned in this stage  correspond to an affine transformation of the ground truth latent variables.
In the second stage, we freeze the autoencoder model trained in the first stage and apply a second, inner autoencoder with affine transformations, where $\gb_{\psi_2}$ and $\hat{\fb}_{\theta_2}$ denote the affine encoder and decoder. Following \citet{xu2024sparsity}, we approximate the sparsity constraint in Eq.~\ref{eq:sparsity_constraint} of Thm.~\ref{thm: PD identi} by replacing the non-differentiable $L_0$ norm with the differentiable $L_1$ norm except at zero. As the ground truth sparsity level is not known, we use a tunable hyperparameter $\epsilon$ to control it (we provide sensitivity analysis of $\epsilon$ in App.~\ref{subapp: sensitive analysis}). Combined with the reconstruction loss, this encourages the model to converge to an appropriate sparsity level. To solve the optimization problem, we use Cooper~\citep{gallegoPosada2022cooper}, a Lagrangian-based optimization toolkit. The two-stage problems are solved using Adam \citep{diederik2014adam} and extra-gradient variant \citep{gidel2018variational}. 
\begin{align}
\label{equ: loss_pw}
\textbf{Stage 1: }
     &\min_{(\theta_1, \psi_1)}  \frac{1}{N} \sum_{i \in [N]}
     \norm{\xb^i - \hat\fb_{\theta_1}(\gb_{\psi_1}(\xb^i))}^2_2+\\
     &log(p(\gb_{\psi_1}(\xb^i);\mathbf{0}, I))
     \nonumber
     \\
\textbf{Stage 2: }
     &\min_{(\theta_2, \psi_2)} \frac{1}{N} \sum_{i \in [N]}
     \norm{\tilde{\xb}^i - \hat\fb_{\theta_2}(\gb_{\psi_2}(\tilde{\xb}^i))}^2_2  \\
     & \text{  subject to: }
    \frac{1}{N} \sum_{i \in [N]} \norm{\gb_{\psi_2}(\tilde{\xb}^i)}_1 \leq \epsilon, 
    \nonumber
\end{align}
where $\tilde{\xb}^i:=\gb_{\psi_1}(\xb^i)$. We provide all details in App.~\ref{app: implementation}.

\section{Experimental results}
\label{sec: experiments}

\paragraph{Numerical Experiments Setup.}
\label{sec:numericalexp}
We generate simulations with variations in the underlying causal model, the rank of Gaussian components, and mixing function $\fb$. We average results for each setup over 5 random seeds.
We consider $n=\{5,10,20,40\}$ latent variables $\Zb$. For each experiment, we sample a directed acyclic graph $\Dcal$ from an Erd\"os-R\'enyi (ER)-$k$ model with $k\in \{0,1,2,3\}$, where each ER-$k$ graph contains $n\cdot k$ directed edges. In particular, ER-0 corresponds to an empty graph, which implies independent latent variables. Given $\Dcal$, we simulate a linear Gaussian Structural Causal Model where edge weights are sampled uniformly from $[-1,-0.2]\cup[0.2,1]$, and standard Gaussian noise is added at each node.
To simulate pdGMMs, we define the number of mixture components $J=5n$ and introduce the ratio of non-degenerate dimensions $\rho \in \{1 \textsc{var}, 50\%, 75\%\}$, where $1 \textsc{var}$ indicates only one non-degenerate dimension, and $50\%$ or $75\%$ are the proportion of non-degenerate dimensions relative to $n$. 

Then, we randomly sample $J$ different $\rho n$-hot vectors to define a set of basis index sets $\Kcal_j$, which determines the subspace (i.e., the support) where the component has non-degenerate variance. For dimensions not in $\Kcal_j$ (i.e., those with a 0 in the multi-hot vector), we set their value to the corresponding entry in a predefined translation vector $\zb^0$.
To further show the different results for Ass.~\ref{ass: common basis} and Ass.~\ref{ass: standard common basis}, we vary the translation vector $\zb^0=\mub+ \delta \cdot \sigma$, where $\mub, \sigma \in \RR^n$ are the mean and standard deviation of $\Zb$, and $\delta \in \{0,1,2,3\}$. In addition, we apply a $n\times n $ rotation matrix on $\Zb$, parameterized by a rotation angle $\theta\in \{0^{\circ}, 15^{\circ}, 30^{\circ},45^{\circ}\}$, to control the deviations from standard basis. ll settings satisfy Ass.~\ref{ass: common basis}, which means we can always achieve identifiability up to AT, but Ass.~\ref{ass: standard common basis} is only satisfied in the setting with $\delta=0$ and $\theta=0$, and thus we expect identifiability up to PS to only be possible there.

To simulate the observed variables $\Xb$, we apply an invertible piecewise affine mixing function $\fb$ to the latent variables $\Zb$. We parameterize $\fb$ by a multi-layer perceptron (MLP) with $m \in \{3, 10, 20\}$ hidden layers,  $(m-1)$ Leaky-ReLU activation functions (with slope $\alpha \sim \text{Unif}(0.5, 1.5)$) and a final affine layer. The number of layers $m$ is a proxy for the non-linearity and complexity of the mixing function; the larger the $m$, the more complex the transformation from $\Zb$ to $\Xb$.

\begin{figure}[t]

\begin{minipage}{0.48\textwidth}
\centering
\footnotesize
\setlength{\tabcolsep}{3pt} %
\renewcommand{\arraystretch}{1.1} %

\begin{tabular}{@{\hskip 2pt}c@{\hskip 2pt} %
    @{\hskip 2pt}c@{\hskip 2pt} %
    @{\hskip 2pt}c@{\hskip 2pt} %
    @{\hskip 2pt}c@{\hskip 2pt} %
    @{\hskip 2pt}c@{\hskip 2pt} %
    @{\hskip 2pt}c@{\hskip 2pt} %
    c c c c}
\hline
$n$ & $k$ & $m$ & $\rho$ & $\delta$ & $\theta$ &
\makecell{$\Rb^2$ $\uparrow$\\ Stage1} &
\makecell{\textbf{MCC} \\ Stage1} &
\makecell{\textbf{MCC} $\uparrow$\\ Stage2 }&
\makecell{\textbf{MCC} $\uparrow$\\ VaDE }\\
\hline
\textbf{5}  & 1 & 10 & 50\% & 0 & 0 & 0.93 & 0.75 & 0.96 & 0.45\\
\textbf{10} & 1 & 10 & 50\% & 0 & 0 & 0.94 & 0.56 & 0.97 & 0.32\\
\textbf{20} & 1 & 10 & 50\% & 0 & 0 & 0.93 & 0.46 & 0.92 & 0.23\\
\textbf{40} & 1 & 10 & 50\% & 0 & 0 & 0.94 & 0.37 & 0.94 & 0.25\\
\hline
10 & \textbf{0} & 10 & 50\% & 0 & 0 & 0.94 & 0.51 & 0.97 & 0.33\\
10 & \textbf{1} & 10 & 50\% & 0 & 0 & 0.94 & 0.56 & 0.97 & 0.32\\
10 & \textbf{2} & 10 & 50\% & 0 & 0 & 0.93 & 0.58 & 0.95 & 0.31\\
10 & \textbf{3} & 10 & 50\% & 0 & 0 & 0.92 & 0.56 & 0.91 & 0.27\\
\hline
10 & 1 & \textbf{3} & 50\% & 0 & 0 & 0.99 & 0.50 & 0.96 & 0.33\\
10 & 1 & \textbf{10} & 50\% & 0 & 0 & 0.94 & 0.56 & 0.97 & 0.32\\
10 & 1 & \textbf{20} & 50\% & 0 & 0 & 0.88 & 0.50 & 0.93 & 0.35\\
\hline
10 & 1 & 10 & \textbf{1 var} & 0 & 0 & 0.75 & 0.46 & 0.45 & 0.28\\
10 & 1 & 10 & \textbf{50\%} & 0 & 0 & 0.94 & 0.56 & 0.97 & 0.32\\
10 & 1 & 10 & \textbf{75\%} & 0 & 0 & 0.95 & 0.57 & 0.93 & 0.31\\
\hline
10 & 1 & 10 & 50\% & \textbf{0} & 0 & 0.94 & 0.56 & 0.97 & 0.32\\
10 & 1 & 10 & 50\% & \textbf{1} & 0 & 0.95 & 0.59 & 0.88 & 0.30\\
10 & 1 & 10 & 50\% & \textbf{2} & 0 & 0.96 & 0.59 & 0.61 & 0.27\\
10 & 1 & 10 & 50\% & \textbf{3} & 0 & 0.96 & 0.58 & 0.49 & 0.28\\
\hline
10 & 1 & 10 & 50\% & 0 & \textbf{0} & 0.94 & 0.56 & 0.97 & 0.32\\
10 & 1 & 10 & 50\% & 0 & \textbf{15} & 0.93 & 0.58 & 0.83 & 0.32\\
10 & 1 & 10 & 50\% & 0 & \textbf{30} & 0.93 & 0.57 & 0.60 & 0.31\\
10 & 1 & 10 & 50\% & 0 & \textbf{45} & 0.93 & 0.57 & 0.62 & 0.32\\

\hline
\end{tabular}

\captionof{table}{
Results for the numerical experiments.
The bold font highlights the parameters that vary in each block. %
$\Rb^2$(Stage 1) shows identifiability up to AT after Stage 1 
\textbf{MCC}(Stage 1) reflects the necessity of sparsity to achieve identifiability up to PS; %
\textbf{MCC}(Stage 2) presents the identifiability up to PS after Stage 2; %
\textbf{MCC}(VaDE) reports the results of \citet{kivva2022identifiability}. Full results with Std. Dev. are in Tab.~\ref{tab: numerical_full}.}
\label{tab: numerical}
\end{minipage}
\end{figure}

\paragraph{Results for Stage 1 (Thm.~\ref{thm:global-affine-identi}).} 
For the encoder $\gb_{\psi_1}$ and the decoder $\hat{\fb}_{\theta_1}$ in the first stage, we employ 5-layer MLPs configured with $[50, 100, 100, 50] \times n$ units per layer, where $n$ is the number of latent variables. Each layer uses Leaky-ReLU activations, except for the final one. Batch normalization is applied to control the norm of $\gb_{\psi_1}(\Xb)$. To validate the identifiability up to \emph{AT} presented in Thm.~\ref{thm:global-affine-identi}, 
the metric we use is the $R^2$ of the linear regression model with $\Zb$ as explanatory variable and $\gb_{\psi_1}(\Xb)$ as response.
A high $R^2$ indicates that the ground truth latent variables can be linearly mapped to the learned representation with low error, consistent with our identifiability result. 
We show the average $R^2$
over five random seeds in Tab.~\ref{tab: numerical}. Our method remains robust with large $n$, dense causal graph $k$, varying translation vectors $\zb^0 (\delta)$ and basis direction $\theta$. Performance decreases slightly only with increased nonlinearity ($m$) and lower proportions of nondegenerate variables ($\rho$). This is expected: higher nonlinearity increases the complexity of achieving affine identifiability, while lower $\rho$ allows greater flexibility in how representations are distributed in $\RR^n$, making it more challenging to enforce Gaussianity.
However, a high $R^2$ does not imply disentanglement. Following prior work \citep{khemakhem2020variational,kivva2022identifiability}, we also evaluate identifiability up to \emph{PS} via \emph{Mean Correlation Coefficient} (MCC), a metric detailed in App.~\ref{app: metric}. %
Across all row groups, we observe that, as expected, Stage 1 \emph{alone} does not identify the latent variables up to PS accurately, as reflected by the low MCC values reported in the third right column of Tab.~\ref{tab: numerical}. This motivates us to move to Stage 2. %

\paragraph{Results for Stage 2 (Thm.~\ref{thm: PD identi}).} 
For the affine encoder $\gb_{\psi_2}$ and the decoder $\hat{\fb}_{\theta_2}$ in the second stage, we employ 7-layer MLPs configured with $[10, 50, 50, 50, 50, 10] \times n$ units per layer, where $n$ is the number of latent variables. Batch normalization is applied to control the norm of $\gb_{\psi_2}(\Xb)$. Since we generate a sufficient number of basis indices, our representation should be able to have a one-to-one correspondence with the ground truth latent variables. %
The last two columns of Tab.~\ref{tab: numerical} reports the average MCC scores for our method and VaDE \citep{kivva2022identifiability}. This is a misspecified setting for VaDE, because VaDE requires non-degeneracy of the data, but we report the results nevertheless as a baseline, and show that as expected we outperform it in all settings.
The results indicate consistently good performance across a range of $n$, $k$, and $m$. However, the overall performance is influenced by the quality of representations from Stage 1, with a noticeable decrease under a low nondegeneracy ratio $\rho$. This is expected, since large translation vectors $\zb^0 (\delta)$ and non-standard basis ($\theta$) violate Ass.~\ref{ass: standard common basis}.

\paragraph{Ablations.} We provide an evaluation on independent latent variables in App.~\ref{app: num independent},
showing comparable results, and an empirical complexity analysis in App.~\ref{subapp: complexity}, showing that we can scale our method up to $n=50$ and partially to $n=100$ variables. This is in line with other CRL works, which mostly focus on $10-20$ latent variables. On the other hand, the inference time increases exponentially with the number of latents.
In App.~\ref{subapp: sensitive analysis} we report a sensitivity analysis on our hyperparameters, in particular the sparsity level, which we set to $\epsilon=0.01$ for the main experiments, and the learning rate, which we set to $lr=1e-4$. 
In App.~\ref{subapp: baselines} we report an ablation on the sparsity constraint, showing its importance for the second stage. We also compare with a related method \citep{xu2024sparsity} that requires additional information (i.e., knowing the label of the component) and show that with the same information, our method provides comparable results. For completeness, we also provide results in App.~\ref{app: without misspecifiction} with a setting that is fair to the baseline VaDE \citep{kivva2022identifiability}, where we generate the latent with a non-degenerate GMM, showing that our method outperforms VaDE even in these settings. Similarly to other work in CRL, our theoretical results assume that we know the ground truth $n$, but empirically our method performs well even if we overestimate it, as shown in App.~\ref{subapp: over n}. As an additional measure of the accuracy of our reconstruction, in App.~\ref{subapp: graph} we evaluate the Structural Hamming Distance (SHD) of the causal graph that we learn with a standard causal discovery method, PC \citep{spirtes2000causation}, on our recovered latent variables, showing that the SHD is only slightly worse than the SHD for the ground truth variables.

\paragraph{Settings when the assumptions are violated.}
In App.~\ref{app: assump} we discuss the intuition, implications and practical applicability of our assumptions, providing also ablations for their violations.
In particular, in App.~\ref{app:misspecified} we show empirically that our method is often still robust in a misspecified setting, i.e., when the variables are not Gaussian (but instead they follow and exponential or Gumbel distribution) or the mixing function is not piecewise affine (but instead is a smooth Leaky-ReLU or sigmoid). On the other hand, we do need all other assumptions. We provide evaluation showing the necessity of Ass.~\ref{ass: no equ norm} (genericity of pdGMMs) in App.~\ref{discussion:genericity}, showing that when it does not hold the results are much worse. Similarly, results are much worse when Ass.~\ref{ass: common basis} does not hold, so there does not exists a unique shared basis for all Gaussian components. To show a violation of this, we generate the data with a mixture of two different bases: a standard basis and one with rotations, and provide results in App.~\ref{discussion:common basis}, showing the necessity of this assumption. Table~\ref{tab: numerical} already shows violations of Ass.~\ref{ass: standard common basis}a when $\delta$ (the masking value) and 
$\theta$ (the rotation of the standard basis) are non-zero, showcasing a decrease in performance when the assumption does not hold.
 When we have violations of Ass.~\ref{ass: standard common basis}b, we can still achieve block-identifiability, as discussed and shown empirically in App.~\ref{discussion:standard common basisb}.

\paragraph{Image dataset: Multiple Balls.}
\label{sec:multiballexp}
To show potential applications of our approach in partially observable causal representation learning, 
we evaluate our method on an image dataset from \citet{ahuja2022weakly,xu2024sparsity}, which consists of $\bb$ moving balls rendered in a 2D space, as shown in App.~\ref{app: balls}. 
The latent causal variables correspond to the $(x,y)$ positions of each ball, which are modeled as Gaussian. We focus on a \emph{fixed position} setting, where the balls usually move freely within the frame but occasionally remain stationary at unknown fixed positions. When a ball is stationary, the corresponding latent dimensions are degenerate.
We generate datasets with varying numbers of balls: $\bb = 2, 4, 6$. The $(x, y)$ coordinates of each ball $i\in[\bb]$ are sampled independently from a truncated 2-dimensional normal distribution $\Ncal(\mub_i,\Sigmab_i)$ bounded within the square $(0.05,0.95)^2$, where $\mub_i\sim \text{Unif}(0.3,0.7)^2$, and $\Sigmab_i= ((0.01,0.00)(0.00,0.01))$. To avoid the high ratio of occlusion, we use rejection sampling to sample $\mub_i$ to ensure the Euclidean distance between any pair of $\mub_i$ values is at least $0.2$. Each ball has a probability $p=0.1$ to be at a fixed position $\mub_i$. 

\begin{figure}
\centering
\includegraphics[width=0.45\textwidth]{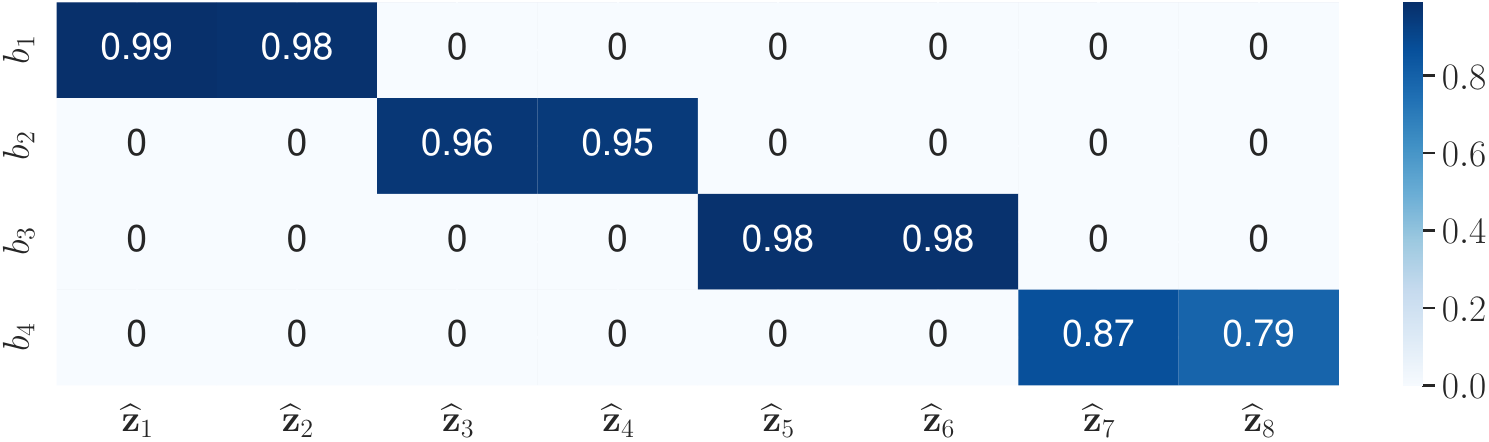}
\caption{
$R^2$ of the linear regression between ball positions $b_1, \dots, b_4$ and learned representations $\hat{z}_1, \dots, \hat{z}_8 $. We use two variables for each $(x,y)$ position.}
\label{fig:ball4} 
\end{figure}

Before we train the model described in Sec.~\ref{sec: implementation}, we pre-train a CNN model to reduce the input image dimension from $64\times 64$ to a vector of size $16\bb\times1$. More details about the network architecture are provided in App.~\ref{app: balls}. Instead of a fixed-size training dataset, we generate images online until convergence.
Since the variables describing the $x$ and $y$ position of the same ball are always degenerate together, they cannot be disentangled in our method (as sufficient variability in Ass.~\ref{ass: standard common basis} does not hold). To evaluate the performance, we perform regression from each representation separately to the corresponding ground-truth position pairs, as illustrated in Fig.~\ref{fig:ball4}. The results show that the position of each ball can be recovered through pairs of representations, which is consistent with our theoretical results. Additional results are in App.~\ref{app: balls}.

\section{RELATED WORK}
Our work is inspired by previous work in nonlinear independent component analysis (ICA), causal representation learning (CRL) and sparsity-based methods. ICA \citep{COMON1994,hyvarinen2013independent} is a seminal approach that identifies latent variables under the assumption that the latent variables are independent and that the mixing function is linear. In the non-linear ICA setting, \citet{hyvarinen2016unsupervised,hyvarinen2017nonlinear} reformulate this standard ICA independence assumption as conditional independence given observed auxiliary variables. \citet{khemakhem2020variational} show the identifiability of the VAE model with a conditionally independent prior. \citet{willetts2021don} show that the auxilliary variables can be unobservable and learned; however, this still requires the conditional independence assumption.

Causal representation learning (CRL) extends (nonlinear) ICA to the case in which latent variables have causal relations between them and hence arbitrary dependencies. Our approach fits in this line of work. Many CRL works consider additional data or information to identify causal variables, e.g., interventions \citep{brehmer2022weakly,lippe2022citris, varici2023score,von2023nonparametric, buchholz2023learning, zhang2023identifiability, ahuja2023interventional}, actions or temporal structure \citep {lippe2023biscuit,lachapelle2022disentanglement, lachapelle2024nonparametric}, multi-view settings \citep{von2021self,ahuja2023multi, yao2023multi,morioka2023causal}, multiple environments \citep{liu2024identifiable}, grouping of observations by sparsity patterns \citep{xu2024sparsity}, hierarchical structures \citep{kong2023identification,kong2024learning} or  conditions on causal graphs \citep{zhang2024causal,dong2024on,song2024causal}. %

In our case, we do not assume any additional data or information besides the observations, but instead consider parametric assumptions on the latent variables and mixing function.  
Similarly to us, \citet{kivva2022identifiability} studies the identifiability of non-degenerate GMM latents with piecewise affine mixing. For PS identifiability, they require that all pairs of latent variables must be conditionally independent given an auxiliary variable.
Other works consider restrictions on the mixing function, e.g., \citet{squires2023linear,bing2024identifying} assume a linear mixing function, \citet{ahuja2023interventional} assumes a finite-degree polynomial and either independent supports or perfect interventions, and \citet{lachapelle2023additive} assumes an additive mixing function.
Our results do not require conditional independence, independent support or interventions and allow mixing functions composed of an infinite number of affine pieces, allowing the potential possibility to approximate any nonlinear functions up to an arbitrary precision.

Our focus on low-rank representations is also inspired by prior work that leverages sparsity in (causal) representation learning, e.g.,  structural sparsity in Jacobians of nonlinear mixing functions \citep{zheng2022identifiability, zheng2023generalizing}, sparsity constraints on affine mixing \citep{ng2023identifiability}, sparsity on causal graphs \citep{zhang2024causal,song2024causal}, sparse decoders \citep{moran2022identifiable} and sparsity task-specific predictors \citep{lachapelle2023synergies, fumero2023leveraging}. For time-series data, \citet{lachapelle2022disentanglement, lachapelle2024nonparametric} propose sparse actions and sparse temporal dependencies, while \citet{li2025on} introduce latent process sparsity, referring to sparse connections between latents both within and across time.
Differently from these approaches, which impose sparsity on the model structure or latent interactions, we are enforcing \emph{sparsity of the representation itself}. This aligns with prior work \citep{tonolini2020variational}, as well as classical methods \textit{sparse component analysis} \citep{gribonval2006survey}, and \textit{sparse dictionary learning} \citep{Mairal_Bach_Ponce2009}. Most works show empirical success on benchmark tasks and do not provide guarantees for identifiability, which is our main focus. \citet{moran2022identifiable} focuses on independent latent variables, while \citet{hu2023global} studies the identifiability of dictionary learning, but they assume \emph{affine mixing}. Instead, we focus on \emph{piecewise-affine mixing} for dependent latent variables.

\section{CONCLUSIONS AND LIMITATIONS}
\label{sec: limitation}
In this work, we focus on identifying pdGMM latent variables with piecewise affine mixing functions. We first show that the entire pdGMM can be identified via an open set. This result is a critical step for a series of increasingly stronger identifiability results, ultimately proving identifiability up to permutation and element-wise transformations, i.e. completely disentangled representations.
While our experiments validate our results, there
are still several limitations. As discussed in App.~\ref{app: assump}, some of our assumptions might not hold in some settings, potentially leading to partial or no theoretical guarantees. 
In particular, the Gaussian assumption for mixture components may not hold, so extending our approach to other parametric models is an exciting avenue for future research. Finally, although our method encourages Gaussianity on representations, ensuring this in practice remains challenging.

\subsubsection*{Acknowledgements}
We thank the anonymous reviewers for their helpful and constructive comments. The research of DX and SM was supported by the Air Force Office of Scientific Research under award number FA8655-22-1-7155. Any opinions, findings, and conclusions or recommendations expressed in this material are those of the author(s) and do not necessarily reflect the views of the United States Air Force. We also thank SURF for the support in using the Dutch National Supercomputer Snellius. 

\bibliography{ref}

@article{yakowitz1968identifiability,
  title={On the identifiability of finite mixtures},
  author={Yakowitz, Sidney J and Spragins, John D},
  journal={The Annals of Mathematical Statistics},
  volume={39},
  number={1},
  pages={209--214},
  year={1968},
  publisher={Institute of Mathematical Statistics}
}

@article{kivva2022identifiability,
  title={Identifiability of deep generative models without auxiliary information},
  author={Kivva, Bohdan and Rajendran, Goutham and Ravikumar, Pradeep and Aragam, Bryon},
  journal={Advances in Neural Information Processing Systems},
  volume={35},
  pages={15687--15701},
  year={2022}
}

@article{xu2024sparsity,
  title={A sparsity principle for partially observable causal representation learning},
  author={Xu, Danru and Yao, Dingling and Lachapelle, S{\'e}bastien and Taslakian, Perouz and von K{\"u}gelgen, Julius and Locatello, Francesco and Magliacane, Sara},
  journal={International Conference on Machine Learning},
  year={2024}
}

@inproceedings{khemakhem2020variational,
  title={Variational autoencoders and nonlinear ica: A unifying framework},
  author={Khemakhem, Ilyes and Kingma, Diederik and Monti, Ricardo and Hyvarinen, Aapo},
  booktitle={International Conference on Artificial Intelligence and Statistics},
  pages={2207--2217},
  year={2020},
  organization={PMLR}
}

@inproceedings{lachapelle2022disentanglement,
  title={Disentanglement via mechanism sparsity regularization: A new principle for nonlinear ICA},
  author={Lachapelle, S{\'e}bastien and Rodriguez, Pau and Sharma, Yash and Everett, Katie E and Le Priol, R{\'e}mi and Lacoste, Alexandre and Lacoste-Julien, Simon},
  booktitle={Conference on Causal Learning and Reasoning},
  pages={428--484},
  year={2022},
  organization={PMLR}
}

@inproceedings{lachapelle2023synergies,
  title={Synergies between Disentanglement and Sparsity: Generalization and Identifiability in Multi-Task Learning},
  author={Lachapelle, S{\'e}bastien and Deleu, Tristan and Mahajan, Divyat and Mitliagkas, Ioannis and Bengio, Yoshua and Lacoste-Julien, Simon and Bertrand, Quentin},
  booktitle={International Conference on Machine Learning},
  pages={18171--18206},
  year={2023},
  organization={PMLR}
}

@inproceedings{lachapelle2023additive,
      title={Additive Decoders for Latent Variables Identification and Cartesian-Product Extrapolation}, 
      author={S. Lachapelle and D. Mahajan and I. Mitliagkas and S. Lacoste-Julien},
      year={2023},
      booktitle={Advances in Neural Information Processing Systems},
}

@article{COMON1994,
title = {Independent component analysis, A new concept?},
journal = {Signal Processing},
year = {1994},
author = {P. Comon}
}

@misc{gallegoPosada2022cooper,
    author={Gallego-Posada, Jose and Ramirez, Juan},
    title={Cooper: a toolkit for Lagrangian-based constrained optimization},
    year={2022}
}

@article{gidel2018variational,
  title={A variational inequality perspective on generative adversarial networks},
  author={Gidel, Gauthier and Berard, Hugo and Vignoud, Ga{\"e}tan and Vincent, Pascal and Lacoste-Julien, Simon},
  journal={International Conference on Learning Representations},
  year={2019}
}

@inproceedings{diederik2014adam,
  author = {Kingma, Diederik P. and Ba, Jimmy},
  booktitle = {International Conference on Learning Representations},
  editor = {Bengio, Yoshua and LeCun, Yann},
  title = {Adam: A Method for Stochastic Optimization.},
  year = {2015}
}

@article{von2021self,
  title={Self-supervised learning with data augmentations provably isolates content from style},
  author={von K{\"u}gelgen, Julius and Sharma, Yash and Gresele, Luigi and Brendel, Wieland and Sch{\"o}lkopf, Bernhard and Besserve, Michel and Locatello, Francesco},
  journal={Advances in neural information processing systems},
  volume={34},
  pages={16451--16467},
  year={2021}
}

@article{scholkopf2021toward,
  title={Toward causal representation learning},
  author={Sch{\"o}lkopf, Bernhard and Locatello, Francesco and Bauer, Stefan and Ke, Nan Rosemary and Kalchbrenner, Nal and Goyal, Anirudh and Bengio, Yoshua},
  journal={Proceedings of the IEEE},
  volume={109},
  number={5},
  pages={612--634},
  year={2021},
  publisher={IEEE}
}

@article{hyvarinen2016unsupervised,
  title={Unsupervised feature extraction by time-contrastive learning and nonlinear ica},
  author={Hyvarinen, Aapo and Morioka, Hiroshi},
  journal={Advances in neural information processing systems},
  volume={29},
  year={2016}
}

@inproceedings{hyvarinen2017nonlinear,
  title={Nonlinear ICA of temporally dependent stationary sources},
  author={Hyvarinen, Aapo and Morioka, Hiroshi},
  booktitle={Artificial Intelligence and Statistics},
  pages={460--469},
  year={2017},
  organization={PMLR}
}

@article{brehmer2022weakly,
  title={Weakly supervised causal representation learning},
  author={Brehmer, Johann and De Haan, Pim and Lippe, Phillip and Cohen, Taco S},
  journal={Advances in Neural Information Processing Systems},
  volume={35},
  pages={38319--38331},
  year={2022}
}

@inproceedings{lippe2022citris,
  title={Citris: Causal identifiability from temporal intervened sequences},
  author={Lippe, Phillip and Magliacane, Sara and L{\"o}we, Sindy and Asano, Yuki M and Cohen, Taco and Gavves, Stratis},
  booktitle={International Conference on Machine Learning},
  pages={13557--13603},
  year={2022},
  organization={PMLR}
}

@article{lippe2023biscuit,
  title={BISCUIT: Causal Representation Learning from Binary Interactions},
  author={Lippe, Phillip and Magliacane, Sara and L{\"o}we, Sindy and Asano, Yuki M and Cohen, Taco and Gavves, Efstratios},
  journal={Proceedings of the Thirty-Ninth Conference on Uncertainty in Artificial Intelligence},
  year={2023}
}

@inproceedings{ahuja2023interventional,
  title={Interventional causal representation learning},
  author={Ahuja, Kartik and Mahajan, Divyat and Wang, Yixin and Bengio, Yoshua},
  booktitle={International Conference on Machine Learning},
  pages={372--407},
  year={2023},
  organization={PMLR}
}

@inproceedings{von2023nonparametric,
  title={Nonparametric Identifiability of Causal Representations from Unknown Interventions},
  author={von K{\"u}gelgen, Julius and Besserve, Michel and Liang, Wendong and Gresele, Luigi and Keki{\'c}, Armin and Bareinboim, Elias and Blei, David M and Sch{\"o}lkopf, Bernhard},
  booktitle={Advances in Neural Information Processing 36},
  year={2023}
}

@inproceedings{
zhang2023identifiability,
title={Identifiability Guarantees for Causal Disentanglement from Soft Interventions},
author={Jiaqi Zhang and Kristjan Greenewald and Chandler Squires and Akash Srivastava and Karthikeyan Shanmugam and Caroline Uhler},
booktitle={Advances in Neural Information Processing Systems},
year={2023},
}

@inproceedings{ahuja2023multi,
  title={Multi-Domain Causal Representation Learning via Weak Distributional Invariances},
  author={Ahuja, Kartik and Mansouri, Amin and Wang, Yixin},
  booktitle={Causal Representation Learning Workshop at NeurIPS 2023},
  year={2023}
}

@inproceedings{squires2023linear,
  title={Linear Causal Disentanglement via Interventions}, 
  author={Chandler Squires and Anna Seigal and Salil Bhate and Caroline Uhler},
  year={2023},
  booktitle = {40th International Conference on Machine Learning},
}

@inproceedings{buchholz2023learning,
  title={Learning Linear Causal Representations from Interventions under General Nonlinear Mixing},
  author={Buchholz, Simon and Rajendran, Goutham and Rosenfeld, Elan and Aragam, Bryon and Sch{\"o}lkopf, Bernhard and Ravikumar, Pradeep},
  booktitle={Advances in Neural Information Processing Systems},
  year={2023}
}

@article{yao2023multi,
  title={Multi-View Causal Representation Learning with Partial Observability},
  author={Yao, Dingling and Xu, Danru and Lachapelle, S{\'e}bastien and Magliacane, Sara and Taslakian, Perouz and Martius, Georg and von K{\"u}gelgen, Julius and Locatello, Francesco},
  journal={International Conference on Learning Representations},
  year={2024}
}

@misc{lachapelle2024nonparametric,
      title={Nonparametric Partial Disentanglement via Mechanism Sparsity: Sparse Actions, Interventions and Sparse Temporal Dependencies}, 
      author={Sébastien Lachapelle and Pau Rodríguez López and Yash Sharma and Katie Everett and Rémi Le Priol and Alexandre Lacoste and Simon Lacoste-Julien},
      year={2024},
      eprint={2401.04890},
      archivePrefix={arXiv},
      primaryClass={stat.ML}
}

@article{kong2023identification,
  title={Identification of nonlinear latent hierarchical models},
  author={Kong, Lingjing and Huang, Biwei and Xie, Feng and Xing, Eric and Chi, Yuejie and Zhang, Kun},
  journal={Advances in Neural Information Processing Systems},
  volume={36},
  pages={2010--2032},
  year={2023}
}

@article{
kong2024learning,
title={Learning Discrete Concepts in Latent Hierarchical Models},
author={Lingjing Kong and Guangyi Chen and Biwei Huang and Eric P. Xing and Yuejie Chi and Kun Zhang},
journal={Advances in Neural Information Processing Systems},
year={2024}
}

@inproceedings{
dong2024on,
title={On the Parameter Identifiability of Partially Observed Linear Causal Models},
author={Xinshuai Dong and Ignavier Ng and Biwei Huang and Yuewen Sun and Songyao Jin and Roberto Legaspi and Peter Spirtes and Kun Zhang},
booktitle={The Thirty-eighth Annual Conference on Neural Information Processing Systems},
year={2024}
}

@article{zhang2024causal,
  title={Causal representation learning from multiple distributions: A general setting},
  author={Zhang, Kun and Xie, Shaoan and Ng, Ignavier and Zheng, Yujia},
  journal={International Conference on Machine Learning},
  year={2024}
}

@inproceedings{
song2024causal,
title={Causal Temporal Representation Learning with Nonstationary Sparse Transition},
author={Xiangchen Song and Zijian Li and Guangyi Chen and Yujia Zheng and Yewen Fan and Xinshuai Dong and Kun Zhang},
booktitle={The Thirty-eighth Annual Conference on Neural Information Processing Systems},
year={2024}
}

@article{morioka2023causal,
  title={Causal representation learning made identifiable by grouping of observational variables},
  author={Morioka, Hiroshi and Hyv{\"a}rinen, Aapo},
  journal={International Conference on Machine Learning},
  year={2024}
}

@inproceedings{tonolini2020variational,
  title={Variational sparse coding},
  author={Tonolini, Francesco and Jensen, Bj{\o}rn Sand and Murray-Smith, Roderick},
  booktitle={Uncertainty in Artificial Intelligence},
  pages={690--700},
  year={2020},
  organization={PMLR}
}

@article{moran2022identifiable,
  title={Identifiable Deep Generative Models via Sparse Decoding},
  author={Moran, G and Sridhar, D and Wang, Y and Blei, D},
  journal={Transactions on machine learning research},
  year={2022}
}

@article{zheng2022identifiability,
  title={On the identifiability of nonlinear ICA: Sparsity and beyond},
  author={Zheng, Yujia and Ng, Ignavier and Zhang, Kun},
  journal={Advances in Neural Information Processing Systems},
  volume={35},
  pages={16411--16422},
  year={2022}
}

@article{zheng2023generalizing,
  title={Generalizing Nonlinear ICA Beyond Structural Sparsity},
  author={Zheng, Yujia and Zhang, Kun},
  journal={Advances in Neural Information Processing Systems},
  volume={36},
  pages={13326--13355},
  year={2023}
}

@article{ng2023identifiability,
  title={On the Identifiability of Sparse ICA without Assuming Non-Gaussianity},
  author={Ng, Ignavier and Zheng, Yujia and Dong, Xinshuai and Zhang, Kun},
  journal={Advances in Neural Information Processing Systems},
  volume={36},
  year={2023}
}

@inproceedings{
fumero2023leveraging,
title={Leveraging sparse and shared feature activations for disentangled representation learning},
author={Marco Fumero and Florian Wenzel and Luca Zancato and Alessandro Achille and Emanuele Rodol{\`a} and Stefano Soatto and Bernhard Sch{\"o}lkopf and Francesco Locatello},
booktitle={Thirty-seventh Conference on Neural Information Processing Systems},
year={2023}
}

@inproceedings{gribonval2006survey,
  title={A survey of sparse component analysis for blind source separation: principles, perspectives, and new challenges},
  author={Gribonval, R{\'e}mi and Lesage, Sylvain},
  booktitle={ESANN'06 proceedings-14th European Symposium on Artificial Neural Networks},
  pages={323--330},
  year={2006},
  organization={d-side publi.}
}

@inproceedings{Mairal_Bach_Ponce2009,
  title={Online dictionary learning for sparse coding},
  author={J. Mairal and F. Bach and J. Ponce and G. Sapiro},
  booktitle={Proceedings of the 26th annual international conference on machine learning},
  pages={689--696},
  year={2009}
}

@inproceedings{
hu2023global,
title={Global Identifiability of  \${\textbackslash}ell\_1\$-based Dictionary Learning via Matrix Volume Optimization},
author={Jingzhou Hu and Kejun Huang},
booktitle={Thirty-seventh Conference on Neural Information Processing Systems},
year={2023}
}

@article{caron2005zero,
  title={The zero set of a polynomial},
  author={Caron, Richard and Traynor, Tim},
  journal={WSMR Report},
  pages={05--02},
  year={2005}
}

@article{bona2023parameter,
  title={Parameter identifiability of a deep feedforward ReLU neural network},
  author={Bona-Pellissier, Joachim and Bachoc, Fran{\c{c}}ois and Malgouyres, Fran{\c{c}}ois},
  journal={Machine Learning},
  volume={112},
  number={11},
  pages={4431--4493},
  year={2023},
  publisher={Springer}
}

@book{Munkres2000Topology,
  author = {Munkres, J. R.},
  edition = 2,
  publisher = {Prentice Hall, Inc.},
  title = {{Topology}},
  year = 2000
}

@article{hyvarinen2024identifiability,
  title={Identifiability of latent-variable and structural-equation models: from linear to nonlinear},
  author={Hyv{\"a}rinen, Aapo and Khemakhem, Ilyes and Monti, Ricardo},
  journal={Annals of the Institute of Statistical Mathematics},
  volume={76},
  number={1},
  pages={1--33},
  year={2024},
  publisher={Springer}
}

@article{gao2024scaling,
  title={Scaling and evaluating sparse autoencoders},
  author={Gao, Leo and la Tour, Tom Dupr{\'e} and Tillman, Henk and Goh, Gabriel and Troll, Rajan and Radford, Alec and Sutskever, Ilya and Leike, Jan and Wu, Jeffrey},
  journal={International Conference on Learning Representations},
  year={2025}
}

@article{ahuja2022weakly,
  title={Weakly supervised representation learning with sparse perturbations},
  author={Ahuja, Kartik and Hartford, Jason S and Bengio, Yoshua},
  journal={Advances in Neural Information Processing Systems},
  volume={35},
  pages={15516--15528},
  year={2022}
}

@article{marks2024sparse,
  title={Sparse feature circuits: Discovering and editing interpretable causal graphs in language models},
  author={Marks, Samuel and Rager, Can and Michaud, Eric J and Belinkov, Yonatan and Bau, David and Mueller, Aaron},
  journal={International Conference on Learning Representations},
  year={2025}
}

@article{cunningham2023sparse,
  title={Sparse autoencoders find highly interpretable features in language models},
  author={Cunningham, Hoagy and Ewart, Aidan and Riggs, Logan and Huben, Robert and Sharkey, Lee},
  journal={International Conference on Learning Representations},
  year={2024}
}

@inproceedings{
li2025on,
title={On the Identification of Temporal Causal Representation with Instantaneous Dependence},
author={Zijian Li and Yifan Shen and Kaitao Zheng and Ruichu Cai and Xiangchen Song and Mingming Gong and Guangyi Chen and Kun Zhang},
booktitle={The Thirteenth International Conference on Learning Representations},
year={2025}
}

@article{bengio2013representation,
  title={Representation learning: A review and new perspectives},
  author={Bengio, Yoshua and Courville, Aaron and Vincent, Pascal},
  journal={IEEE transactions on pattern analysis and machine intelligence},
  volume={35},
  number={8},
  pages={1798--1828},
  year={2013},
  publisher={IEEE}
}

@article{hyvarinen2013independent,
  title={Independent component analysis: recent advances},
  author={Hyv{\"a}rinen, Aapo},
  journal={Philosophical Transactions of the Royal Society A: Mathematical, Physical and Engineering Sciences},
  volume={371},
  number={1984},
  pages={20110534},
  year={2013},
  publisher={The Royal Society Publishing}
}

@article{willetts2021don,
  title={I don't need u: Identifiable non-linear ica without side information},
  author={Willetts, Matthew and Paige, Brooks},
  journal={arXiv preprint arXiv:2106.05238},
  year={2021}
}

@article{varici2023score,
  title={Score-based causal representation learning with interventions},
  author={Varici, Burak and Acarturk, Emre and Shanmugam, Karthikeyan and Kumar, Abhishek and Tajer, Ali},
  journal={Causal Representation Learning Workshop at NeurIPS 2023},
  year={2023}
}

@inproceedings{bing2024identifying,
  title={Identifying linearly-mixed causal representations from multi-node interventions},
  author={Bing, Simon and Ninad, Urmi and Wahl, Jonas and Runge, Jakob},
  booktitle={Causal Learning and Reasoning},
  pages={843--867},
  year={2024},
  organization={PMLR}
}

@article{zheng2018dags,
  title={Dags with no tears: Continuous optimization for structure learning},
  author={Zheng, Xun and Aragam, Bryon and Ravikumar, Pradeep K and Xing, Eric P},
  journal={Advances in neural information processing systems},
  volume={31},
  year={2018}
}

@book{ldrdd2025,
  title={Learning Deep Representations of Data Distributions},
  author={Buchanan, Sam and Pai, Druv and Wang, Peng and Ma, Yi},
  month=aug,
  year={2025},
  publisher={Online},
  note={https://ma-lab-berkeley.github.io/deep-representation-learning-book/.}
}

@book{spirtes2000causation,
  title={Causation, prediction, and search},
  author={Spirtes, Peter and Glymour, Clark N and Scheines, Richard},
  year={2000},
  publisher={MIT press}
}

@inproceedings{
liu2024identifiable,
title={Identifiable Latent Polynomial Causal Models through the Lens of Change},
author={Yuhang Liu and Zhen Zhang and Dong Gong and Mingming Gong and Biwei Huang and Anton van den Hengel and Kun Zhang and Javen Qinfeng Shi},
booktitle={The Twelfth International Conference on Learning Representations},
year={2024}
}

@article{mityagin2015zero,
  title={The Zero Set of a Real Analytic Function},
  author={Mityagin, BS},
  journal={Mathematical Notes},
  volume={107},
  number={3},
  pages={529--530},
  year={2020},
  publisher={Springer}
}
\bibliographystyle{unsrtnat}

\clearpage

\appendix
\thispagestyle{empty}

\onecolumn
\aistatstitle{Identifiability of potentially degenerate Gaussian Mixture Models with piecewise affine mixing: Supplementary Materials}

\section{Notation}
 Throughout this paper, we adopt the following conventions for mathematical notation: We use lowercase italic letters such as $x$ to denote scalars, bold lowercase letters such as $\xb$ to denote deterministic vectors, bold uppercase letters such as $\Xb$ to denote random vectors, and uppercase italic letters such as $X$ to denote matrices, and calligraphic uppercase letters such as $\Xcal$ to denote sets (e.g., a set of inputs or a domain). We list all specific notations in the following table:

\begin{table}[h]
\label{tab: notation}
\begin{center}
\begin{tabular}{cc}
\hline
\textbf{Symbol}  & \textbf{Description} \\ \hline
$\Zb$   & Causal latent variables \\
$\Xb$   & Observations \\
$\fb$   & Mixing function \\
$n$   & Latent space dimension \\
$d$   & Observation space dimension \\
$J$   & Number of components in mixture models\\
$\Zcal$   & Support of pdGMM latent \\
$\Zcal_j$   & Support of $j$-th component of pdGMM latent \\
$\Xcal$   & Support of observation \\
$\Kcal_j$   & Basis index for $j$-th component \\
$\zb^0$   & Translation vector \\
\hline
\end{tabular}
\end{center}
\end{table}

\section{Proofs}
\label{app: proof}

Our work focuses on Gaussian Mixture Models (GMMs), which are frequently used in machine learning models and possess a number of nice theoretical properties like closure under affine transformation etc.
Before presenting our main results, we recall one of the fundamental properties of GMMs, stated in \cref{thm:gmm-iden},  which will serve as a recurring tool in our derivations. This theorem builds a one-to-one correspondence-commonly referred to as \emph{identifiability of distributions} between the probability measure and the parameterization of a GMM. Notably, this property holds even when the mixture includes potentially degenerate components (i.e., components with singular covariance matrices), which we refer to as \emph{potentially degenerate GMMs} (pdGMMs).

To proceed, we begin by reporting again the formal definition of pdGMM in a reduced form, i.e., with distinct components. Note that $N(\mub, \Sigmab)$ stands for the Gaussian probability measure associated to parameters $\mub, \Sigmab$.
\defpdgmm

We now report a classic result on the identifiability of Gaussian Mixture Models (GMMs) on the whole domain by \citet{yakowitz1968identifiability}. While this result holds also for pdGMMs, it only proves identifiability when two GMMs are equal in distribution on the whole domain, while in this paper we will focus on identifiability from an open subset of the domain.

\begin{theorem}[Identifiable Gaussian Mixture Models - Proposition 2 in \citet{yakowitz1968identifiability}]
\label{thm:gmm-iden}
Consider a pair of finite GMMs (in reduced form) in $\RR^n$,
\begin{equation}
    P= \sum_{j=1}^J \lambda_j N(\mub_j, \Sigmab_j)\quad \text{and}\quad P'= \sum_{j=1}^{J'} \lambda_j' N(\mub_j', \Sigmab_j') \, .
\end{equation}
we have that $P = P'$ if and only if the number of mixture components are the same, i.e., $J=J'$, and for some permutation $\pi$ for all $j\in [J]$ the mixture components have the same parameters, i.e., $\lambda_j=\lambda_{\pi(j)}$ and $(\mub_j, \Sigmab_j) = (\mub_{\pi(j)}', \Sigmab_{\pi(j)}')$.
\end{theorem}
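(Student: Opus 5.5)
The ``if'' direction is immediate: if $J=J'$ and the components agree up to a permutation $\pi$, then $P$ and $P'$ are the same finite sum of Gaussian measures. For the ``only if'' direction I will avoid densities entirely, since they do not exist for degenerate components, and work with characteristic functions, which are well defined for every $N(\mub,\Sigmab)$ with $\Sigmab\succeq 0$:
\begin{equation*}
\varphi_{\mub,\Sigmab}(\tb)=\exp\!\big(i\,\tb^\top\mub-\tfrac12\tb^\top\Sigmab\tb\big),\qquad \tb\in\RR^n .
\end{equation*}
If $P=P'$, their characteristic functions coincide. Merge the two lists of parameters into one list of distinct pairs $(\mub_k,\Sigmab_k)$, $k\in[K]$. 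Let $c_k$ be the weight of the pair in $P$ (zero if absent) minus its weight in $P'$. Then $\sum_k c_k\varphi_{\mub_k,\Sigmab_k}(\tb)=0$ for all $\tb$. The whole theorem reduces to the claim that characteristic functions of Gaussians with distinct parameters are linearly independent. Once every $c_k=0$, each component of $P$ appears in $P'$ with the same weight and vice versa; reduced form then gives $J=J'$ and the permutation $\pi$.

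To prove linear independence, I will restrict to a line $\tb=s\vb$ with $s\in\RR$. Set $a_k=\vb^\top\mub_k$ and $b_k=\vb^\top\Sigmab_k\vb\ge 0$. I need to choose $\vb$ so that the pairs $(a_k,b_k)$ are pairwise distinct. For $k\neq l$, either $\mub_k\neq\mub_l$, and then $\{\vb:\vb^\top(\mub_k-\mub_l)=0\}$ is a hyperplane; or $\Sigmab_k\neq\Sigmab_l$, and then $\{\vb:\vb^\top(\Sigmab_k-\Sigmab_l)\vb=0\}$ is the zero set of a nonzero quadratic polynomial. In both cases the bad set is closed and nowhere dense. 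A finite union of such sets cannot cover $\RR^n$, so a good $\vb$ exists.

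With this $\vb$, we have $\sum_k c_k e^{ia_ks}e^{-b_ks^2/2}=0$ for all $s\in\RR$, and I argue by induction on the number of terms. Let $b^*=\min_k b_k$ and multiply through by $e^{b^*s^2/2}$. The terms with $b_k>b^*$ decay like $e^{-(b_k-b^*)s^2/2}$. The remaining terms $\sum_{k:b_k=b^*}c_ke^{ia_ks}$ have distinct frequencies $a_k$, because distinctness of the pairs forces distinct $a_k$ within a fixed $b$. Fix one such index $l$, multiply by $e^{-ia_ls}$, and take the Ces\`aro mean $\frac1{2T}\int_{-T}^{T}\cdot\,ds$ as $T\to\infty$. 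The decaying terms contribute $o(1)$, the other oscillating terms average to $0$, and what remains is $c_l$; hence $c_l=0$. Removing all terms with $b_k=b^*$ and repeating on the rest gives $c_k=0$ for all $k$.

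The main obstacle I expect is the generic choice of $\vb$ separating the projected pairs. This step is what handles degenerate covariances, since two distinct $\Sigmab$'s may agree on many directions. The nowhere-density argument for the zero set of a nonzero quadratic form is the point to check carefully. The Ces\`aro-mean step is routine once the minimal-variance terms are isolated.
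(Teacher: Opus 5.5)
Your proof is correct and complete. You read the weight condition as $\lambda_j=\lambda'_{\pi(j)}$; the unprimed version in the statement is a typo.

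The paper itself gives no proof of this statement. It imports the result from \citet{yakowitz1968identifiability}, and the argument behind that citation has the same skeleton as yours. Identifiability of finite mixtures is equivalent to linear independence of the component family over $\RR$. Linear independence of (possibly degenerate) Gaussians is then reduced to one dimension by restricting to a direction $\vb$ on which the projected pairs $(\vb^\top\mub_k,\vb^\top\Sigmab_k\vb)$ are distinct.

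The real difference is the transform used on the line. The classical route, via Teicher's univariate criterion, uses the moment generating function $e^{a_k s+b_k s^2/2}$, which is finite for every $s$ even when $\Sigmab_k$ is singular. Its terms are real and positive. After ordering the pairs lexicographically by $(b_k,a_k)$, the top term dominates as $s\to+\infty$, so dividing by it and letting $s\to\infty$ kills its coefficient in one line. With the characteristic function, the minimal-variance terms all have modulus $e^{-b^* s^2/2}$ and differ only in phase, which is why you need the Ces\`aro step. Your handling of that step is fine, including:
\begin{itemize}
\item distinct frequencies within a variance level;
\item the $O(1/T)$ bound for the Gaussian-damped terms;
\item the case $b^*=0$ coming from $\Sigmab_k=0$.
\end{itemize}

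The step you flagged is also sound. For symmetric $\Sigmab_k\neq\Sigmab_l$, the form $\vb^\top(\Sigmab_k-\Sigmab_l)\vb$ is not identically zero by polarization. Its zero set is therefore closed with empty interior, and finitely many such sets and hyperplanes cannot cover $\RR^n$.

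This generic-projection device is exactly what the paper does, with $n\times k$ matrices instead of vectors, when it builds the sets $\Ccal_{k,i,j}$ in the proof of \cref{thm:iden-pdgmm-openset-signlepoint}. Your version makes the cited result self-contained in the paper's own language without ever touching densities. The mgf version trades your averaging argument for a single dominant-term limit.
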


\subsection{Identifiability of pdGMMs from an open subset}
\label{app: identi pdgmm openset}
In this section, we address the following question: \emph{Can a potentially degenerate Gaussian Mixture Model (pdGMM) be identified using only information from an open subset of its domain?}

Inspired by \citet{kivva2022identifiability}, which show that a non-degenerate GMM (so a GMM with $|\Sigmab_j| > 0$ for all $j$) can be uniquely identified from its restriction to a single open subset of the domain (\cref{thm:iden-gmm-openset}), we show that a similar identifiability result also holds for pdGMMs (\cref{thm:iden-pdgmm-openset}).

However, the extension from GMMs to pdGMMs is nontrivial. The main challenge arises from the fact that the proof technique in \citet{kivva2022identifiability} critically relies on the analyticity of the probability density function (w.r.t. the Lebesgue measure) of a non-degenerate Gaussian mixture. However, mixtures of degenerate Gaussians do not have a density function, which means this strategy cannot be applied. This necessitates the development of a new proof strategy to overcome the lack of density function.

As a first step, we introduce a formal definition of equivalence in distribution over a constrained subdomain.

\defeqd

We then recall the results from \citet{kivva2022identifiability}, which show that for non-degenerate GMMs, we can identify them via an open subdomain.

\begin{theorem}[Identifiability of non-degenerate GMMs from open set \citep{kivva2022identifiability}]
\label{thm:iden-gmm-openset}
Consider a pair of finite non-degenerate GMMs in $\RR^n$ in reduced form,
\begin{equation}
    \Xb \sim \sum_{j=1}^J \lambda_j N(\mub_j, \Sigmab_j)\quad \text{and}\quad \Xb' \sim \sum_{j=1}^{J'} \lambda_j' N(\mub_j', \Sigmab_j'), 
\end{equation}
where $|\det(\Sigmab_j)|>0, \forall j\in[J]$ and $|\det(\Sigmab'_j)|>0, \forall j\in[J']$. If $\Xb\eqd\Xb'$ on $E$, where $E \subseteq \RR^n$ is an open set, then, $\Xb\eqd\Xb'$.
\end{theorem}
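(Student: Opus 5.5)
The plan is to reduce equality of measures on $E$ to equality of densities, and then use real analyticity to propagate that equality from $E$ to all of $\RR^n$. We may assume $E\neq\emptyset$; otherwise the hypothesis carries no information and the statement is meant for nonempty $E$. Since every $\Sigmab_j$ and $\Sigmab'_j$ is nonsingular, $\Xb$ and $\Xb'$ have Lebesgue densities
\[
p(\xb)=\sum_{j=1}^J \lambda_j \,\phi(\xb;\mub_j,\Sigmab_j),\qquad p'(\xb)=\sum_{j=1}^{J'} \lambda'_j\, \phi(\xb;\mub'_j,\Sigmab'_j),
\]
where $\phi$ is the Gaussian density. Both $p$ and $p'$ are continuous, and in fact real analytic on $\RR^n$.

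\textbf{Step 1 (densities agree on $E$).} From $\mu_E=\mu'_E$ we get $\int_A p\,d\xb=\int_A p'\,d\xb$ for every Borel $A\subseteq E$. Hence $p=p'$ Lebesgue-a.e. on $E$. Since $E$ is open and $p-p'$ is continuous, the set $\{\xb\in E: p(\xb)\neq p'(\xb)\}$ is open. An open set of Lebesgue measure zero is empty, so $p=p'$ pointwise on $E$.

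\textbf{Step 2 (analytic continuation).} Let $q=p-p'$. Each summand $\exp(-\tfrac12(\xb-\mub)^T\Sigmab^{-1}(\xb-\mub))$ is the composition of the entire function $\exp$ with a polynomial, so it is real analytic on $\RR^n$; finite sums preserve this, so $q$ is real analytic. Define
\[
S=\{\xb\in\RR^n:\ \partial^\alpha q(\xb)=0 \text{ for all multi-indices } \alpha\}.
\]
\begin{itemize}
\item $S$ is nonempty: $q\equiv 0$ on the nonempty open set $E$, so every derivative of $q$ vanishes on $E$, and $E\subseteq S$.
\item $S$ is closed: it is an intersection of zero sets of the continuous functions $\partial^\alpha q$.
\item $S$ is open: around any $\xb_0\in S$, $q$ equals its Taylor series on a neighbourhood. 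All Taylor coefficients vanish, so $q\equiv 0$ on that neighbourhood, and hence all derivatives vanish there too.
\end{itemize}
Since $\RR^n$ is connected, $S=\RR^n$, so $p\equiv p'$ everywhere.

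\textbf{Step 3 (conclusion).} Equal densities give $\mu=\mu'$, i.e.\ $\Xb\eqd\Xb'$ on all of $\RR^n$. If the parameter-level statement is wanted as well, \cref{thm:gmm-iden} then yields $J=J'$ and equality of $(\lambda_j,\mub_j,\Sigmab_j)$ up to a permutation.

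\textbf{Main obstacle.} The only step requiring care is the identity theorem for real analytic functions of several variables (Step 2). One must check that the Gaussian mixture density really is analytic, with a power series converging locally around every point, and not merely $C^\infty$. This follows from the standard facts that compositions and sums of real analytic functions are real analytic. One could alternatively cite \citet{mityagin2015zero}: the zero set of a nonzero real analytic function has measure zero, which contradicts $q$ vanishing on a nonempty open set unless $q\equiv 0$.
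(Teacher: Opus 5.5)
Your proposal is correct and takes essentially the same route as the paper: the paper does not reprove this result but cites Kivva et al., and it describes their argument as resting on the real analyticity of the non-degenerate mixture density, which is exactly your Step 1 (densities agree pointwise on $E$) combined with the identity theorem in Step 2. You are also right to require $E\neq\emptyset$; the statement as written assumes this without saying so.
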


Next, we focus on extending these results to pdGMMs, which we will formalize in the following theorem. 

\thmidenpdgmmopenset*

As previously mentioned, proving the above theorem is challenging because, in the degenerate case, the probability density function no longer exists, making the techniques used in \citet{kivva2022identifiability} inapplicable. To overcome this difficulty, we first provide a weaker result, Theorem~\ref{thm:iden-pdgmm-openset-signlepoint}, which is requiring all supports intersect at one point at least, as a middle step to get \cref{thm:iden-pdgmm-openset}.

\begin{restatable}[Identifiability of pdGMMs from open set with common intersection]{theorem}{thmidentifopencommonintersection}
\label{thm:iden-pdgmm-openset-signlepoint}
Under the same conditions as \cref{thm:iden-pdgmm-openset}, and further assuming that there exists a point $\zb_0$, such that $\zb_0\in \cap_{j\in[J]}\Xcal_j$, 
then, the same results stated in \cref{thm:iden-pdgmm-openset} hold.
\end{restatable}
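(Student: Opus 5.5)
We prove \cref{thm:iden-pdgmm-openset-signlepoint} by reducing the degenerate problem to the non-degenerate \cref{thm:iden-gmm-openset}, using projections onto lower-dimensional spaces, and working through the ranks from the top down. Since $\zb_0\in\cap_j\Xcal_j$ and $E$ is open, we may shrink $E$ to an open neighbourhood of $\zb_0$ (taking $\zb_0\in E$ after intersecting; if $\zb_0\notin E$ we can still work locally where each $\Xcal_j$ meets $E$, but the common point lets us use one ball $B$ around a point where all supports meet inside $E$). We also first show that $\Xb'$ satisfies the same intersection property on $E$: every component of $\Xb'$ with positive mass in $E$ must appear. Let $r=\max_j \operatorname{rank}(\Sigmab_j)$ and $\Jcal_k$ be the components of rank $k$.

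\textbf{Step 1: top rank via projection.} For a linear map $A:\RR^n\to\RR^r$, the pushforward $A\Xb$ is a GMM with components $N(A\mub_j,A\Sigmab_jA^T)$. Choose $A$ generic, i.e.\ in the dense open set where $A\Sigmab_jA^T$ is invertible for all $j\in\Jcal_r$ and all rank-$r$ components of $\Xb'$. This condition is the nonvanishing of finitely many polynomials in the entries of $A$, so the set is open and dense. Lower-rank components map to measures supported on proper affine subspaces, hence Lebesgue-null in $\RR^r$.

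The key difficulty is that equality on $E$ does not push forward to equality of $A\Xb$ and $A\Xb'$ on an open set of $\RR^r$: preimages $A^{-1}(U)$ are unbounded cylinders, not contained in $E$. To handle this, I would replace pushforward by a local version. Take $U\subset\RR^n$ a small ball around $\zb_0$. Compare the absolutely continuous parts of the restrictions of $\Xb,\Xb'$ to each $r$-dimensional affine slice, or more concretely, disintegrate along $\ker A$: restrict both measures to $E$, push forward by $A$, and take the Lebesgue-absolutely-continuous part. That part equals the density $\sum_{j\in\Jcal_r}\lambda_j\int$ over fibres, which is real-analytic only if fibres are whole. So instead I would use $A$ restricted to each rank-$r$ support: on $\Xcal_j$ the map $A|_{\Xcal_j}$ is an affine bijection onto $\RR^r$, and $\mu_E$ restricted to $\Xcal_j$ has a real-analytic density there.

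A cleaner route is to exploit the fact that the rank-$r$ supports are finitely many affine $r$-planes. On $E\cap\Xcal_j$ minus the other planes, which is relatively open and nonempty, the measure $\mu_E$ coincides with a sum of the analytic densities of the components supported on that same plane. These must then equal the corresponding densities for $\Xb'$. Identity theorem for analytic functions extends equality to the whole plane, and \cref{thm:iden-gmm-openset} applied inside the plane identifies those components.

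\textbf{Step 2: peel and recurse.} Having matched all rank-$r$ components, subtract them from both measures. The residual pdGMMs still agree on $E$ and have maximal rank $<r$. Their supports still contain $\zb_0$, so $E$ still meets them, and we induct downward on rank until rank $1$. Rank $0$ is excluded by hypothesis. At the end both mixtures have identical component lists, giving $J=J'$ and the permutation $\pi$; then $\Xb\eqd\Xb'$ globally by \cref{thm:gmm-iden}.

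\textbf{Main obstacle.} The hardest step is making the slicing argument rigorous. Different components can share the same support plane; a lower-rank support can lie inside a higher-rank plane; and one must show the primed mixture has no extra plane meeting $E$ that is absent from the unprimed one. The last point follows because such a plane would carry positive $\mu'_E$-mass on a set that is $\mu_E$-null. I would handle overlap by working, for each rank-$r$ plane $H$, with the measure $\mu_E|_H$: its $r$-dimensional Lebesgue-absolutely-continuous part is exactly the sum of analytic densities of rank-$r$ components living on $H$, since lower-rank pieces are singular. This isolates the components by plane and rank. The common point $\zb_0$ is what guarantees that each peeled-down level still meets $E$.
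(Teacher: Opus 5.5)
Your argument is correct, but it follows a different route from the paper. The part that carries it is the per-plane Lebesgue decomposition in your ``cleaner route'' and your last paragraph; the generic projection at the start of Step~1 is abandoned.

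\textbf{How the paper proceeds.} It never works on the support planes themselves. For each rank level $k$ it pushes $\mu_E$ forward by a generic $A^T:\RR^n\to\RR^k$. This matrix is taken from an open, co-null set of matrices that preserve the rank of every component of rank at most $k$ (\cref{lemma: keep low rank}) and keep distinct rank-$k$ components distinct. Injectivity of $A^T$ on each $\Xcal_j$ turns $A^T(\mu_E)$ into a sum of projected Gaussians, each restricted to its own window $A^T(\Xcal_j\cap E)$. The Lebesgue decomposition in $\RR^k$ then isolates the rank-$k$ terms, and \cref{thm:iden-gmm-openset} is applied on $U=\bigcap_j A^T(\Xcal_j\cap E)$. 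Finally, $(\mub_j,\Sigmab_j)$ are recovered from $(A^T\mub_j,A^T\Sigmab_jA)$ by letting $A$ range over an open set and differentiating.

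\textbf{What each route buys.} Because the projection flattens different support planes onto one $\RR^k$, each top-rank component is seen only through its own window. The common point $\zb_0$ is exactly what makes these windows share the nonempty open set $U$. The payoff of the paper's route is that the non-degenerate result is applied verbatim in a fixed Euclidean space.

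Your intrinsic approach keeps the planes apart. It therefore needs neither the projection lemma, nor the lifting step (including the bookkeeping of a matching permutation that could a priori depend on $A$), nor $\zb_0$. In fact it proves \cref{thm:iden-pdgmm-openset} directly, so the intermediate theorem and the ball-grouping argument become unnecessary.

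It does not even need the top-down peeling. Take any support plane $H$. A component whose support strictly contains $H$ gives $H$ zero mass. A component supported on a different plane of dimension at most $\dim H$ either gives $H$ zero mass or lives on a lower-dimensional subspace of $H$. Hence the part of $\mu_E(\cdot\cap H)$ that is absolutely continuous with respect to Lebesgue measure on $H$ is exactly $\sum_{j:\Xcal_j=H}\lambda_j N(\mub_j,\Sigmab_j)(\cdot\cap E)$, for all planes simultaneously. The only extra ingredient your route needs is the routine fact that components sharing a support are non-degenerate Gaussians in a common affine chart of $H$.

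\textbf{Small repairs.}
\begin{enumerate}
\item The role you assign to $\zb_0$ is misplaced. What keeps every level visible is the hypothesis $\Xcal_j\cap E\neq\emptyset$, which peeling does not affect. Also, you cannot shrink $E$ around $\zb_0$ unless $\zb_0\in E$. Drop both remarks.
\item Components of $\Xb'$ whose supports miss $E$ are invisible at every step. To conclude $J=J'$, add that the matched primed weights already sum to $\sum_j\lambda_j=1$, which leaves no room for unmatched primed components.
\item On a plane $H$ the two sub-mixtures are not probability measures and a priori have different masses. Apply the identity theorem to the unnormalized analytic densities first, then \cref{thm:gmm-iden}. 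Once equality holds on all of $H$, \cref{thm:iden-gmm-openset} is not needed.
\end{enumerate}
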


To prove Theorem~\ref{thm:iden-pdgmm-openset-signlepoint}, we build upon four key technical lemmas that we introduce and prove below.
We now provide a brief overview of our proof strategy. The core idea is to project a pdGMM in $\RR^n$ into a sequence of lower-dimensional spaces $\RR^k$, where $k\in [n]$. Intuitively, the projection serves to “resolve” the degeneracy by embedding each low-rank component into a subspace where it appears non-degenerate, thus enabling the application of classical identifiability arguments in Theorem~\ref{thm:iden-gmm-openset}.

To make this strategy work, the projections must satisfy two key properties. First, to ensure that no critical information is lost during projection (intuitively, the intrinsic structure of the component is retained under projection), the rank of the covariance matrix for the targeted Gaussian component must be preserved. Second, a single projection is insufficient to fully recover the parameters of a pdGMM; instead, we require a sufficiently rich set of rank-preserving projections to capture all distinguishable parameterizations. The existence of such a family of rank-preserving mappings is guaranteed by Lemma~\ref{lemma: keep low rank}.

\begin{lemma}
\label{lemma: keep low rank}

Suppose a positive semi-definite and symmetric matrix $\Sigmab\in \RR^{n\times n}$ has rank $k\le n$.
Let 
\begin{align}
\Bcal:=\{A\in \RR^{n\times m}: rank(A^T\Sigmab A)=k\},
\end{align}
where $k\le m\le n$. Then, we have that i) the Lebesgue measure of its complement $\Bcal^c$, denoted by $\Lcal(\Bcal^c)$ is zero, and ii) $\Bcal$ is open.
\end{lemma}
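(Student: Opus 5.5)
Write $\Sigmab = U U^T$ with $U\in\RR^{n\times k}$ of full column rank $k$ (for instance from the spectral decomposition, keeping only the $k$ nonzero eigenvalues). Then $A^T\Sigmab A = (U^TA)^T(U^TA)$, so $rank(A^T\Sigmab A)=rank(U^TA)$. Here $U^TA\in\RR^{k\times m}$ and $k\le m$, so $A\in\Bcal$ exactly when the $k\times m$ matrix $U^TA$ has full row rank $k$. This is equivalent to some $k\times k$ minor of $U^TA$ being nonzero. The plan is to prove both claims from this polynomial description.

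\textbf{Openness (ii).} The map $A\mapsto U^TA$ is linear, hence continuous. The set of $k\times m$ matrices of rank $k$ is open, since it is the preimage of $\RR\setminus\{0\}$ under the continuous function $M\mapsto\det(MM^T)$. Therefore $\Bcal$ is open. Equivalently, $\Bcal=\{A:\det(U^TAA^TU)\neq 0\}$, i.e., $\Bcal=\{A: p(A)\neq 0\}$ with $p(A):=\det(A^T\ldots)$ replaced by the polynomial $p(A):=\det(U^TAA^TU)$ in the entries of $A$.

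\textbf{Measure zero of the complement (i).} We have $\Bcal^c=\{A: p(A)=0\}$, which is the zero set of a polynomial on $\RR^{n\times m}\cong\RR^{nm}$. A standard fact, cited as \citet{caron2005zero}, is that the zero set of a polynomial that is not identically zero has Lebesgue measure zero. It therefore suffices to exhibit one $A$ with $p(A)\neq 0$. Take $A=[\,U \;\; \mathbf{0}\,]\in\RR^{n\times m}$, padding with $m-k$ zero columns. Then $U^TA=[\,U^TU \;\; \mathbf{0}\,]$, and $U^TU$ is invertible because $U$ has full column rank. So $U^TA$ has rank $k$, and $p(A)=\det((U^TU)^2)>0$.

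The only step needing care is the nontriviality of $p$, which the explicit witness above settles. The case $k=0$ is excluded in the theorem, but it is trivial anyway, since then $\Bcal$ is all of $\RR^{n\times m}$. The rest is routine.
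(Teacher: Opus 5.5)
Your proof is correct, and it takes a different route from the paper's. The paper never factors $\Sigmab$. Using $\mathrm{rank}(A^T\Sigmab A)\le k$, it writes $\Bcal^c$ as the intersection, over index pairs $(I,J)$ with $|I|=|J|=k$, of the zero sets of the minors $\det((A^T\Sigmab A)_{I,J})$. Openness then follows because $\Bcal$ is a finite union of sets on which a continuous minor is nonzero. Measure zero follows once a single minor is shown not to vanish identically, for which the paper uses the witness $A_0=[\eb_1\ \cdots\ \eb_m]$ together with the identity $(A_0^T\Sigmab A_0)_{I,J}=\Sigmab_{I,J}$ for a nonsingular $k\times k$ submatrix of $\Sigmab$.

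Your factorization $\Sigmab=UU^T$ replaces this family of minors by a single Gram determinant $p(A)=\det(U^TAA^TU)$, whose nonvanishing characterizes $\Bcal$ exactly. Both claims then come from one polynomial, and the rank bound $\mathrm{rank}(U^TA)\le k$ is automatic.

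Your route also makes the nontriviality step more robust. Your witness $[\,U\;\;\mathbf{0}\,]$ works for every $m\ge k$. By contrast, $A_0^T\Sigmab A_0$ is just the leading $m\times m$ block of $\Sigmab$, so for $m<n$ the nonsingular submatrix can lie outside it. For example, $\Sigmab=\mathrm{diag}(0,1)$ with $m=k=1$ gives $A_0^T\Sigmab A_0=0$. The paper's step therefore needs a repair, e.g.\ taking as first columns the $\eb_i$ with $i\in I$ for a nonsingular principal minor $\Sigmab_{I,I}$, which exists because $\Sigmab$ is PSD. The paper's formulation has only the mild advantage of needing no factorization of $\Sigmab$.

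Two cosmetic fixes remain on your side. Delete the leftover fragment ``$p(A):=\det(A^T\ldots)$ replaced by'' in the openness paragraph. Also, the lemma as stated does allow $k=0$; only the downstream theorem assumes positive ranks, and your closing remark already covers that case.
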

\begin{proof}
The fact that $rank(A\Sigmab A^T) = k$ means there exists at least one $k \times k$ submatrix with full rank $k$.
Consider the sets
\begin{align}
    \Acal_{I,J}:=\{A\in \RR^{n\times m}: \det((A^T\Sigmab A)_{I,J})= 0\}
\end{align}
which are the sets of matrices such that $A\Sigmab_j A^T$ has a submatrix with index sets $I,J \subseteq [m]$ with $|I| = |J|$ that is full rank. Then the complement of $\Bcal$, denoted as  $\Bcal^c$, can be rewritten as
\begin{align}
\label{equ: BA cal}
\Bcal^c=\cap_{\{I,J\subseteq [m]: |I|=|J|=k\}}\Acal_{I,J}.
\end{align}

This implies $\Lcal(\Bcal^c)\leq \Lcal(\Acal_{I,J})$, $\forall (I,J)\in \{I,J\subseteq [n]: |I|=|J|=k\}$. Therefore, if we can find one $\Acal_{I,J}$ which is measure zero, $\Bcal^c$ has Lebesgue measure zero as well.

Note that $\det((A^T\Sigmab A)_{I,J})$ is a polynomial in $A_{i,j}$, where $i \in I,j \in J$ are individual row and column indices. This is because $\det$ is a polynomial and $(A^T\Sigmab A)_{I,J}$ is a polynomial and thus composition of polynomials is a polynomial. Additionally, $\Acal_{I,J}$ is the set of the roots of this polynomial. By \cite{caron2005zero}, a polynomial from $\mathbb{R}^n$ to $\mathbb{R}$ is either zero everywhere or non-zero almost everywhere. In our context, it means if the polynomial $\det((A^T\Sigmab A)_{I,J})$ is not zero everywhere, then it is nonzero almost everywhere, i.e. its roots $\Acal_{I,J}$ has zero Lebesgue measure in $\mathbb{R}^{n \times m}$. 
Thus, all we need to show is that the polynomial $\det((A^T\Sigmab A)_{I,J})$ is not always zero. In other words, we must find a $A_0\in \RR^{n\times m}$, such that for one pair $I,J$, $\det((A^T\Sigmab A)_{I,J})\neq 0$. Then in the following, we show how to construct such a $A_0$.

Since $\rank(\Sigmab) = k$, there exists an invertible $k \times k$ submatrix $\Sigmab_{I,J}$. Let $\tilde\Ib := [\eb_1\ \cdots\ \eb_m] \in \RR^{n \times m}$ where the vectors $\eb_k$ are the elements of the standard basis of $\RR^n$ (one-hot vector). Notice that 
\begin{align}
    (\tilde\Ib^{T} \Sigmab \tilde\Ib)_{I,J} = (\tilde\Ib^{T})_{I, \cdot} \Sigmab \tilde\Ib_{\cdot, J} = (\tilde\Ib_{\cdot, I})^T \Sigmab \tilde\Ib_{\cdot, J} = \Sigmab_{I, J}  \,.
\end{align}
So we found index sets $I, J \subseteq [m]$ with $|I|=|J|=k$ and a matrix $A_0 := \tilde\Ib \in \RR^{n\times m}$ such that 
$$\det((A_0^T\Sigmab A_0)_{I,J}) = \det((\tilde\Ib^T\Sigmab \tilde\Ib)_{I,J}) = \det(\Sigmab_{I,J}) \neq 0 \,,$$
as desired. 

Hence, the polynomial function $p(A) :=  \det((A^T\Sigmab A)_{I,J})$ is not everywhere zero and thus, by \cite{caron2005zero}, it is nonzero almost everywhere. In other words, $\Acal_{I,J}$ has zero Lebesgue measure. We can conclude that $\Bcal^c$ has zero Lebesgue measure.

We now show that $\Bcal$ is open. Take the complement set on \cref{equ: BA cal}, we have 
\begin{align}
\label{equ: BA cal2}
\Bcal=\cup_{\{I,J\subseteq [n]: |I|=|J|=k\}}\Acal^c_{I,J},\\
\Acal^c_{I,J}=\{A\in \RR^{n\times m}: \det((A^T\Sigmab A)_{[I,J]})\neq 0\}.
\end{align}
Since $\det((A^T\Sigmab A)_{[I,J]})$ is a continuous function and $\Acal^c_{I,J}$ is the preimage of open set $\RR\backslash \{0\}$, $\Acal^c_{I,J}$ is open as well. As the union of finite open sets, we can conclude $\Bcal$ is open.
\end{proof}

 The following lemma will be useful many times in what follows. For a probability measure $\mu$ and a function $\fb$, $\fb(\mu)$ designates the pushforward of $\mu$ under $\fb$. The pushforward measure is defined as $\fb(\mu)(E) := \mu(\fb^{-1}(E))$. 
 We recall that as defined in Def.~\ref{def: eqd}, $\mu_E$ is a restriction of $\mu$ to $E$.

\begin{lemma}
\label{lemma: pushforward}
Let $\mu: \Fcal^{(n)} \rightarrow \RR$ be a measure on $\Fcal^{(n)}$, the Borel sigma-algebra of $\RR^n$. Let $\fb:\RR^n \rightarrow \RR^m$ be an injective function %
for any set $E\in \Fcal^{(n)}$, we have $\fb(\mu_E)=\fb(\mu)_{\fb(E)}$.
\end{lemma}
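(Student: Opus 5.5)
The identity $\fb(\mu_E)=\fb(\mu)_{\fb(E)}$ is an equality of measures on $\Fcal^{(m)}$, so I would prove it by evaluating both sides on an arbitrary Borel set $A\subseteq\RR^m$ and reducing each to $\mu$ of the same subset of $\RR^n$. Throughout I assume that $\fb$ is Borel measurable, so that the pushforward is defined, and that $\fb(E)$ is Borel, so that the restriction $\fb(\mu)_{\fb(E)}$ makes sense as in Def.~\ref{def: eqd}. For the injective continuous (piecewise affine) maps used later, the second assumption follows from the Lusin--Souslin theorem: an injective Borel map sends Borel sets to Borel sets. I would record this as a standing remark.

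For the left-hand side, the definitions of pushforward and restriction give
\begin{align}
\fb(\mu_E)(A)=\mu_E(\fb^{-1}(A))=\mu(\fb^{-1}(A)\cap E).
\end{align}
For the right-hand side they give
\begin{align}
\fb(\mu)_{\fb(E)}(A)=\fb(\mu)(A\cap\fb(E))=\mu(\fb^{-1}(A\cap\fb(E)))=\mu(\fb^{-1}(A)\cap\fb^{-1}(\fb(E))).
\end{align}
Here I use that preimages commute with intersections.

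The only non-formal step, and the one place injectivity enters, is the identity $\fb^{-1}(\fb(E))=E$. The inclusion $E\subseteq\fb^{-1}(\fb(E))$ holds for any map. Conversely, if $\fb(\xb)\in\fb(E)$, then $\fb(\xb)=\fb(\yb)$ for some $\yb\in E$, and injectivity gives $\xb=\yb\in E$. Substituting this identity shows that both sides equal $\mu(\fb^{-1}(A)\cap E)$ for every Borel $A$, which proves the lemma.

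The main obstacle is measurability rather than the computation: checking that $\fb(E)\in\Fcal^{(m)}$, so that the right-hand side is well defined. I would handle it with the Lusin--Souslin remark above. An alternative is to note that for injective piecewise affine maps with finitely many pieces, $\fb(E)$ is a finite union of images of Borel sets under injective affine maps, and such images are Borel.
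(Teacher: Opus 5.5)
Your proof is correct and follows the paper's argument essentially verbatim. Both evaluate the two measures on an arbitrary Borel set, use injectivity to get $\fb^{-1}(\fb(E))=E$, and commute preimages with intersections. Your check that $\fb$ is Borel and that $\fb(E)$ is Borel (via Lusin--Souslin, or the piecewise-affine argument) is a point the paper leaves implicit, and you handle it correctly.
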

\begin{proof}
Take an arbitrary set $B \in \Fcal^{(m)}$, the Borel sigma-algebra over $\RR^m$. We have
\begin{align}
    \fb(\mu_E)(B) = \mu_E(\fb^{-1}(B)) = \mu(\fb^{-1}(B) \cap E)
\end{align}
Since $\fb$ is injective, we have that $E = \fb^{-1}(\fb(E))$.
This means
\begin{align}
    \fb(\mu_E)(B) &= \mu(\fb^{-1}(B) \cap \fb^{-1}(\fb(E))) \\
&=\mu(\fb^{-1}(B\cap \fb(E)))\\
&=\fb(\mu)(B\cap \fb(E))\\
&=\fb(\mu)_{\fb(E)}(B) \,,
\end{align}
where the second equality uses the general fact that taking the preimage preserves intersections. 

\end{proof}

The next lemma is a slightly different version from the one showed above where the function is required to be injective only on the support of the measure $\mu$.

\begin{lemma}
\label{lemma: pushforward restriction measure}
Let $\mu: \Fcal^{(n)} \rightarrow \RR$ be a measure on $\Fcal^{(n)}$, the Borel sigma-algebra of $\RR^n$. Let $\Xcal\subseteq \RR^n$ be the support of $\mu$, and let $\fb:\RR^n \rightarrow \RR^m$ be such that its restriction to $\Xcal$ is injective, %
then for any set $E\in \Fcal^{(n)}$, we have $\fb(\mu_E)=\fb(\mu)_{\fb(E \cap \Xcal)}$.
\end{lemma}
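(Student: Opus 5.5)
The plan is to reproduce the argument of Lemma~\ref{lemma: pushforward}, but to work with the restriction of $\fb$ to the support $\Xcal$ rather than with $\fb$ on all of $\RR^n$. The central observation is that $\mu$ puts no mass outside $\Xcal$, i.e.\ $\mu(\Xcal^c)=0$. Consequently, whenever we evaluate $\mu$ on a set, we may first intersect that set with $\Xcal$ without changing the value. Once everything has been intersected with $\Xcal$, the preimage identities that previously relied on global injectivity only require injectivity on $\Xcal$.

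Concretely, let $B\in\Fcal^{(m)}$ be arbitrary. By the definition of the pushforward and of the restriction,
\[
\fb(\mu_E)(B)=\mu(\fb^{-1}(B)\cap E)=\mu(\fb^{-1}(B)\cap E\cap\Xcal).
\]
For the right-hand side we likewise get
\[
\fb(\mu)_{\fb(E\cap\Xcal)}(B)=\mu\bigl(\fb^{-1}(B\cap\fb(E\cap\Xcal))\bigr)=\mu\bigl(\fb^{-1}(B)\cap\fb^{-1}(\fb(E\cap\Xcal))\cap\Xcal\bigr),
\]
where we used that preimages commute with intersections and that $\Xcal^c$ is $\mu$-null. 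It remains to prove the set identity
\[
\fb^{-1}(\fb(E\cap\Xcal))\cap\Xcal=E\cap\Xcal .
\]
The inclusion $\supseteq$ holds trivially. For $\subseteq$, take $\xb\in\Xcal$ with $\fb(\xb)=\fb(\yb)$ for some $\yb\in E\cap\Xcal$. Since $\fb|_{\Xcal}$ is injective and both $\xb$ and $\yb$ lie in $\Xcal$, we get $\xb=\yb\in E$. Substituting this identity, both expressions reduce to $\mu(\fb^{-1}(B)\cap E\cap\Xcal)$, so they coincide.

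The main obstacle is measure-theoretic rather than combinatorial. First, we need $\mu(\Xcal^c)=0$. This holds because the support is closed and, on $\RR^n$, which is second countable, its complement is a countable union of open null sets. Second, $\fb(E\cap\Xcal)$ must be Borel so that the restriction is well defined. For the functions used in the paper this is unproblematic: $\fb$ is continuous and injective on $\Xcal$, so it is an injective Borel map on a Borel set, and its image is Borel by the Lusin--Souslin theorem. Alternatively, this can be assumed implicitly, as in Lemma~\ref{lemma: pushforward}. I would state this measurability point explicitly and then proceed with the computation above.
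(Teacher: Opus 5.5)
Your proof is correct and follows the paper's argument: both use $\mu(\Xcal^c)=0$ to intersect with $\Xcal$, then the identity $\fb^{-1}(\fb(E\cap\Xcal))\cap\Xcal=E\cap\Xcal$ from injectivity on $\Xcal$. The only difference is that you reduce both sides to $\mu(\fb^{-1}(B)\cap E\cap\Xcal)$ instead of chaining left to right, and you also make explicit two points the paper leaves implicit: why $\Xcal^c$ is $\mu$-null, and why $\fb(E\cap\Xcal)$ is Borel (via Lusin--Souslin, which needs $\fb$ Borel, as it is for the linear maps $A_1^T$ where the lemma is applied).
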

\begin{proof}
Take an arbitrary set $B \in \Fcal^{(m)}$, the Borel sigma-algebra over $\RR^m$. We have
\begin{align}
    \fb(\mu_E)(B) &= \mu_E(\fb^{-1}(B)) \\
    &= \mu(\fb^{-1}(B) \cap E) \\
    &= \mu(\fb^{-1}(B) \cap E \cap \Xcal) + \mu(\fb^{-1}(B) \cap E \cap \Xcal^c) \\
    &= \mu(\fb^{-1}(B) \cap (E \cap \Xcal)) \,,
\end{align}
where the last equality used the fact that $\mu(\Xcal^c) = 0$. Since $\fb$ is injective on $\Xcal$ and $(E \cap \Xcal) \subseteq \Xcal$, we have that $(E \cap \Xcal) = \fb^{-1}(\fb(E \cap \Xcal)) \cap \Xcal$. %
This means
\begin{align}
    \fb(\mu_E)(B) &= \mu(\fb^{-1}(B) \cap \fb^{-1}(\fb(E \cap \Xcal)) \cap \Xcal) \\
    &= \mu(\fb^{-1}(B) \cap \fb^{-1}(\fb(E \cap \Xcal)) \cap \Xcal) + \underbrace{\mu(\fb^{-1}(B) \cap \fb^{-1}(\fb(E \cap \Xcal)) \cap \Xcal^c)}_{=0} \\
    &= \mu(\fb^{-1}(B) \cap \fb^{-1}(\fb(E \cap \Xcal))) \\
&=\mu(\fb^{-1}(B\cap \fb(E \cap \Xcal)))\\
&=\fb(\mu)(B\cap \fb(E \cap \Xcal))\\
&=\fb(\mu)_{\fb(E \cap \Xcal)}(B) \,,
\end{align}
where the second equality uses the fact that $\mu(\Xcal^c)=0$ and fourth equality uses the general fact that taking the preimage preserves intersections. 

\end{proof}

The previous two lemmas show that the injectivity can ensure the exchange order of pushforward and restricted measures. In the following lemma, we show that the rank-preserving projections are injective on the domain we are interested in, and hence we can apply these lemmas.

\begin{lemma}
\label{lemma: inv on column space}
For $A\in \RR^{n\times m}$, where $m\leq n$, if $rank(\Sigmab)=\rank(A^T\Sigmab A)=k\leq m$, then, $A^T$ is injective on $col(\Sigmab)+\mub$, $\forall \mub\in \RR^n$.
\end{lemma}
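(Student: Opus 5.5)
Because $A^T$ is linear, injectivity on the affine set $col(\Sigmab)+\mub$ is the same as injectivity on the subspace $col(\Sigmab)$. Indeed, if $A^T(\xb+\mub)=A^T(\yb+\mub)$ with $\xb,\yb\in col(\Sigmab)$, then $A^T(\xb-\yb)=\mathbf{0}$, and $\xb-\yb\in col(\Sigmab)$. So the translation by $\mub$ plays no role. It suffices to prove $\ker(A^T)\cap col(\Sigmab)=\{\mathbf{0}\}$, or equivalently that the restriction of $A^T$ to $col(\Sigmab)$ has rank $k=\dim col(\Sigmab)$.

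For this I factor the matrix whose rank is assumed. Since $\Sigmab$ is symmetric positive semi-definite of rank $k$ (the setting of Lemma~\ref{lemma: keep low rank}), write $\Sigmab=LL^T$ with $L\in\RR^{n\times k}$ of full column rank $k$, e.g.\ from the eigendecomposition restricted to the nonzero eigenvalues. Then $col(\Sigmab)=col(L)$ and
\[
A^T\Sigmab A=(A^TL)(A^TL)^T .
\]
This gives $\rank(A^T\Sigmab A)=\rank(A^TL)$, since $\rank(MM^T)=\rank(M)$ for any real $M$. The hypothesis $\rank(A^T\Sigmab A)=k$ therefore yields $\rank(A^TL)=k$. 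The matrix $A^TL$ is $m\times k$, so it has full column rank $k$ and the map $\vb\mapsto A^TL\vb$ is injective on $\RR^k$.

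Now let $\xb\in col(\Sigmab)$ with $A^T\xb=\mathbf{0}$. Write $\xb=L\vb$. Then $A^TL\vb=\mathbf{0}$, so $\vb=\mathbf{0}$ and hence $\xb=\mathbf{0}$. This proves the kernel condition, and by the first paragraph $A^T$ is injective on $col(\Sigmab)+\mub$ for every $\mub$.

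The only step that needs care is the identity $\rank(A^T\Sigmab A)=\rank(A^TL)$. It follows from $\ker(MM^T)=\ker(M^T)$, since $MM^T\yb=\mathbf{0}$ implies $\norm{M^T\yb}_2^2=0$. Without positive semi-definiteness this step would fail. If PSD-ness were not available, I would fall back on the weaker bound $\rank(A^T\Sigmab A)\le\rank(A^T|_{col(\Sigmab)})\le k$, which also forces the restriction of $A^T$ to $col(\Sigmab)$ to have rank $k$ and hence trivial kernel.
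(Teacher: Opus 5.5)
Your proof is correct, but its main step differs from the paper's. The paper never factors $\Sigmab$. It uses only $\mathrm{rank}(A^T\Sigmab A)\le \mathrm{rank}(A^T\Sigmab)$ to get $\dim \mathrm{null}(A^T\Sigmab)\le n-k$. It combines this with the trivial inclusion $\mathrm{null}(\Sigmab)\subseteq \mathrm{null}(A^T\Sigmab)$ and with $\dim\mathrm{null}(\Sigmab)=n-k$ to conclude $\mathrm{null}(\Sigmab)=\mathrm{null}(A^T\Sigmab)$. It then reads this as $A^T\xb=\mathbf{0}\Rightarrow \xb=\mathbf{0}$ for $\xb=\Sigmab\yb\in\mathrm{col}(\Sigmab)$, and handles the translation by $\mub$ exactly as in your first paragraph. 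That is essentially your fallback sentence, and it needs no symmetry or positive semi-definiteness of $\Sigmab$ at all.

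Your primary route instead writes $\Sigmab=LL^T$ and uses $\mathrm{rank}(MM^T)=\mathrm{rank}(M)$. This requires PSD, which is available here since $\Sigmab$ is a covariance matrix, but it gives more. The equality $\mathrm{rank}(A^T\Sigmab A)=\mathrm{rank}(A^TL)$ also yields the converse: for a covariance matrix, the rank condition defining $\Bcal$ in \cref{lemma: keep low rank} is equivalent to $A^T$ being injective on $\mathrm{col}(\Sigmab)$.

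For the direction the lemma actually asks for, the step you flag as needing care is not needed. The chain $k=\mathrm{rank}\bigl((A^TL)(A^TL)^T\bigr)\le \mathrm{rank}(A^TL)\le k$ holds trivially, so $\ker(MM^T)=\ker(M^T)$ only matters if you want that equivalence.
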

\begin{proof}
Since $rank(A^T\Sigmab A)\leq rank(A^T\Sigmab)$, we get $k\leq rank(A^T\Sigmab)\leq m$. Now we consider the null space for $A^T\Sigmab$ and $\Sigmab$.

First, we know dim$(null(A^T\Sigmab))\in[n-m,n-k]$ and dim$(null(\Sigmab))=n-k$. Second, $\forall x\in \RR^n$, if $\Sigmab x=0$, then $A^T\Sigmab x=0$. This means $null(\Sigmab)\subseteq null(A^T\Sigmab)$. Combined with the first point, we can get dim$(null(A^T\Sigmab))=n-k$. Since both of them are affine subspaces with the same dimension and one is the subset of the other, we get $null(\Sigmab)= null(A^T\Sigmab)$. This means
\begin{align}
\forall y\in \RR^n, \text{ if }A^T\Sigmab y=0 \quad \Rightarrow \quad \Sigmab y=0
\end{align}
Denote $x:=\Sigmab y\in col(\Sigmab)$, then we have
\begin{align}
\label{equ: x=0}
\forall x\in col(\Sigmab), \text{ if }A^Tx=0 \quad \Rightarrow \quad x=0.
\end{align}
Now consider two distinct points $x_1, x_2\in col(\Sigmab)+\mub$. As $x_1-x_2\in col(\Sigmab)$ but not equal to 0, by \cref{equ: x=0}, we have 
\begin{align}
A^T (x_1-x_2)\neq 0 \quad\Rightarrow \quad A^T x_1\neq A^T x_2,
\end{align}
which means $A^T$ is injective on $col(\Sigmab)+\mub$.
\end{proof}

In the following lemma, we show that only the components with well-defined density can be equal to each other, and the same for the components without density. Intuitively, we can group the Gaussian components by rank and analyze each group separately. To discuss the existence of well-defined density, we recall the Radon-Nikodym theorem: informally, if measure $\mu$ is absolutely continuous with respect to measure $\Lcal$, denoted as $\mu \ll \Lcal$, then there exists a density function of $\mu$ w.r.t. $\Lcal$. In other words, $\mu \ll \Lcal$ means for any set $A$, if $\Lcal(A)=0$, then $\mu(A)=0$.

\begin{lemma}
\label{lemma: w/wo density}
Let $\mu$, $\tilde{\mu}$, $\lambda$,  $\tilde{\lambda}$ be nonnegative finite measures on a common measure space such that
\begin{align}
\label{equ: condi mix}
\mu + \lambda=\tilde{\mu}+\tilde{\lambda}.
\end{align}
Let $\Lambda$ and $\tilde{\Lambda}$ be the supports of $\lambda$ and $\tilde{\lambda}$, respectively.
If there is a third measure $\Lcal$ over the same space such that
(i) $\mu, \tilde{\mu} \ll \Lcal$, and
(ii) $\Lcal(\Lambda)=\Lcal(\tilde{\Lambda})=0$,
then $\mu=\tilde{\mu}$ and $\lambda=\tilde{\lambda}$.
\end{lemma}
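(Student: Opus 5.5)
The plan is to restrict the identity $\mu+\lambda=\tilde\mu+\tilde\lambda$ to the null sets $\Lambda\cup\tilde\Lambda$ and to their complement. The absolute continuity of $\mu,\tilde\mu$ kills them on the null set, while the support property kills $\lambda,\tilde\lambda$ off it. Concretely, set $N:=\Lambda\cup\tilde\Lambda$. We take supports to be closed, hence measurable, so $N$ is measurable. By (ii) and subadditivity, $\Lcal(N)\le \Lcal(\Lambda)+\Lcal(\tilde\Lambda)=0$.

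\textbf{Step 1: the absolutely continuous parts agree.} Fix a measurable set $A$ and write $A=(A\cap N)\cup(A\cap N^c)$. Since $\Lcal(A\cap N)=0$, assumption (i) gives $\mu(A\cap N)=\tilde\mu(A\cap N)=0$. Since $A\cap N^c$ lies outside both supports, $\lambda(A\cap N^c)=\tilde\lambda(A\cap N^c)=0$. Evaluating \cref{equ: condi mix} on $A\cap N^c$ therefore yields $\mu(A\cap N^c)=\tilde\mu(A\cap N^c)$. Adding the two pieces gives $\mu(A)=\tilde\mu(A)$ for every $A$, that is, $\mu=\tilde\mu$.

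\textbf{Step 2: the singular parts agree.} Because all four measures are finite, we may subtract $\mu(A)=\tilde\mu(A)$ from both sides of \cref{equ: condi mix}. This gives $\lambda(A)=\tilde\lambda(A)$ for all $A$, so $\lambda=\tilde\lambda$. Finiteness is used only here, to avoid expressions of the form $\infty-\infty$.

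\textbf{Main subtlety.} The one delicate point is the claim that a measure vanishes outside its support, i.e.\ $\lambda(\Lambda^c)=0$. For finite Borel measures on $\RR^n$ this holds: the space is second countable, so $\Lambda^c$ is a countable union of open null sets and therefore null. This is exactly the setting in which the lemma will be applied, namely Gaussian components whose supports are affine subspaces. If instead ``support'' is read as any measurable set carrying the measure, the statement holds by definition and the argument above goes through unchanged.
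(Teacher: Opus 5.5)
Your proof is correct and follows essentially the same argument as the paper. Both split each set along $\Lambda\cup\tilde\Lambda$, use absolute continuity to kill $\mu,\tilde\mu$ on this $\Lcal$-null set, and use the support property to kill $\lambda,\tilde\lambda$ off it. The only differences are cosmetic: you compare $\mu$ and $\tilde\mu$ on the complement directly rather than through the signed measure $\mu-\tilde\mu$, and you make explicit the fact $\lambda(\Lambda^c)=0$ that the paper uses tacitly.
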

\begin{proof}
First, we rewrite the equation equivalently as:
\begin{align}
\label{equ: compo measures}
\mu - \tilde{\mu}=\tilde{\lambda}-\lambda.
\end{align}
Let $B$ be an arbitrary measurable set. We have
\begin{align}
(\mu - \tilde{\mu})(B)&=(\tilde{\lambda}-\lambda)(B) \\
&= (\tilde{\lambda}-\lambda)(B \cap (\Lambda \cup \tilde\Lambda)) + (\tilde{\lambda}-\lambda)(B \cap (\Lambda \cup \tilde\Lambda)^c) \label{eq:03ijds}
\end{align}
Since $B\cap(\Lambda\cup \tilde{\Lambda})^c$ has no overlap with the support of neither $\lambda$ nor $\tilde \lambda$, we have  $(\tilde{\lambda}-\lambda)(B\cap(\Lambda\cup \tilde{\Lambda})^c)=0$. This means
\begin{align}
(\mu - \tilde{\mu})(B)&= (\tilde{\lambda}-\lambda)(B \cap (\Lambda \cup \tilde\Lambda)) \\
&= (\mu-\tilde{\mu})(B \cap (\Lambda \cup \tilde\Lambda))\\
&= \mu(B \cap (\Lambda \cup \tilde\Lambda))-\tilde{\mu}(B \cap (\Lambda \cup \tilde\Lambda))\,,
\end{align}
where the second equality holds by \cref{equ: compo measures}. Note that 
$$\Lcal(B \cap (\Lambda \cup \tilde\Lambda)) \leq \Lcal(\Lambda \cup \tilde\Lambda)) \leq \Lcal(\Lambda) + \Lcal(\tilde\Lambda) = 0\, ,$$
which means $\Lcal(B \cap (\Lambda \cup \tilde\Lambda)) = 0$. Since $\mu, \tilde\mu \ll \Lcal$, we must have $\mu(B \cap (\Lambda \cup \tilde\Lambda)) = \tilde\mu(B \cap (\Lambda \cup \tilde\Lambda)) =0$. This combined with \cref{eq:03ijds} implies $(\mu - \tilde\mu)(B) = 0$, i.e. $\mu(B) = \tilde\mu(B)$. This is true for any $B$, thus $\mu = \tilde\mu$. Plugging this last fact into \cref{equ: compo measures} yields, $\tilde\lambda = \lambda$.

\end{proof}

Now we are ready to prove our intermediate result,  \cref{thm:iden-pdgmm-openset-signlepoint}, that proves the identifiability of pdGMMs from an open set with a common intersection.

\thmidentifopencommonintersection*

\begin{proof}[Proof]
Let $k_j:=rank(\Sigmab_j)$ and $k'_j:=rank(\Sigmab'_j)$. W.l.o.g., suppose that the components in each of the two pdGMM are ordered by rank in a decreasing fashion, i.e., $k_1\geq k_2\geq...\geq k_J$, $k'_1\geq k'_2\geq...\geq k'_{J'}$ and that the rank of the first component of the first pdGMM $\Xb$ is larger or equal to the rank of the first component of the second pdGMM $\Xb'$, i.e., 
 $k_1\geq k'_1$.

We define 
\begin{align}\Scal_k:=\{j\in[J]: k_j\leq k\}
\end{align}
as the index set of all components in the first pdGMM $\Xb$, which have rank lower than or equal to $k$. Similarly we define the same index set for $\Xb'$ as \begin{align}
\Scal'_k:=\{j\in[J']: k'_j\leq k\}.
\end{align}
We define the set of linear mappings that preserve the rank for the component $j$ of $\Xb'$ as
\begin{align}\Bcal_{k,j}:=\{A\in \RR^{n\times k}\mid rank(A^T\Sigmab_jA)=k_j\}.
\end{align}

We now define the intersection of all the rank-preserving matrices for the components that have rank lower than or equal to $k$ as
\begin{align}\Bcal_k:=\cap_{\{j\in\Scal_k\}}\Bcal_{k,j}.
\end{align}
Similarly for $\Xb'$, we define 
\begin{align}
\Bcal'_k :=\cap_{\{j\in\Scal'_k\}}\Bcal'_{k,j} \text{ with  }
\Bcal'_{k,j}:=\{A\in \RR^{n\times k}\mid rank(A^T\Sigmab'_jA)=k'_j\}.
\end{align}

$\Bcal_k$ and $\Bcal'_k$ involve the linear mappings that preserve the ranks for the components with rank no larger than $k$. Furthermore, to ensure the components in $\Xb$ are still different from each other after mapping, we first consider a pair of components in $\Xb$ and define
\begin{align}
\Ccal_{k,i,j}:=\{A\in \RR^{n\times k}\mid A^T(\mub_i-\mub_j)\neq \mathbf{0} \text{ or } A^T(\Sigmab_i-\Sigmab_j)A \neq \mathbf{0}\},
\end{align}
and then go through all possible pairs and define
\begin{align}
\Ccal_k:=\cap_{\{i\neq j\in\Scal_k:k_i,k_j= k\}}\Ccal_{k,i,j}.
\end{align}
Similarly for $\Xb'$, we define 
\begin{align}
\Ccal_k' &:=\cap_{\{i\neq j\in\Scal'_k:k'_i,k'_j=k\}}\Ccal'_{k,i,j}\\
\text{ with }\Ccal'_{k,i,j}&:=\{A\in \RR^{n\times k}\mid A^T(\mub'_i-\mub'_j)\neq \mathbf{0} \text{ or } A^T(\Sigmab'_i-\Sigmab'_j)A \neq \mathbf{0}\}.
\end{align}

We now analyze the set $\Acal_k\subseteq \RR^{n\times k}$ where $k>0$. Define
\begin{align}
&\Acal_k:=\Bcal_k \cap \Bcal'_k\cap \Ccal_k \cap \Ccal'_k.
\end{align}

We will show that i) $\RR^{n\times k}\backslash \Acal_k$ has Lebesgue measure zero, and ii) $\Acal$ is an open set in $\RR^{n\times k}$. We analyze each of the components 

Let $\Lcal$ be the Lebesgue measure. For $\Bcal_k$, by Lemma~\ref{lemma: keep low rank}, we know $\Lcal(\Bcal_{k,j}^c)=0$ for all $j\in[J]$. Thus, we have 
\begin{align}
0 \leq\Lcal(\Bcal_k^c)=\Lcal(\cup_{j\in\Scal_k}\Bcal_{k,j}^c) \leq \sum_{j\in\Scal_k}\Lcal(\Bcal_{k,j}^c)=0
\end{align}
which of course implies $\Lcal(\Bcal_k^c) = 0$.
Next, from Lemma~\ref{lemma: keep low rank}, we also know $\Bcal_{k,j}$ is open for all $j\in[J]$.
As $\Bcal_k$ is the finite intersection of $\Bcal_{k,j}$'s, we know $\Bcal_k$ is open as well.
We can apply the same arguments and obtain i) $\Lcal(\Bcal_k'^c)=0$, and ii) $\Bcal_k'$ is an open set.

For $\Ccal_{k,i,j}$, we separate it into two cases:
\begin{align}
&\Ccal_{k,i,j}=\Ccal_{k,i,j}^{\mu}\cup \Ccal_{k,i.j}^{\sigma},
\label{equ: def Ccal}\\
\text{where  }&\Ccal_{k,i,j}^{\mu}:=\{A\in\RR^{n\times k}\mid A^T(\mub_i-\mub_j)\neq \mathbf{0}\},\\
\text{and  }&\Ccal_{k,i,j}^{\sigma}:=\{A\in\RR^{n\times k}\mid A^T(\Sigmab_i-\Sigmab_j)A\neq \mathbf{0}\}.
\end{align}
If $\mub_i\neq\mub_j$, $A^T(\mub_i-\mub_j)$ can be regarded as $m$ polynomial functions of $A$ from $\RR^n$ to $\RR$, i.e.
\begin{align}
A^T(\mub_i-\mub_j)=
\begin{bmatrix}
    A^T_{\cdot1}(\mub_i-\mub_j)\\
    A^T_{\cdot2}(\mub_i-\mub_j)\\
    \vdots\\
    A^T_{\cdot k}(\mub_i-\mub_j)
\end{bmatrix}.
\end{align}

Each $A_{\cdot i}(\mub_i-\mub_j)$, $i\in[k]$ is not zero everywhere as $(\mub_i-\mub_j)\neq\mathbf{0}$. By \cite{caron2005zero}, a polynomial from $\mathbb{R}^n$ to $\mathbb{R}$ is either zero everywhere or non-zero almost everywhere. Apply to our functions, it means they are all nonzero almost everywhere. Denote the zero set as $\Ocal_{\mu}$ and we know $\Lcal(\Ocal_{\mu})=0$. Since these polynomial functions all share the same coefficients, we have 
\begin{align}(\Ccal_{k,i,j}^{\mu})^c=\underbrace{\Ocal_{\mu}\times \cdots \times \Ocal_{\mu} }_{m}=\Ocal_{\mu}^m,
\end{align}
which means
\begin{align}
\Lcal((\Ccal_{k,i,j}^{\mu})^c)=\Lcal(\Ocal_{\mu}^m)=\Lcal(\Ocal_{\mu})^m=0.
\end{align}
If $\Sigmab_i\neq \Sigmab_j$, then consider the diagonal elements of $ A^T(\Sigmab_i-\Sigmab_j)A$ as $n$ polynomial functions of $A$, denote as $p_{ii}:=A_{\cdot i}^T(\Sigmab_i-\Sigmab_j)A_{\cdot i}$, $\forall i\in[m]$. Since $\Sigmab_i-\Sigmab_j\neq \mathbf{0}$, $p_{ii}$ is not zero everywhere. By \cite{caron2005zero}, they are all nonzero almost everywhere. Denote the zero set as $\Ocal_{\sigma}$ and we know $\Lcal(\Ocal_{\sigma})=0$. Since they are only the diagonal elements, we have
\begin{align}
(\Ccal_{k,i,j}^{\sigma})^c \subseteq \Ocal_{\sigma}^m \quad \Rightarrow \quad 0 \leq \Lcal((\Ccal_{k,i,j}^{\sigma})^c)\leq \Lcal(\Ocal_{\sigma}^m)=\Lcal(\Ocal_{\sigma})^m=0,
\end{align}
which means $\Lcal((\Ccal_{k,i,j}^{\sigma})^c)=0$.

By \cref{equ: def Ccal}, we have 
\begin{align}
\Ccal_{k,i,j}^c=(\Ccal_{k,i,j}^{\mu})^c\cap (\Ccal_{k,i,j}^{\sigma})^c.
\end{align}
As we assume, $\Xb$ is in reduced form, which means $\mub_i\neq\mub_j$ or $\Sigmab_i\neq \Sigmab_j$. This implies at least one of $(\Ccal_{ij}^{\mu})^c$ and $(\Ccal_{ij}^{\sigma})^c$ is Lebesgue measure zero. Then, as the intersection of these two sets, we have $\Lcal(\Ccal_{ij}^c)=0$. Since the union of measure zero sets is measure zero, we have $\Lcal(\Ccal^c)=0$.

Since $A^T(\mub_i-\mub_j)$ is continuous function and $(\Ccal_{k,i,j}^{\mu})$ is the preimage of open set $\RR^{k}\backslash \{\mathbf{0}$\}, then $(\Ccal_{k,i,j}^{\mu})$ is open. Similarly, we have $(\Ccal_{k,i,j}^{\sigma})$ is open as well. As the union of two open sets, $\Ccal_{k,i,j}$ is also open. Then as the finite intersection of $\Ccal_{k,i,j}$, we obtain $\Ccal)k$ is an open set.

For $\Ccal'_k$, we can apply the same arguments and obtain i) $\Lcal(\Ccal'^c_k)=0$, and ii) $\Ccal_k'$ is an open set.

To sum up, all the above results mean that $\Lcal(\Acal^c_k)=\Lcal(\Bcal_k^c \cup \Bcal'^c_k\cup \Ccal^c_k \cup \Ccal'^c_k)=0$ and thus $\Acal_k$ is not empty and, furthermore, $\Acal_k=\Bcal_k \cap \Bcal'_k\cap \Ccal_k \cap \Ccal'_k$ is an open set in $\RR^{n\times k}$.

Now, we have $\Xb \eqd \Xb'$ on $E$, by Def.~\ref{def: eqd},  which means
\begin{align}
\left(\sum_{j=1}^J \lambda_j N(\mub_j, \Sigmab_j)\right)_E&= \left(\sum_{j=1}^{J'} \lambda_j' N(\mub_j', \Sigmab_j')\right)_E\\
 \Rightarrow \quad\sum_{j=1}^J \lambda_j N(\mub_j, \Sigmab_j)_E&= \sum_{j=1}^{J'} \lambda_j' N(\mub_j', \Sigmab_j')_E. 
 \label{equ: constraint E0}
\end{align}

We start from the highest rank $k_1$. Since $\Acal_{k_1}$ is non-empty, it has an element $A_1\in \Acal_{k_1}$. Apply $A_1$ on both sides of \cref{equ: constraint E0} and we get
\begin{align}
A_1^T\sum_{j=1}^J \lambda_j N(\mub_j, \Sigmab_j)_E&=  A_1^T\sum_{j=1}^{J'} \lambda_j' N(\mub_j', \Sigmab_j')_E\\
\sum_{j=1}^J \lambda_j A_1^T (N(\mub_j, \Sigmab_j)_E)&=  \sum_{j=1}^{J'} \lambda_j' A_1^T( N(\mub_j', \Sigmab_j')_E)\\
\Rightarrow \quad \sum_{j=1}^J \lambda_j A_1^T (N(\mub_j, \Sigmab_j)_{\Xcal_j\cap E})&=  \sum_{j=1}^{J'} \lambda_j' A_1^T (N(\mub_j', \Sigmab_j')_{\Xcal'_j\cap E}). \label{equ: constraint E}
\end{align}

Due to $\Bcal_k\cap\Bcal_k'$, we have $rank(A_1^T\Sigmab_j A_1)=k_j$ for all $j \in [J]$ and $rank(A_1^T\Sigmab'_j A_1)=k'_j$ for all $j \in [J']$ . Thus, we can apply Lemma~\ref{lemma: inv on column space}, and obtain $\A_1$ is injective on $col(\Sigmab_j)+\mub_j$, which is exactly $\Xcal_j$.

Then, we can apply Lemma~\ref{lemma: pushforward restriction measure} on \cref{equ: constraint E} and get 
\begin{align}
\label{equ: constraint E 2}
\sum_{j=1}^J \lambda_j A_1^T (N(\mub_j, \Sigmab_j)_{\Xcal_j\cap E})&=  \sum_{j=1}^{J'} \lambda_j' A_1^T (N(\mub_j', \Sigmab_j')_{\Xcal'_j\cap E}) \\
\sum_{j=1}^J \lambda_j (A_1^TN(\mub_j, \Sigmab_j))_{A_1^T(\Xcal_j\cap E)}&=  \sum_{j=1}^{J'} \lambda_j' (A_1^T N(\mub_j', \Sigmab_j'))_{A_1^T(\Xcal'_j\cap E)} \\
\sum_{j=1}^J \lambda_j  N(A_1^T\mub_j, A_1^T\Sigmab_jA_1)_{A_1^T(\Xcal_j\cap E)}&=
\sum_{j=1}^{J'} \lambda_j' N(A_1^T\mub_j', A_1^T\Sigmab_j'A_1)_{A_1^T(\Xcal'_j\cap E)}.
\end{align}

Next, we split both sides into two terms by the rank value $k_j$ and $k'_j$
\begin{align}
\label{equ: constraint E 4_1}
&\sum_{\{j\in[J]: k_j=k_1\}} \lambda_j  N(A_1^T\mub_j, A_1^T\Sigmab_jA_1)_{A_1^T(\Xcal_j\cap E)}+
\sum_{\{j\in[J]: k_j<k_1\}} \lambda_j  N(A_1^T\mub_j, A_1^T\Sigmab_jA_1)_{A_1^T(\Xcal_j\cap E)}\\
=
&\sum_{\{j\in[J']:k_j'=k_1\}} \lambda_j' N(A_1^T\mub_j', A_1^T\Sigmab_j'A_1)_{A_1^T(\Xcal'_j\cap E)}+
\sum_{\{j\in[J']:k_j'<k_1\}} \lambda_j' N(A_1^T\mub_j', A_1^T\Sigmab_j'A_1)_{A_1^T(\Xcal'_j\cap E)} \,.
\end{align}

For the terms in the first sum on both sides, we have $\det(A^T\Sigmab_jA)\neq0$, which implies non-singular covariance matrices after transformation. This means $N(A_1^T\mub_j, A_1^T\Sigmab_jA_1) \ll \Lcal$ and $N(A_1^T\mub_j', A_1^T\Sigmab_j'A_1) \ll \Lcal$, where $\Lcal$ is the Lebesgue measure\footnote{This is another of saying the distribution has a density w.r.t. $\Lcal$, by the Radon-Nikodym theorem}. In general, if $\lambda \ll \mu$, we also have $\lambda_E \ll \mu$ and thus we have 
\begin{align}
    N(A_1^T\mub_j, A_1^T\Sigmab_jA_1)_{A_1^T(\Xcal_j\cap E)} \ll \Lcal \quad \text{and} \quad N(A_1^T\mub_j', A_1^T\Sigmab_j'A_1)_{A_1^T(\Xcal'_j\cap E)} \ll \Lcal
\end{align}

For the terms in the second sum of both sides, we have 
\begin{align}
rank(A_1^T\Sigmab_jA_1)= k_j <k_1, \quad rank(A_1^T\Sigmab'_jA_1)= k'_j <k_1,
\end{align}
which means after transformation, these components are still degenerate. Importantly, it means the support of $N(A_1^T\mub_j, A_1^T\Sigmab_jA_1)$ and $N(A_1^T\mub_j', A_1^T\Sigmab_j'A_1)$ have zero Lebesgue measure. By restricting further, the support of $N(A_1^T\mub_j, A_1^T\Sigmab_jA_1)_{A_1^T(\Xcal_j\cap E)}$ and $N(A_1^T\mub_j', A_1^T\Sigmab_j'A_1)_{A_1^T(\Xcal'_j\cap E)}$ still have measure zero.

Therefore, we can apply Lemma~\ref{lemma: w/wo density} and obtain the equality for the terms with density
\begin{align}
\label{equ: constraint E 3}
\sum_{\{j\in[J]:k_j= k_1\}} \lambda_j  N(A_1^T\mub_j, A_1^T\Sigmab_jA_1)_{A_1^T(\Xcal_j\cap E)} =
\sum_{\{j\in[J']:k'_j= k_1\}} \lambda_j' N(A_1^T\mub_j', A_1^T\Sigmab_j'A_1)_{A_1^T(\Xcal'_j\cap E)}.
\end{align}

Let $U:=\cap_{\{j\in [J]:k_j=k_1\}}A_1^T(\Xcal_j\cap E)$. Since $A_1^T(\Xcal_j\cap E)$ is an open set in $\RR^{k_1}$ for all $\{j\in [J]:k_j=k_1\}$, and have common element $A_1\zb_0$ by condition. Thus $U$ is an open and non-empty set.
We further restrict the measure of \cref{equ: constraint E 3} to $U$:
\begin{align}
\sum_{\{j\in [J]:k_j=k_1\}} \lambda_j  N_{U}(A_1^T\mub_j, A_1^T\Sigmab_jA_1)&=
\sum_{\{j\in [J]:k'_j=k_1\}} \lambda_j' N_{U}(A_1^T\mub_j', A_1^T\Sigmab_j'A_1) \\
\left(\sum_{\{j\in [J]:k_j=k_1\}} \lambda_j  N(A_1^T\mub_j, A_1^T\Sigmab_jA_1) \right)_{U}&= \left(
\sum_{\{j\in [J]:k'_j=k_1\}} \lambda_j' N(A_1^T\mub_j', A_1^T\Sigmab_j'A_1) \right)_{U}.
\label{equ: constraint E 4}
\end{align}
Since each component in \cref{equ: constraint E 4} has density functions, and they are all different due to $\Ccal\cap \Ccal'$. As we discussed before, $U$ is an open and non-empty set, thus by  \cref{thm:iden-gmm-openset}, we can get 
\begin{align}
\label{equ: constraint E 5}
\sum_{\{j\in [J]:k_j=k_1\}} \lambda_j  N(A_1^T\mub_j, A_1^T\Sigmab_jA_1) = 
\sum_{\{j\in [J]:k'_j=k_1\}} \lambda_j' N(A_1^T\mub_j', A_1^T\Sigmab_j'A_1),
\end{align}
which means $|\{j\in[J]: k_j=k_1\}|=|\{j\in[J']: k_j'=k_1\}|$, and there exits a permutation $\pi:\{j\in[J]: k_j=k_1\} \rightarrow \{j\in[J']: k_j'=k_1\}$, such that 
\begin{align}
(A_1^T\mub_j, A_1^T\Sigmab_jA_1)=(A_1^T\mub_{\pi(j)}', A_1^T\Sigmab_{\pi(j)}'A_1), \quad \lambda_j =\lambda'_{\pi(j)}, \quad \forall A_1\in \Acal_{k_1}.
\end{align}

Consider two functions:
\begin{align}
f_1(A_1):=(A_1^T(\mub_j-\mub'_{\pi(j)}))_1=0,\quad 
f_2(A_1):=\left(A_1^T(\Sigmab_j-\Sigmab'_{\pi(j)})A_1\right)_{1,1}=0, \quad \forall A_1\in \Acal_{k_1}.
\end{align}
Since these hold for all $A_1\in \Acal_{k_1}$, and $\Acal_{k_1}$ is open and non-empty. Thus, we can take the first and second derivatives on $f_1$ and $f_2$, respectively, w.r.t. the elements in the first column of $A_1$. We get
\begin{align}
f'_1(A_1):=\mub_j-\mub'_{\pi(j)}=\mathbf{0}^{n\times 1},\quad 
f''_2(A_1)=\Sigmab_j-\Sigmab'_{\pi(j)}=\mathbf{0}^{n\times n}, \quad \forall A_1\in \Acal_{k_1}.
\end{align}
Thus we conclude 
\begin{align}
\exists \pi:\{j\in[J]:k_j=k_1\}\rightarrow \{j\in[J']:k'_j= k_1\}, s.t. \mub_j=\mub'_{\pi(j)},\quad 
\Sigmab_j=\Sigmab'_{\pi(j)}, \quad \lambda_j =\lambda'_{\pi(j)}.
\end{align}
Now go back to \cref{equ: constraint E}, we can cancel out the terms with rank $k_1$ both sides and get 
\begin{align}
\sum_{\{j\in[J]:k_j< k_1\}}  \lambda_j  N_E(\mub_j, \Sigmab_j)=  \sum_{\{j\in[J']:k'_j< k_1\}}  \lambda_j' N_E(\mub_j', \Sigmab_j')
\end{align}

W.l.o.g, suppose $k_2\geq k'_2$, the same argument can apply to $k_2$. And continue progressing from the largest to the smallest rank to cover all elements in both $\Xb$ and $\Xb'$, and finally, we can get 
\begin{align} 
J=J', \text{ and }
\exists \pi:[J]\rightarrow [J'], s.t. \mub_j=\mub'_{\pi(j)},\quad 
\Sigmab_j=\Sigmab'_{\pi(j)}, \quad \lambda_j =\lambda'_{\pi(j)},
\end{align}
and by \cref{thm:gmm-iden}, we can conclude $\Xb\eqd \Xb'$.
\end{proof}

We have concluded the proof of \cref{thm:iden-pdgmm-openset-signlepoint}, which proves the identifiability of two pdGMMs from an open set assuming a common non-empty intersection set for all components.
Based on these results, we can prove our final goal of this section: \cref{thm:iden-pdgmm-openset}, which removes the assumption of the common non-empty intersection set for all components. The basic idea is to group the components by whether they intersect on the same point and apply \cref{thm:iden-pdgmm-openset-signlepoint} on each group. 

\thmidenpdgmmopenset*

\begin{proof}[Proof]
We first focus on $N(\mub_1, \Sigmab_1)$. Pick up one point $\zb_1\in \Xcal_1\cap E$, and define index set
\begin{align}
u_1:=\{j\in [J]:\zb_1\in \Xcal_j\}, \quad u'_1:=\{j\in [J']:\zb_1\in \Xcal'_j\}.
\end{align}
Then, we can construct a open ball $B(\zb_1,\delta)\subset E$, such that $\forall j\in[J]\backslash u_1$, $\Xcal_j\cap B(\zb_1,\delta)=\emptyset$. By condition we have $\Xb\eqd \Xb'$ on $E$, we restrict it on $B(\zb_1,\delta)$ and eliminate the zero measures on both sides, then we get  
\begin{align}
 &\sum_{j\in u_1} \lambda_j N_B(\mub_j, \Sigmab_j)=\sum_{j\in u_1'} \lambda_j' N_B(\mub_j', \Sigmab_j')\\
 &\left(\sum_{j\in u_1} \lambda_j N(\mub_j, \Sigmab_j)\right)_B=\left(\sum_{j\in u_1'} \lambda_j' N(\mub_j', \Sigmab_j')\right)_B.
\end{align}
By \cref{thm:iden-pdgmm-openset-signlepoint}, we have 
\begin{align}
 \sum_{j\in u_1} \lambda_j N(\mub_j, \Sigmab_j)=\sum_{j\in u_1'} \lambda_j' N(\mub_j', \Sigmab_j'), 
\end{align}
also $|u_1|=|u_1'|$, and there exists a permutation $\pi_1:u_1\rightarrow u_1'$, such that $(\lambda_j,\mub_j,\Sigmab_j)=(\lambda'_{\pi_1(j)},\mub'_{\pi_1(j)},\Sigmab'_{\pi_1(j)})$.

Then, we move to the rest $t\in[J]\backslash \{1\}$. Pick up one point $\zb_t\in \Xcal_t\cap E$, and define index set
\begin{align}
u_t:=\{j\in [J]\backslash (\cup_{j<t}u_j):\zb_t\in \Xcal_j\}, \quad u'_t:=\{j\in [J']\backslash (\cup_{j<t}u'_j):\zb_t\in \Xcal'_j\}.
\end{align}
If $u_t$ is not empty, this means we haven't idenfity component $t$ yet. Then as before, we construct a open ball $B(\zb_t,\delta)\subset E$, such that $\forall j\in[J]\backslash u_t$, $\Xcal_j\cap B(\zb_t,\delta)=\emptyset$. We restrict it on $B(\zb_t,\delta)$ and apply \cref{thm:iden-pdgmm-openset-signlepoint} to get   
\begin{align}
 \sum_{j\in u_t} \lambda_j N(\mub_j, \Sigmab_j)=\sum_{j\in u_t'} \lambda_j' N(\mub_j', \Sigmab_j'), 
\end{align}
also $|u_t|=|u_t'|$, and there exists a permutation $\pi_t:u_t\rightarrow u_t'$, such that $(\lambda_j,\mub_j,\Sigmab_j)=(\lambda'_{\pi_t(j)},\mub'_{\pi_t(j)},\Sigmab'_{\pi_t(j)})$.

By going through all $t\in[J]$, we can combine all results to conclude that $\Xb\eqd \Xb'$. Also $J=J'$, and there exists a permutation $\pi:J\rightarrow J'$, such that $(\lambda_j,\mub_j,\Sigmab_j)=(\lambda'_{\pi(j)},\mub'_{\pi(j)},\Sigmab'_{\pi(j)})$.
\end{proof}

\subsection{Proofs for the identifiability of pdGMM latent variables}
\label{app: identis compare}
In this section, we consider the setting where latent variables follow pdGMMs and aim to identify the them from observations. In \cref{app: affine identi} we provide the proof of \cref{thm:affine-identi-comp}, which shows the identifiability up to affine transformation within components. In \cref{app: global affine}  we provide the proof of \cref{thm:global-affine-identi}, which shows the identifiability up to affine transformation. Finally, in \cref{app: PD identi}  we provide the proof of \cref{thm: PD identi}, which shows the identifiability up to permutation and element-wise affine transformation.

\subsubsection{Affine Identifiability within component of pdGMM latent variables mixed by piecewise affine function}
\label{app: affine identi}
In this section, we consider a set of latent variables that follow pdGMM and are transformed into observations via a continuous, invertible, piecewise affine mixing function. Our objective is to recover the latent variables from the observed data, without any auxiliary information, except for knowing that the latent variables are pdGMM.

First, following \citet{bona2023parameter}, we define continuous piecewise affine functions by \emph{closed polyhedra} as follows:

\begin{definition}[Closed polyhedron \citep{bona2023parameter}]
\label{def: polyhedron}
We say $P\subset \RR^n$ is a \emph{closed polyhedron} if and only if there exists $q\in \NN$, $\ab_1,...,\ab_q\in \RR^n$ and $b_1,...,b_q\in \RR$ such that
\begin{align}
\xb\in P \quad \Leftrightarrow\quad \left\{
\begin{aligned}
&\ab_1^T\xb+b_1 \leq 0 \\
&\vdots \\
&\ab_q^T\xb+b_q \leq 0.
\end{aligned}
\right.
\end{align}
\end{definition}

\begin{definition}[Continuous piecewise affine function \citep{bona2023parameter}]
\label{def: continue piecewise affine}
We say that a function $\fb: \RR^n\rightarrow \RR^m$ is a \emph{continuous piecewise affine} function if there exists a finite set of
closed polyhedra whose union is $\RR^n$ and such that $\fb$ is affine over each polyhedron.
\end{definition}

A simple application of the pasting lemma (see e.g. \citet{Munkres2000Topology}) shows that a continuous piecewise affine function is indeed continuous. For a continuous piecewise affine function $\fb$, there are infinitely many finite covers of $\mathbb{R}^n$ by closed polyhedra such that $\fb$ is affine over each polyhedron. We will be interested into a specific type of covers called \textit{admissible}.

\begin{definition}
\label{def:admissible}
[\citep{bona2023parameter}]
Let $\fb$ be a continuous piecewise affine function. Let $\Pi$ be a finite set of closed polyhedra of $\RR^n$. We say that $\Pi$ is \textit{admissible} with respect to $\fb$ if and only if
\begin{enumerate}
    \item $\bigcup_{P \in \Pi} P = \RR^n$;
    \item for all $P \in \Pi$, $\fb$ is affine on $P$; and
    \item for all $P \in \Pi$, the interior of $P$, denoted by $P^\circ$, is not empty.
\end{enumerate}
\end{definition}

\citet[Proposition 22]{bona2023parameter} show that such an admissible set of closed polyhedra exists for all continuous piecewise affine functions.  

We first present a result by \citet{kivva2022identifiability} that studies the identifiability of non-degenerate GMM latent variables with piecewise affine mixing function, which serves as a primary inspiration for our approach.
\begin{theorem}[Affine Identifiability of GMM latent variables \citep{kivva2022identifiability}]
\label{thm: affine identi gmm}
Assume the observation $\Xb = \fb (\Zb)$ follows the data-generating process in Sec.~\ref{sec: Problem set up}. Suppose $\Zb$ follows a Gaussian mixture model 
\begin{equation}
    \Zb \sim \sum_{j=1}^J \lambda_j N(\mub_j, \Sigmab_j)
\end{equation}
    with $\mub_j\in \RR^n$, $\Sigmab_j\in \RR^{n\times n}$ and $|\Sigmab_j|> 0$
    , $\lambda_j>0$, and $\sum_{j=1}^J \lambda_j=1$. For every $i \neq j \in [J]$ we have $(\mub_i,\Sigmab_i)\neq(\mub_j,\Sigmab_j)$. Then $\Zb$ is identifiable from $\Xb$ up to an affine transformation.
   
\end{theorem}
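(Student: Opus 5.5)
The statement to prove is the result of \citet{kivva2022identifiability}, recalled as \cref{thm: affine identi gmm}. I would prove it in the form used later in the paper. Suppose a second model $(\fb',\Zb')$ with the same structure, $\Zb'$ a non-degenerate GMM and $\fb'$ injective continuous piecewise affine, yields $\fb(\Zb)\eqd\fb'(\Zb')$. The goal is to show that $\hb:=\fb'^{-1}\circ\fb$ is affine on $\RR^n$.

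First, $\hb$ is well defined on $\Zcal=\RR^n$, since both mixing maps are injective and $\Zb'\eqd\hb(\Zb)$. Moreover, $\hb$ is a continuous piecewise affine bijection of $\RR^n$. To see this, take admissible polyhedral covers for $\fb$ and $\fb'$ (\cref{def:admissible}, using \citet[Proposition 22]{bona2023parameter}) and refine them into a common finite cover of $\RR^n$ by polyhedra with nonempty interior on which $\hb$ is affine. Note that $\fb'^{-1}$ is affine on each image piece $\fb'(P')$.

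Next, pick one such polyhedron $P$ with nonempty interior. On $P^\circ$ we have $\hb(\zb)=M\zb+\cb$, and $M$ is invertible because $\hb$ is injective. Applying \cref{lemma: pushforward} to $\hb$ gives
\[
(\Zb')_{\hb(P^\circ)} = \hb\big((\Zb)_{P^\circ}\big) = \big(M\Zb+\cb\big)_{\hb(P^\circ)} .
\]
Hence $\Zb'$ and the GMM $M\Zb+\cb$ agree on the nonempty open set $\hb(P^\circ)$. By \cref{thm:iden-gmm-openset}, the two are equal on all of $\RR^n$. The same argument applied to any other piece $P_2$ with affine map $(M_2,\cb_2)$ gives
\[
M\Zb+\cb \eqd \Zb' \eqd M_2\Zb+\cb_2 .
\]
By \cref{thm:gmm-iden}, the affine maps $(M,\cb)$ and $(M_2,\cb_2)$ therefore induce the same permutation of the mixture components' parameters.

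The main obstacle is to show that this distributional equality forces $(M,\cb)=(M_2,\cb_2)$. Equivalently, writing $T:=M^{-1}M_2$, the affine map $\zb\mapsto T\zb+M^{-1}(\cb_2-\cb)$ must preserve the GMM law of $\Zb$ and be the identity. A nontrivial affine symmetry of a GMM can exist, for example a reflection of a symmetric two-component mixture. So distributional equality alone is not enough, and I would use continuity across shared facets.

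Adjacent pieces share an $(n-1)$-dimensional facet. On that facet the two affine maps agree, so their linear parts differ only by a rank-one term and they agree on a hyperplane $H$. Within a single component, density matching on a neighborhood of a point of $H$, with densities taken on both sides, forces the affine image of that component's Gaussian to be the same Gaussian computed from either side. A Gaussian density that is analytic and agrees on an open set on each side, combined with agreement of the maps on $H$, forces the rank-one difference to vanish. This step is the one I expect to require the most care, particularly when the permutation pairs a component with a different component; there I would try the argument of \citet{kivva2022identifiability}.

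Finally, since the refined cover is connected through facets, the affine map is the same on all pieces, so $\hb$ is globally affine and $\Zb$ is identified up to an affine transformation.
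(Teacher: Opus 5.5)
\textbf{Assessment.} The paper does not prove this statement; it imports it from \citet{kivva2022identifiability}, so your proposal has to stand on its own. Everything up to the per-piece step is sound. One small repair: for well-definedness of $\hb$, argue that $\fb(\RR^n)$ and $\fb'(\RR^n)$ are closed (finite unions of injective affine images of closed polyhedra) and both equal the support of $\Xb$, rather than invoking $\hb(\Zb)\eqd\Zb'$ before $\hb$ exists. The per-piece step already gives an invertible affine map pushing the law of $\Zb$ to that of $\Zb'$. That is the distributional form in which the paper restates the analogous pdMVN result (\cref{adap_thm:identif-inv-affine}).

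\textbf{The gap.} For the stronger claim you chose to prove, that $\hb$ is globally affine, the step you flag as the main obstacle is a genuine gap, and the route you sketch cannot close it. Write $\phi(\zb)=M\zb+\mathbf{c}$ on $P$, $\phi_2(\zb)=M_2\zb+\mathbf{c}_2$ on an adjacent piece $P_2$, and $\tau:=\phi^{-1}\circ\phi_2$ (your $T$ together with its translation). Your sketch only uses two facts: $\tau(\Zb)\eqd\Zb$, and $\tau$ fixes the facet hyperplane $H$ pointwise. Analytic continuation of the one-sided densities returns nothing beyond $\tau(\Zb)\eqd\Zb$. These two facts do not force $\tau=\mathrm{id}$. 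For $J=1$, $\Zb\sim N(\mathbf{0},I)$ and $H=\{\zb:\zb_1=0\}$, the reflection $\zb_1\mapsto-\zb_1$ satisfies both. The same happens within a single component whenever $\mub_j\in H$: the $\Sigmab_j^{-1}$-orthogonal reflection across $H$ fixes $H$ and $(\mub_j,\Sigmab_j)$. So continuity across facets excludes nothing here. What excludes the reflection is injectivity of $\hb$, which your sketch never uses at the facet. The case where the matching permutes components is also left open.

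\textbf{The missing idea.} Let $H=\{\zb:\ab^T\zb=c\}$.
\begin{enumerate}
\item An affine map fixing $H$ pointwise has the form $\tau(\zb)=\zb+(\ab^T\zb-c)\vb$. Hence $\ab^T\tau(\zb)-c=(1+\ab^T\vb)(\ab^T\zb-c)$, and $1+\ab^T\vb=\det(M^{-1}M_2)\neq0$.
\item Suppose $1+\ab^T\vb<0$. Pick $\zb\in P_2^\circ$ close to a relative-interior point of the common facet. Then $\tau(\zb)\neq\zb$ lies in $P$, and $\hb(\tau(\zb))=\phi(\tau(\zb))=\phi_2(\zb)=\hb(\zb)$, contradicting injectivity. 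Hence $1+\ab^T\vb>0$, i.e.\ $\tau$ preserves each side of $H$.
\item By \cref{thm:gmm-iden}, $\tau$ permutes the components of $\Zb$. Let $r$ be the order of this permutation; then $\tau^r$ fixes every $(\mub_j,\Sigmab_j)$.
\item Let $L=I+\vb\ab^T$ be the linear part of $\tau$. Then $L^r\Sigmab_1(L^r)^T=\Sigmab_1$, so $L^r$ is conjugate to an orthogonal matrix. It is therefore diagonalisable with eigenvalues of modulus one.
\item The eigenvalues of $L^r$ are $1$ and $(1+\ab^T\vb)^r>0$, which forces $\ab^T\vb=0$.
\item Then $L^r=I+r\vb\ab^T$ is unipotent and diagonalisable, so $L^r=I$. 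Hence $\vb=\mathbf{0}$ and $\tau=\mathrm{id}$.
\end{enumerate}
This treats permuted and unpermuted matchings at once, so nothing needs to be borrowed from \citet{kivva2022identifiability}. Combined with your connectivity-through-facets step, it completes the proof.

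\textbf{A facet-free alternative.} Each $\phi^{-1}\circ\phi_P$ permutes the components, so its linear part preserves $S:=\sum_j\Sigmab_j$. It is therefore an isometry for the norm $(\zb^TS^{-1}\zb)^{1/2}$. Now $\phi^{-1}\circ\hb$ and its inverse are continuous and isometric on each convex piece; injectivity enters through the inverse. Splitting any segment into finitely many sub-segments that lie in single pieces shows both maps are $1$-Lipschitz. So $\phi^{-1}\circ\hb$ is an isometry of a Euclidean space, and thus affine.
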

However, the analysis of \cref{thm: affine identi gmm} is restricted to non-degenerate Gaussian mixture models, assuming full-rank covariance matrices for all components.

We begin by presenting an intermediate theorem (Thm.~\ref{thm: affine for components}), which shows that any continuous invertible piecewise affine function capable of transforming one pdGMM into another pdGMM must be affine on the support of each component in pdGMM. 
However, this property does not hold universally. We show now a counterexample in Example~\ref{exa: violate genericity}, where $\fb$ is not affine over the support of any Gaussian components but still preserves the same distribution.

\begin{example}
\label{exa: violate genericity}
Consider a pdGMM with dimension $n=2$ and Gaussian components $J=2$,
\begin{equation}
  \Zb \sim 0.5N((0,0),\begin{pmatrix}1 &0\\0 &0\end{pmatrix} )+0.5N((0,0),\begin{pmatrix}0 &0\\0 &1\end{pmatrix} )
\end{equation}
Denote the two dimensions as $\zb_1$ and $\zb_2$. One can construct a function $\fb: \Zcal \rightarrow \RR^2$ that is clearly not affine on $\Zcal$:
\begin{equation}
  \fb(\zb) = \begin{cases}
   \begin{pmatrix}1 &0\\0 &1\end{pmatrix}\zb, \text{ if $\zb_1<0$ and $\zb_2=0$,}\\
   \begin{pmatrix}-1 &0\\0 &-1\end{pmatrix}\zb, \text{ if $\zb_1=0$ and $\zb_2>0$,}\\
  \begin{pmatrix}0 &-1\\1 &0\end{pmatrix}\zb, \text{ if $\zb_1=0$ and $\zb_2<0$ or $\zb_1>0$ and $\zb_2=0$.} 
  \end{cases}
\end{equation}
In other words, $\fb$ switches the relative position of axes, while retaining the same distribution, i.e. $\fb(\Zb) \eqd \Zb$. 
\end{example}
\begin{figure}[t]
\centering
\includegraphics[width=0.8\textwidth]{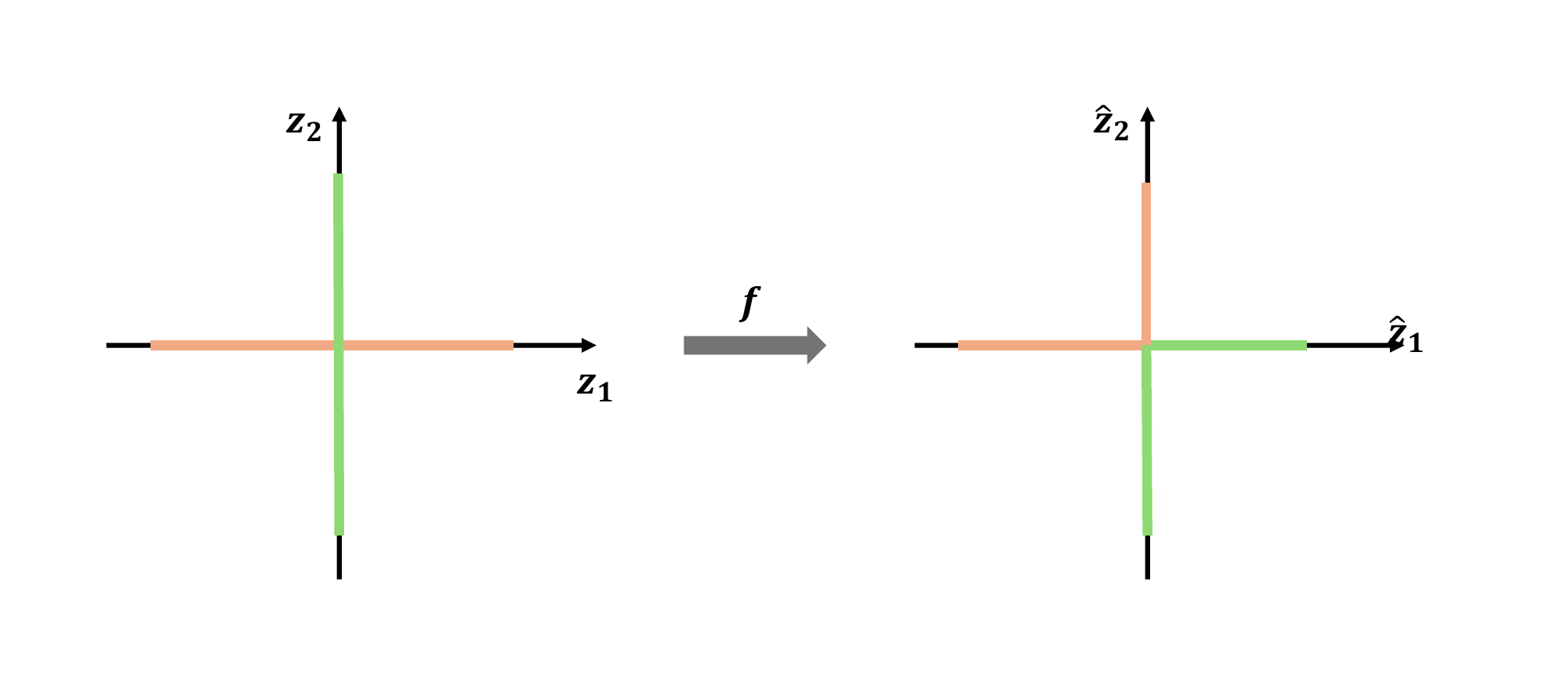}
\caption{Visualization of Example~\ref{exa: violate genericity}: $\fb$ switches half axes of $\zb_1$ and $\zb_2$. The orange area refers to the support of component 1, and the green area refers to the support of component 2 before transformation.}
\label{fig: Visualization_of_exampleB12}   
\end{figure}

To avoid these special cases, we propose Ass.~\ref{ass: no equ norm}, which describes a general class of models.

\assequalnorm*

The idea is: the supports of components with the same rank $k$ may overlap, but there is at least one common point in the intersection where the components can be distinguished, i.e., %
their projections onto the non-degenerate subspace do not have the same norm.

We provide proof and example to illustrate the genericity of Ass.~\ref{ass: no equ norm} in \cref{app: assump}.

The following theorem shows that any continuous invertible piecewise affine function capable of transforming one pdGMM into another pdGMM must be affine on the support of each component in pdGMM.

\begin{restatable}[Identifiability of pdGMMs from open set with common intersection]{theorem}{thmaffineforcomponents}
\label{thm: affine for components}
Consider a pair of finite pdGMMs in $\RR^n$ in reduced form,
\begin{equation}
    \Zb \sim \sum_{j=1}^J \lambda_j N(\mub_j, \Sigmab_j)\quad \text{and}\quad \Zb' \sim \sum_{j=1}^{J'} \lambda_j' N(\mub_j', \Sigmab_j'), 
\end{equation}
where $|\det(\Sigmab_j)|\geq0, \forall j\in[J]$ and $|\det(\Sigmab'_j)|\geq0, \forall j\in[J']$. Suppose $\Zb$ satisfies Assumption~\ref{ass: no equ norm}. 
If there exists a continuous, invertible, piecewise affine function $\fb:\RR^n\rightarrow \RR^n$ satisfying $\Zb'\eqd \fb(\Zb)$, then for all $j_0 \in[J]$, $\fb$ is affine on $\Zcal_{j_0}$, where $\Zcal_{j_0}$ is the support of $N(\mub_{j_0}, \Sigmab_{j_0})$.
\end{restatable}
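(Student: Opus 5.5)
The plan is to adapt the argument of \citet{kivva2022identifiability} for \cref{thm: affine identi gmm}, with \cref{thm:iden-pdgmm-openset} replacing the analytic-density argument. First I fix an admissible cover $\Pi$ of $\fb$ (Def.~\ref{def:admissible}); on each $P\in\Pi$, $\fb$ coincides with an invertible affine map $\hb_P$. Fix $j_0$ and let $k=\mathrm{rank}(\Sigmab_{j_0})$. Then $\Zcal_{j_0}$ is a $k$-dimensional affine subspace, and the finitely many sets $P\cap\Zcal_{j_0}$ cover it. Call a piece \emph{relevant} if $P\cap\Zcal_{j_0}$ has nonempty relative interior in $\Zcal_{j_0}$. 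The relevant pieces still cover $\Zcal_{j_0}$, by a Baire or finite-union argument. The goal is to show that all relevant pieces induce the same affine map on $\Zcal_{j_0}$.

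\emph{Step 1 (local-to-global).} Take a point $\zb$ in the relative interior of a relevant piece $P$ that avoids every $\Zcal_j$ not containing it. Choose a small open ball $B$ around $\zb$ such that $\fb=\hb_P$ on $B\cap\bigcup_j\Zcal_j$ and $B$ meets only components through $\zb$. This is possible after a generic perturbation inside $\Zcal_{j_0}$, since the other supports meet $\Zcal_{j_0}$ in lower-dimensional sets unless they contain it. By \cref{lemma: pushforward restriction measure}, $\Zb'\eqd\hb_P(\Zb)$ on the open set $\hb_P(B)$. The measure $\hb_P(\Zb)$ is itself a pdGMM, and both mixtures have components charging $\hb_P(B)$. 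After discarding components that vanish on this set, \cref{thm:iden-pdgmm-openset} applies and shows that $\hb_P$ maps every component through $\zb$, in particular $N(\mub_{j_0},\Sigmab_{j_0})$, onto some component $N(\mub'_{\sigma_P(j_0)},\Sigmab'_{\sigma_P(j_0)})$ of $\Zb'$ with the same weight and rank.

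\emph{Step 2 (consistency across adjacent pieces).} Since $\Zcal_{j_0}$ is connected, it suffices to treat two relevant pieces $P,Q$ whose traces on $\Zcal_{j_0}$ share a $(k-1)$-dimensional face $F$. By continuity of $\fb$, the maps $\hb_P$ and $\hb_Q$ agree on the affine hull of $F$ inside $\Zcal_{j_0}$. I first want $\sigma_P(j_0)=\sigma_Q(j_0)$. Affine maps preserve Mahalanobis distance, in the sense that $d_{\hb(N_j)}(\hb(\zb))=d_{N_j}(\zb)$. If the two images were different components $N'_a\neq N'_b$ of rank $k$, I would use a point of $F$ together with Ass.~\ref{ass: no equ norm}, applied to the rank-$k$ components of $\Zb$ through that point. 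The aim is to show that the matching between rank-$k$ components induced by the common map on $F$ must be the same on both sides.

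Given $\sigma_P(j_0)=\sigma_Q(j_0)$, the maps $\hb_P|_{\Zcal_{j_0}}$ and $\hb_Q|_{\Zcal_{j_0}}$ push $N(\mub_{j_0},\Sigmab_{j_0})$ to the same Gaussian and agree on a hyperplane of $\Zcal_{j_0}$. Hence $\hb_Q^{-1}\circ\hb_P$ restricted to $\Zcal_{j_0}$ is an affine self-map fixing that hyperplane pointwise and preserving the Gaussian. Such a map is either the identity or the Mahalanobis reflection across the hyperplane. The reflection would send the $P$-side of $F$ onto the $Q$-side, which makes $\fb$ non-injective near $F$ and contradicts invertibility. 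So the two restrictions coincide, and chaining over adjacent pieces gives that $\fb$ is affine on $\Zcal_{j_0}$.

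\emph{Main obstacle.} I expect the hardest step to be the matching $\sigma_P(j_0)=\sigma_Q(j_0)$, which is exactly where Example~\ref{exa: violate genericity} shows things can fail. There the axes are swapped at the origin, where both components have equal Mahalanobis distance. My first attempt is the following. Pick the point $\zb_0$ supplied by Ass.~\ref{ass: no equ norm} for the set of rank-$k$ components through a point of $F$. Then argue that a continuous invertible $\fb$ must send each component through that point to an image component with the same Mahalanobis distance at $\fb(\zb_0)$. Because these distances are pairwise distinct, the assignment is forced to be identical from both sides of $F$. Transporting $\zb_0$ to the face, for instance along the intersection of the supports where $\fb$ is piecewise affine, may require extra care.
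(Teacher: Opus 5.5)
\paragraph{The gap is in the Step~2 matching.} You flag this step as the main obstacle, and it is where the proof is incomplete. The proposed repair, transporting a witness of Ass.~\ref{ass: no equ norm} onto $F$, cannot work as stated.

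Write $N_j:=N(\mub_j,\Sigmab_j)$, $N'_a:=N(\mub'_a,\Sigmab'_a)$ and $d_j(\zb)^2=(\zb-\mub_j)^T\Sigmab_j^{+}(\zb-\mub_j)$. The invariance you invoke is exactly the paper's key step: if $\hb_P(N_{j_0})=\hb_Q(N_{j^*})$ and $\hb_P(\zb_0)=\hb_Q(\zb_0)$, then Lemma~\ref{lemma: orthogonal matrix} gives $d_{j_0}(\zb_0)=d_{j^*}(\zb_0)$.

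This argument is pointwise. It constrains the matching only at points of $F$. A good point elsewhere in $\bigcap_j\Zcal_j$ says nothing, because $\hb_P$ and $\hb_Q$ are unrelated there. Ass.~\ref{ass: no equ norm} gives one witness per intersection, not one per face. There are two cases for the rank-$k$ supports through a generic point of $F$:
\begin{itemize}
\item If they meet exactly in $\mathrm{aff}(F)$, you can move the witness into $\mathrm{relint}(F)$. This works because $\prod_{i<j}(d_i^2-d_j^2)$ restricted to $\mathrm{aff}(F)$ is a polynomial that is not identically zero.
\item Otherwise they all equal $\Zcal_{j_0}$, and $F$ may lie entirely in the equal-distance set. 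For instance, in whitened coordinates on a common support, $N(\mathbf 0,I_k)$ and $N(m\eb_1,I_k)$ have equal distances exactly on $\{z_1=m/2\}$. The assumption holds (witness $\mathbf 0$), but no point of a break of $\fb$ lying in that hyperplane is good.
\end{itemize}

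Such faces need a different kind of argument. One option runs as follows.
\begin{itemize}
\item Mass accounting near $\fb(\zb_0)$ forces $\hb_P(\Zcal_{j_0})=\hb_Q(\Zcal_{j_0})$.
\item Then $g:=\hb_Q^{-1}\circ\hb_P$ maps $\Zcal_{j_0}$ to itself and fixes $\mathrm{aff}(F)$ pointwise.
\item It permutes the finitely many Gaussians supported on $\Zcal_{j_0}$.
\item By injectivity of $\fb$ it preserves the two sides of $F$.
\item Some power $g^r$ therefore fixes $N_{j_0}$, and by your reflection argument $g^r$ is the identity.
\item In coordinates with $\mathrm{aff}(F)=\{t=0\}$, $g$ has the form $(t,\mathbf y)\mapsto(ct,\mathbf y+t\mathbf v)$ with $c>0$. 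Such a map with an identity power is itself the identity.
\end{itemize}

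Even at good points you still owe a ``partner'' step. You must show that $N'_{\sigma_P(j_0)}$ is also produced from the far side of $\fb(F)$ by some rank-$k$ component through $\zb_0$. Only then does distinctness of the distances force agreement. This needs an accounting of that component's mass near $\fb(\zb_0)$, and it is not automatic when $\hb_P(\Zcal_{j_0})\neq\hb_Q(\Zcal_{j_0})$.

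\paragraph{Step~1 also fails in a natural case.} Suppose $\Zcal_{j_0}$ meets $P$ only in $\partial P$, and a higher-rank support containing $\Zcal_{j_0}$ crosses that facet. An example is $\Zcal_{j_0}=\mathrm{span}\{\eb^1\}\subset\Zcal_1=\RR^2$ with a LeakyReLU kink along $\{z_2=0\}$, which is natural in the coordinate-subspace setting of Ass.~\ref{ass: standard common basis}. Then no ball around any point of $\Zcal_{j_0}$ satisfies $\fb=\hb_P$ on $B\cap\bigcup_j\Zcal_j$, and perturbing inside $\Zcal_{j_0}$ does not help.

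The paper's Case~2.2 handles this by first restricting to the facet, where the crossing higher-rank components carry no mass, and only then invoking Thm.~\ref{thm:iden-pdgmm-openset}. You need the same device.

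\paragraph{Comparison with the paper.} The paper is organized around good points rather than faces. At each good $\zb_0$ it matches components across all polyhedra through $\zb_0$. It then glues the local identities with Lemma~\ref{lemma: pasting measures} into $\fb(N_{j_0})=N'_{j'_0}$, and finishes with Thm.~\ref{adap_thm:identif-inv-affine} and Lemma~\ref{lemma: linearity of f for degenerate}. Your face-by-face chaining plus the reflection/injectivity step is a valid, more elementary substitute for that ending.

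Be aware that the paper asserts, without proof, that the balls around good points cover $\Zcal_{j_0}$. That claim fails when a break of $\fb$ lies in an equal-distance hyperplane as above, so that step cannot simply be borrowed to close your gap.
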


We introduce some useful theoretical results before we start the proof of \cref{thm: affine for components}. The first lemma shows that the full-ranked matrices sharing the same Gram matrix can be converted to each other via rotations.

\begin{lemma}
\label{lemma: orthogonal matrix}
Consider two matrices $A, B\in \RR^{n\times k}$ with $n\geq k$ and A is full-column rank with $rank(A)=k$. If $AA^T=BB^T$, then there exists an orthogonal matrix $Q\in \RR^{k\times k}$ such that $A=BQ$.
\end{lemma}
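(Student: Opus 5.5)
The natural route is to factor both matrices through a common orthonormal description of their shared column space. I would use either a thin SVD / polar decomposition of $A$, or a direct construction of $Q$ as the matrix realizing the isometry between row spaces. The cleanest version constructs $Q$ explicitly from a pseudo-inverse.

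First I would record the consequences of $AA^T=BB^T$. Since $\mathrm{col}(AA^T)=\mathrm{col}(A)$ and $\mathrm{col}(BB^T)=\mathrm{col}(B)$, we get $\mathrm{col}(A)=\mathrm{col}(B)$, and $\mathrm{rank}(B)=\mathrm{rank}(BB^T)=\mathrm{rank}(AA^T)=k$. So $B$ also has full column rank. Hence $A^TA$ and $B^TB$ are invertible $k\times k$ matrices, and the left pseudo-inverses $A^+=(A^TA)^{-1}A^T$ and $B^+=(B^TB)^{-1}B^T$ satisfy $A^+A=I_k$ and $B^+B=I_k$. Moreover $BB^+$ is the orthogonal projector onto $\mathrm{col}(B)=\mathrm{col}(A)$, so $BB^+A=A$.

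Next I would define the candidate $Q:=B^+A\in\RR^{k\times k}$. Then $BQ=BB^+A=A$, which is the desired identity. The remaining work is orthogonality. I would compute
\begin{align}
QQ^T=B^+AA^T(B^+)^T=B^+BB^T(B^+)^T=(B^+B)(B^+B)^T=I_k,
\end{align}
using the hypothesis $AA^T=BB^T$ in the second equality. Since $Q$ is square, $QQ^T=I_k$ already implies $Q^TQ=I_k$, so $Q$ is orthogonal. One could also check $Q^TQ=A^T(B^+)^TB^+A$ directly, but the square-matrix argument suffices.

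The only step needing care is justifying $\mathrm{col}(A)=\mathrm{col}(B)$, which gives both the rank of $B$ and the projector identity $BB^+A=A$. I would prove it via $\mathrm{null}(AA^T)=\mathrm{null}(A^T)$: indeed $AA^Tx=0$ implies $\|A^Tx\|^2=0$. Taking orthogonal complements gives $\mathrm{col}(AA^T)=\mathrm{col}(A)$, and the same argument applies to $B$. Everything else is routine algebra.
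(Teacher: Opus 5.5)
Your proof is correct. Like the paper, you build $Q$ explicitly from a pseudo-inverse and verify orthogonality by direct algebra; the difference is which pseudo-inverse you use.

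The paper takes $Q := B^TA(A^TA)^{-1}$, inverting $A^TA$ rather than $B^TB$. With that choice, $A = (AA^T)A(A^TA)^{-1} = BB^TA(A^TA)^{-1} = BQ$ follows in one line from the hypothesis. Orthogonality is equally direct: $Q^TQ = (A^TA)^{-1}A^TBB^TA(A^TA)^{-1} = (A^TA)^{-1}A^TAA^TA(A^TA)^{-1} = I_k$.

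This uses only the invertibility of $A^TA$. So the paper never needs your preliminary null-space argument establishing $\mathrm{col}(A)=\mathrm{col}(B)$, $\mathrm{rank}(B)=k$ and $BB^+A=A$. In the paper's version, the full column rank of $B$ comes for free afterwards, from $A=BQ$ with $Q$ invertible. Your route costs that extra step but makes the shared column space explicit.

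The two constructions give the same matrix. Since $B$ has full column rank, $Q$ is uniquely determined by $A=BQ$, so $(B^TB)^{-1}B^TA = B^TA(A^TA)^{-1}$.
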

\begin{proof}
Since $AA^T=BB^T$ and $A$ is full-column-rank, we can derive $A=BB^TA(A^TA)^{-1}$. Then, we aim to prove $Q:=B^TA(A^TA)^{-1}$ is an orthogonal matrix by showing $Q^TQ=I$, where $I$ is a identity matrix:
\begin{equation}
    Q^TQ=[B^TA(A^TA)^{-1}]^TB^TA(A^TA)^{-1}=(A^TA)^{-1}A^TBB^TA(A^TA)^{-1}=I.
\end{equation}
\end{proof}

In the following lemma, we show that if we can find an open cover, such that two probability measure are the same over any open set in this open cover, then they are the same measure.

\begin{lemma}
    \label{lemma: pasting measures}
    Let $\mu: \Fcal \rightarrow [0,1]$ and $\lambda: \Fcal \rightarrow [0,1]$ be two probability measures on the Borel sigma-algebra $\Fcal$ over $\RR^n$. Let $\Xcal \subseteq \RR^n$ be the support of $\mu$. Let $\Ocal$ be an open cover of $\Xcal$. If for all $O \in \Ocal$, we have
    $$\mu_{O \cap \Xcal} = \lambda_{O \cap \Xcal} \,,$$
    then $\mu = \lambda$.
\end{lemma}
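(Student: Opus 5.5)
The plan is to reduce to a countable subcover, decompose $\Xcal$ into disjoint Borel pieces that each lie inside one set of the cover, and compare $\mu$ and $\lambda$ piece by piece. Normalization then forces $\lambda$ to put no mass outside $\Xcal$.

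First, I would reduce to a countable cover. $\Xcal$ is a subset of $\RR^n$, which is second countable, so $\Xcal$ is Lindelöf. Hence there are countably many $O_1, O_2, \dots \in \Ocal$ with $\Xcal \subseteq \bigcup_i O_i$. I also use that, in a separable metric space, the support $\Xcal$ of $\mu$ is a closed (hence Borel) set with $\mu(\Xcal^c)=0$.

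Next, I would make the pieces disjoint. Define
$$D_i := \Big(O_i \setminus \textstyle\bigcup_{l<i} O_l\Big) \cap \Xcal.$$
These are Borel, pairwise disjoint, satisfy $D_i \subseteq O_i \cap \Xcal$, and their union is $\Xcal$.

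Then I would compare the two measures on each piece. Fix an arbitrary Borel set $A$. Since $A\cap D_i \subseteq O_i\cap\Xcal$, we have $\mu(A\cap D_i)=\mu_{O_i\cap\Xcal}(A\cap D_i)$, and the same holds for $\lambda$. Countable additivity together with the hypothesis gives
\begin{align}
\mu(A)=\mu(A\cap\Xcal)=\sum_i \mu_{O_i\cap \Xcal}(A\cap D_i)=\sum_i \lambda_{O_i\cap \Xcal}(A\cap D_i)=\lambda(A\cap \Xcal).
\end{align}

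The one remaining issue is that the hypothesis says nothing about $\lambda$ outside $\Xcal$. This is resolved by normalization. Taking $A=\RR^n$ in the display yields $1=\mu(\RR^n)=\lambda(\Xcal)$, so $\lambda(\Xcal^c)=0$. Therefore $\lambda(A)=\lambda(A\cap\Xcal)=\mu(A)$ for every Borel $A$, i.e., $\mu=\lambda$.

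I expect the main obstacle to be just this step of controlling $\lambda$ off $\Xcal$, which needs both measures to be probability measures. The countable-subcover step is needed only to make countable additivity applicable, since $\Ocal$ may be uncountable.
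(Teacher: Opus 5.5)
Your proof is correct and follows essentially the same route as the paper's: a countable subcover via second countability, disjointification of that subcover, a term-by-term comparison using countable additivity to obtain $\mu(A)=\lambda(A\cap\Xcal)$, and then normalization to force $\lambda(\Xcal^c)=0$. Your version is slightly tidier than the paper's. You take the countable subcover of $\Xcal$ itself, whereas the paper loosely speaks of a cover of $B$. You also make explicit that $\Xcal$ is closed with $\mu(\Xcal^c)=0$, which is what makes the $D_i$ Borel and justifies the first equality.
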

\begin{proof}
    Let $B \in \Fcal$. Let $\Ocal_B$ be a cover of $B$. Since $B$ is a subset of a second-countable space, one can find a countable open subcover $\{O_1, O_2, \dots \}$ of $B$. We construct a partition $B$ as follow: For all $k \in \NN$, define
    $$B_1 := B \cap O_1 \quad\text{and}\quad B_k := B \cap O_k \cap \left( \bigcap_{i=1}^{k-1} B_i^c\right) \, .$$
    We now show that $\{B_1, B_2, \dots\}$ is indeed a partition of $B$. For any $x \in B$, we show that there exists a unique $B_k$ such that $x \in B_k$. Let $k_x := \min \{k \in \NN \mid x \in O_k\}$. Clearly, $x \not\in B_{i}$ for $i < k_x$ since $x \not\in O_i$ for $i < k_x$. Moreover, $x \in B_{k_x}$ since $x \in O_{k_x}$ and $x \in B_i^c$ for all $i < k_x$. It is then clear that $x\not\in B_i$ for $i > k_x$ since $x \not\in B_{k_x}^c$. 

    We write 
    \begin{align}
        \mu(B) &= \mu(B \cap \Xcal) = \mu(\left( \bigcup_{k \geq 1} B_k\right) \cap \Xcal) = \mu( \bigcup_{k \geq 1} (B_k \cap \Xcal)) = \sum_{k=1}^\infty \mu(B_k \cap \Xcal) \\
        &=  \sum_{k=1}^\infty \mu( B \cap O_k \cap \left( \bigcap_{i=1}^{k-1} B_i^c\right) \cap \Xcal) = \sum_{k=1}^\infty \mu_{O_k \cap \Xcal}( B \cap \left( \bigcap_{i=1}^{k-1} B_i^c\right)) \\
        &= \sum_{k=1}^\infty \lambda_{O_k \cap \Xcal}( B \cap \left( \bigcap_{i=1}^{k-1} B_i^c\right)) = \cdots = \lambda(B \cap \Xcal) \, , \\
    \end{align}
    where the $\cdots$ means we apply the exact same steps as before in reverse order.
    By taking $B = \Xcal$, we get $1 = \mu(\Xcal) = \lambda(\Xcal \cap \Xcal) = \lambda(\Xcal)$, which means $\lambda(\Xcal^c) = 0$. This means 
    $$\mu(B) = \lambda(B \cap \Xcal) = \lambda(B \cap \Xcal) + \lambda(B \cap \Xcal^c) = \lambda(B) \,,$$
    which is what we intended to show.
\end{proof}

Next, we report two results from \citet{xu2024sparsity}, which show the identifiability results of potentially degenerate multivariate normal distributions (pdMVNs), as we will use them as important final steps in the proof of \cref{thm: affine for components} later.

\begin{theorem}[Affine Identifiability for pdMVNs with Piecewise Affine $\fb$ (\cite{xu2024sparsity})]
\label{adap_thm:identif-inv-affine}
Assume $\fb, \hat{\fb}:\mathbb{R}^n \rightarrow \mathbb{R}^d$ are injective and piecewise affine. We assume $\Zb$ and $\hat{\Zb}$ follow a (degenerate) multivariate normal distribution.
If $\fb(\Zb) \eqd \hat{\fb}(\hat{\Zb})$, then there exists an invertible affine transformation $\hb:\RR^n \rightarrow \mathbb{R}^n$ such that $\hb(\Zb) \eqd \hat{\Zb}$.  
\end{theorem}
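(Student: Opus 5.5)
The plan is to reduce the degenerate case to the non-degenerate one by restricting attention to the affine support of $\Zb$, where the law has a density.

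\textbf{Step 1 (reduction to the support).} Write $\Zb = \mub + L\Wb$, where $L\in\RR^{n\times k}$ has full column rank $k=\mathrm{rank}(\Sigmab)$ and $\Wb\sim N(\mathbf{0},I_k)$. The support $\Zcal=\mub+\mathrm{col}(L)$ is then a $k$-dimensional affine space. Do the same for $\hat\Zb=\hat\mub+\hat L\hat\Wb$ with rank $\hat k$.

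\textbf{Step 2 (reduced maps).} Define $\tilde\fb(\wb):=\fb(\mub+L\wb)$ and $\tilde{\hat\fb}(\wb):=\hat\fb(\hat\mub+\hat L\wb)$. Each is a composition of an injective affine map with an injective piecewise affine map, so each is injective and piecewise affine from $\RR^k$ (resp. $\RR^{\hat k}$) into $\RR^d$. The hypothesis becomes $\tilde\fb(\Wb)\eqd\tilde{\hat\fb}(\hat\Wb)$, with both $\Wb$ and $\hat\Wb$ non-degenerate standard Gaussians.

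\textbf{Step 3 (dimensions agree).} The image of a continuous injective piecewise affine map from $\RR^k$ is a finite union of $k$-dimensional polyhedral pieces. Since the two images coincide up to null sets, their dimensions match, so $k=\hat k$. A quick way to see this: the image of each polyhedral piece is a $k$-dimensional polyhedron, and two finite unions of polyhedra of different dimensions cannot carry mutually absolutely continuous measures of this form.

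\textbf{Step 4 (apply the non-degenerate result).} Now invoke the non-degenerate identifiability result of \citet{kivva2022identifiability} (\cref{thm: affine identi gmm} with $J=1$, applied in the $k$-dimensional latent space). This gives an invertible affine $\tilde\hb$ on $\RR^k$ with $\tilde\hb(\Wb)\eqd\hat\Wb$.

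\textbf{Step 5 (lift to $\RR^n$).} Extend $\tilde\hb$ to an invertible affine map $\hb$ on $\RR^n$. On $\Zcal$, set
\[
\hb(\zb)=\hat\mub+\hat L\,\tilde\hb\big(L^+(\zb-\mub)\big).
\]
On the orthogonal complement directions, send a complement of $\mathrm{col}(L)$ isomorphically onto a complement of $\mathrm{col}(\hat L)$; this is possible because $k=\hat k$. Then $\hb$ is invertible and affine, and $\hb(\Zb)=\hat\mub+\hat L\tilde\hb(\Wb)\eqd\hat\Zb$.

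\textbf{Main obstacle.} The delicate point is Step 4. The non-degenerate result is stated with its mixing functions defined on all of $\RR^k$ and mapping into some $\RR^d$. The reduced maps $\tilde\fb$ and $\tilde{\hat\fb}$ satisfy these requirements, but the proof in \citet{kivva2022identifiability} uses the fact that $\tilde{\hat\fb}^{-1}\circ\tilde\fb$ is piecewise affine on the common image. We therefore need to check that the composition of the reduced maps is a continuous piecewise affine bijection of $\RR^k$. If it is not available directly, the fallback is to argue via the pushforward lemmas (\cref{lemma: pushforward restriction measure}) together with local affineness on open pieces and analyticity of the Gaussian density.
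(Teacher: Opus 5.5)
The paper contains no proof of this statement. It is imported from \citet{xu2024sparsity} and used only as a black box in the proof of \cref{thm: affine for components}. Judged on its own, your argument is correct, but it puts the difficulty in the wrong place. The conclusion asks only for \emph{some} invertible affine $\hb$ with $\hb(\Zb)\eqd\hat\Zb$; nothing ties $\hb$ to $\fb$ or $\hat\fb$. Any two Gaussians of equal rank $k$ are related this way. Write $\Sigmab=LL^T$ and $\hat\Sigmab=\hat L\hat L^T$ with full-column-rank $L,\hat L\in\RR^{n\times k}$. Extend their columns to bases of $\RR^n$ to obtain an invertible $M$ with $ML=\hat L$, and set $\hb(\zb)=M(\zb-\mub)+\hat\mub$.

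So the theorem is equivalent to $\mathrm{rank}(\Sigmab)=\mathrm{rank}(\hat\Sigmab)$. In your Step 4, once $k=\hat k$ you already have $\mathbf{W}\eqd\hat{\mathbf{W}}\sim N(\mathbf{0},I_k)$, so $\tilde\hb=\mathrm{id}$ works. No appeal to \cref{thm: affine identi gmm} is needed, and the ``main obstacle'' you flag never arises. Even if you did cite it, $\tilde\fb$ and $\tilde{\hat\fb}$ meet its hypotheses directly, so there is nothing inside its proof to re-check. Step 5 then collapses to the construction of $M$. This weak conclusion is also all the paper needs, because affineness of $\fb$ on $\Zcal_{j_0}$ is obtained afterwards from \cref{lemma: linearity of f for degenerate}.

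Hence all the content is in Step 3, which you only sketch (``coincide up to null sets'', ``measures of this form''). Here is a clean version.
\begin{itemize}
\item Let $V\subseteq\RR^d$ be the union of the finitely many affine subspaces $\hat A_i\RR^{\hat k}+\hat\bb_i$ given by the affine pieces of $\tilde{\hat\fb}$. Then $V$ is a closed finite union of affine subspaces of dimension at most $\hat k$, and $\tilde{\hat\fb}(\hat{\mathbf{W}})\in V$ surely.
\item On each piece of $\tilde\fb$ with nonempty interior, the linear part is injective, since an affine map injective on a set with nonempty interior has injective linear part. So the points of that piece mapped into $V$ lie in finitely many affine subspaces of $\RR^k$ of dimension at most $\hat k$.
\item Pieces with empty interior are Lebesgue-null.
\item If $\hat k<k$, then $\tilde\fb^{-1}(V)$ is null. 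This gives $\Pr(\tilde\fb(\mathbf{W})\in V)=0\neq 1=\Pr(\tilde{\hat\fb}(\hat{\mathbf{W}})\in V)$, contradicting $\tilde\fb(\mathbf{W})\eqd\tilde{\hat\fb}(\hat{\mathbf{W}})$.
\end{itemize}
By symmetry, $k=\hat k$.
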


\begin{lemma}[Linearity of $\fb$ for pdMVNs (\cite{xu2024sparsity})]
\label{lemma: linearity of f for degenerate}
Let $\Zb \sim N(\mub, \Sigmab)$, where $|\Sigmab|\geq 0$, i.e. $\Zb$ is potentially a degenerate multivariate normal. Assume that $\fb: \RR^n \rightarrow \RR^n$ is a continuous  piecewise affine function such that $\fb(\Zb) \eqd \Zb$. Then $\fb$ is affine over $\Zcal$, the support of $\Zb$.
\end{lemma}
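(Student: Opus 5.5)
The plan is to reduce to a non-degenerate Gaussian on the $k$-dimensional support, where $k=\rank(\Sigmab)$, and run a density-matching argument there. If $k=0$ the support is the single point $\mub$ and there is nothing to prove, so assume $k\ge 1$. Write $\Zcal=\mub+col(\Sigmab)$ and fix $L\in\RR^{n\times k}$ of full column rank with $LL^T=\Sigmab$, so that $\Zb\eqd\mub+L\mathbf{W}$ with $\mathbf{W}\sim N(\mathbf{0},I_k)$.

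\textbf{Step 1: $\fb$ maps the support into itself.} Since $\fb(\Zb)\eqd\Zb$, we have $\fb(\Zb)\in\Zcal$ almost surely. Hence the set $\{\zb\in\Zcal:\fb(\zb)\in\Zcal\}$ has full measure in $\Zcal$, so it is dense in $\Zcal$ (the Gaussian on $\Zcal$ has full support there). It is also closed in $\Zcal$, because $\fb$ is continuous and $\Zcal$ is closed. Therefore $\fb(\Zcal)\subseteq\Zcal$.

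This lets us define $\gb:\RR^k\to\RR^k$ by $\gb(\mathbf{w}):=L^{+}(\fb(\mub+L\mathbf{w})-\mub)$. By Step 1 we get $\fb(\mub+L\mathbf{w})=\mub+L\gb(\mathbf{w})$. The map $\gb$ is continuous piecewise affine, because the intersection of a closed polyhedron with an affine subspace, pulled back through $\mathbf{w}\mapsto\mub+L\mathbf{w}$, is again a closed polyhedron. Moreover $\gb(\mathbf{W})\eqd\mathbf{W}$. So it suffices to show that $\gb$ is affine on $\RR^k$.

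\textbf{Step 2: local structure of $\gb$.} Fix an admissible cover $\Pi$ of $\RR^k$ for $\gb$ (Def.~\ref{def:admissible}), with $\gb(\mathbf{w})=A_P\mathbf{w}+\mathbf{b}_P$ on each $P\in\Pi$.

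First, every $A_P$ is invertible. If some $A_P$ were singular, then $\gb(P)$ would be Lebesgue-null while $\PP(\mathbf{W}\in P)>0$, because $P^\circ\neq\emptyset$. This contradicts $\gb(\mathbf{W})$ having the density $\varphi$ of $N(\mathbf{0},I_k)$.

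Next, take $\mathbf{y}$ outside the null set $\bigcup_P\gb(\partial P)$. By the change of variables formula,
\begin{align}
\varphi(\mathbf{y})=\sum_{P:\,\mathbf{y}\in\gb(P^\circ)}\frac{\varphi\big(A_P^{-1}(\mathbf{y}-\mathbf{b}_P)\big)}{|\det A_P|}.
\end{align}
On any open set where the collection of active pieces is constant, both sides are real-analytic Gaussian-type functions. I will use the linear independence of the functions $\exp(-\tfrac12(\mathbf{y}-\mathbf{m})^TS(\mathbf{y}-\mathbf{m}))$ for distinct pairs $(\mathbf{m},S)$, with $S$ positive definite, on any open set. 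Pieces with identical $(A_P^{-1}(\cdot),\mathbf{b}_P)$-induced Gaussians must be grouped first. With that, I expect to conclude the following: each such $\mathbf{y}$ has exactly one preimage piece, and for that piece $A_P^{-T}A_P^{-1}=I$ and $\mathbf{b}_P=\mathbf{0}$, i.e.\ $A_P$ is orthogonal. Having a single preimage also uses that the weights $1/|\det A_P|=1$ must sum to $1$. Consequently $\gb$ is injective off a null set.

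\textbf{Step 3 (main obstacle): gluing the orthogonal pieces.} The hard part is ruling out a map that is orthogonal on each piece but uses different orthogonal maps on different pieces. Take two pieces $P,P'$ sharing a $(k-1)$-dimensional face $F$ contained in a hyperplane $H$. Continuity gives $A_P=A_{P'}$ on the linear span of $F$ (recall $\mathbf{b}=\mathbf{0}$). Two orthogonal maps agreeing on a hyperplane either coincide or differ by the reflection across $H$. In the reflection case, $P$ and $P'$ lie on opposite sides of $H$ but are mapped to the same side of $A_P(H)$. Hence points of $\gb(P^\circ)$ near $\gb(F)$ have two preimage pieces, contradicting the uniqueness from Step 2. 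So $A_P=A_{P'}$.

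Finally, $\RR^k$ with the union of faces of dimension at most $k-2$ removed is connected. Therefore the adjacency graph of pieces (adjacent meaning sharing a $(k-1)$-face) is connected, and all $A_P$ equal a single orthogonal $Q$. This gives $\gb(\mathbf{w})=Q\mathbf{w}$ on all of $\RR^k$, so $\fb(\mub+L\mathbf{w})=\mub+LQ\mathbf{w}$ is affine on $\Zcal$.

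The delicate points I expect to need care are the linear-independence step when several pieces produce the same Gaussian term, and the face-adjacency connectivity argument.
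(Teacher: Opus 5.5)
The paper does not prove this lemma. It imports it from \citet{xu2024sparsity} and uses it as a black box in the proof of Theorem~\ref{thm: affine for components}, so there is no in-paper argument to compare against. Judged as a self-contained proof, your proposal is correct, up to one repair described below.

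Step~1 is sound: the set $\{\mathbf{z}\in\mathcal{Z}:\mathbf{f}(\mathbf{z})\in\mathcal{Z}\}$ is closed and has full Gaussian measure, hence equals $\mathcal{Z}$. This legitimately reduces the problem to a continuous piecewise affine $\mathbf{g}:\mathbb{R}^k\to\mathbb{R}^k$ with $\mathbf{g}(\mathbf{W})\overset{d}{=}\mathbf{W}$.

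Step~2 works as intended. In the local expansion, the coefficient of each distinct Gaussian is $\sum 1/|\det A_P|$ over the pieces in its group. Linear independence then forces a single active piece with $A_PA_P^T=I$ and $\mathbf{b}_P=\mathbf{0}$. A useful feature of this route is that you derive essential injectivity of $\mathbf{g}$ rather than assume it. That matters, because the statement does not assume $\mathbf{f}$ injective.

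Step~3's reflection argument is exactly what rules out gluing distinct orthogonal maps across a shared face.

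The repair: an admissible cover in the sense of Def.~\ref{def:admissible} need not have pairwise disjoint interiors. For example, for an affine map, $\{\mathbb{R}^k,\{w_1\le 0\}\}$ is admissible. With overlapping interiors, your change-of-variables sum double counts, and the claim that $P$ and $P'$ lie on opposite sides of $H$ can fail.

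To fix this, replace $\Pi$ by the closures of the chambers of the arrangement of all hyperplanes used to define the polyhedra in $\Pi$. This cover is again admissible for $\mathbf{g}$ and has disjoint interiors. Moreover, for a point of $P\cap P'$ that lies on no other hyperplane of the arrangement, a small ball $B$ splits exactly into the two half-balls $B\cap P$ and $B\cap P'$. That is precisely what Step~3 and the adjacency-connectivity argument require.
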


Now we are ready to prove \cref{thm: affine for components}. %
Below, we outline the structure of the proof. We begin by focusing on a fixed Gaussian component $j_0$ from $\Zb$ and partitioning its support into polyhedral regions such that the function $\fb$ is affine within each subdomain. Within each polyhedral region, by applying \cref{thm:iden-gmm-openset}, we show that there exists a component $j'_0$ in $\Zb'$ such that component $j_0$ is mapped to $j'_0$.
Next, under Ass.~\ref{ass: no equ norm}, we show that $\fb$ consistently assigns $j_0$ to the same $j'_0$ across all polyhedra. This successfully builds a one-to-one correspondence between the Gaussian components on both sides via $\fb$. Finally, using the results from \citet{xu2024sparsity}, we conclude that $\fb$ must be affine on the support of each component.

\thmaffineforcomponents*

\begin{proof}[Proof]
Fix $j_0 \in [J]$. We must show that $\fb$ is affine on $\Zcal_{j_0}$, the support of the $j_0$th component. 
To achieve this, we will first show that,
there exists an $j'_0\in[J']$, such that $$\fb(N(\mub_{j_0},\Sigmab_{j_0}))= N(\mub'_{j'_0},\Sigmab'_{j'_0}).$$ To get this, we will show that for all $\zb_0 \in \Zcal_{j_0}$, there exists an open ball $B$ centered at $\zb_0$ such that $$\left(\fb(N(\mub_{j_0},\Sigmab_{j_0}))\right)_{\fb(B)}= (N(\mub_{j'_0},\Sigmab_{j'_0}))_{\fb(B)}.$$

Let us fix some $\zb_0 \in \Zcal_{j_0}$ and let $\Pi$ be an admissible set of closed polyhedra for $\fb$ (\cref{def:admissible}). The existence of admissible set is ensured by Proposition 22 from \citet{bona2023parameter}. We consider two cases:

\begin{itemize}
    \item Case 1. There exists $P \in \Pi$ such that $\zb_0\in P^{\circ}$, where   $P^{\circ}$ is the interior part of $P$ as defined in Def.~\ref{def:admissible}.
    \item Case 2. For all $P \in \Pi$, $\zb_0 \not\in P^\circ$, where $P^{\circ}$ is  the interior part of $P$. 
\end{itemize} 

\paragraph{Case 1.}
Case 1 is simple. Since $\zb_0$ is in the interior of $P$, we can find an open set with interior point $\zb_0$, $B$, such that $B \subset P$. By definition of an admissible cover, $\fb$ is affine on $P$, and thus on $B$, and thus on $B \cap \Zcal_j$. Since $\fb$ is injective, we know there exists an invertible matrix $H$ and weight vector $\bb$ such that $\fb(\zb)=H\zb+\bb$, $\forall \zb\in B$. Define $\phi(\zb) := H\zb+\bb$ for all $\zb \in \RR^n$. We can thus constrain the probability measure to $\fb(B)$ on both sides of $\fb(\Zb)\eqd \Zb'$ and write 
\begin{align}
&\sum_{j\in J} \lambda_j\fb\left(N(\mub_j, \Sigmab_j)_{B} \right) =  \sum_{j=1}^{J'} \lambda_j' N(\mub_j', \Sigmab_j')_{\fb(B)} \, ,\\
&\sum_{j\in J} \lambda_j\phi\left(N(\mub_j, \Sigmab_j)_{B} \right) =  \sum_{j=1}^{J'} \lambda_j' N(\mub_j', \Sigmab_j')_{\fb(B)} \, .
\end{align}
By Lemma~\ref{lemma: pushforward} and $\fb$ is injective, we have 
\begin{align}
\sum_{j\in J} \lambda_j\left(N(H\mub_j+\bb, H\Sigmab_jH^T) \right)_{\fb(B)} =  \sum_{j=1}^{J'} \lambda_j' N(\mub_j', \Sigmab_j')_{\fb(B)} \, .
\end{align}
Since $B$ is open set and $\fb$ is an open map on $B$ (all surjective linear maps are), $\fb(B)$ is an open set as well. Thus we can apply \cref{thm:iden-pdgmm-openset} and get there must exists an $j'_0\in[J']$, such that $\fb(N(\mub_{j_0},\Sigmab_{j_0}))= N(\mub'_{j'_0},\Sigmab'_{j'_0})$ on $\fb(B)$.

\paragraph{Case 2.}
Case 2 is much more technical and will thus be the focus of the rest of the proof. Since $\Pi$ covers $\RR^n$, there must be at least one $P_1 \in \Pi$ such that $\zb_0 \in P_1$. Since $\zb_0$ is not in the interior of $P_1$, it must be in its boundary, $\partial P_1$. We now argue that $\zb_0$ must also be in a distinct polyhedron. We argue by contradiction. Suppose that is false. This means $\zb_0$ is in the complement of all polyhedra $P \in \Pi \setminus \{P_1\}$. Since these polyhedra are closed, their complements are open, thus we can find an open set $U$ containing $\zb_0$ that do not intersect with any $P \in \Pi \setminus \{P_1\}$. But since $\zb_0$ is on the boundary of $P_1$, this means a non-empty portion of $U$ lies outside $P_1$. But this would mean this portion is outside of all $P \in \Pi$, which is a contradiction since $\Pi$ covers $\RR^n$. This means there must be at least one other polyhedron $P_{2} \not= P_1$ containing $\zb_0$. Again, since $\zb_0$ is not in the interior of $P_2$, it must be on its boundary, $\partial P_2$. 

To summarize, we found two distinct polyhedra, $P_1$ and $P_2$, such that $\zb_0 \in P_1 \cap P_2$ and such that $\zb_0 \in \partial P_1$ and $\zb_0 \in \partial P_2$. In general, $\zb_0$ could be in more than two polyhedra. Let $P_1, \dots, P_L \in \Pi$ be the polyhedra that contain $\zb_0$. We know from the previous steps that $L \geq 2$.

Define the set of all components that intersects the point $\zb_0$.
\begin{align}
    J_{\zb_0}:=\{j\in [J]:\zb_0\in \Zcal_j\},\quad 
\end{align}
Of course, $j_0 \in J_{\zb_0}$. Since the $\Zcal_j$ are affine subspaces, they are closed sets. This means we can find an open set $B'$ containing $\zb_0$ small enough such that $B'$ intersects only the components $j \in J_{\zb_0}$. Furthermore, since all $P \in \Pi$ are closed, we can choose an open set $B''$ containing $\zb_0$ that is small enough so has to not intersect with the polyhedra other than $P_1, \dots, P_L$. Take $B_{\zb_0} := B' \cap B''$. Since $\zb_0$ is on the boundaries of each $P_1, \dots, P_L$, any open set must intersect with each $P_l$ for $l \in [L]$. We can thus define the non-empty sets 
$S_l:=B_{\zb_0}\cap P_l$ for all $l \in [L]$. Define the index set 
\begin{align}
\label{equ: s0}
J_l:=\{j\in [J]: S_l\cap \Zcal_j\neq \emptyset\}\, ,
\end{align}
i.e. this is the set of components $\Zcal_j$ that intersects the polyhedron $P_l$ and the open set $B_{\zb_0}$. 
Notice that $J_l \subseteq J_{\zb_0}$.

Then our goal for the following part is to show that there exists an $j'_0\in[J']$, such that $\fb(N(\mub_{j_0},\Sigmab_{j_0}))= N(\mub'_{j'_0},\Sigmab'_{j'_0})$ on $S_l$. 

But we do not need to go through all $l\in[L]$. Instead, since we are focusing on component $j_0$ now, in Case 2, we first explore the one on which $N(\mub_{j_0},\Sigmab_{j_0})$ has positive probability, and later come back to discuss the rest of the polyhedra. For this reason, we introduce one more index notation:
\begin{align}
\label{equ: L_j_0}
   L_{j_0}:=\{l\in [L]: N(\mub_{j_0},\Sigmab_{j_0}) (S_l)>0)\}\, ,
\end{align}
where $N(\mub_{j_0},\Sigmab_{j_0}) (S_l)$ means the probability that measure $N(\mub_{j_0},\Sigmab_{j_0})$ take on $S_l$. 

Pick some polyhedron $l \in [L_{j_0}]$. By condition we have $\Zb'\eqd \fb(\Zb)$, we restrict the measure on both sides to $\fb(S_l)$ and then we get 
\begin{align}
&\left(\fb(\sum_{j=1}^J \lambda_j N(\mub_j, \Sigmab_j))\right)_{\fb(S_l)}=  \left(\sum_{j=1}^{J'} \lambda_j' N(\mub_j', \Sigmab_j')\right)_{\fb(S_l)}
\end{align}
Since $\fb$ is injective, we can use Lemma~\ref{lemma: pushforward} to switch constrain and $\fb$ on the left hand side and we get %
\begin{align}
\fb\left(\left(\sum_{j=1}^J \lambda_j N(\mub_j, \Sigmab_j)\right)_{S_l}\right)&=  \left(\sum_{j=1}^{J'} \lambda_j' N(\mub_j', \Sigmab_j')\right)_{\fb(S_l)},\\
\fb\left(\sum_{j=1}^J \lambda_j N(\mub_j, \Sigmab_j)_{S_l}\right)&=  \sum_{j=1}^{J'} \lambda_j' N(\mub_j', \Sigmab_j')_{\fb(S_l)},\\
\sum_{j=1}^J \lambda_j\fb\left(N(\mub_j, \Sigmab_j)_{S_l} \right) &=  \sum_{j=1}^{J'} \lambda_j' N(\mub_j', \Sigmab_j')_{\fb(S_l)} \, ,
\end{align}
where the second line uses the fact that the restriction of a mixture is the mixture of the restriction and the third line uses the fact that the pushforward of a mixture is the mixture of the pushforwards. %

Notice that if $\Zcal_j \cap S_l = \emptyset$, we have that $N(\mub_j, \Sigmab_j)_{S_l} = 0$, i.e. it is the zero measure. The pushforward of the zero measure is also the zero measure, thus 
\begin{align}
    \sum_{j \in J_l} \lambda_j\fb\left(N(\mub_j, \Sigmab_j)_{S_l} \right) =  \sum_{j=1}^{J'} \lambda_j' N(\mub_j', \Sigmab_j')_{\fb(S_l)}
\end{align}

Since $\fb$ is piecewise affine and injective, we know there exists an invertible matrix $H^l$ and weight vector $\bb^l$ such that $\fb(\zb)=H^l\zb+\bb^l$, $\forall \zb\in S_l$. Define $\phi^l(\zb) := H^l\zb+\bb^l$ for all $\zb \in \RR^n$. We can thus write 
\begin{align}
&\sum_{j\in J_l} \lambda_j\phi^l\left(N(\mub_j, \Sigmab_j)_{S_l} \right) =  \sum_{j=1}^{J'} \lambda_j' N(\mub_j', \Sigmab_j')_{\fb(S_l)} \, ,
\end{align}

since $\fb = \phi^l$ on $S_l$. Since $\phi^l$ is injective, we can apply Lemma~\ref{lemma: pushforward} to get
\begin{align}
\sum_{j\in J_l} \lambda_j\phi^l\left(N(\mub_j, \Sigmab_j)\right)_{\phi^l(S_l)} &=  \sum_{j=1}^{J'} \lambda_j' N(\mub_j', \Sigmab_j')_{\fb(S_l)} \\
\sum_{j\in J_l} \lambda_jN(H^l\mub_j+\bb^l, H^l\Sigmab_j(H^l)^T)_{\fb(S_l)} &=  \sum_{j=1}^{J'} \lambda_j' N(\mub_j', \Sigmab_j')_{\fb(S_l)} \,.
\end{align}

Define $J'_l:=\{j\in [J']:\Zcal'_j\cap \fb(S_l)\neq \emptyset\}$. This allows us to write 
\begin{align}
\label{equ: constrain on S_l}
&\sum_{j\in J_l} \lambda_j N(H^l\mub_j+\bb^l, H^l\Sigmab_j(H^l)^T)_{\fb(S_l)}
=  \sum_{j\in J'_l} \lambda_j' N(\mub_j', \Sigmab_j')_{\fb(S_l)}.
\end{align}
Now we discuss the possible relations between $N(\mub_{j_0},\Sigmab_{j_0})$ and $S_l$. If $N(\mub_{j_0},\Sigmab_{j_0})$ takes probability zero on $S_l$, then we can skip it. If not, then we face two cases:
\begin{itemize}
\item Case 2.1 $\Zcal_{j_0}$ intersects with the interior of $S_l$, i.e. $\Zcal_{j_0}\cap S_l^{\circ}\neq \emptyset$, where $S_l^{\circ}:=B_{\zb_0}\cap P^{\circ}_l$.
\item Case 2.2 $\Zcal_{j_0}$ only intersects with the boundary of $S_l$ with positive probability, i.e. $\Zcal_{j_0}\cap S_l^{\circ}= \emptyset$ and $N(\mub_{j_0},\Sigmab_{j_0})( \partial S_l)>0$, where $\partial S_l:=B_{\zb_0}\cap \partial P_l$.
\end{itemize}

\textbf{Case 2.1.}
For Case 2.1, define the index set that also intersects with the interior part of $S_l$: $$J_l^{\circ}:=\{j\in J_l: \Zcal_j\cap S_l^{\circ} \neq \emptyset\}, \quad J_l'^{\circ}:=\{j\in J'_l: \Zcal'_j\cap \fb(S_l^{\circ}) \neq \emptyset\}.$$ 
We further constrain \cref{equ: constrain on S_l} on $\fb(S_l^{\circ})$ and cancel out the purely zero measure terms on both sides, i.e. the terms are not in $J_l^{\circ}$ and $J_l'^{\circ}$

\begin{align}
&\sum_{j\in J_l^{\circ}} \lambda_j N(H^l\mub_j+\bb^l, H^l\Sigmab_j(H^l)^T)_{\fb(S_l^{\circ})}
=  \sum_{j\in J_l'^{\circ}} \lambda_j' N(\mub_j', \Sigmab_j')_{\fb(S_l^{\circ})}.
\end{align}

$\fb$ is an open map on $S_l^{\circ}$ since it is affine on $S_l^{\circ}$. Therefore, $\fb(S_l^{\circ})$ is an open set as well. 

By the definition of $J_l^{\circ}$, we have $\forall j\in J_l^{\circ}$, $\Zcal_j\cap S_l^{\circ}\neq \emptyset$, then we can derive $\fb(\Zcal_j\cap S_l^{\circ})\neq \emptyset, \forall j\in J_l^{\circ}$. Since $\fb$ is invertible function, we can further derive $\forall j\in J_l, \fb(\Zcal_j)\cap \fb(S_l)\neq \emptyset$. Therefore, we can use \cref{thm:iden-pdgmm-openset} and get 
\begin{align}
\label{equ: interior eqd}
&\sum_{j\in J_l^{\circ}} \lambda_j N(H^l\mub_j+\bb^l, H^l\Sigmab_j(H^l)^T)
=  \sum_{j\in J_l'^{\circ}} \lambda_j' N(\mub_j', \Sigmab_j'),
\end{align}
which means $|J_l^{\circ}|=|J_l'^{\circ}|$ and there exists a permutation $\pi^l: J_l^{\circ}\rightarrow J_l'^{\circ}$ such that 
\begin{align}
\label{equ: cov and u in P_l}
H^l\Sigmab_j(H^l)^T=\Sigmab_{\pi^l(j)}', \quad H^l\mub_j+\bb^l=\mub_{\pi^l(j)}'.      
\end{align}

\textbf{Case 2.2.}
For Case 2.2, 
because of the definition of $L_{j_0}$ (\cref{equ: L_j_0}) where we pick up the $S_l$, we know $N(\mub_{j_0},\Sigmab_{j_0})$ has positive probability on $\partial S_l$. Since we know $\Zcal_{j_0}$ is an affine space, it can only intersect with one facet of $\partial S_l$. We denote this facet $\partial S_l^{j_0}$. 
Define the index sets that the supports intersect with this facet with positive probability: 
$$J_l^{j_0}:=\{j\in J_l\backslash J_l^{\circ}: N(\mub_{j},\Sigmab_{j})  (\partial S_l^{j_0}) >0\}, \quad J_l'^{j_0}:=\{j\in J'_l\backslash J_l'^{\circ}: N(\mub'_{j},\Sigmab'_{j}) (\fb(\partial S_l^{j_0})) >0\}.$$ 

Constrain \cref{equ: constrain on S_l} further on $\fb(\partial S_l^{j_0})$ and eliminate the probability zero terms then we get 
\begin{align}
\sum_{j\in J_l^{j_0}} \lambda_j N(H^l\mub_j+\bb^l, H^l\Sigmab_j(H^l)^T)_{\fb(\partial S_l^{j_0})}
=  \sum_{j\in J_l'^{j_0} } \lambda_j' N(\mub_j', \Sigmab_j')_{\fb(\partial S_l^{j_0})}.
\end{align}

Since all the Gaussian components in $J_l^{j_0}$ take a positive probability on $\partial S_l^{j_0}$, which implies their rank is lower than or equal to the rank of $\partial S_l^{j_0}$ and their support is parallel with $\partial S_l^{j_0}$. Then it must have no intersection with the interior of all adjacent polyhedra. Thus, we can construct an open set $E\subset \RR^n$, such that $\partial S_l^{j_0}\subset E$. And expand the constrained measure from $\fb(\partial S_l^{j_0})$ to $\fb(E)$ and then we have
\begin{align}
& \sum_{j\in J_l^{j_0}} \lambda_j N(H^l\mub_j+\bb^l, H^l\Sigmab_j(H^l)^T)_{\fb(E)}
=  \sum_{j\in J_l'^{j_0}}  \lambda_j' N(\mub_j', \Sigmab_j')_{\fb(E)}.
\end{align}
$\fb$ is an open map on $E$. Therefore, $\fb(E)$ is also an open set.
Therefore, we can use \cref{thm:iden-pdgmm-openset} and get 
\begin{align}
&\sum_{j\in J_l^{j_0}} \lambda_j N(H^l\mub_j+\bb^l, H^l\Sigmab_j(H^l)^T)
=  \sum_{j\in J_l'^{j_0}} \lambda_j' N(\mub_j', \Sigmab_j'),
\end{align}
which means $|J_l^{j_0}|=|J_l'^{j_0}|$ and there exists a permutation $\pi^l: J_l^{j_0}\rightarrow J_l'^{j_0}$ such that 
\begin{align}
\label{equ: cov and u in P_l 2}
H^l\Sigmab_j(H^l)^T=\Sigmab_{\pi^l(j)}', \quad H^l\mub_j+\bb^l=\mub_{\pi^l(j)}'.  
\end{align}

To sum up, \cref{equ: cov and u in P_l} holds for all $S_l$ in Case 2.1, and \cref{equ: cov and u in P_l 2} holds for all $S_l$ in Case 2.2, which covers all $\Zcal_{j_0}\cap B_{\zb_0}$. Thus, we can summarize it as for all $l\in [L_{j_0}]$ %
\begin{align}
H^l\Sigmab_j(H^l)^T=\Sigmab_{\pi^l(j)}', \quad H^l\mub_j+\bb^l=\mub_{\pi^l(j)}'.      
\end{align}
In the following part, we focus on the points $\zb_0$ satisfying Ass.~\ref{ass: no equ norm}, denote this set as $\Scal_{\Zcal}$. We aim to show that, consider $\zb_0\in \Scal_{\Zcal}$, for all $l\in [L_{j_0}]$%
, permutations $\pi^l$ map $j_0$ to the same index in $\Zcal'$.

We prove the claim for all $\zb_0\in \Scal_{\Zcal}$ by contradiction: suppose there exist $l_1, l_2\in [L_{j_0}]$, such that $\pi^{l1}(j_0)\neq \pi^{l2}(j_0)$. Denote $\pi^{l1}(j_0):=j_0'$, this means, there exists a polyhedron $l*\in [L_{j_0}]$, such that $(\pi^{l*})^{-1}(j_0')\neq (\pi^{l_1})^{-1}(j_0')$. Denote $j* :=(\pi^{l*})^{-1}(j_0')$, we have $j*\neq j_0$.

We can derive this 
\begin{align}
&
\begin{cases}
H^{l_1}\Sigmab_{j_0}(H^{l_1})^T=\Sigmab_{j_0'}', \quad H^{l_1}\mub_{j_0}+\bb^{l_1}=\mub_{j_0'}',\\
H^{l*}\Sigmab_{j*}(H^{l*})^T=\Sigmab_{j_0'}', \quad H^{l*}\mub_{j*}+\bb^{l*}=\mub_{j_0'}'.
\end{cases}\\
\label{equ: same mean0}
\Rightarrow &
\begin{cases}
H^{l_1}\Sigmab_{j_0}(H^{l_1})^T=H^{l*}\Sigmab_{j*}(H^{l*})^T,\\
H^{l_1}\mub_{j_0}+\bb^{l_1}=H^{l*}\mub_{j*}+\bb^{l*}.\\
\end{cases}
\end{align}

Since $\zb_0$ is in both polyhedra $l_1$ and $l*$,  we get
\begin{align}
H^{l_1}\zb_0 + \bb^{l_1} &= \fb(\zb_0) = H^{l*}\zb_0 + \bb^{l*} 
\label{eq:jfjdnsjsk}
\end{align}
By taking the difference between \cref{eq:jfjdnsjsk} and \cref{equ: same mean0}, we have
\begin{equation}
\label{equ: conti0}
    H^{l_1}(\zb_0-\mub_{j_0})=H^{l*}(\zb_0-\mub_{j*}).
\end{equation}

Suppose $rank(\Sigmab'_{j_0'})=k$. By \cref{equ: same mean0}, since both $H^{l_1}$ and $H^{l_2}$ are invertible matrix, we have 
\begin{equation}
\label{equ: same rank0}
rank(\Sigmab'_{j_0'})=rank(\Sigmab_{j_0})=rank(\Sigmab_{j*})=k.
\end{equation}
By the definition of degenerate Gaussian random variable, we can find a $n\times k$ matrix $A_1$ where $A_1 A_1^T=\Sigmab_{j_0}$ and $rank(A_1)=k$, and have 
$$A_1 \vepsb+\mub_{j_0}\sim N(\mub_{j_0}, \Sigmab_{j_0}),$$
where $\vepsb$ follows a standard $k$-dimensional normal distribution. Hence, there exists a unique value $\vepsb_1\in \RR^k$, such that $\zb_0=A_1\vepsb_1+\mub_{j_0}$. The same analysis for $N(\mub_{j*}, \Sigmab_{j*})$, there exists a unique $\vepsb_2\in \RR^k$ such that $\zb_0=A_2\vepsb_2+\mub_{j*}$.  %

Substitute into \cref{equ: conti0}, we get%
\begin{equation}
\label{equ: conti00}
H^{l_1}A_{1}\vepsb_{1}=H^{l_2}A_{2}\vepsb_{2}.
\end{equation}

Additionally, we have $A_1 A_1^T=\Sigmab_{j_0}$ and $A_2 A_2^T=\Sigmab_{j*}$. Substitute into \cref{equ: same mean0}, we get $$H^{l_1}A_{1} (H^{l_1}A_{1})^T=H^{l_2}A_{2} (H^{l_2})A_{2})^T.$$
First we have $rank(H^{l_1}A_{1})=k$ as $H^{l_1}$ is invertible. 

By Lemma~\ref{lemma: orthogonal matrix}, this implies there exists an orthogonal matrix $Q\in \RR^{k\times k}$, such that $H^{l_1}A_{1}=H^{l_2}A_{2}Q$. Substitute into \cref{equ: conti00}, we have 
$$H^{l_1}A_{1}(\vepsb_{1}-Q^T\vepsb_{2})=0.$$
Then we can derive $\vepsb_{1}-Q^T\vepsb_{2}=0$, which implies $||\vepsb_1||=||\vepsb_{2}||$.Note that $||\vepsb_1||=\sqrt{(\zb_0-\mub_{j_0})^T\Sigmab_{j_0}^{-1}(\zb_0-\mub_{j_0})}$ and $||\vepsb_{2}||=\sqrt{(\zb_0-\mub_{j_*})^T\Sigmab_{j_*}^{-1}(\zb_0-\mub_{j_*})}$.
This means for all $\zb_0\in \Scal_{\Zcal}$, the Mahalanobis distance of $\zb_0$ to $N(\mub_{j_0},\Sigmab_{j_0})$ and $N(\mub_{j_*},\Sigmab_{j_*})$ are the same.

We can find it will be contradictory to Assumption \ref{ass: no equ norm} if $\pi^{l_1}({j_0})\neq \pi^{l_2}({j_0})$. Therefore, we can claim there exists $j'_0\in[J']$, such that $\pi^{l}({j_0})=j'_0$ for all $l\in [L_{j_0}]$. %
Due to \cref{equ: cov and u in P_l} and the identifiability of pdMVN, we have 
\begin{align}
\label{equ: affine z_i z_j'}
N(H^l\mub_{j_0}+\bb^l, H^l\Sigmab_{j_0}(H^l)^T)
=   N(\mub_{j_0'}', \Sigmab_{j_0'}'), \quad \forall l\in [L_{j_0}].
\end{align}

Now we finished all the discussion of $l\in [L_{j_0}]$, i.e. all the $S_l$ where $N(\mub_{j_0},\Sigmab_{j_0})$ can take positive probability. However, as our goal is to show $\fb(N(\mub_{j_0},\Sigmab_{j_0}))= N(\mub'_{j'_0},\Sigmab'_{j'_0})$ over $\Zcal_{j_0}\cap B$, there is a probability-zero subdomain missing in the above analysis. Next, we discuss the rest part $l\in [L]\backslash [L_{j_0}]$. These cases can only happen if $\Zcal_{j_0}$ intersects with the boundary of $S_l$ only, i.e. $\Zcal_{j_0}\cap S_l^{\circ}= \emptyset$, $\Zcal_{j_0} \cap \partial S_l\neq \emptyset$ and $N(\mub_{j_0},\Sigmab_{j_0})(\partial S_l)=0$. For this case, since $\Zcal_{j_0} \cap \partial S_l\neq \emptyset$ and $N(\mub_{j_0},\Sigmab_{j_0})(\partial S_l)=0$, we know for every point $\zb*\in \Zcal_{j_0} \cap \partial S_l$, there must exist an adjecent $S_{l*}$ such that $\zb*\in S_{l*}$ and $l*\in [L_{j_0}]$. Therefore, we can conclude that $\Zcal_{j_0}\cap B \subseteq \cup _{l\in[L_{j_0}]}S_l$.

We now define a partition of the set $B_{\zb_0}\cap \Zcal_{j_0}$ by $S_l$, $l\in [L_{j_0}]$.

For any non-empty subset $I\subseteq [L_{j_0}]$, we define
$$D_I:=\left(\cap_{{l'}\in I} S_{l'}\right)\backslash \left(\cup_{{l'}\not\in I} S_{l'}\right).$$
Note that all the affine functions $H^l\zb + \bb^l$ with $l \in I$ agree on $D_I$ since $D_I \subseteq \cap_{l \in I} P_l$. For simplicity, we can directly use $l_I:=\min I$, i.e. the smallest index in $I$. When $l=l_I$ we constrain the measures on both sides of \cref{equ: affine z_i z_j'} on subdomain $\fb(D_I)$ and get 
\begin{align}
\label{equ: constrained D_I}
&N(H^{l_I}\mub_{j_0}+\bb^{l_I}, H^{l_I}\Sigmab_{j_0}(H^{l_I})^T)_{\fb(D_I)}
=   N(\mub_{j_0'}', \Sigmab_{j_0'}')_{\fb(D_I)}
\end{align}

Sum up all the $I\subseteq [L_{j_0}]$, we have 
\begin{align}
&\sum_{I\subseteq [L_{j_0}]} N(H^{l_I}\mub_{j_0}+\bb^{l_I}, H^{l_I}\Sigmab_{j_0}(H^{l_I})^T)_{\fb(D_I)}
=  \sum_{I\subseteq [L_{j_0}]} N(\mub_{j_0'}', \Sigmab_{j_0'}')_{\fb(D_I)},\\
&\sum_{I\subseteq [L_{j_0}]} \fb\left(N(\mub_{j_0}, \Sigmab_{j_0})_{D_I}\right)
=  \sum_{I\subseteq [L_{j_0}]} N(\mub_{j_0'}', \Sigmab_{j_0'}')_{\fb(D_I)}.
\end{align}
Note that $D_I$'s are disjoint from each other. Thus, we can switch sum into union and get
\begin{align}
&\fb\left(N(\mub_{j_0}, \Sigmab_{j_0})_{\cup_{I\subseteq [L_{j_0}]} D_I}\right)
=  N(\mub_{j_0'}', \Sigmab_{j_0'}')_{\fb(\cup_{I\subseteq [L_{j_0}]}D_I)}.
\label{equ: sumup boundary0}
\end{align}
Since $D_I$ is partition set of $B_{\zb_0}\cap \Zcal_{j_0}$, which means $\cup_{I\subseteq [L_{j_0}]} D_I=B_{\zb_0}\cap \Zcal_{j_0}$, substitute into \cref{equ: sumup boundary0} and then we have

\begin{align}
&\fb\left(N(\mub_{j_0}, \Sigmab_{j_0})_{B_{\zb_0}\cap \Zcal_{j_0}}\right)
=  N(\mub_{j_0'}', \Sigmab_{j_0'}')_{\fb(B_{\zb_0}\cap \Zcal_{j_0})},
\label{equ: sumup boundary}\\
&N(\mub_{j_0}, \Sigmab_{j_0})_{B_{\zb_0}\cap \Zcal_{j_0}}
=  \fb^{-1}\left(N(\mub_{j_0'}', \Sigmab_{j_0'}')_{\fb(B_{\zb_0}\cap \Zcal_{j_0})}\right).
\end{align}
Since $\fb$ is bijective on $\RR^n$, we have $\fb^{-1}$ is injective. Thus by apply Lemma~\ref{lemma: pushforward}, we can conclude Case 2:
\begin{align}
&N(\mub_{j_0}, \Sigmab_{j_0})_{B_{\zb_0}\cap \Zcal_{j_0}}
=  \fb^{-1}\left(N(\mub_{j_0'}', \Sigmab_{j_0'}')\right)_{B_{\zb_0}\cap \Zcal_{j_0}}.
\label{equ: z_i to z_j'}
\end{align}

Note that \cref{equ: z_i to z_j'} is true for all $\zb_0\in \Scal_{\Zcal}$, i.e. we can construct an open set $B_{\zb_0}$ containing $\zb_0$ for all $\zb_0\in \Scal_{\Zcal}$. Notice that $\{B_{\zb_0} \mid \zb_0 \in \in \Scal_{\Zcal}\}$ is thus an open cover of the support of the $\Zcal_{j_0}$. We can thus apply Lemma~\ref{lemma: pasting measures} to get 
\begin{align}
N(\mub_{j_0}, \Sigmab_{j_0})
&= \fb^{-1}\left( N(\mub_{j_0'}', \Sigmab_{j_0'}')\right) \\
\fb\left(N(\mub_{j_0},\Sigmab_{j_0})\right)
&= N(\mub'_{j'_0},\Sigmab'_{j'_0}).
\label{equ: f z_i to z_j}
\end{align}
Then, since $\fb$ is injective and
piecewise affine, by \cref{adap_thm:identif-inv-affine} from \cite{xu2024sparsity}, there
exists an invertible affine transformation
$\hb_{j_0}:\RR^n \rightarrow \RR^n$ such that $\hb_{j_0}(N(\mub_{j_0},\Sigmab_{j_0}))=N(\mub'_{j'_0},\Sigmab'_{j'_0})$. Substitute into \cref{equ: f z_i to z_j} then we have 
\begin{align}
\fb\left(N(\mub_{j_0},\Sigmab_{j_0})\right)
=  \hb_{j_0}(N(\mub_{j_0},\Sigmab_{j_0})) \quad \Rightarrow \quad  \hb_{j_0}^{-1}\circ \fb\left(N(\mub_{j_0},\Sigmab_{j_0})\right)
=  N(\mub_{j_0},\Sigmab_{j_0}).
\end{align}
As $\hb_{j_0}^{-1}\circ \fb$ is a continuous piecewise affine function, by Lemma~\ref{lemma: linearity of f for degenerate} from \cite{xu2024sparsity}, we can conclude that $\hb_{j_0}^{-1}\circ \fb$ is affine on $\Zcal_{j_0}$. Furthermore, we can conclude that $\fb$ is affine on $\Zcal_{j_0}$. This result holds for all ${j_0}\in[J]$.
\end{proof}

\thmaffineidenticomp*

\begin{proof}
Since $\Xb = \fb(\Zb)$, we can rewrite Equation \eqref{eq:zero_reconstruction} (perfect reconstruction) as
    \begin{align}
        \mathbb{E}||\fb(\Zb) - \hat\fb(\gb(\fb(\Zb)))||^2_2 = 0\,.
    \end{align}

This means that $\fb(\Zb)$ and $\hat\fb(\gb(\fb(\Zb)))$ are equal $\mathbb{P}_{\Zb}$-almost everywhere, which implies $\fb(\Zb)$ and $\hat\fb(\gb(\Xb)))$ are equally distributed with the same image space. Therefore, we can define $\hb := \hat\fb^{-1}\circ \fb$ on $\Zcal$, which is a continuous and invertible piecewise affine function, and we have $\Zb\eqd \hb^{-1}(\gb(\Xb))$, where both $\Zb$ and $\gb(\Xb)$ are pdGMMs.

Since $\Zb$ satisfies Ass.~\ref{ass: no equ norm}, and $\hb^{-1}$ is a continuous and invertible piecewise affine function, we can apply \cref{thm: affine for components} and get $\hb^{-1}$ is affine on $\Zcal'_j$, where $\Zcal'_j$ is the support of $N(\mub'_j, \Sigmab'_j)$. One step more, we can conclude that $\hb$, i.e., $\hat\fb^{-1}\circ \fb$, is affine on $\Zcal_j$.

\end{proof}

\subsubsection{From affine on components to affine everywhere}
\label{app: global affine}
While \cref{thm:affine-identi-comp} %
proves that $\gb\circ \fb$ 
is affine within each individual component, it does not guarantee that there is a global affine function across all components. In this section, we seek conditions under which $\gb\circ \fb$ is globally affine, i.e., affine on the whole support for the pdGMM. In the following proposition (Prop.~\ref{prop:globally_affine}), %
we show that if the supports of all components share a common basis (i.e., each support can be spanned by a subset of this common basis), then it is possible to construct a consistent affine function across the entire domain.

\asscommonbasis*

\begin{proposition}\label{prop:globally_affine}
Let $\Zcal_1, \dots, \Zcal_J \subseteq \RR^n$ be affine spaces that satisfy Ass.~\ref{ass: common basis}, and let $f:\RR^n \rightarrow \RR$ be a function such that, for all $j \in [J]$, the restriction of $f$ to $\Zcal_j$ is affine. Then the restriction of $f$ to $\Zcal := \cup_{i \in [m]}\Zcal_i$ is affine.%
\end{proposition}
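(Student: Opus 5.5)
The approach is to build one global affine function $\tilde f:\RR^n\to\RR$ explicitly from the values of $f$ along the common basis, and then check that it agrees with $f$ on every $\Zcal_j$. (The union in the statement is meant to run over $j\in[J]$; the index $[m]$ is a typo.) By Ass.~\ref{ass: common basis}, we have $\zb^0\in\bigcap_j\Zcal_j$ and each $\Zcal_j=\zb^0+\mathrm{span}\{\zb^k: k\in\Kcal_j\}$, where $\{\zb^1,\dots,\zb^n\}$ is a basis of $\RR^n$. Every $\zb\in\RR^n$ can therefore be written uniquely as $\zb=\zb^0+\sum_{k=1}^n a_k(\zb)\,\zb^k$, and the coordinate maps $a_k$ are linear in $\zb-\zb^0$. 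For $\zb\in\Zcal_j$, uniqueness of these coordinates forces $a_k(\zb)=0$ for all $k\notin\Kcal_j$.

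Next I will define the coefficients. For $k\in\bigcup_j\Kcal_j$, set $c_k:=f(\zb^0+\zb^k)-f(\zb^0)$. This is a single number, because it only involves values of $f$ at two fixed points. For the remaining indices, set $c_k:=0$. Then I define
\begin{equation}
\tilde f(\zb):=f(\zb^0)+\sum_{k=1}^n c_k\, a_k(\zb),
\end{equation}
which is affine on all of $\RR^n$.

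Then I will verify that $f=\tilde f$ on each $\Zcal_j$. Since $f$ is affine on the affine space $\Zcal_j$, there is a linear map $L_j$ on its direction space $V_j=\mathrm{span}\{\zb^k:k\in\Kcal_j\}$ such that $f(\zb^0+\vb)=f(\zb^0)+L_j\vb$ for all $\vb\in V_j$. Taking $\vb=\zb^k$ for $k\in\Kcal_j$ (note that $\zb^0+\zb^k\in\Zcal_j$) gives $L_j\zb^k=c_k$. Linearity of $L_j$ then gives, for $\zb\in\Zcal_j$,
\begin{equation}
f(\zb)=f(\zb^0)+\sum_{k\in\Kcal_j}a_k(\zb)\,c_k=\tilde f(\zb),
\end{equation}
where the last step uses $a_k(\zb)=0$ for $k\notin\Kcal_j$. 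Hence $f|_{\Zcal}=\tilde f|_{\Zcal}$, so the restriction of $f$ to $\Zcal$ is affine.

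The only potential obstacle is consistency: different components $\Zcal_j$ and $\Zcal_{j'}$ could a priori induce different linear parts along a shared direction. This is resolved because every component passes through the common point $\zb^0$, and every shared direction is the same basis vector $\zb^k$. As a result, the slope $c_k$ is pinned down by the two point values $f(\zb^0)$ and $f(\zb^0+\zb^k)$, which all components containing direction $k$ share. It is worth stressing in the write-up that both ingredients of Ass.~\ref{ass: common basis} are used here: the common point $\zb^0$ and the shared basis.
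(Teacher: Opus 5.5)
Your proof is correct and follows essentially the same route as the paper. The paper also takes the global slopes to be $f(\zb^0+\zb^k)-f(\zb^0)$, works in the coordinates $\Bb^{-1}(\zb-\zb^0)$ of the common basis, and uses that these coordinates vanish outside $\Kcal_j$ on $\Zcal_j$; the only cosmetic differences are that it keeps these differences for all $k\in[n]$ (instead of setting unused $c_k=0$) and writes each local piece as $\ab^j\zb+b^j$ rather than as a linear map on the direction space.
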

\begin{proof}
    Our goal is to construct an affine function $\ab\zb + b$ with $\ab \in \RR^{1\times n}$ and $b \in \RR$ such that this function agrees with $f$ on $\Zcal$.

    First notice that, since $f$ is affine on each $\Zcal_j$, we have: for each $j \in [J]$, there exists $\ab^j \in \RR^{1 \times n}$ and $b^j \in \RR$ such that, for all $\zb \in \Zcal_j$, $f(\zb) = \ab^j\zb + b^j$. 
    
    Since $\zb^0 \in \cap_{j \in [J]} \Zcal_j$, we have that, for all $j \in [J]$, $f(\zb^0) = \ab^j \zb^0 + b^j$ for all $j \in [J]$.
    
    Let us define $\Bb := [\zb^1\ \cdots\ \zb^n] \in \RR^{n \times n}$, which is invertible since its columns form a basis by assumption. 

    Fix $j \in [J]$ and $k \in \Kcal_j$. We know that 
    \begin{align}
        f(\zb^k+\zb^0) &= \ab^j(\zb^k+\zb^0) + b^j \\
        &= \ab^j\zb^k + \ab^j\zb^0 + b^j\\
        &= \ab^j\zb^k + f(\zb^0)\\
        &= \underbrace{\ab^j\Bb}_{\tilde\ab^j :=}\eb^k + f(\zb^0) \\    
        &= \tilde\ab^j\eb^k + f(\zb^0) \\
        &= \tilde\ab^j_k + f(\zb^0) \\
        \implies \tilde\ab^j_k &= f(\zb^k+\zb^0) - f(\zb^0) \quad \forall k \in \Kcal_j \,.
    \end{align}
    
    Let us define 
    \begin{align}
        \ab &:= \begin{bmatrix}
            \left(f(\zb^1+\zb^0) - f(\zb^0)\right) & \cdots & \left(f(\zb^n+\zb^0) - f(\zb^0)\right) 
        \end{bmatrix} \Bb^{-1} \in \RR^{1 \times n} \text{and}\\
        b &:= f(\zb^0)-\ab\zb^0 \,.
    \end{align}

    We must now show that, for all $\zb \in \cup_{j \in [J]} \Zcal_j$, $f(\zb) = \ab \zb + b$. Fix some arbitrary $\zb \in \cup_{j \in [J]} \Zcal_j$. We know there has to be one $j_0 \in [J]$ s.t. $\zb \in \Zcal_{j_0}$. Hence $f(\zb) = \ab^{j_0} \zb + b^{j_0}$. We showed earlier that $b^j = f(\zb^0)-\ab^j\zb^0$ for all $j \in [J]$, hence we have $f(\zb) = \ab^{j_0} \zb + f(\zb^0)-\ab^{j_0}\zb^0$. Since $\Bb$ is invertible, there exists $\tilde\zb \in \RR^n$ such that $\zb-\zb^0 = B\tilde\zb$. Since $\{\zb^k \mid k \in \Kcal_j\}$ forms a basis for $\Zcal_j-\zb^0$, we have that whenever $k \not\in \Kcal_j$, $\tilde \zb_k = 0$. We can thus write
    \begin{align}
        f(\zb) &= \ab^{j_0} \zb + b^{j_0} \\
        &= \ab^{j_0} (B\tilde\zb + \zb^0) + b^{j_0} \\
        &= \ab^{j_0}B\tilde\zb + \ab^{j_0}\zb^0 + b^{j_0}\\
        &= \tilde\ab^{j_0}\tilde\zb + f(\zb^0)\\
        &= \sum_{k \in \Kcal_j}\tilde\ab_k^{j_0} \tilde \zb_k + \sum_{k \not\in \Kcal_j}\tilde\ab_k^{j_0} \underbrace{\tilde \zb_k}_{0} + f(\zb^0)\\
        &= \sum_{k \in \Kcal_j}\tilde\ab_k^{j_0} \tilde \zb_k + f(\zb^0)\\
        &= \sum_{k \in \Kcal_j}\left(f(\zb^k+\zb^0) - f(\zb^0)\right) \tilde \zb_k + f(\zb^0)\\
        &= \sum_{k \in \Kcal_j}\left(f(\zb^k+\zb^0) - f(\zb^0)\right) \tilde \zb_k + \sum_{k \not\in \Kcal_j}\left(f(\zb^k+\zb^0) - f(\zb^0)\right) \underbrace{\tilde \zb_k}_{0} +f(\zb^0)\\
        &= \begin{bmatrix}
            (f(\zb^1+\zb^0) - f(\zb^0)) & \cdots & (f(\zb^n+\zb^0) - f(\zb^0))  
        \end{bmatrix}\tilde \zb + f(\zb^0) \\
        &= \begin{bmatrix}
            (f(\zb^1+\zb^0) - f(\zb^0)) & \cdots & (f(\zb^n+\zb^0) - f(\zb^0))  
        \end{bmatrix} \Bb^{-1}(\zb - \zb^0) + f(\zb^0) \\
         &= \ab(\zb -\zb^0) + f(\zb^0)\\
        &= \ab\zb + \underbrace{f(\zb^0)-\ab\zb^0}_{b :=} \\
        &= \ab\zb + b \,.
    \end{align}
    Since $\zb \in \cup_{j \in [J]} \Zcal_j$ was arbitrary, the above equality holds for all $\zb$ in $\Zcal = \cup_{j \in [J]} \Zcal_j$. This concludes the proof.
\end{proof}

It is easy to construct examples of collections of subspaces $\Zcal_1, \dots, \Zcal_J$ satisfying the condition of Prop.~\ref{prop:globally_affine}. One can simply choose some basis $\zb^1, \cdots \zb^n$ of $\RR^n$ and construct each $\Zcal_j$ as the span of some subset of this basis. In particular, one can use the standard basis $\eb^1, \dots, \eb^n$, which will yield axis-aligned subspaces of $\RR^n$. We provide a counter-example in \cref{app: assump}.

Then by combining Ass.~\ref{ass: common basis} with \cref{thm: affine for components}, we can directly conclude the identifiability up to affine transformation in the following theorem.

\thmglobalaffineidenti*
\begin{proof}
Due to \cref{thm:affine-identi-comp}, we have $\gb\circ \fb$ is affine restricted on $\Zcal_1,...,\Zcal_J$ individually. In addition, since we assume $\Zcal_1,...,\Zcal_J$ satisfying Ass.~$\ref{ass: common basis}$, by applying Prop.~\ref{prop:globally_affine}, we can conclude $\gb\circ \fb$ is affine on $\Zcal=\cup_{j=1}^J\Zcal_j$.
\end{proof}

\subsubsection{From affine identifiability to element-wise identifiability via sparsity principle}
\label{app: PD identi}
In this section, we take a step further to investigate the possibility of achieving a stronger level of identifiability—namely, up to permutation and scaling indeterminacies only. We first introduce the following assumption, which is a special case of Ass.~\ref{ass: common basis} that assumes zero translation vector and standard common basis of $\RR^n$.

\assstandcommonbasis* 

We then adapt a key argument due to \citet{lachapelle2023synergies} in the context of sparse multitask learning and later adapted by \citet{xu2024sparsity} in the context of partial observability. In our case, the masked latent variables exactly follow a piecewise-degenerate Gaussian mixture model (pdGMM) with supports that share a common standard basis. The adapted result is formalized as follows:

\begin{lemma}[Element-wise Identifiability from sparsity \citep{lachapelle2023synergies,xu2024sparsity}] Assume that the latent variables $\Zb$ with support $\Zcal$ follows a pdGMM (Def.~\ref{def: reduce gmm}). The supports of its components satisfy Ass.~\ref{ass: standard common basis}. Let the function $\vb: \Zcal \rightarrow \RR^n$ be invertible and affine on $\Zcal$, and
\begin{align}
\mathbb{E}\norm{\vb(\Zb)}_0 &\leq \mathbb{E}\norm{\Zb}_0.
\label{equ: sparsity_constraint0}
\end{align}
Then $\vb$ is a permutation composed with an element-wise invertible affine transformation on $\Zcal$.
\label{lemma: element_wise v}
\end{lemma}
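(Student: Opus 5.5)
Since every $\Zcal_j$ is a linear subspace containing $\mathbf{0}$, the affine map $\vb$ on $\Zcal$ can be written as $\vb(\zb)=L\zb+\bb$, with $L$ invertible by invertibility of $\vb$ (after extending the affine map from the spanning set $\Zcal$; the union of the $\Zcal_j$ spans $\RR^n$ by the variability condition). The plan has four steps.

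\textbf{Step 1: $\bb=\mathbf{0}$.} Suppose $b_i\neq 0$. Coordinate $i$ of $\vb(\Zb)$ is then nonzero at $\zb=\mathbf{0}$ and on a neighbourhood. More usefully, on each component $\Zcal_j$ the $i$-th coordinate of $\vb(\Zb)$ is the affine function $L_{i,\Kcal_j}\zb_{\Kcal_j}+b_i$. This function is almost surely nonzero under $N(\mub_j,\Sigmab_j)$ unless it is identically zero on $\Zcal_j$, which $b_i\neq 0$ forbids.

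\textbf{Step 2: write both sides as sums over components.} The general principle is that on $\Zcal_j$, coordinate $i$ of $\vb(\Zb)$ vanishes with positive probability iff $L_{i,\Kcal_j}=0$ and $b_i=0$, and otherwise it is a.s.\ nonzero. The nonzero covariance of the component restricted to $\Kcal_j$ (rank $|\Kcal_j|$, support exactly $\Zcal_j$) makes any nonconstant affine function a.s.\ nonzero. Likewise $Z_k\neq 0$ a.s.\ on component $j$ iff $k\in\Kcal_j$, using the reduced form and that the support is exactly $\Zcal_j$. Hence
\begin{align}
\mathbb{E}\norm{\vb(\Zb)}_0=\sum_j\lambda_j\,\big|\{i: (L_{i,\Kcal_j},b_i)\neq 0\}\big|,\qquad \mathbb{E}\norm{\Zb}_0=\sum_j\lambda_j|\Kcal_j|.
\end{align}
Because $L_{\cdot,\Kcal_j}$ has full column rank $|\Kcal_j|$, it has at least $|\Kcal_j|$ nonzero rows, so $|\{i: L_{i,\Kcal_j}\neq0\}|\ge|\Kcal_j|$ termwise. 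The sparsity inequality then forces equality in every term, and also forces $b_i=0$ whenever some $j$ has $L_{i,\Kcal_j}=0$. Such a $j$ exists for every $i$: otherwise row $i$ would be nonzero on every component, making the count strictly larger, since the argument below shows a row matched to a coordinate outside $\Kcal_j$ must vanish there. I expect to settle $\bb=\mathbf{0}$ after establishing the structure of $L$.

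\textbf{Step 3: the sparsity structure of $L$.} This follows the combinatorial argument of \citet{lachapelle2023synergies}. Equality gives, for each $j$, exactly $|\Kcal_j|$ nonzero rows in $L_{\cdot,\Kcal_j}$. Since $L$ is invertible, there is a permutation $\sigma$ with $L_{\sigma(k),k}\neq0$ for all $k$ (a nonzero term of the determinant expansion). Then for each $j$ the rows $\sigma(\Kcal_j)$ are nonzero in $L_{\cdot,\Kcal_j}$, so they are exactly its nonzero rows. Thus $L_{i,k}=0$ whenever $k\in\Kcal_j$ and $i\notin\sigma(\Kcal_j)$. Now fix $k$ and $i\neq\sigma(k)$, and set $i'=\sigma^{-1}(i)\neq k$. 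By Sufficient Support Basis Index Variability there is a $j$ with $i'\notin\Kcal_j$ and $k\in\Kcal_j$. For this $j$, $i\notin\sigma(\Kcal_j)$, so $L_{i,k}=0$. Hence $L=PD$ with $D$ an invertible diagonal matrix.

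\textbf{Step 4: conclude.} For $\bb$, take any $i$ and a $j$ with $\sigma^{-1}(i)\notin\Kcal_j$; such a $j$ exists by the variability condition, assuming $J\ge 2$ and nontrivial supports. Then row $i$ of $L_{\cdot,\Kcal_j}$ is zero, and equality in the count forces $b_i=0$. Therefore $\vb$ is a permutation composed with an element-wise scaling on $\Zcal$. The main obstacle is Step 2: making rigorous that the per-component counts are exactly $|\Kcal_j|$ versus the number of nonzero rows. This needs care with the a.s.\ nonvanishing of affine functions under the (degenerate) Gaussian on $\Zcal_j$, handled by writing the component as $A\vepsb+\mub_j$ with $A$ of full column rank spanning $\Zcal_j$.
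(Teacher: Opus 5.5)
Your overall route is the one the paper relies on. The paper gives no argument of its own; it imports the sparsity argument of \citet{lachapelle2023synergies,xu2024sparsity}. That argument is exactly your per-component count: $\mathbb{E}\norm{\Zb}_0=\sum_j\lambda_j|\Kcal_j|$, and the full-column-rank block $L_{\cdot,\Kcal_j}$ has at least $|\Kcal_j|$ nonzero rows, so equality holds term by term. It then takes a permutation $\sigma$ with $L_{\sigma(k),k}\neq 0$ and uses the variability condition to kill the off-diagonal entries. These steps are correct. So is your Step~4 showing $\bb=\mathbf{0}$, which is stronger than the stated conclusion. What Step~4 actually needs is $n\ge 2$, so that every index is missing from some $\Kcal_j$, not $J\ge 2$.

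The gap is the opening claim that $L$ is invertible. Injectivity of $\vb$ on a union of subspaces that spans $\RR^n$ does not make the affine extension invertible. Take $n=3$ and $\Kcal_j=\{j\}$ for $j=1,2,3$, so Ass.~\ref{ass: standard common basis} holds and $\Zcal$ is the union of the coordinate axes, and take the singular matrix $L=I-\tfrac13\mathbf{1}\mathbf{1}^\top$. Then $\zb\mapsto L\zb$ is still injective on $\Zcal$: a difference of two points on coordinate axes has a zero coordinate, so it is never a nonzero multiple of $\mathbf{1}$. This $L$ violates the sparsity constraint, so the lemma survives, but your deduction does not.

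Step~3 uses invertibility essentially, through the nonzero term of the determinant expansion. This matters in the paper: in the proof of Thm.~\ref{thm: PD identi} the lemma is applied to $\gb\circ\fb$. That map is only known to be injective on $\Zcal$ and to agree there with the affine map built in Prop.~\ref{prop:globally_affine}, with no invertibility guarantee. So either add invertibility of the extension as a hypothesis, as in the cited sources, or derive it from the constraint.

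To derive it, use the full strength of injectivity: $\ker L\cap(\Zcal_j+\Zcal_{j'})=\{\mathbf{0}\}$, i.e.\ the columns of $L$ indexed by $\Kcal_j\cup\Kcal_{j'}$ are independent for every pair $j,j'$. Write $N(S)$ for the set of nonzero rows of $L_{\cdot,S}$; your equality gives $|N(\Kcal_j)|=|\Kcal_j|$. Set $T_m:=\bigcap_{\ell\le m}\Kcal_{j_\ell}\subseteq\Kcal_{j_1}$, so that $|N(T_m\cup\Kcal_{j_{m+1}})|\ge|T_m\cup\Kcal_{j_{m+1}}|$. Inclusion--exclusion then shows by induction on $m$ that $N(T_m)=\bigcap_{\ell\le m}N(\Kcal_{j_\ell})$ and $|N(T_m)|=|T_m|$. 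Apply this to all $j$ with $k\in\Kcal_j$; by the variability condition their index sets intersect exactly in $\{k\}$, so every column of $L$ has a single nonzero entry. Pairwise column independence puts these entries in distinct rows, hence $L=PD$ directly, which also makes Step~3 unnecessary.
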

Since from \cref{thm:global-affine-identi} the mixing process is successfully reduced from the piecewise affine to affine, we can directly apply Lemma~\ref{lemma: element_wise v} on this results under Ass.~\ref{ass: standard common basis} to conclude the following theorem.

\thmPDidenti*
\begin{proof}
Since satisfying Ass.~\ref{ass: standard common basis} implies that Ass.~\ref{ass: common basis} holds, by \cref{thm:global-affine-identi}, we have $\gb\circ \fb$ is affine and invertible on $\Zcal$. By Lemma.~\ref{lemma: element_wise v}, we can conclude $\gb\circ \fb$ is a permutation composed with an element-wise invertible affine transformation on $\Zcal$.
\end{proof}

\section{Implementation details 
and Licences}
\label{app: implementation}

This section provides further details about the experiment implementation in \cref{sec: implementation}. The implementation is built on the open-source code by \citet{lachapelle2022disentanglement} released under Apache 2.0 License; the code by \citet{ahuja2022weakly} released under MIT License; the code by \citet{zheng2018dags} released under Apache 2.0 License. We list all the hyperparameters we used for training in \cref{tab:train_hyper}, and provide all the code for our method, and the experiments at 
\href{https://github.com/danrux/Identi-pdGMM}{GitHub repository}.

\begin{table}[h]
\caption{Parameters for experiments results in Sec.~\ref{sec: experiments} and App.~\ref{app: experiments}.}
\label{tab:train_hyper}
\centering
\begin{tabular}{llllll}
\hline
           & Numerical & Numerical & Balls  & Balls  & Balls  \\ 
           & Stage 1   &  Stage 2  & CNN    & Stage 1 & Stage 2\\ 
\hline
Sparsity level $\epsilon$ &        & 0.01      &   &   & 0.01   \\
Optimizer  & Adam      & ExtraAdam & Adam & Adam & ExtraAdam \\
Primal optimizer learning rate   & 1e-4    & 1e-4   & 1e-3 & 1e-3 & 1e-4 \\
Dual optimizer learning rate    &    & 5e-5  &  &  & 5e-5 \\
Batch size       & 6144   & 6144    & 256  & 256  & 256       \\
Group number $J$ & $5n$      & $5n$   & $2^{\bb}$  & $2^{\bb}$ & $2^{\bb}$ \\
\# Seeds   & 5      & 5      & 5  & 5  & 5 \\
\# Iterations   & 5000     & 80000   & 20000   & 10000  & 5000     \\
\hline
\end{tabular}

\end{table}

\section{Full experimental results}
\label{app: experiments}
\subsection{Metrics}
\label{app: metric}
\paragraph{Metric for identifiability up to affine transformation} $R^2$ metric is the \emph{coefficient of determination} of the linear regression model with ground truth latents $\Zb$ as explanatory and our learned representations $\gb(\Xb)$ as response. $R^2$ normally ranges from 0 to 1, and the closer it is to 1, the better its affine identifiability.

\paragraph{Metric for identifiability up to permutation and scaling.} MCC metric is based on the Pearson correlation matrix $\mathrm{Corr}^{n\times n}$ between the learned representations $\hat{\Zb}=\gb(\Xb)$ and ground truth latent variables $\Zb$. 
Since our results are up to permutation, we compute the MCC on the permutation $\pi$ that maximizes the average of $|\mathrm{Corr}_{i,\pi(i)}|$ for each index of a ground truth variable $i \in [n]$, i.e. MCC=$\frac{1}{n}\max_{\pi \in perm([n])} \sum_{i=1}^{n}|\mathrm{Corr}_{i,\pi(i)}|$. We denote the correlation matrix with the permutation $\pi$ as $\mathrm{Corr}^{n\times n}_{\pi}$. MCC can take value in $[0,1]$, and the closer it is to 1, the better element-wise affine identifiability.

\subsection{Numerical experiments}
We first show the full version (including standard deviation) of \cref{tab: numerical}.

\begin{table}[H]
\caption{Results for the numerical experiments (Average $\pm$ Standard Deviation).
The bold font indicates which parameters are varying in each block of rows. %
$\Rb^2$(Stage 1) shows the identifiability up to \emph{AT} after Stage 1 (Thm.~\ref{thm:global-affine-identi}); %
\textbf{MCC}(Stage 1) reflects the neccesarity of sparsity to achieve \emph{PTS} identifiability; %
\textbf{MCC}(Stage 2) presents the identifiability up to \emph{PTS} after Stage 2 (Thm.~\ref{thm: PD identi}).}
\label{tab: numerical_full}
\begin{center}
\begin{sc}
\begin{tabular}{cccccccccc}
\hline
$n$ & $k$ & $m$ &  $\rho$ &  $\delta$ &  $\theta$ & $\Rb^2$ $\uparrow$(Stage1)   & \textbf{MCC} (Stage1) &  \textbf{MCC} (Stage1) & \textbf{MCC} $\uparrow$(VaDE)\\ 
\hline
\textbf{5}  & 1 & 10 & 50\% & 0 & 0 & 0.93$\pm$0.02 & 0.75$\pm$0.03 & 0.96$\pm$0.01 `& 0.45$\pm$0.05\\
\textbf{10} & 1 & 10 & 50\% & 0 & 0 & 0.94$\pm$0.02 & 0.56$\pm$0.03 & 0.97$\pm$0.01 & 0.32$\pm$0.03\\
\textbf{20} & 1 & 10 & 50\% & 0 & 0 & 0.93$\pm$0.02 & 0.46$\pm$0.01 & 0.92$\pm$0.05 & 0.23$\pm$0.15\\
\textbf{40} & 1 & 10 & 50\% & 0 & 0 & 0.94$\pm$0.01 & 0.37$\pm$0.01 & 0.94$\pm$0.03 & 0.25$\pm$0.11\\
\hline
10 & \textbf{0} & 10 & 50\% & 0 & 0 & 0.94$\pm$0.02 & 0.51$\pm$0.11 & 0.97$\pm$0.01 & 0.33$\pm$0.05\\
10 & \textbf{1} & 10 & 50\% & 0 & 0 & 0.94$\pm$0.02 & 0.56$\pm$0.03 & 0.97$\pm$0.01 & 0.32$\pm$0.03 \\
10 & \textbf{2} & 10 & 50\% & 0 & 0 & 0.93$\pm$0.01 & 0.58$\pm$0.01 & 0.95$\pm$0.01 & 0.31$\pm$0.03\\
10 & \textbf{3} & 10 & 50\% & 0 & 0 & 0.92$\pm$0.03 & 0.56$\pm$0.02 & 0.91$\pm$0.04 & 0.27$\pm$0.03 \\
\hline
10 & 1 & \textbf{3} & 50\% & 0 & 0 & 0.99$\pm$0.01 & 0.50$\pm$0.07 & 0.96$\pm$0.03 & 0.33$\pm$0.04 \\
10 & 1 & \textbf{10} & 50\% & 0 & 0 & 0.94$\pm$0.02 & 0.56$\pm$0.03 & 0.97$\pm$0.01 & 0.32$\pm$0.03\\
10 & 1 & \textbf{20} & 50\% & 0 & 0 & 0.88$\pm$0.02 & 0.50$\pm$0.07 & 0.93$\pm$0.01 & 0.35$\pm$0.07\\
\hline
10 & 1 & 10 & \textbf{1 var} & 0 & 0 & 0.75$\pm$0.03 & 0.46$\pm$0.02 & 0.45$\pm$0.04 & 0.28$\pm$0.03 \\
10 & 1 & 10 & \textbf{50\%} & 0 & 0 & 0.94$\pm$0.02 & 0.56$\pm$0.03 & 0.97$\pm$0.01 & 0.32$\pm$0.03 \\
10 & 1 & 10 & \textbf{75\%} & 0 & 0 & 0.95$\pm$0.02 & 0.57$\pm$0.03 & 0.93$\pm$0.04 & 0.31$\pm$0.04\\
\hline
10 & 1 & 10 & 50\% & \textbf{0} & 0 & 0.94$\pm$0.02 & 0.56$\pm$0.03 & 0.97$\pm$0.01 & 0.32$\pm$0.03 \\
10 & 1 & 10 & 50\% & \textbf{1} & 0 & 0.95$\pm$0.01 & 0.59$\pm$0.01 & 0.88$\pm$0.04 & 0.30$\pm$0.02 \\
10 & 1 & 10 & 50\% & \textbf{2} & 0 & 0.96$\pm$0.01 & 0.59$\pm$0.03 & 0.61$\pm$0.04 & 0.27$\pm$0.03\\
10 & 1 & 10 & 50\% & \textbf{3} & 0 & 0.96$\pm$0.01 & 0.58$\pm$0.01 & 0.49$\pm$0.02 & 0.28$\pm$0.03\\
\hline
10 & 1 & 10 & 50\% & 0 & \textbf{0} & 0.94$\pm$0.02 & 0.56$\pm$0.03 & 0.97$\pm$0.01 & 0.32$\pm$0.03\\
10 & 1 & 10 & 50\% & 0 & \textbf{15} & 0.93$\pm$0.02 & 0.58$\pm$0.03 & 0.83$\pm$0.02 & 0.32$\pm$0.03 \\
10 & 1 & 10 & 50\% & 0 & \textbf{30} & 0.93$\pm$0.02 & 0.57$\pm$0.03 & 0.60$\pm$0.02 & 0.31$\pm$0.04 \\
10 & 1 & 10 & 50\% & 0 & \textbf{45} & 0.93$\pm$0.02 & 0.57$\pm$0.03 & 0.62$\pm$0.01 & 0.32$\pm$0.03\\
\hline

\end{tabular}
\end{sc}
\end{center}
\end{table}

We show the Pearson Correlation matrix $\mathrm{Corr}^{n\times n}_{\pi}$ with the permutation $\pi$ between ground truth latent variables $\Zb$ and the learned representations $\hat{\Zb}=\gb(\Xb)$. One figure represents one ablation study in \cref{tab: numerical}. We choose one random seed to plot for each setup. Ground truth Pear. Corr. matrix on the left shows the original linear correlation inside $\Zb$, compared with the estimator on the right-hand side.
\begin{figure*}[!htbp]
    \begin{center}
        \includegraphics[width=0.18\textwidth]{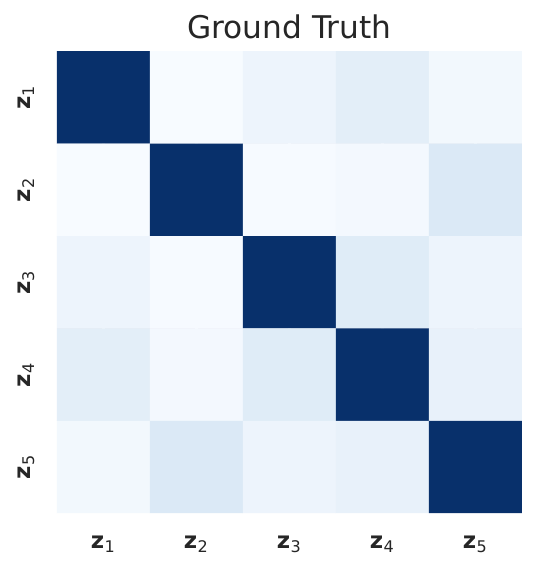}
        \hspace{0cm} 
        \includegraphics[width=0.18\textwidth]{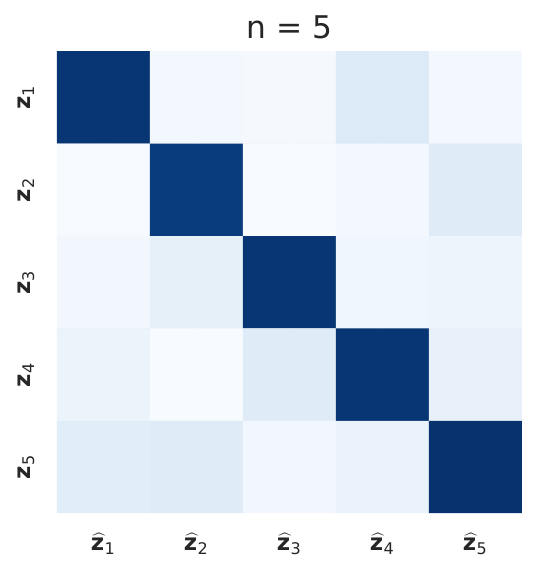}
        \hspace{0.5cm} 
         \includegraphics[width=0.18\textwidth]{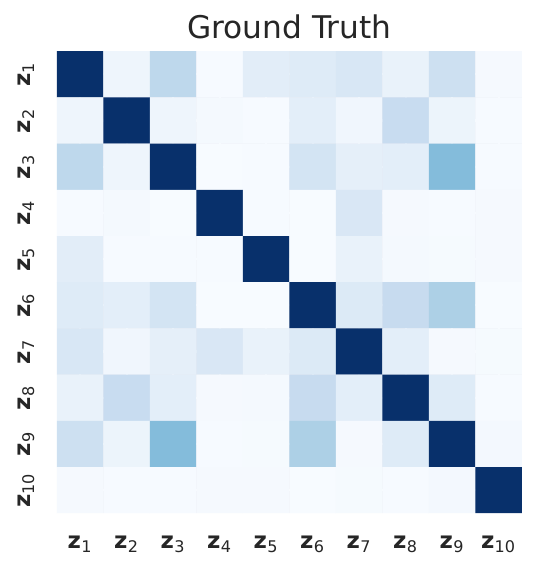}
        \hspace{0cm} 
        \includegraphics[width=0.18\textwidth]{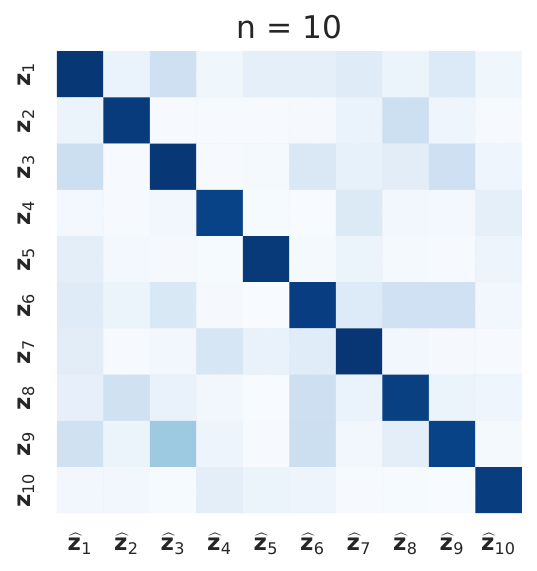}
        \includegraphics[width=0.0375\textwidth]{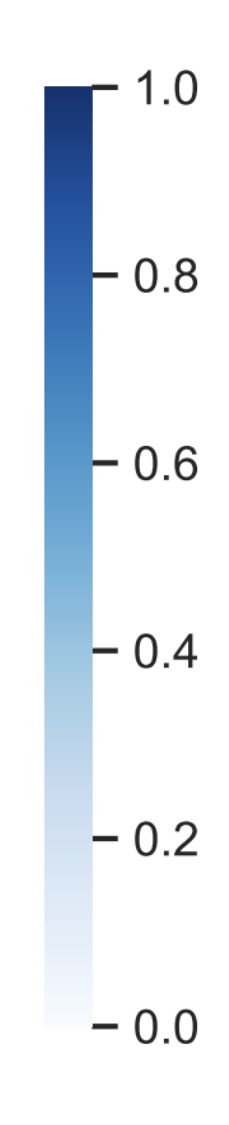}
        \hspace{0.5cm} 
        \includegraphics[width=0.18\textwidth]{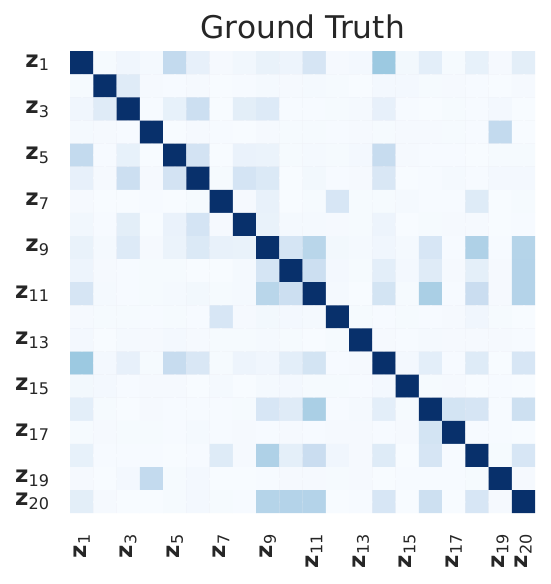}
        \hspace{0cm} 
        \includegraphics[width=0.18\textwidth]{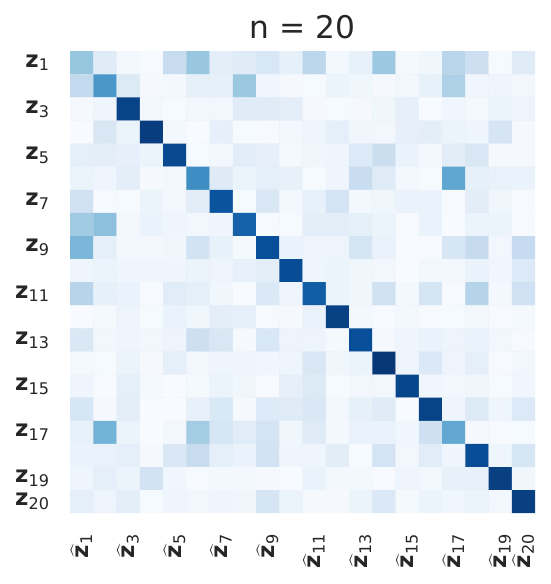}
        \hspace{0.5cm} 
        \includegraphics[width=0.18\textwidth]{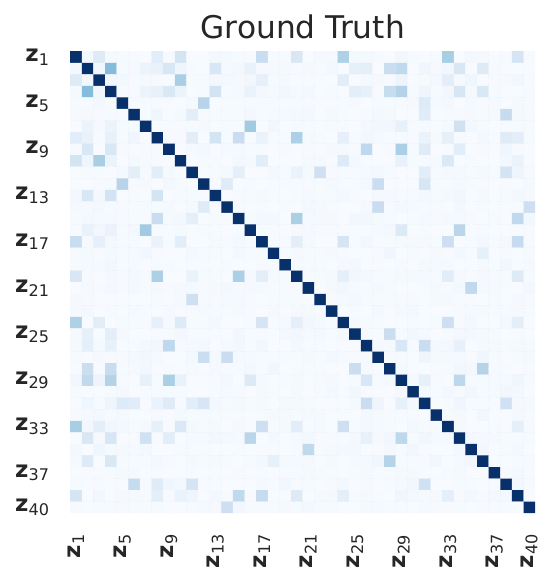}
        \hspace{0cm} 
        \includegraphics[width=0.18\textwidth]{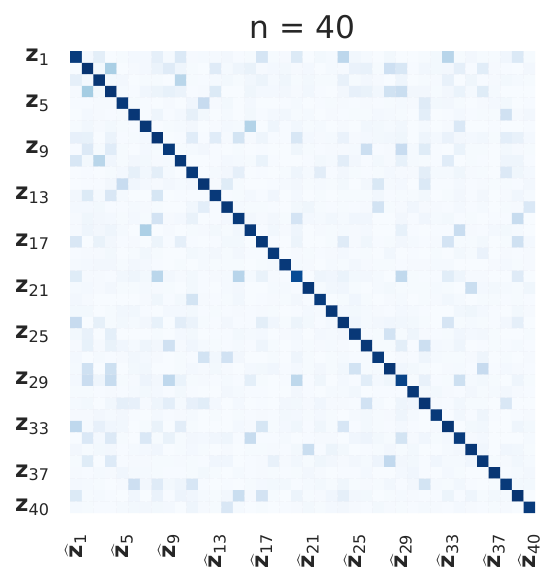}
        \includegraphics[width=0.0375\textwidth]{figures/cbar.pdf}
        \caption{Ablation study on increasing the latent dimension $n$ from 5 to 40 and fixing  $k=1$, $m=10$, $\rho=50\%$, $\delta=0.0\sigma$, $\theta=0$. The case with a higher $n$ is more complicated to learn the latent variables.}
        \label{fig:pw_ablation_n}
    \end{center}
\end{figure*}

\begin{figure*}[!htbp]
    \begin{center}
        \includegraphics[width=0.18\textwidth]{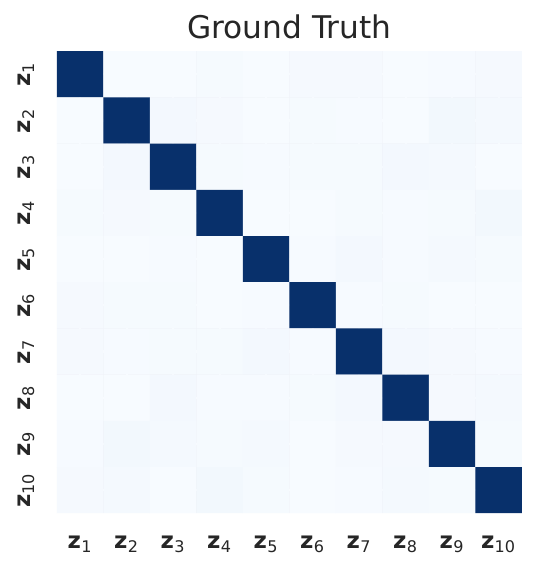}
        \hspace{0cm} 
        \includegraphics[width=0.18\textwidth]{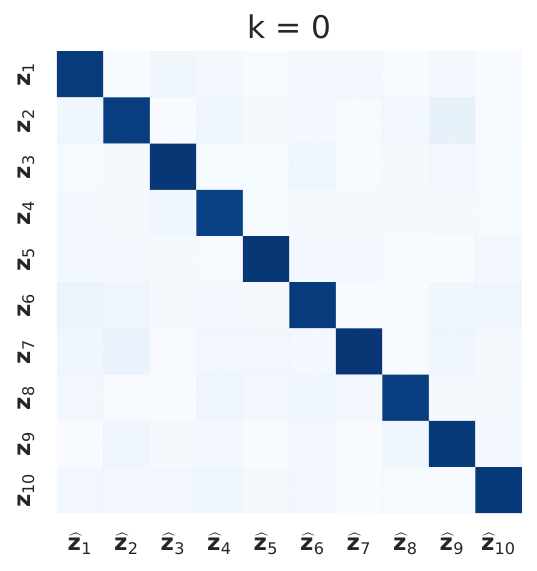}
        \hspace{0.5cm} 
         \includegraphics[width=0.18\textwidth]{figures/heatmaps_pw/true/pw10_d0.0_n10_nn10_M50_G1_rs2_Corr_heatmap.pdf}
        \hspace{0cm} 
        \includegraphics[width=0.18\textwidth]{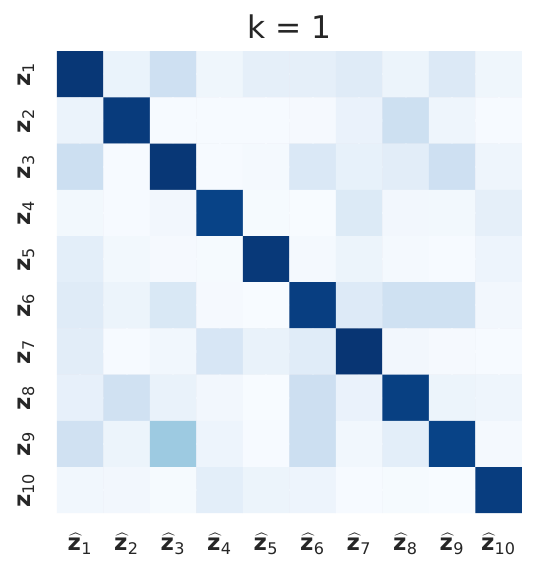}
        \includegraphics[width=0.0375\textwidth]{figures/cbar.pdf}
        \hspace{0.5cm} 
        \includegraphics[width=0.18\textwidth]{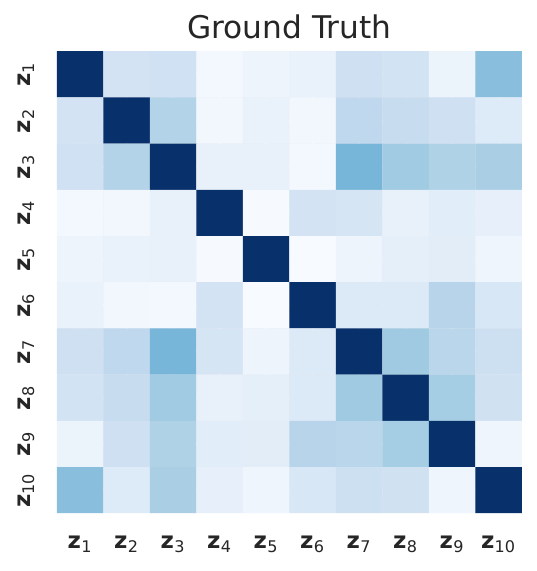}
        \hspace{0cm} 
        \includegraphics[width=0.18\textwidth]{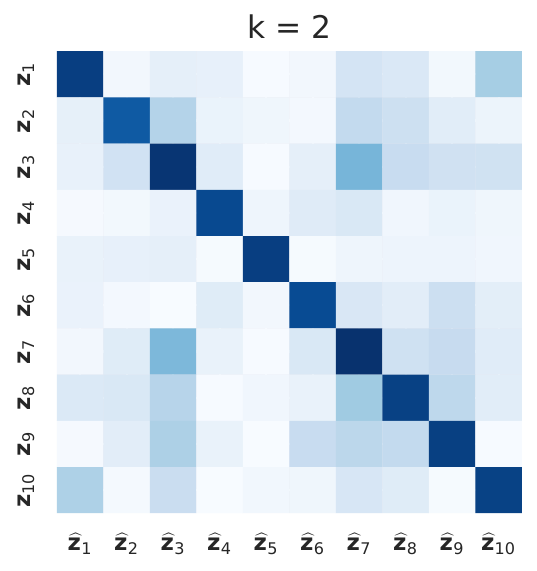}
        \hspace{0.5cm} 
        \includegraphics[width=0.18\textwidth]{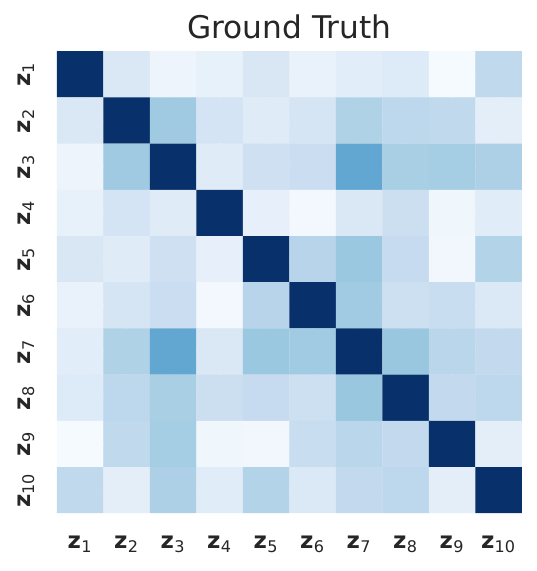}
        \hspace{0cm} 
        \includegraphics[width=0.18\textwidth]{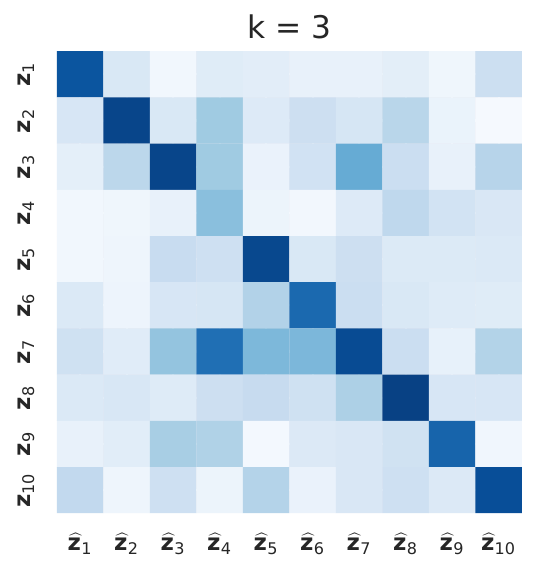}
        \includegraphics[width=0.0375\textwidth]{figures/cbar.pdf}
        \caption{Ablation study on increasing the density of causal graphs $k$ from 0 to 3 and fixing $n=10$, $m=10$, $\rho=50\%$, $\delta=0.0\sigma$, $\theta=0$. In the case with a denser graph, it is more complicated to learn the latent variables.}
        \label{fig:pw_ablation_k}
    \end{center}
\end{figure*}

\begin{figure*}[!htbp]
    \begin{center}
        \includegraphics[width=0.18\textwidth]{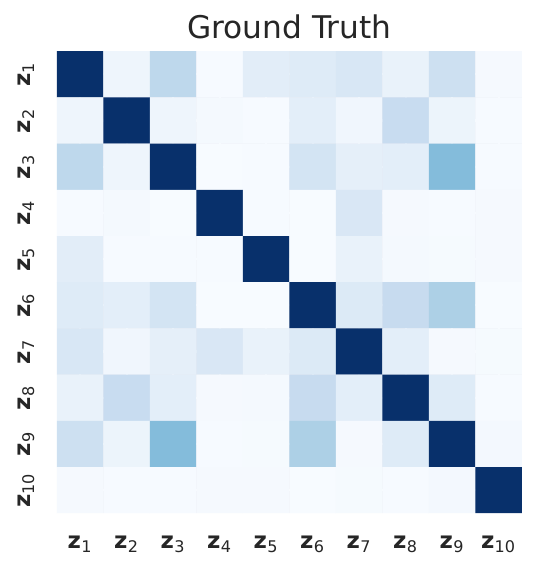}
        \hspace{0cm} 
        \includegraphics[width=0.18\textwidth]{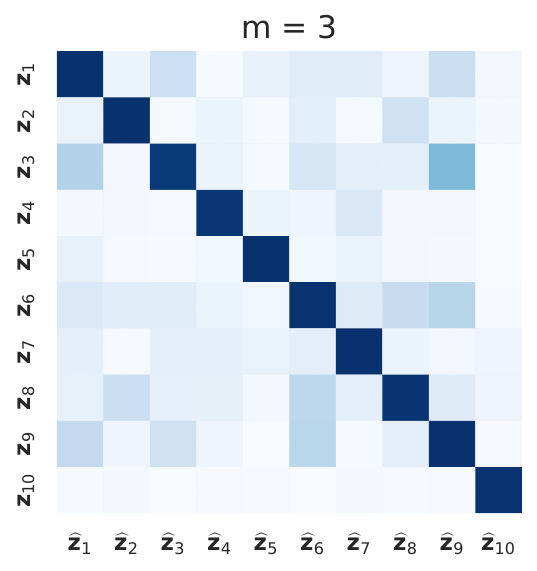}
        \hspace{0.5cm} 
         \includegraphics[width=0.18\textwidth]{figures/heatmaps_pw/true/pw10_d0.0_n10_nn10_M50_G1_rs2_Corr_heatmap.pdf}
        \hspace{0cm} 
        \includegraphics[width=0.18\textwidth]{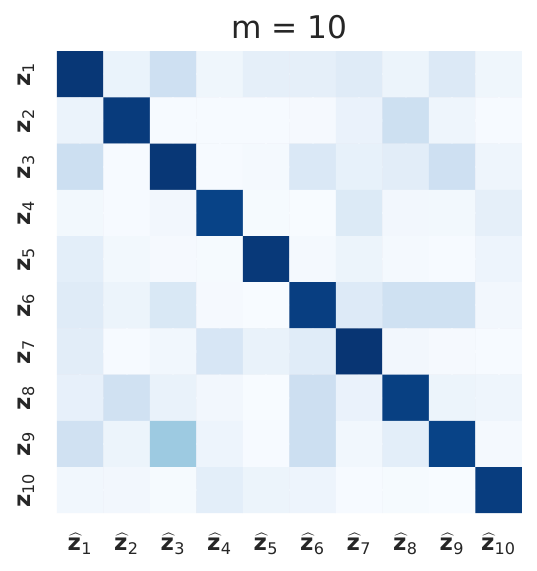}
        \includegraphics[width=0.0375\textwidth]{figures/cbar.pdf}
        \hspace{0.5cm} 
        \includegraphics[width=0.18\textwidth]{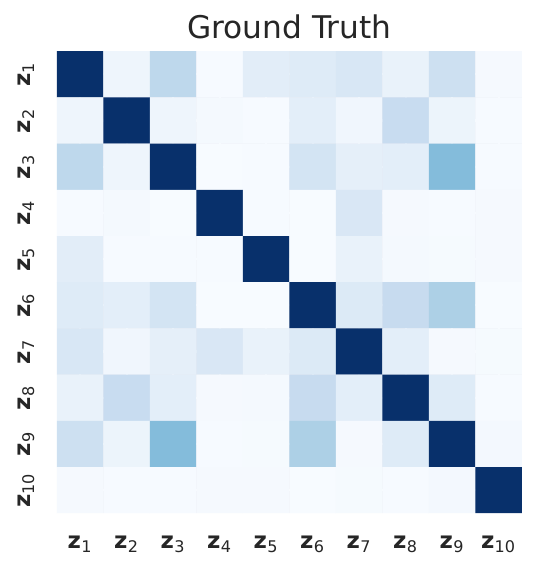}
        \hspace{0cm} 
        \includegraphics[width=0.18\textwidth]{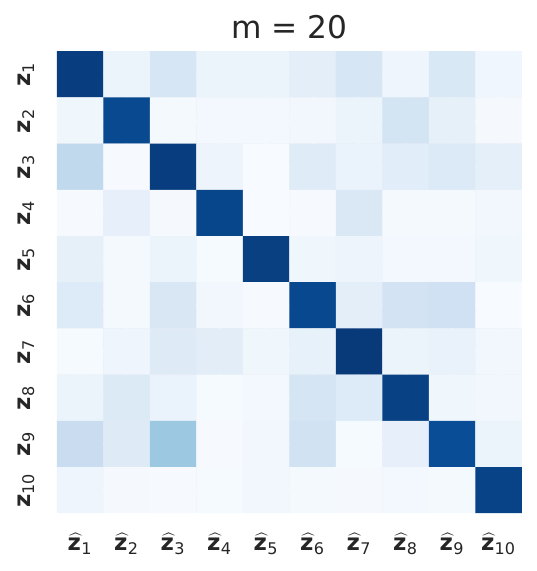}
        \includegraphics[width=0.0375\textwidth]{figures/cbar.pdf}
        \caption{Ablation study on increasing the number of Leaky-ReLU layers $(m-1)$ from 3 to 20 and fixing $n=10$, $k=1$, $\rho=50\%$, $\delta=0.0\sigma$, $\theta=0$. The case with a higher $n$ is more complicated to learn the latent variables. The case with larger $m$ is more complicated to learn the latent variables due to a greater extent of nonlinearity.}
        \label{fig:pw_ablation_m}
    \end{center}
\end{figure*}

\begin{figure*}[!htbp]
    \begin{center}
        \includegraphics[width=0.18\textwidth]{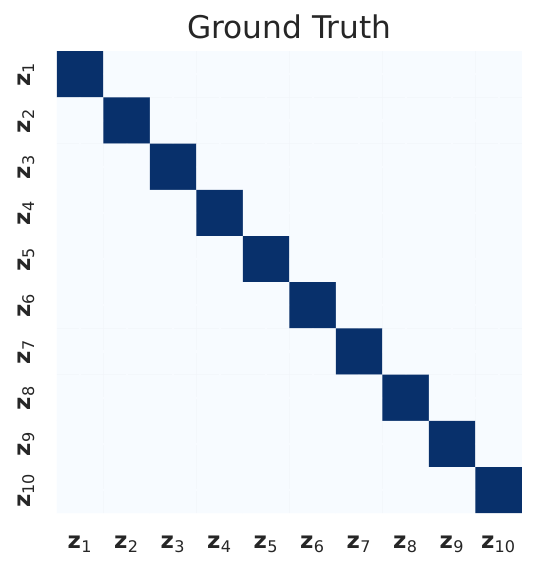}
        \hspace{0cm} 
        \includegraphics[width=0.18\textwidth]{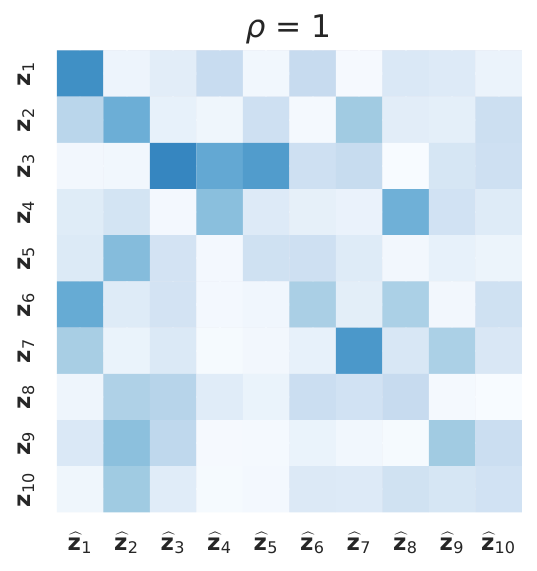}
        \hspace{0.5cm} 
         \includegraphics[width=0.18\textwidth]{figures/heatmaps_pw/true/pw10_d0.0_n10_nn10_M50_G1_rs2_Corr_heatmap.pdf}
        \hspace{0cm} 
        \includegraphics[width=0.18\textwidth]{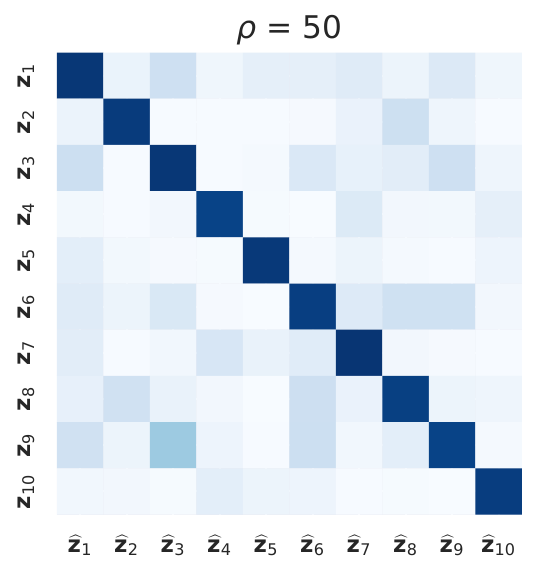}
        \includegraphics[width=0.0375\textwidth]{figures/cbar.pdf}
        \hspace{0.5cm} 
        \includegraphics[width=0.18\textwidth]{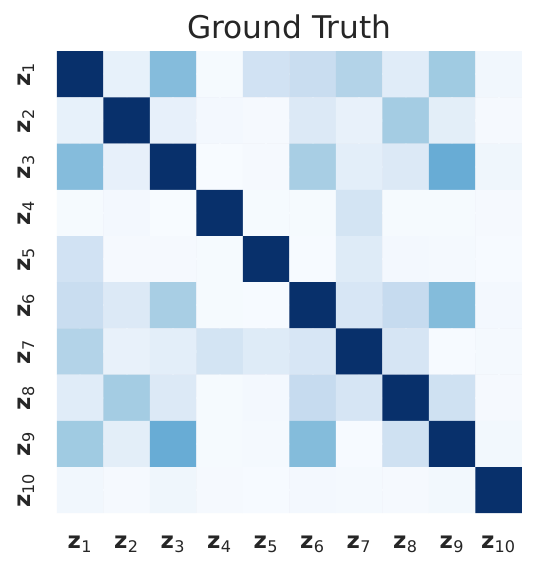}
        \hspace{0cm} 
        \includegraphics[width=0.18\textwidth]{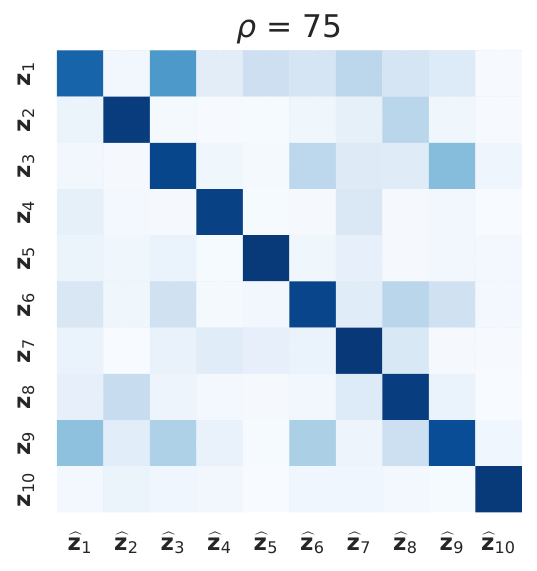}
        \includegraphics[width=0.0375\textwidth]{figures/cbar.pdf}
        \caption{Piecewise linear mixing function with linear causal relation and Gaussian noise: ablation study on increasing the ratio of active (unmasked) variables $\rho$ from 1 variable only to 75\% and fixing $n=10$, $k=1$, $m=10$, $\delta=0.0\sigma$, $\theta=0$. Learning the latent variables is more complicated in the case with a larger portion of active variables.}
        \label{fig:pw_ablation_rho}
    \end{center}
\end{figure*}

\begin{figure*}[!htbp]
    \begin{center}
        \includegraphics[width=0.18\textwidth]{figures/heatmaps_pw/true/pw10_d0.0_n10_nn10_M50_G1_rs2_Corr_heatmap.pdf}
        \hspace{0cm} 
        \includegraphics[width=0.18\textwidth]{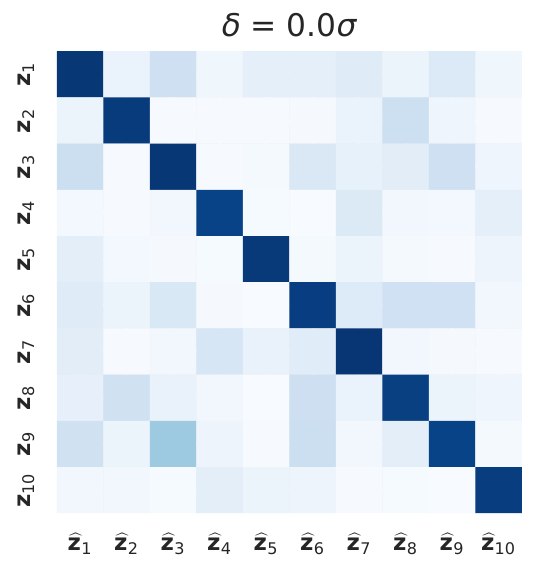}
        \hspace{0.5cm} 
         \includegraphics[width=0.18\textwidth]{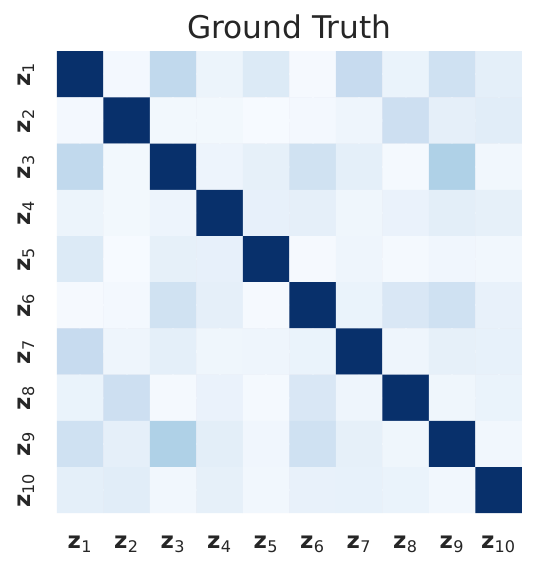}
        \hspace{0cm} 
        \includegraphics[width=0.18\textwidth]{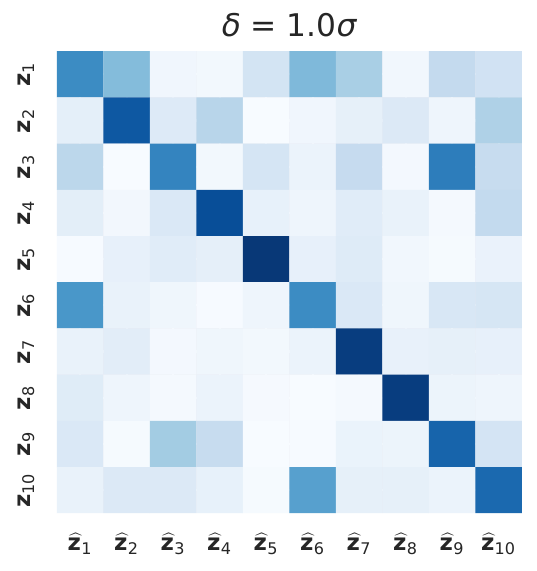}
        \includegraphics[width=0.0375\textwidth]{figures/cbar.pdf}
        \hspace{0.5cm} 
         \includegraphics[width=0.18\textwidth]{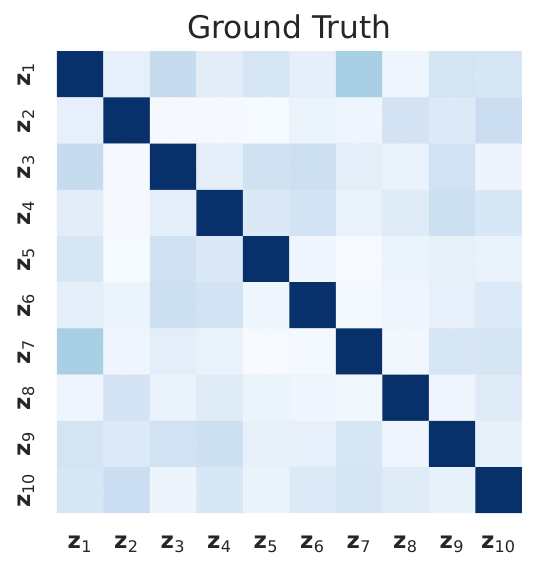}
        \hspace{0cm} 
        \includegraphics[width=0.18\textwidth]{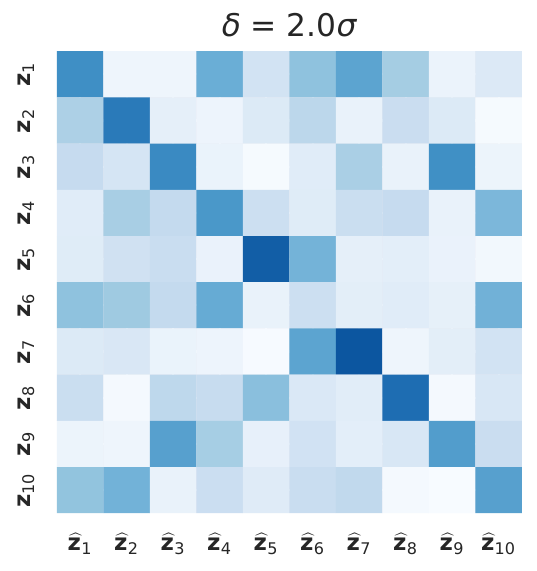}
        \hspace{0.5cm} 
         \includegraphics[width=0.18\textwidth]{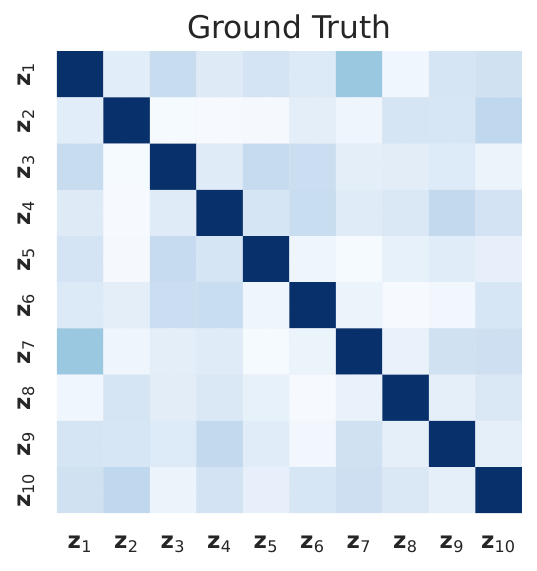}
        \hspace{0cm} 
        \includegraphics[width=0.18\textwidth]{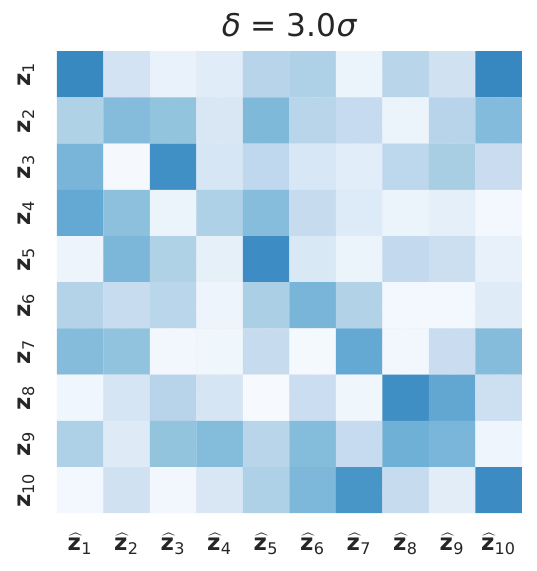}
        \includegraphics[width=0.0375\textwidth]{figures/cbar.pdf}
        \caption{Ablation study on increasing the distance between masked value and mean of latents from $0.0 \sigma$ to $3.0 \sigma$, where $\sigma$ is the standard deviation of latents, and fixing $n=10$, $k=1$, $m=10$, $\rho=50\%$, $\theta=0$. Disentangling the latent variables is more complicated in the case of a larger deviation from zero translation.}
        \label{fig:pw_ablation_delta}
    \end{center}
\end{figure*}

\begin{figure*}[!htbp]
    \begin{center}
        \includegraphics[width=0.18\textwidth]{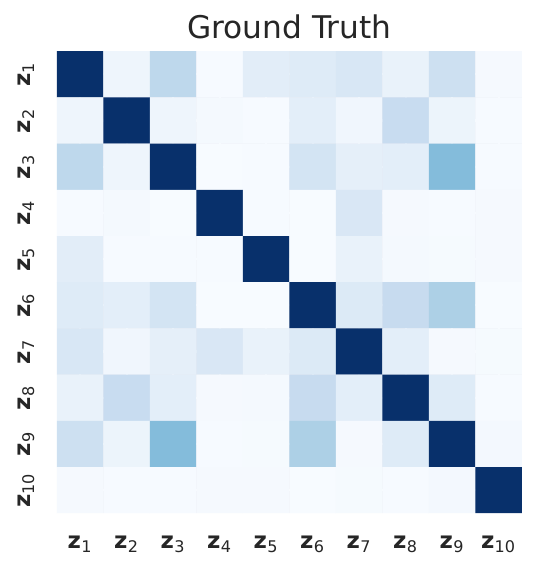}
        \hspace{0cm} 
        \includegraphics[width=0.18\textwidth]{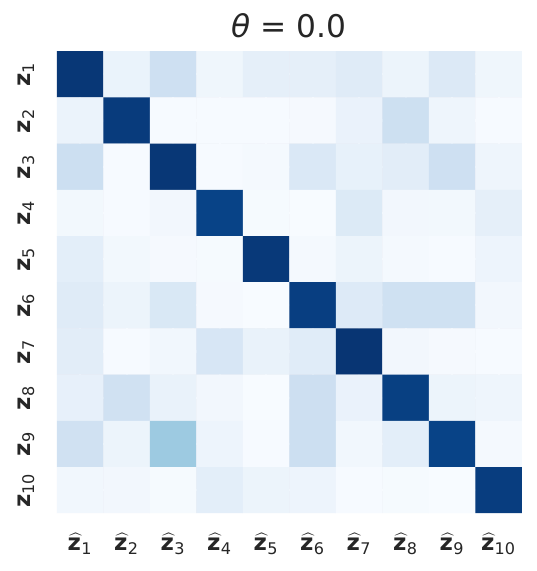}
        \hspace{0.5cm} 
         \includegraphics[width=0.18\textwidth]{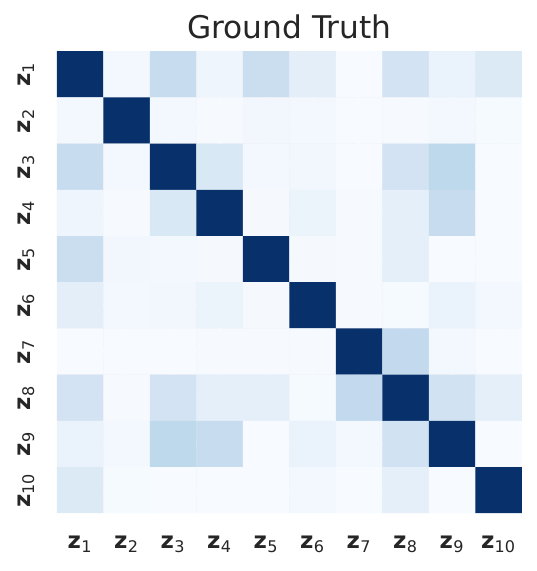}
        \hspace{0cm} 
        \includegraphics[width=0.18\textwidth]{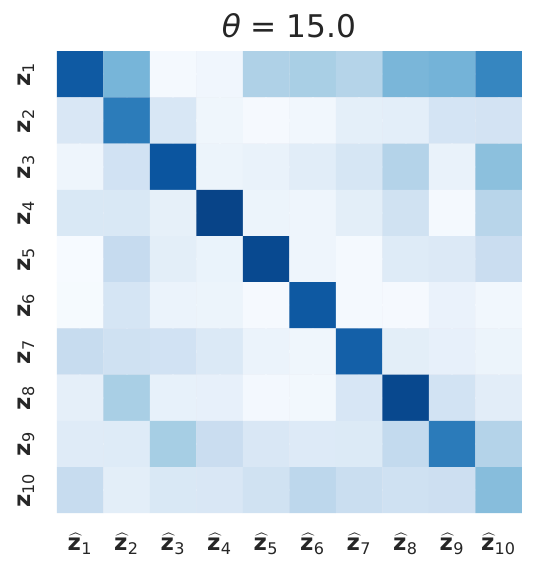}
        \includegraphics[width=0.0375\textwidth]{figures/cbar.pdf}
        \hspace{0.5cm} 
         \includegraphics[width=0.18\textwidth]{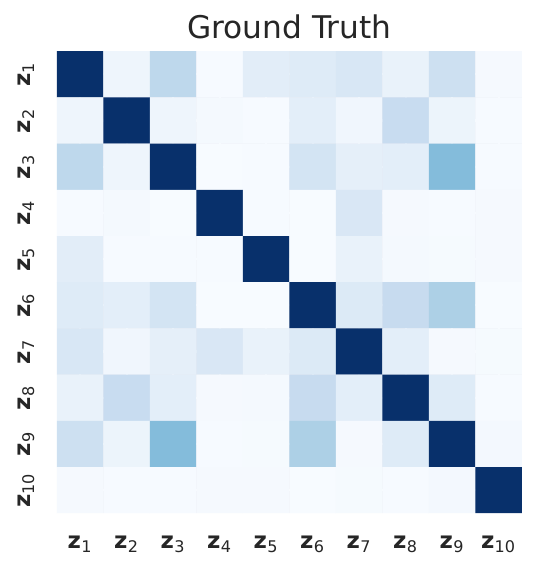}
        \hspace{0cm} 
        \includegraphics[width=0.18\textwidth]{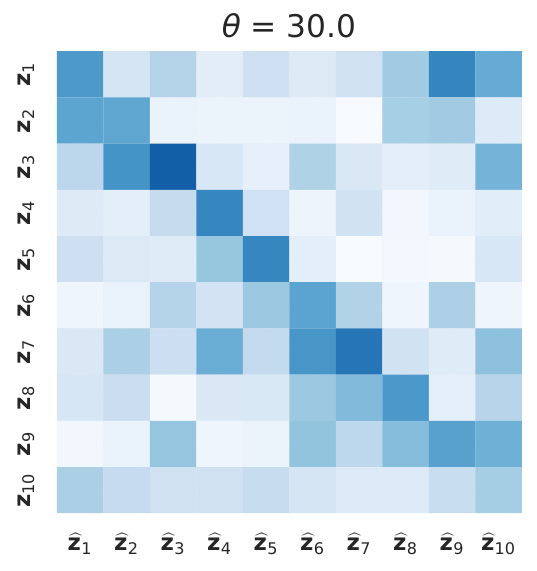}
        \hspace{0.5cm} 
         \includegraphics[width=0.18\textwidth]{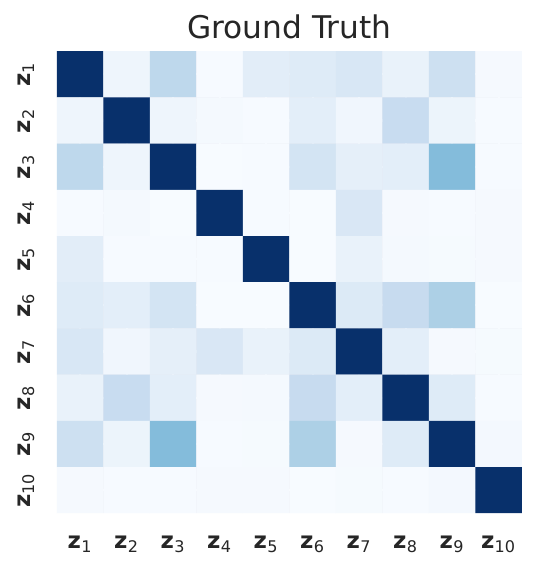}
        \hspace{0cm} 
        \includegraphics[width=0.18\textwidth]{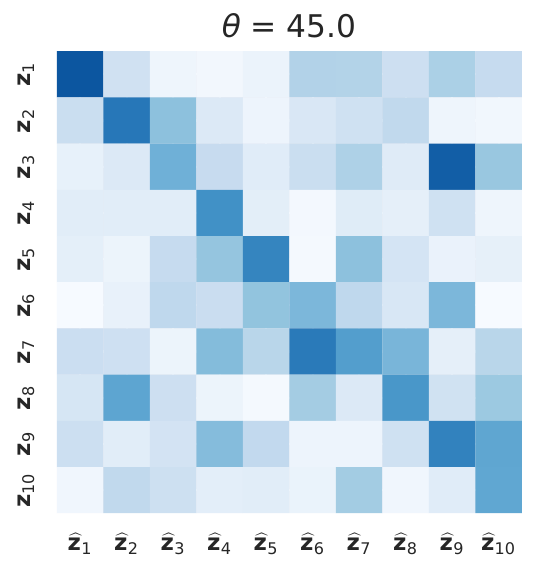}
        \includegraphics[width=0.0375\textwidth]{figures/cbar.pdf}
        \caption{Ablation study on increasing the rotation of common basis from $0.0^{\circ}$ to $45.0^{\circ}$, and fixing $n=10$, $k=1$, $m=10$, $\rho=50\%$, $\delta=0.0\sigma$. Disentangling the latent variables is more complicated in the case of a larger deviation from the standard basis.}
        \label{fig:pw_ablation_theta}
    \end{center}
\end{figure*}
\FloatBarrier

\subsection{Test on independent variables}
\label{app: num independent}
To exclude the natural dependency between latent variables that potentially increases the value of metric MCC that might cause a false positive measurement, we provide additional experiment results with independent latent variables using five random seeds in \cref{tab: numerical indep}. As we can see, it keeps the consistent performance as those in \cref{tab: numerical}, where we repeat the results in the right column of \cref{tab: numerical indep} for easier comparison. This provides evidence that the MCC obtained by our method does not come from the inherent dependencies among latents.

\begin{table}[H]
\caption{Results for the numerical experiments (Stage 2) tested on independent data.
The bold font indicates which parameters are varying in each block of rows.}
\label{tab: numerical indep}
\begin{center}
\begin{small}
\begin{sc}
\begin{tabular}{cccccccc}
\hline
$n$ & $k$ & $m$ &  $\rho$ &  $\delta$ &  $\theta$ & \textbf{MCC} Independent   & \textbf{MCC} in Tab.~\ref{tab: numerical}  \\ \hline
\textbf{5}    & 1 & 10  & 50 \%  & 0 & 0 &    0.96$\pm$0.01  & 0.96$\pm$0.01 \\
\textbf{10}   & 1 & 10  & 50 \%  & 0 & 0 &  0.97$\pm$0.01 & 0.97$\pm$0.01 \\
\textbf{20}   & 1 & 10  & 50 \%  & 0 & 0 &   0.93$\pm$0.05 & 0.92$\pm$0.05 \\
\textbf{40}   & 1 & 10  & 50 \%  & 0 & 0 & 0.94$\pm$0.03  & 0.94$\pm$0.03 \\
\hline
10    & \textbf{0} & 10  & 50 \% & 0 & 0 &  0.97$\pm$0.01  & 0.97$\pm$0.01\\
10    & \textbf{1} & 10 & 50 \%  & 0 & 0 &  0.97$\pm$0.01 & 0.97$\pm$0.01\\
10   &  \textbf{2} & 10 & 50 \%  & 0 & 0 &  0.97$\pm$0.01 & 0.95$\pm$0.01 \\
10   &  \textbf{3} & 10 & 50 \%  & 0 & 0 &  0.91$\pm$0.04 & 0.91$\pm$0.04\\
\hline
10   &  1 & \textbf{3}  & 50 \%  & 0 & 0 &  0.96$\pm$0.03 & 0.96$\pm$0.03\\
10   &  1 & \textbf{10}  & 50 \% & 0 & 0 & 0.97$\pm$0.01  & 0.97$\pm$0.01 \\
10   &  1 & \textbf{20}  & 50 \% & 0 & 0 & 0.94$\pm$0.02 & 0.93$\pm$0.01\\
\hline
10    & 1 & 10 & \textbf{1\text{var}}  & 0 &0 & 0.46 $\pm$0.04 & 0.45$\pm$0.04 \\
10   &  1 & 10 & \textbf{50 \%}   & 0 &0 &  0.97$\pm$0.01 & 0.97$\pm$0.01\\
10   &  1 & 10 & \textbf{75 \%}  & 0 &0 &      0.93 $\pm$0.03  & 0.93$\pm$0.04 \\
\hline
10   & 1 & 10 & 50 \%   & \textbf{0} & 0 &  0.97$\pm$0.01 & 0.97$\pm$0.01 \\
10   & 1 & 10 & 50 \%   & \textbf{1} & 0 &  0.87$\pm$0.04 & 0.88$\pm$0.04\\
10   & 1 & 10 & 50 \%   & \textbf{2} & 0 &  0.61$\pm$0.04 & 0.61$\pm$0.04 \\
10   & 1 & 10 & 50 \%   & \textbf{3} & 0 &  0.49$\pm$0.02  & 0.49$\pm$0.02 \\
\hline
10   &  1 & 10  & 50 \%  & 0 & \textbf{0} &  0.96$\pm$0.03 & 0.97$\pm$0.01\\
10   &  1 & 10  & 50 \% & 0 & \textbf{15} & 0.84 $\pm$0.01  & 0.83$\pm$0.02 \\
10   &  1 & 10  & 50 \% & 0 & \textbf{30} & 0.61$\pm$0.01 & 0.60$\pm$0.02\\
10   &  1 & 10  & 50 \% & 0 & \textbf{45} & 0.63$\pm$0.01 & 0.62$\pm$0.01 \\
\hline
\end{tabular}
\end{sc}
\end{small}
\end{center}
\end{table}

\subsection{Empirical study on scalability}
\label{subapp: complexity}

As shown in Tab.~\ref{tab:train_hyper}, Stage 1 only need 5000 epoch whereas Stage 2 need 80000. Consequently, the second stage of optimization constitutes the primary computational bottleneck of our method. To quantify scalability, we report both the number of epochs needed for convergence and the average execution time per 1000 epochs in seconds for the second stage. We can observe an exponential complexity increase in the table:

\begin{table}[H]
\centering
\caption{Number of epochs for convergence and training time}
\begin{tabular}{c|ccccc}
\hline
$n$ & 5 & 10 & 20 & 40 & 50 \\
\hline
Epoch & 2,500 & 20,000 & 125,000 & 300,000 & 800,000 \\
\hline
Execution time per 1000 epoch in Stage2 (s) & 13.4 & 20.2 & 45.6 & 145.4 & 219.7 \\
\hline
\end{tabular}
\end{table}
In particular we report results for $n=50$ and limited results for $n=100$. For comparisons, few works in the CRL literature are able to scale reliably to this number of latent variables, and most of the methods typically focus on 10-20 variables.
For $n=50$, we got $MCC=0.93 \pm 0.03$ averaged over 3 random seeds.
For $n=100$, full convergence of the second stage was not reached. However, after 15,000 epochs of the first stage, we observed an average $R^2=0.82 \pm 0.13$, indicating a meaningful affine identifiability which validates \cref{thm:global-affine-identi}.

\subsection{Sensitivity Analysis of Hyper-parameters}
\label{subapp: sensitive analysis}
We performed a sensitivity analysis on two hyperparameters: the sparsity level $\epsilon$ in Equation (5), which controls the sparsity level, and the learning rate. Bold options in the tables are the ones we used in the main paper, where $\epsilon=1e-2$ is inspired by \citet{xu2024sparsity}, which shows that the results are not sensitive to $\epsilon$, but they are sensitive to the learning rate. 

\begin{table}[H]
\centering
\caption{Sensitivity analysis of sparsity level $\epsilon$ in Eq.~(5): Average MCCs over 5 random seeds (Setup: $n=10$, $k=1$, $m=10$, $\rho=50$, $\delta=0$, $\theta=0$)}
\begin{tabular}{c|ccccc}
\hline
$\epsilon$ & 1000 & 100 & 10 & 1 & 1e-1 \\
\hline
MCC & 0.50$\pm$0.01 & 0.50$\pm$0.01 & 0.50$\pm$0.01 & 0.50$\pm$0.01 & 0.90$\pm$0.08 \\
\hline
\hline
$\epsilon$ & \textbf{1e-2} & 1e-3 & 1e-4 & 1e-5 &  \\
\hline
MCC & \textbf{0.97$\pm$0.01} & 0.94$\pm$0.03 & 0.90$\pm$0.07 & 0.88$\pm$0.07 &  \\
\hline
\end{tabular}
\end{table}

The final hyperparameters we used in the experiments are chosen based on the MCC and  $R^2$ computed with the ground truth variables.
If we did not know the ground truth, tuning the parameters would be more difficult, but in line with other DL approaches. For example, we can choose the learning rate as usual in DL by observing the behavior of the loss, while for the sparsity level $\epsilon$
 we can choose it based on potential violations of our assumptions in the reconstructed variables, e.g., if the reconstructed variables are non-Gaussian.

\begin{table}[H]
\centering
\caption{Sensitivity analysis of learning rate: Average MCCs over 5 random seeds}
\begin{tabular}{c|cccc}
\hline
Learning rate & 1e-2 & 1e-3 & \textbf{1e-4} & 1e-5 \\
\hline
MCC & 0.25$\pm$0.03 & 0.53$\pm$0.08 & \textbf{0.97$\pm$0.01} & 0.92$\pm$0.02 \\
\hline
\end{tabular}
\end{table}

\subsection{Ablation study on sparsity constraint and comparison with close works}
\label{subapp: baselines}
We ran the experiments on all settings without a sparsity constraint in the second stage. As summarized \cref{tab: two bench}, the significant drop in MCC shows the important role that sparsity plays in the disentanglement process.

Additionally, we compared two works \citep{xu2024sparsity, kivva2022identifiability} which are closely related to our work, where \citet{xu2024sparsity} serves as an oracle since it provides both group information and degenerate dimension, which is learned in our method. \citet{xu2024sparsity} can be seen as a special case which can be explained in our theorem and since it provides stronger information so we do not aim to outperform it. \citet{kivva2022identifiability} only works for non-degenerate cases, which shows the importance of our method to solve the unexplored problem in the degenerate latent space.

\begin{table}[H]
\centering
\label{tab: two bench}
\caption{Ablation study without sparsity constraints and baselines: Average MCCs over 5 random seeds. 
Ours = proposed method, WO = without sparsity constraints, Oracle \citep{xu2024sparsity}, VaDE \citep{kivva2022identifiability}}

\begin{subtable}{\textwidth}
\centering
\begin{tabular}{c|cccc}
\hline
$n$ & 5 & 10 & 20 & 40 \\
\hline
Ours & 0.96$\pm$0.01 & 0.97$\pm$0.01 & 0.92$\pm$0.05  & 0.94$\pm$0.03 \\
WO & 0.57$\pm$0.02 & 0.57$\pm$0.04 & 0.44$\pm$0.01 & 0.36$\pm$0.01 \\
Oracle & 0.91$\pm$0.05 & 0.98$\pm$0.00 & 0.96$\pm$0.01 & 0.94$\pm$0.02 \\
VaDE & 0.45$\pm$0.05 & 0.32$\pm$0.03 & 0.23$\pm$0.15 & 0.25$\pm$0.11 \\
\hline
\end{tabular}
\subcaption{Varying $n$}
\end{subtable}

\vspace{0.3cm}

\begin{subtable}{\textwidth}
\centering
\begin{tabular}{c|cccc}
\hline
$k$ & 0 & 1 & 2 & 3 \\
\hline
Ours & 0.97$\pm$0.01 & 0.97$\pm$0.01 & 0.95$\pm$0.01 & 0.91$\pm$0.04 \\
WO & 0.56$\pm$0.04 & 0.57$\pm$0.04 & 0.54$\pm$0.01 & 0.55$\pm$0.02 \\
Oracle & 0.98$\pm$0.01 & 0.98$\pm$0.00 & 0.95$\pm$0.04 & 0.95$\pm$0.03 \\
VaDE & 0.33$\pm$0.05 & 0.32$\pm$0.03 & 0.31$\pm$0.03 & 0.27$\pm$0.03 \\
\hline
\end{tabular}
\subcaption{Varying $k$}
\end{subtable}

\vspace{0.3cm}

\begin{subtable}{\textwidth}
\centering
\begin{tabular}{c|ccc}
\hline
$m$ & 3 & 10 & 20 \\
\hline
Ours & 0.96$\pm$0.03 & 0.97$\pm$0.01 & 0.93$\pm$0.01 \\
WO & 0.55$\pm$0.01 & 0.57$\pm$0.04 & 0.56$\pm$0.02 \\
Oracle & 0.99$\pm$0.00 & 0.98$\pm$0.00 & 0.95$\pm$0.01 \\
VaDE & 0.33$\pm$0.04 & 0.32$\pm$0.03 & 0.35$\pm$0.07 \\
\hline
\end{tabular}
\subcaption{Varying $m$}
\end{subtable}

\vspace{0.3cm}

\begin{subtable}{\textwidth}
\centering
\begin{tabular}{c|ccc}
\hline
$\rho$ & 1 & 50 & 75 \\
\hline
Ours & 0.45$\pm$0.04 & 0.97$\pm$0.01 & 0.93$\pm$0.04 \\
WO & 0.42$\pm$0.03 & 0.57$\pm$0.04 & 0.56$\pm$0.03 \\
Oracle & 0.86$\pm$0.04 & 0.98$\pm$0.00 & 0.98$\pm$0.01 \\
VaDE & 0.28$\pm$0.03 & 0.32$\pm$0.03 & 0.31$\pm$0.04 \\
\hline
\end{tabular}
\subcaption{Varying $\rho$}
\end{subtable}

\vspace{0.3cm}

\begin{subtable}{\textwidth}
\centering
\begin{tabular}{c|cccc}
\hline
$\delta$ & 0 & 1 & 2 & 3 \\
\hline
Ours & 0.97$\pm$0.01 & 0.88$\pm$0.04 & 0.61$\pm$0.04 & 0.49$\pm$0.02 \\
WO & 0.57$\pm$0.04 & 0.54$\pm$0.02 & 0.55$\pm$0.04 & 0.55$\pm$0.03 \\
Oracle & 0.98$\pm$0.00 & 0.73$\pm$0.16 & 0.41$\pm$0.04 & 0.38$\pm$0.02 \\
VaDE & 0.32$\pm$0.03 & 0.30$\pm$0.02 & 0.27$\pm$0.03 & 0.28$\pm$0.03 \\
\hline
\end{tabular}
\subcaption{Varying $\delta$}
\end{subtable}

\vspace{0.3cm}

\begin{subtable}{\textwidth}
\centering
\begin{tabular}{c|cccc}
\hline
$\theta$ & 0 & 15 & 30 & 45 \\
\hline
Ours & 0.97$\pm$0.01 & 0.83$\pm$0.02 & 0.60$\pm$0.02 & 0.62$\pm$0.01 \\
WO & 0.57$\pm$0.04 & 0.56$\pm$0.03 & 0.58$\pm$0.04 & 0.56$\pm$0.02 \\
Oracle & 0.98$\pm$0.00 & 0.85$\pm$0.01 & 0.62$\pm$0.02 & 0.63$\pm$0.01 \\
VaDE & 0.32$\pm$0.03 & 0.32$\pm$0.03 & 0.31$\pm$0.04 & 0.32$\pm$0.03 \\
\hline
\end{tabular}
\subcaption{Varying $\theta$}
\end{subtable}

\end{table}

\subsection{Comparison with Baseline VaDE without misspecification}
\label{app: without misspecifiction}

We provide results where we evaluate our method in a setting that is fair to the baseline VaDE, where we generate the latent with a non-degenerate GMM. In this setting, we can only compare the identifiability up to $AT$ measured by metric $R^2$, but we cannot compare for $PS$ for any of the two methods, because our method needs degeneracy, and VaDE needs an auxiliary variable and pairwise conditional independence. As shown below, our method outperforms VaDE for all settings.

\begin{table}[H]
\caption{Results for the numerical experiments (Stage 1) on nondegenerate GMM data.
The bold font indicates which parameters are varying in each block of rows.}
\label{tab: numerical nonde}
\begin{center}
\begin{small}
\begin{sc}
\begin{tabular}{cccccccc}
\hline
$n$ & $k$ & $m$ &  $\rho$ &  $\delta$ &  $\theta$ & \textbf{MCC} of VaDE   & \textbf{MCC} of Our Method  \\ \hline
\textbf{5}    & 1 & 10  & 50 \%  & 0 & 0 &    0.66$\pm$0.09  & 0.94$\pm$0.01 \\
\textbf{10}   & 1 & 10  & 50 \%  & 0 & 0 &  0.78$\pm$0.05 & 0.94$\pm$0.01 \\
\textbf{20}   & 1 & 10  & 50 \%  & 0 & 0 &   0.86$\pm$0.04 & 0.93$\pm$0.01 \\
\textbf{40}   & 1 & 10  & 50 \%  & 0 & 0 & 0.84$\pm$0.07  & 0.93$\pm$0.01 \\
\hline
10    & \textbf{0} & 10  & 50 \% & 0 & 0 &  0.86$\pm$0.05  & 0.95$\pm$0.01\\
10    & \textbf{1} & 10 & 50 \%  & 0 & 0 &  0.78$\pm$0.05 & 0.94$\pm$0.01\\
10   &  \textbf{2} & 10 & 50 \%  & 0 & 0 &  0.74$\pm$0.12 & 0.94$\pm$0.01 \\
10   &  \textbf{3} & 10 & 50 \%  & 0 & 0 &  0.69$\pm$0.11 & 0.93$\pm$0.02\\
\hline
10   &  1 & \textbf{3}  & 50 \%  & 0 & 0 &  0.78$\pm$0.10 & 0.99$\pm$0.00\\
10   &  1 & \textbf{10}  & 50 \% & 0 & 0 & 0.78$\pm$0.05  & 0.94$\pm$0.01 \\
10   &  1 & \textbf{20}  & 50 \% & 0 & 0 & 0.70$\pm$0.10 & 0.89$\pm$0.01\\
\hline
10    & 1 & 10 & \textbf{1\text{var}}  & 0 &0 & 0.77 $\pm$0.05 & 0.94$\pm$0.01 \\
10   &  1 & 10 & \textbf{50 \%}   & 0 &0 &  0.78$\pm$0.05 & 0.94$\pm$0.01\\
10   &  1 & 10 & \textbf{75 \%}  & 0 &0 &      0.75 $\pm$0.08  & 0.94$\pm$0.02 \\
\hline
10   &  1 & 10  & 50 \%  & 0 & \textbf{0} &  0.78$\pm$0.05 & 0.94$\pm$0.01\\
10   &  1 & 10  & 50 \% & 0 & \textbf{15} & 0.77 $\pm$0.01  & 0.94$\pm$0.01 \\
10   &  1 & 10  & 50 \% & 0 & \textbf{30} & 0.78$\pm$0.02 & 0.94$\pm$0.02\\
10   &  1 & 10  & 50 \% & 0 & \textbf{45} & 0.78$\pm$0.02 & 0.94$\pm$0.01 \\
\hline
\end{tabular}
\end{sc}
\end{small}
\end{center}
\end{table}

\subsection{Over-parameterization of latent dimension $n$}
\label{subapp: over n}

In our previous analysis, the latent dimension $n$ is treated as fixed. To the best of our knowledge, prior work typically assumes $n$ is known in theoretical analysis, but then relies on over-parameterization strategies in empirical analysis. We also consider an over-parameterized choice and show empirically that our method performs well under these settings. While a formal proof of identifiability for remains open, our experimental findings suggest that moderate over-specification does not undermine identifiability guarantees in practice.  We use our basic setup: $n=10$, $k=1$, $m=10$, $\rho=50$, $\delta=0$, $\theta=0$, varying estimation $\hat{n}\in\{12, 14, 16,18, 20\}$.
\begin{table}[h!]
\centering
\caption{Average R2 and MCC over 5 random seeds}
\begin{tabular}{lccccc}
\hline
 $\hat{n}$& 12(n+2) & 14(n+4) & 16(n+6) & 18(n+8) & 20(n+10) \\
\hline
R2  & 0.91 & 0.89 & 0.88 & 0.88 & 0.83 \\
MCC & 0.96 & 0.96 & 0.93 & 0.96 & 0.96 \\
\hline
\end{tabular}
\end{table}

\subsection{Causal graph learning analysis}
\label{subapp: graph} 

The aim of causal representation learning (CRL) is to recover latent causal variables from high-dimensional observations. Once these causal variables are recovered, one can learn the equivalence class of the causal graph on these variables, e.g. by applying a standard causal discovery method on top of these recovered variables.

We provide results in which we apply a standard causal discovery method, PC \citep{spirtes2000causation}, on our representations. We consider the same setting as in \cref{tab: numerical_full}. We generate $5000$ samples for each graph. We use the implementation from the pcalg package and use partial correlation tests with a significance threshold $\alpha=0.01$.

In Tab.~\ref{tab: shd}, we report the average SHD on the CPDAG learned over our $\hat{\zb}$ w.r.t. the oracle CPDAG (the ground truth CPDAG with oracle tests). We report the average SHD on the CPDAG learned over the ground truth variables $\zb$ w.r.t. the oracle CPDAG. This is the best result we could obtain with a perfect reconstruction of the ground truth variables. We report the delta between them as an alternative measure of the accuracy of our reconstruction of the causal variables.

\begin{table}[H]
\caption{The average SHD on the CPDAG learned over our $\hat{\zb}$ and $\zb$ w.r.t. the oracle CPDAG.}
\label{tab: shd}
\begin{center}
\begin{small}
\begin{sc}
\begin{tabular}{ccccccccc}
\hline
$n$ & $k$ & $m$ &  $\rho$ &  $\delta$ &  $\theta$ & \textbf{SHD} ($\zb$)   &  \textbf{SHD} ($\hat{\zb}$)  & $\Delta$\textbf{SHD} \\ \hline
\textbf{5}    & 1 & 10  & 50 \%  & 0 & 0 &    1.4$\pm$1.67  & 3.8$\pm$1.79 & 2.4 \\
\textbf{10}   & 1 & 10  & 50 \%  & 0 & 0 &  1.8$\pm$1.79 & 4.2$\pm$1.10 & 2.4\\
\textbf{20}   & 1 & 10  & 50 \%  & 0 & 0 &   7.8$\pm$2.86 & 22$\pm$12.90 & 14.2\\
\textbf{40}   & 1 & 10  & 50 \%  & 0 & 0 & 13.6$\pm$4.04  & 39.8$\pm$17.75 & 26.2\\
\hline
10    & \textbf{1} & 10 & 50 \%  & 0 & 0 &  1.8$\pm$1.79 & 4.2$\pm$1.10 & 2.4\\
10   &  \textbf{2} & 10 & 50 \%  & 0 & 0 &  13.2$\pm$2.17 & 16$\pm$2.64 & 2.8\\
10   &  \textbf{3} & 10 & 50 \%  & 0 & 0 &  25$\pm$1.87 & 26.6$\pm$3.85 & 1.6\\
\hline
10   &  1 & \textbf{3}  & 50 \%  & 0 & 0 &  1.8$\pm$1.79 & 3$\pm$1.22 & 1.2\\
10   &  1 & \textbf{10}  & 50 \% & 0 & 0 & 1.8$\pm$1.79  & 4.2$\pm$1.10 & 2.4\\
10   &  1 & \textbf{20}  & 50 \% & 0 & 0 & 1.8$\pm$1.79 & 8.8$\pm$6.05 & 7\\
\hline
10    & 1 & 10 & \textbf{1\text{var}}  & 0 &0 & 1.8$\pm$1.79 & 21.4$\pm$3.05 & 19.6\\
10   &  1 & 10 & \textbf{50 \%}   & 0 &0 &  1.8$\pm$1.79 & 4.2$\pm$1.10 & 2.4\\
10   &  1 & 10 & \textbf{75 \%}  & 0 &0 &  1.8$\pm$1.79  & 7.4$\pm$4.04 & 5.6\\
\hline
10   & 1 & 10 & 50 \%   & \textbf{0} & 0 &  1.8$\pm$1.79 & 4.2$\pm$1.10 &2.4 \\
10   & 1 & 10 & 50 \%   & \textbf{1} & 0 &  1.8$\pm$1.79 & 16.8$\pm$5.12 & 15\\
10   & 1 & 10 & 50 \%   & \textbf{2} & 0 &  1.8$\pm$1.79 & 21.8$\pm$2.59 & 20\\
10   & 1 & 10 & 50 \%   & \textbf{3} & 0 &  1.8$\pm$1.79  & 22.2$\pm$2.68 & 20.4\\
\hline
10   &  1 & 10  & 50 \%  & 0 & \textbf{0} &  1.8$\pm$1.79  & 4.2$\pm$1.10 & 2.4\\
10   &  1 & 10  & 50 \% & 0 & \textbf{15} & 2.2$\pm$1.24  & 18$\pm$2.83 & 15.8\\
10   &  1 & 10  & 50 \% & 0 & \textbf{30} & 2.6$\pm$2.41 & 20.8$\pm$1.64 &18.2\\
10   &  1 & 10  & 50 \% & 0 & \textbf{45} & 4.5$\pm$0.71 & 21.4$\pm$2.30 & 16.9\\
\hline
\end{tabular}
\end{sc}
\end{small}
\end{center}
\end{table}

These results show that our learned causal graph is similar to the learned graphs on the ground truth variables.

\subsection{Image dataset: Multiple Balls}
\label{app: balls}
\paragraph{Data generation process }We use PyGame %
to render images with size $64 \times 64 \times 3$ as shown in \cref{fig: balls varying number from 2 to 6}.
The number of balls $\bb$ varies from 2 to 6. In this setting, the latent variables we consider are the $(x, y)$ coordinates of each ball with support $[0,1]^2$. We sample the $(x, y)$ coordinates of each ball $i\in[\bb]$ independently from a truncated 2-dimensional normal distribution $\Ncal(\mub_i,\Sigmab_i)$ bounded within the square $(0.05,0.95)^2$ with randomized $\mub_i$ and fixed $\Sigmab_i$: $\mub_i\sim \text{Unif}(0.3,0.7)^2$, and $\Sigmab_i= ((0.01,0.00)(0.00,0.01))$. To avoid the high ratio of occultation, we use rejection sampling to sample $\mub_i$ to ensure the Euclidean distance between any pair of $\mub_i$ values is at least $0.2$. To model the degeneracy in this setting, we set each ball at a fixed position $\mub_i$ with probability $p=0.1$ by sampling a binomial variable to indicate the degeneracy status.
\begin{figure}[H]
\centering
\includegraphics[width=0.8\textwidth]{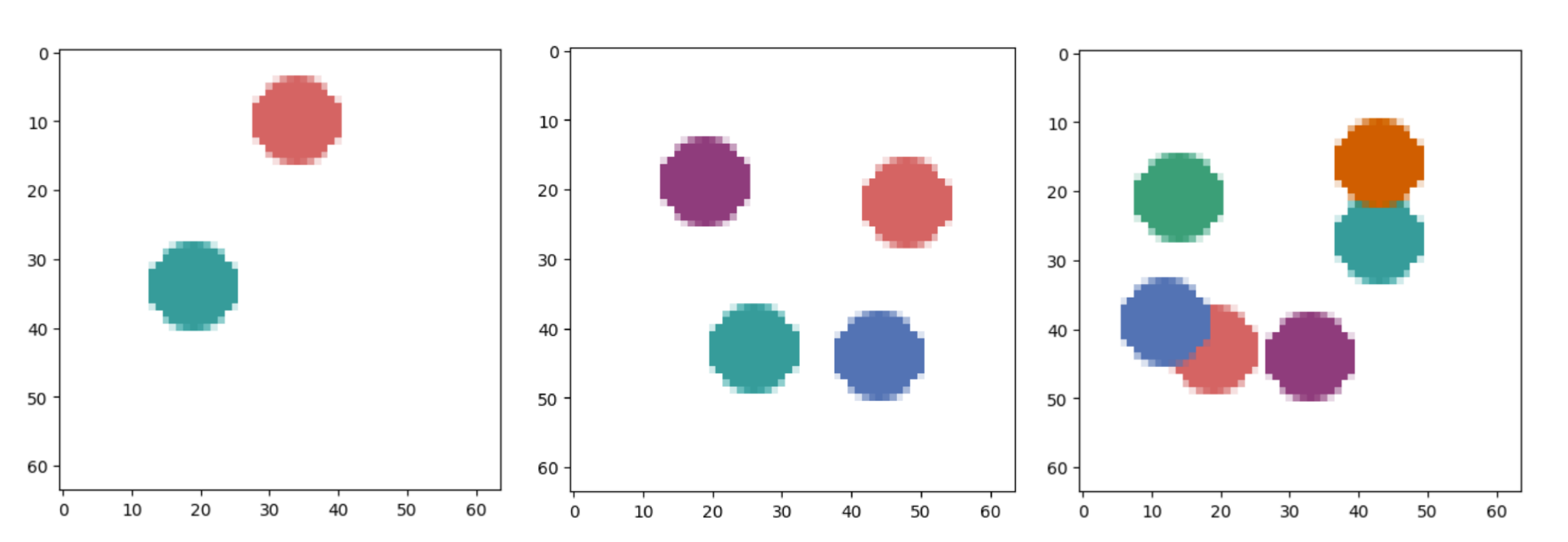}
\caption{Example images of multiple ball dataset, modified based on the rendering code provided by \citet{ahuja2022weakly}. Balls can move freely inside the frame, and each ball's coordinates can be fixed randomly to a constant value, which causes their latent dimensions to be degenerate; Varying the number of balls from 2 to 6.}
\label{fig: balls varying number from 2 to 6}   
\end{figure}

\paragraph{CNN architecture}
Before we train the model described in Sec.~\ref{sec: implementation}, we pre-train a CNN model to reduce the input image dimension from $ 64\times 64 \times 3$ to a vector of size $128\times1$. To control the information loss, we use a symmetric convolutional autoencoder to reconstruct the images. All convolutional layers use kernel size 3 with stride 1 and padding 1, and a ReLU activation follows each convolutional or transposed convolutional layer, except for the final output layer which uses a Sigmoid activation function. We list the concrete network structure in \cref{tab: cnn architecture}.
\begin{table}[H]
\label{tab: cnn architecture}
\centering
\caption{Architecture of the convolutional autoencoder used in Sec.~\ref{sec:multiballexp} (input size: $3 \times 64 \times 64$).}
\begin{tabular}{l|l|c}
\hline\hline
\textbf{Layer Type} & \textbf{Configuration} & \textbf{Output Shape} \\
\hline
Input               & RGB image              & (3, 64, 64)            \\
\hline
\multicolumn{3}{c}{\textbf{Encoder}} \\
\hline
Conv2D + ReLU       & 3 $\rightarrow$ 32, kernel=3, stride=1, pad=1   & (32, 64, 64) \\
MaxPool2D           & 2$\times$2                                      & (32, 32, 32) \\
Conv2D + ReLU       & 32 $\rightarrow$ 64, kernel=3, stride=1, pad=1  & (64, 32, 32) \\
MaxPool2D           & 2$\times$2                                      & (64, 16, 16) \\
Conv2D + ReLU       & 64 $\rightarrow$ 64, kernel=3, stride=1, pad=1  & (64, 16, 16) \\
MaxPool2D           & 2$\times$2                                      & (64, 8, 8)   \\
Conv2D + ReLU       & 64 $\rightarrow$ 128, kernel=3, stride=1, pad=1 & (128, 8, 8)  \\
MaxPool2D           & 2$\times$2                                      & (128, 4, 4)  \\
Conv2D + ReLU       & 128 $\rightarrow$ 128, kernel=3, stride=1, pad=1& (128, 4, 4)  \\
MaxPool2D           & 4$\times$4                                      & (128, 1, 1)  \\
\hline
\multicolumn{3}{c}{\textbf{Decoder}} \\
\hline
ConvTranspose2D + ReLU     & 128 $\rightarrow$ 128, kernel=2, stride=2       & (128, 2, 2)  \\
ConvTranspose2D + ReLU     & 128 $\rightarrow$ 64, kernel=2, stride=2        & (64, 4, 4)   \\
ConvTranspose2D + ReLU     & 64 $\rightarrow$ 64, kernel=2, stride=2         & (64, 8, 8)   \\
ConvTranspose2D + ReLU     & 64 $\rightarrow$ 32, kernel=2, stride=2         & (32, 16, 16) \\
ConvTranspose2D + Sigmoid  & 32 $\rightarrow$ 3, kernel=4, stride=4          & (3, 64, 64)  \\
\hline\hline
\end{tabular}
\end{table}

To evaluate the performance, we perform linear regression from each representation separately to the corresponding ground-truth ball (x,y)-position. For $\bb=2,4,6$, we provide the average $R^2$ over 5 random seeds for each ball in \cref{tab: multi_balls_2,tab: multi_balls_4,tab: multi_balls_6}. The results show that for the settings $\bb=2,4$, the position of each ball can be well recovered through pairs of representations, which is consistent with our theoretical results. However, as shown in the example images, when $\bb=6$, even though we use rejection sampling to ensure the distance between the mean values of any pair of balls is at least 0.2, due to the limited space of the frame, a high portion of occultation cannot be avoided, which also cause a performance drop as shown in \cref{tab: multi_balls_6}.
\begin{table}[H]
\caption{Results for Multiple Balls with $\bb=2$: averaged $R^2$ of linear regression between ground truth ball positions and learned representations.  }
\label{tab: multi_balls_2}
\begin{center}
\begin{small}
\begin{sc}
\begin{tabular}{ccc}
\hline\hline
    & Ball 1 & Ball 2 \\ 
\hline 
MCC & 0.958 $\pm$ 0.052 & 0.914 $\pm$ 0.148\\
\hline\hline
\end{tabular}
\end{sc}
\end{small}
\end{center}
\end{table}

\begin{table}[H]
\caption{Results for Multiple Balls with $\bb=4$: averaged $R^2$ of linear regression between ground truth ball positions and learned representations. }
\label{tab: multi_balls_4}
\begin{center}
\begin{small}
\begin{sc}
\begin{tabular}{ccccc}
\hline\hline
    & Ball 1 & Ball 2 & Ball 3 & Ball 4 \\ 
\hline 
MCC & 0.953 $\pm$ 0.033 & 0.853 $\pm$ 0.194 & 0.932 $\pm$ 0.041 & 0.880 $\pm$ 0.137\\
\hline\hline
\end{tabular}
\end{sc}
\end{small}
\end{center}
\end{table}

\begin{table}[H]
\caption{Results for Multiple Balls with $\bb=6$: averaged $R^2$ of linear regression between ground truth ball positions and learned representations. }
\label{tab: multi_balls_6}
\begin{center}
\begin{small}
\begin{sc}
\begin{tabular}{ccccccc}
\hline\hline
    & Ball 1 & Ball 2 & Ball 3 \\
\hline 
MCC & 0.743 $\pm$ 0.106 & 0.105 $\pm$ 0.046 & 0.293 $\pm$ 0.229 \\
\hline\hline
    & Ball 4 & Ball 5 & Ball 6 \\ 
\hline 
MCC & 0.407 $\pm$ 0.375 & 0.474 $\pm$ 0.183 & 0.595 $\pm$ 0.241\\
\hline\hline
\end{tabular}
\end{sc}
\end{small}
\end{center}
\end{table}

\subsection{Identify $x-$ and $y-$positions with fined-grained masks}
we can define finer-grained sparsity patterns. 
In particular, our method can disentangle the  $x-$ and $y-$positions of each ball, if the dataset allows freezing $x-$ and $y-$positions individually. 

We ran experiments on 5 random seeds with 2 balls (in total 4 latent variables), and got an average MCC = $0.97\pm0.03$. In \cref{tab: R2 fined grained}, we show that the $R^2$ of linear regression between ground truth ball positions $\bb_1^x$, $\bb_1^y$, $\bb_2^x$, $\bb_2^y$, and learned representations $\hat{\zb}_1$,...,$\hat{\zb}_4$. As we can see, both $x-$ and $y-$ can be disentangled well.

\begin{table}[H]
\centering
\label{tab: R2 fined grained}
\caption{Example results of $\hat{z}_i$ assignments across blocks}
\begin{tabular}{|c|cccc|}
\hline
 & $\bb_1^x$ & $\bb_1^y$ & $\bb_2^x$ & $\bb_2^y$ \\
\hline
$\hat{\zb}_1$ & 0.03 & 0.00 & \textbf{0.98} & 0.00 \\
$\hat{\zb}_2$ & \textbf{0.95} & 0.01 & 0.01 & 0.00 \\
$\hat{\zb}_3$ & 0.00 & 0.00 & 0.00 & \textbf{0.97} \\
$\hat{\zb}_4$ & 0.02 & \textbf{0.99} & 0.00 & 0.01 \\
\hline
\end{tabular}
\end{table}

\subsection{Compute information}
\label{subapp: compute}
The experiments in \cref{sec:numericalexp} are implemented on NVIDIA A100 GPU. The experiments in \cref{sec:multiballexp} on multiple balls are implemented on NVIDIA A40 GPU. Each job used 1 GPU per node and 20 GB of memory.

\section{Discussion on Assumptions}
\label{app: assump}
In this section, we assess the applicability and robustness of our assumptions and discuss to what extent they hold or may fail in practical scenarios. We begin with empirical evidence on violations of Gaussianity and piecewise affine mixing, followed by a closer examination of each theoretical assumption, supported by proofs and illustrative examples.

\subsection{Parametric assumptions: Gaussian latent and piecewise affine mixing}

\label{app:misspecified}

Mixture models capture a significantly broader range of latent structures than standard single-component (e.g., standard Gaussian) priors used in many prior works (e.g. VAE and $\beta$ VAE). In fact, mixture models, especially GMMs, are commonly used to approximate real-world latent spaces in applications. Our use of a degenerate mixture is especially relevant in high-dimensional sparse regimes, where not all latent variables are active in every sample.

The piecewise affine mixing function assumption aligns with a broad class of practical transformations (e.g., ReLU networks). Moreover, it can approximate any diffeomorphism functions up to arbitrary precision given a sufficient number of pieces.

In addition, to test the robustness of our method, we report some results for violations of these assumptions in \cref{tab: numerical miss}. We experiment with the violation of 1) the Gaussian assumption (replaced by Exponential and Gumbel), and 2) the piecewise affine (replace LeakyReLU by 1. smooth LeakyReLU 2. Sigmoid ). Except for the Sigmoid setting, we did not observe a performance drop, indicating the robustness of our method under certain violations of our assumptions.

\begin{table}[H]
\caption{Results for the numerical experiments (Stage 2) on misspecified settings.}
\label{tab: numerical miss}
\begin{center}
\begin{small}
\begin{sc}
\begin{tabular}{cccccccc}
\hline
$n$ & $k$ & $m$ & $\rho$ 
& \multicolumn{4}{c}{\textbf{MCC}} \\ 
 &  &  &  
 & \textbf{Exponential} 
 & \textbf{Gumbel} 
 & \textbf{Smooth LeakyReLU} 
 & \textbf{Sigmoid} \\ \hline
\textbf{5}    & 1 & 10  & 50 \%  &  0.92$\pm$0.06 &  0.95$\pm$0.01 &    0.97$\pm$0.01  & 0.36$\pm$0.02 \\
\textbf{10}   & 1 & 10  & 50 \%  & 0.96$\pm$0.01 &  0.97$\pm$0.01 &  0.98$\pm$0.01 & 0.32$\pm$0.06 \\
\textbf{20}   & 1 & 10  & 50 \%  &  0.94$\pm$0.02 &  0.95$\pm$0.02 &   0.94$\pm$0.09 & 0.29$\pm$0.04 \\
\hline
10    & \textbf{0} & 10  & 50 \% &  0.95$\pm$0.01 &  0.97$\pm$0.01 &  0.95$\pm$0.08  & 0.37$\pm$0.06\\
10    & \textbf{1} & 10 & 50 \%  & 0.96$\pm$0.01 &  0.97$\pm$0.01 &  0.98$\pm$0.01 & 0.32$\pm$0.06\\
10   &  \textbf{2} & 10 & 50 \%  &  0.93$\pm$0.04 &  0.96$\pm$0.02 &  0.97$\pm$0.01 & 0.29$\pm$0.06 \\
10   &  \textbf{3} & 10 & 50 \%  &  0.91$\pm$0.04 &  0.91$\pm$0.06 &  0.93$\pm$0.03 & 0.36$\pm$0.09\\
\hline
10   &  1 & \textbf{3}  & 50 \%  &  0.98$\pm$0.01 &  0.99$\pm$0.01 &  0.99$\pm$0.00 & 0.60$\pm$0.03\\
10   &  1 & \textbf{10}  & 50 \% & 0.96$\pm$0.01 &  0.97$\pm$0.01 & 0.98$\pm$0.01  & 0.32$\pm$0.06 \\
10   &  1 & \textbf{20}  & 50 \% &  0.91$\pm$0.04 &  0.94$\pm$0.02 & 0.97$\pm$0.01 & 0.30$\pm$0.08\\
\hline
10    & 1 & 10 & \textbf{1\text{var}}  &  0.66$\pm$0.08 & 0.59$\pm$0.07 & 0.70 $\pm$0.07 & 0.36$\pm$0.01 \\
10   &  1 & 10 & \textbf{50 \%}   & 0.96$\pm$0.01 &  0.97$\pm$0.01 &  0.98$\pm$0.01 & 0.32$\pm$0.06\\
10   &  1 & 10 & \textbf{75 \%}  &  0.92$\pm$0.04 &  0.94$\pm$0.04 &      0.95 $\pm$0.03  & 0.32$\pm$0.07 \\

\hline
\end{tabular}
\end{sc}
\end{small}
\end{center}
\end{table}

\subsection{\cref{ass: no equ norm}: Genericity of pdGMMs}

\label{discussion:genericity}

In this section, we clarify the genericity of \cref{ass: no equ norm}. We begin by providing a proof that the set of parameters violating this assumption is a measure-zero subset of the parameter space under consideration, i.e., even when we consider only covariance matrices that are degenerate. Then, we introduce an example in two-dimensional space to support and illustrate the intuition behind it. We also extend this example to higher dimensions and conduct an ablation study to show the necessity of this assumption. Finally, we discuss the applicability of this assumption in practice.

 \assequalnorm*
 
\begin{lemma}

Consider a pdGMM in reduced form $\Zb \sim \sum_{j=1}^J \lambda_j N(\mub_j, \Sigmab_j)$.
The set of parameters $(\lambda_j, \mub_j, \Sigmab_j), J\in [J]$ violating \cref{ass: no equ norm} is measure-zero in the parameter space under consideration, i.e., when we consider $\Sigmab_j$ to be degenerate covariance matrices.
\end{lemma}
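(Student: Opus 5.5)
The plan is to parameterize the degenerate covariance matrices explicitly, so that ``measure zero'' has a concrete meaning, and then show that a violation of \cref{ass: no equ norm} forces a nontrivial polynomial (or real-analytic) equation on the parameters. Concretely, fix the rank pattern $k_j=\rank(\Sigmab_j)$. We write $\Sigmab_j=A_jA_j^T$ with $A_j\in\RR^{n\times k_j}$ of full column rank. The parameter space under consideration is then the open set of tuples $(\lambda_j,\mub_j,A_j)_{j\in[J]}$ with $\lambda$ in the simplex and each $A_j$ of full column rank; it is an open subset of a Euclidean space, and we use Lebesgue measure on it. Since the theorem only needs a finite union of measure-zero sets, it suffices to treat a fixed rank pattern and a fixed subset $\Jcal_k^0\subseteq\Jcal_k$ with $|\Jcal_k^0|\ge 2$ and $\cap_{j\in\Jcal_k^0}\Zcal_j\neq\emptyset$. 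Within that subset, it suffices to fix a pair $i\neq j$, because a violation means that \emph{every} point of the intersection has equal Mahalanobis distance for some pair. That statement needs care, but a violation certainly implies that, for a fixed pair, the equality holds on a relatively open subset of the intersection. This follows from a finite-union/Baire argument: the intersection is covered by finitely many closed sets, one per pair, so one of them has nonempty relative interior.

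For points $\zb\in\Zcal_i$, we write $\zb=\mub_i+A_i\vepsb$. The Mahalanobis distance with the Moore--Penrose inverse is then $\|\vepsb\|^2=\|A_i^+(\zb-\mub_i)\|^2$, which is a quadratic polynomial in $\zb$ whose coefficients are rational functions of the parameters (with $A_i^+=(A_i^TA_i)^{-1}A_i^T$). Define $q_{ij}(\zb):=\|A_i^+(\zb-\mub_i)\|^2-\|A_j^+(\zb-\mub_j)\|^2$. A violation for the pair means $q_{ij}$ vanishes on a relatively open subset of the affine space $\Zcal_i\cap\Zcal_j$. Because $q_{ij}$ is a polynomial, it then vanishes identically on $\Zcal_i\cap\Zcal_j$. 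We split into two cases:
\begin{itemize}
\item \emph{Case (a):} $\Zcal_i=\Zcal_j$. Both distances are positive definite quadratics on the same $k$-dimensional affine space. Identical vanishing forces equality of the quadratic forms restricted there, i.e.\ $\Sigmab_i^+=\Sigmab_j^+$ on the common column space, and equality of the centers, i.e.\ $\mub_i-\mub_j\in$ a null direction. Together these give $(\mub_i,\Sigmab_i)=(\mub_j,\Sigmab_j)$ up to the projection of the means. The projection issue is harmless: the support determines $\mub$ only modulo $\col(\Sigmab)$, but two Gaussians with the same support, the same covariance, and means whose difference is orthogonal within the support are identical measures, so this case contradicts reduced form. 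The situation where $\mub_i,\mub_j$ differ only in a direction along the support is exactly where the centers must coincide.
\item \emph{Case (b):} $\Zcal_i\neq\Zcal_j$. Here the intersection is a proper affine subspace. Restricted to it, the vanishing of $q_{ij}$ imposes at least one nontrivial polynomial identity in the parameters, for example the constant term at a base point of the intersection, namely the difference of squared distances at that point.
\end{itemize}

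The main step is to show that, in the generic case (b), the set of parameters on which this identity holds is a proper algebraic (or analytic) subvariety of parameter space, and hence has measure zero by \citet{caron2005zero} or \citet{mityagin2015zero}. To do this, I would clear denominators, writing $A_i^+$ via $\operatorname{adj}(A_i^TA_i)/\det(A_i^TA_i)$, to get a genuine polynomial $P(\mub_i,\mub_j,A_i,A_j)$ vanishing on the violating set. It is then enough to exhibit one parameter point where $P\neq 0$. The obstacle is that the intersection point itself depends on the parameters. To handle this, I would parameterize the condition that the intersection is nonempty explicitly. Since $\mub_i-\mub_j\in\col(A_i)+\col(A_j)$ is itself a positive-codimension or open condition, I would work on the set where it holds, writing $\mub_j=\mub_i+A_i\ab-A_j\bb$ with free $(\ab,\bb)$, so that $\zb_0=\mub_i+A_i\ab$ lies in the intersection. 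Then $q_{ij}(\zb_0)=\|\ab\|^2-\|\bb\|^2+(\text{terms from null components})$, and this is nonzero at, e.g., $\ab\neq 0$, $\bb=0$. This witness shows the polynomial is not identically zero on each irreducible stratum, which gives measure zero relative to the natural measure on that stratum. The delicate part I expect is justifying ``measure zero'' when the intersection-nonempty locus is itself lower dimensional. I would resolve this by declaring the measure to be Lebesgue on the free coordinates $(\mub_i,A_i,A_j,\ab,\bb)$, which is the natural parameterization of that locus.
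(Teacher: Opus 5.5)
Your strategy is sound, and on one point more careful than the paper's, but as written the step from a violation to your polynomial fails when $|\Jcal_k^0|\ge 3$.

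A violation for $\Jcal_k^0$ only constrains points of $I:=\bigcap_{l\in\Jcal_k^0}\Zcal_l$. Your finite-union argument therefore shows that $q_{ij}$ vanishes on $I$, not on $\Zcal_i\cap\Zcal_j$. These sets differ in general. For example, take $n=3$, $k=2$ and three generic planes through a common point: $I$ is that point, while $\Zcal_i\cap\Zcal_j$ is a line. Your witness point $\zb_0=\mub_i+A_i\ab$ lies only in $\Zcal_i\cap\Zcal_j$. A violation therefore does not force $\|\ab\|^2-\|\bb\|^2=0$ there, and the set you prove null is strictly smaller than the violating set.

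The same conflation breaks Case (a) when a third component cuts $\Zcal_i=\Zcal_j$ down to some $I\subsetneq\Zcal_i$. Separately, your sentence about means whose difference is ``orthogonal within the support'' is not right. Equal supports force $\mub_i-\mub_j\in\mathrm{col}(\Sigmab_i)$, and vanishing of the linear part of $q_{ij}$ on $\Zcal_i$ then gives $\mub_i=\mub_j$ directly.

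The repair is to parameterize a common point of \emph{all} components in $\Jcal_k^0$. Use free coordinates $(\zb_0,(A_l,\bb_l)_{l\in\Jcal_k^0})$ with $\mub_l:=\zb_0-A_l\bb_l$. Then $\zb_0\in I$, and because $A_l^{+}A_l=I_k$, the Mahalanobis distance of $\zb_0$ to component $l$ is exactly $\|\bb_l\|$. There are no extra ``null-component'' terms, so no clearing of denominators or witness is needed. A violation needs a tie at every point of $I$, in particular at $\zb_0$. Hence the violating set pulls back into $\bigcup_{i\neq j}\{\|\bb_i\|^2=\|\bb_j\|^2\}$, which is null, and both the Baire step and the case split become unnecessary.

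You also do not need to redefine the measure. Let $m=|\Jcal_k^0|$. For $(A_l)$ in general position, the map $(\zb_0,(\bb_l)_l)\mapsto(\mub_l)_l$ has surjective differential exactly when $mk\ge(m-1)n$, so null sets correspond under it by a local Fubini argument. When $mk<(m-1)n$, the locus $\{I\neq\emptyset\}$ is already Lebesgue-null in $(\mub_l,A_l)$-space.

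Repaired this way, your route is more rigorous than the paper's. The paper fixes $\zb_0$ independently of the parameters and applies \citet{mityagin2015zero} to a difference of norms over all of $(A_i,A_j,\mub_i,\mub_j)$-space, with witness $\mub_i=\zb_0$. It also replaces ``at every $\zb_0$ some pair ties'' by ``some pair ties at every $\zb_0$'' without justification. For $k<n$, the constraint $\zb_0\in\Zcal_i\cap\Zcal_j$ is itself a null condition in that space. This is exactly the parameter dependence of the intersection point that you correctly identified as the obstacle.
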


\begin{proof}

A violation of \cref{ass: no equ norm} means for some $0<k\leq n$, there exists an subset $\Jcal_k^0\in \Jcal_k$, such that all points $\zb_0\in \cap_{j\in \Jcal_k^0}\Zcal_j$, $\exists \, i,j\in \Jcal_k^0$ such that the Mahalonobis distance are the same, i.e.,
$\sqrt{(\zb_0-\mub_i)^T\Sigmab_i^{-1}(\zb_0-\mub_i)}=\sqrt{(\zb_0-\mub_j)^T\Sigmab_j^{-1}(\zb_0-\mub_j)}$.

To constrain the parameter space of $\Sigmab_i$ and $\Sigmab_j$ to the ones with rank $k$, we parameterize $\Sigmab_i=A_iA_i^T$ and $\Sigmab_j=A_jA_j^T$ via full-column-rank decompositions with $\rank(A_i)=\rank(A_j)=k$. Therefore, we have 
\begin{align}
\label{equ: maha to norm2}
    &\sqrt{(\zb_0-\mub_i)^T\Sigmab_i^{-1}(\zb_0-\mub_i)}=||(A_i^TA_i)^{-1}A_i^T(\zb_0-\mub_i)||\\
    &\sqrt{(\zb_0-\mub_j)^T\Sigmab_j^{-1}(\zb_0-\mub_j)}=||(A_j^TA_j)^{-1}A_j^T(\zb_0-\mub_j)||. 
\end{align}

In this way, we can define the subset of parameter space that violates the assumption for each pair of components $i$ and $j$.
\begin{equation}
\label{equ: vio}
    \Vcal^{\Jcal_k^0}_{ij}:=\{(A_i, A_j, \mub_i, \mub_j): ||(A_i^TA_i)^{-1}A_i^T(\zb_0-\mub_i)||=||(A_j^TA_j)^{-1}A_j^T(\zb_0-\mub_j)|| \quad \forall \,\zb_0\in \cap_{j\in \Jcal_k^0}\Zcal_j\}
\end{equation}

The total set of violating parameters for the pdGMM is $\Vcal=\cup_{k=1}^n\cup_{\Jcal_k^0 \subseteq \Jcal_k} \cup_{i,j\in \Jcal_k^0} \Vcal^{\Jcal_k^0}_{ij}$. Since a finite union of measure zero sets is still measure zero, if we can show that the measure of $\Vcal^{\Jcal_k^0}_{ij}$ is 0 for all pairs of components $i, j \in \Jcal_k^0$, the proof is complete.

Let $\fb^{\zb_0}_{ij}:=||(A_i^TA_i)^{-1}A_i^T(\zb_0-\mub_i)||-||(A_j^TA_j)^{-1}A_j^T(\zb_0-\mub_j)||$.
By \citet{mityagin2015zero}, a real analytic function from %
$\mathbb{R}^{m}$ 
to $\mathbb{R}$ for $m\geq 1$ is either zero everywhere or non-zero almost everywhere. We can apply this result to our setting and show that $\fb^{\zb_0}_{ij}$, which is a real analytic function in $(A_i, A_j, \mub_i, \mub_j)$, is non-zero almost everywhere, which means that for each point $\zb_0$ we can distinguish each pair of components. This means that once we can find one nonzero solution, we can conclude that the function is all nonzero almost everywhere. We construct the nonzero solution in this way: we can put the first term in the definition of $\fb^{\zb_0}_{ij}$ to zero by taking $\mub_i=\zb_0$ and using an arbitrary full-column matrix $A_i$. For the second term, we want to show that there exists a non-zero solution. We fix $A_j$ and show that if $\zb_0-\mub_j \not\in Im((A_j^TA_j)^{-1}A_j^T)$, then $\fb^{\zb_0}_{ij}$ is nonzero.

Thus, we conclude based on the result by \citet{mityagin2015zero} that $\fb^{\zb_0}_{ij}$ is non-zero almost everywhere for any $\zb_0$. This means that the set of violations 
$\Vcal^{\Jcal_k^0}_{ij}$ is a measure-zero set for any pair $i,j$ and therefore its finite union $\Vcal$ is a measure-zero set.
\end{proof}

\begin{example}
\label{exa: assum generic}
Consider $n=2$ and $J=2$ with components $N(\mub_1, \Sigmab_1)$ and $N(\mub_2, \Sigmab_2)$. Suppose
\begin{align}
\mub_1=\begin{pmatrix}
0 \\
0
\end{pmatrix}, \quad 
\Sigmab_1=
\begin{pmatrix}
1 & 0 \\
0 & 0
\end{pmatrix}.
\end{align}
There are only two ways to decompose $\Sigmab_1$ into the form $A_1A_1^T$: $A_1=(1,0)^T$ or $(-1,0)^T$. Let $\Sigmab_2=A_2A_2^T$ where $A_2=(a_1, a_2)^T$, and $\mub_2=(\mu_1,\mu_2)$. Consider $\vepsb_1=\vepsb \in \RR$, then $\vepsb_2=\vepsb$ or $-\vepsb$.
To violate
Ass.~\ref{ass: no equ norm}, $(\mub_2, \Sigmab_2)$ can must satisfy:

\begin{align}
\begin{cases}
\vepsb=a_1\vepsb+\mu_1\\
0=a_2\vepsb+\mu_2
\end{cases}
\text{ or }
\begin{cases}
-\vepsb=a_1\vepsb+\mu_1\\
0=a_2\vepsb+\mu_2
\end{cases}
\text{ or }
\begin{cases}
\vepsb=-a_1\vepsb+\mu_1\\
0=-a_2\vepsb+\mu_2
\end{cases}
\text{ or }
\begin{cases}
-\vepsb=-a_1\vepsb+\mu_1\\
0=-a_2\vepsb+\mu_2
\end{cases}
\end{align}
By solving the equations, we can conclude if and only if $(a_1-1)\mu_2=a_2 \mu_1$ or $|(a_1-1)\mu_2|=|a_2 \mu_1|$, Ass.~\ref{ass: no equ norm} is not satisfied, which is a measure-zero subset of parameter space.
\end{example}

We extend this example to dimensions $n=2,5,10$ and show the results of our method on it in \cref{tab: abla1}. As you can see, these results are much worse than the comparable results in \cref{tab: numerical} for $n=5,10$, where the $R^2$ was $\sim 0.93$, and the $MCC$ was $\sim 0.96$, showing why this assumption is needed.

\begin{table}[h]
\label{tab: abla1}
\centering
\caption{Average $R^2$ and MCC over 5 random seeds}
\begin{tabular}{lccc}
\hline
$n$ & 2 & 5 & 10 \\
\hline
$R^2$ & 0.87 & 0.70 & 0.64 \\
MCC & 0.92 & 0.63 & 0.47 \\
\hline
\end{tabular}
\end{table}

In summary, we have shown that the violations of this assumption occur only on a measure-zero subset, supporting its genericity. The two-dimensional example further illustrates this result by explicitly characterizing the exceptional cases, which correspond to highly fine-tuned parameter choices. Together, these arguments support the statement that \cref{ass: no equ norm} is a mild and reasonable assumption, holding almost everywhere in real-world settings.

\subsection{\cref{ass: common basis}: Common basis and translation vector}

\label{discussion:common basis}

This assumption states that the supports of all components should intersect and that a shared basis can represent them. This always holds for non-degenerate GMMs. For degenerate components, a violation can happen because the components are not intersecting, or if their individual bases do not form a global shared basis.

To show the necessity of \cref{ass: common basis}, we provide an example that violates this assumption, showing that such cases only local affine identifiability can be achieved, whereas global identifiability fails.
\begin{example}
\label{exam: local affine to global}
    Consider $n = 2$ with $\Zcal_1 := \textnormal{span}\{\eb^1\}$, $\Zcal_2 := \textnormal{span}\{\eb^2\}$ and $\Zcal_3 = \textnormal{span}\{\eb^1 + \eb^2\}$. In this case, all three subspaces are one-dimensional and the issue is that the family of vectors $\{\eb^1, \eb^2, \eb^1 + \eb^2\}$ is linearly dependent and thus do not form a basis. One can construct a function $f:\cup_{j=1}^3\Zcal_j \rightarrow \RR$ that is affine on each $\Zcal_j$ individually, but not on the union: 
    \begin{align}
        f(\zb) := \begin{cases}
0,\ \text{if}\ \zb \in \Zcal_1\cup \Zcal_2\\
\zb_1,\ \text{if}\ \zb \in \Zcal_3
\end{cases} \,.
\end{align}
Clearly this function is not affine on $\cup_{j=1}^3\Zcal_j$ since 
\begin{align}
    f(\eb^1 + \eb^2) = 1 \not= 0 = f(\eb^1) + f(\eb^2) \,.
\end{align}
\end{example}

Intuitively, \cref{ass: common basis} requires that there exists a unique shared basis for all Gaussian components. We conduct an ablation study to show the necessity. To show a violation of this, we generate the data with a mixture of two different bases: a standard basis and one with rotations. As shown in Tab.~\ref{tab: abla2}, these results are worse than the comparable results in Tab.~\ref{tab: numerical}, which shows why this assumption is needed for AT.

\begin{table}[h]
\centering
\caption{Average $R^2$ over 5 random seeds}
\begin{tabular}{lcccc}
\hline
$k$ & 0 & 1 & 2 & 3 \\
\hline
$R^2$ & 0.83 & 0.82 & 0.84 & 0.85 \\
\hline
\end{tabular}
\label{tab: abla2}
\end{table}

One concrete setting in which this assumption holds is the partially observable scenario in \citet{xu2024sparsity}. In that setting, there are $n$ causal variables, but only a subset of them are “active” in each observation sample, e.g., they are captured in an image, while others are “inactive”. The sets of “active” and “inactive” variables can change for each observation. For example, one could consider a camera that captures objects moving in and out of frame, or are occluded by another object. We consider the “inactive” objects as contributing to the observation with a specific constant value, i.e., their inputs to the mixing functions are constant (e.g. $\mathbf{0}$). In this setting, the intersection of the supports is the vector of all the constant values for each variable and since we are masking the inputs for the “inactive” variables, the individual basis of the supports will always be orthogonal and hence form a well-defined global shared basis, satisfying \cref{ass: common basis}.

Other possible applications could include other high-dimensional applications that rely on sparse representations, such as language models \citep{cunningham2023sparse,gao2024scaling,marks2024sparse}, where we also have “active” and “inactive” concepts.

If it does not hold strictly in a setting, e.g., it only holds for a subset of components, for this subset we can still achieve the same result as \cref{thm:global-affine-identi}, but for the other components we will learn a local linear identified representation individually as stated in \cref{thm:affine-identi-comp}, but not a global one.

\subsection{\cref{ass: standard common basis}a: Common Standard Basis}

\label{discussion:standard common basisa}

The only difference between the Common Standard Basis Assumption (\cref{ass: standard common basis} (a)) and \cref{ass: common basis} is the interception term, which can be canceled out easily by shifting the modeling by a constant in practice. 

Table~\ref{tab: numerical} already shows violations of Ass.~\ref{ass: standard common basis}a when $\delta$ (the masking value) and 
$\theta$ (the rotation of the standard basis) are non-zero, showcasing a decrease in performance when the assumption does not hold.

\subsection{\cref{ass: standard common basis}b: Sufficient Support Basis Index Variability}

\label{discussion:standard common basisb}

The Sufficient Support Basis Index Variability Assumption (\cref{ass: standard common basis} (b)), it is more likely to hold when the sparsity patterns across samples are sufficiently diverse. For example, in the partially observable CRL setting, this would mean that two variables are not always “inactive” and “active” at the same time, since we would otherwise not be able to disentangle them. 

Even if this assumption does not hold strictly in a setting, our framework still supports a weaker form of identifiability: block-wise identifiability, which allows disentanglement across blocks of latent variables, even if variables within each block may remain entangled. This relaxation still enables meaningful interpretability and structure in learned representations.

We ran experiments with different blocks of variables that were always active/inactive together. We use our basic setup: $n=10$, $k=1$, $m=10$, $\rho=50$, $\delta=0$, $\theta=0$ with three types of block to see whether the performance varies with different block sizes: $[[\zb_0, \zb_1], [\zb_2, \zb_3], [\zb_4, \zb_5], [\zb_6, \zb_7], [\zb_8, \zb_9]]$, 
$[[\zb_0], [\zb_1],[\zb_2], [\zb_3, \zb_4, \zb_5], [\zb_6, \zb_7, \zb_8, \zb_9]]$,
$[[\zb_0], [\zb_1],[\zb_2, \zb_3], [\zb_4, \zb_5, \zb_6, \zb_7, \zb_8, \zb_9]]$, and we show $R^2$ of the linear regression between each block and learned representations $\hat{\zb}_1, \dots, \hat{\zb}_10$. The results show that even though the variables are entangled within blocks, disentanglement can be achieved across blocks.

\begin{table}[H]
\centering
\label{tab: block_1}
\caption{Example results of $\hat{\zb}_i$ assignments across blocks}
\begin{tabular}{|c|ccccc|}
\hline
 & $[\zb_0, \zb_1]$ & $[\zb_2, \zb_3]$ & $[\zb_4, \zb_5]$ & $[\zb_6, \zb_7]$ & $[\zb_8, \zb_9]$ \\
\hline
$\hat{\zb}_0$ & \textbf{0.95}  & 0.01 & 0.01 & 0.02 & 0.01 \\
$\hat{\zb}_1$ & \textbf{0.90}  & 0.00 & 0.00 & 0.00 & 0.00  \\
$\hat{\zb}_2$ & 0.01 & \textbf{0.84} & 0.01 & 0.01 & 0.00\\
$\hat{\zb}_3$ & 0.00 & \textbf{0.84} & 0.00 & 0.00 & 0.01\\
$\hat{\zb}_4$ & 0.00 & 0.00 & \textbf{0.91} & 0.00 & 0.00 \\
$\hat{\zb}_5$ & 0.00 & 0.00 & \textbf{0.91} & 0.00 & 0.01 \\
$\hat{\zb}_6$ & 0.00 & 0.00 & 0.00 & \textbf{0.86} & 0.00 \\
$\hat{\zb}_7$ & 0.01 & 0.01 & 0.00 & \textbf{0.96} & 0.01\\
$\hat{\zb}_8$ & 0.01 & 0.00 & 0.00 & 0.01 & \textbf{0.91} \\
$\hat{\zb}_9$ & 0.00 & 0.00 & 0.01 & 0.02 & \textbf{0.88}\\
\hline
\end{tabular}
\end{table}

\begin{table}[H]
\centering
\label{tab: block_2}
\caption{Example results of $\hat{\zb}_i$ assignments across blocks}
\begin{tabular}{|c|ccccc|}
\hline
 & $[\zb_0]$& $[\zb_1]$& $[\zb_2]$ & $[\zb_3, \zb_4, \zb_5]$ & $[\zb_6, \zb_7, \zb_8, \zb_9]$ \\
\hline
$\hat{\zb}_0$ & \textbf{0.95} & 0.00 & 0.00 & 0.01 & 0.03 \\
$\hat{\zb}_1$ & 0.00& \textbf{0.89} & 0.00 & 0.00 & 0.00  \\
$\hat{\zb}_2$ & 0.01 & 0.00 & \textbf{0.92} & 0.01 & 0.02 \\
$\hat{\zb}_3$ & 0.00 & 0.00 & 0.00 & \textbf{0.91} & 0.02 \\
$\hat{\zb}_4$ & 0.01 & 0.00 & 0.00 & \textbf{0.94} & 0.01 \\
$\hat{\zb}_5$ & 0.00 & 0.00 & 0.00 & \textbf{0.92} & 0.00 \\
$\hat{\zb}_6$ & 0.00 & 0.00 & 0.00 & 0.00 & \textbf{0.89} \\
$\hat{\zb}_7$ & 0.01 & 0.00 & 0.01 & 0.01 & \textbf{0.96} \\
$\hat{\zb}_8$ & 0.01 & 0.00 & 0.00 & 0.01 & \textbf{0.92} \\
$\hat{\zb}_9$ & 0.00 & 0.00 & 0.00 & 0.02 & \textbf{0.93} \\
\hline
\end{tabular}
\end{table}

\begin{table}[H]
\centering
\label{tab: block_3}
\caption{Example results of $\hat{\zb}_i$ assignments across blocks}
\begin{tabular}{|c|cccc|}
\hline
 & $[\zb_0]$  & $[\zb_1]$  & $[\zb_2, \zb_3]$& $[\zb_4, \zb_5, \zb_6, \zb_7, \zb_8, \zb_9]$ \\
\hline
$\hat{\zb}_0$ & \textbf{0.92}  & 0.00 & 0.01 & 0.05 \\
$\hat{\zb}_1$ & 0.00  & \textbf{0.89} & 0.00 & 0.01 \\
$\hat{\zb}_2$ & 0.01 & 0.00 & \textbf{0.91} & 0.04\\
$\hat{\zb}_3$ & 0.00 & 0.00 & \textbf{0.88} & 0.01 \\
$\hat{\zb}_4$ & 0.01 & 0.00 & 0.00 & \textbf{0.94}  \\
$\hat{\zb}_5$ & 0.01 & 0.00 & 0.01 & \textbf{0.91}   \\
$\hat{\zb}_6$ & 0.02 & 0.00 & 0.00 & \textbf{0.92}  \\
$\hat{\zb}_7$ & 0.00 & 0.00 & 0.00 & \textbf{0.88} \\
$\hat{\zb}_8$ & 0.01 & 0.00 & 0.00 & \textbf{0.92}  \\
$\hat{\zb}_9$ & 0.00 & 0.00 & 0.00 & \textbf{0.82} \\
\hline
\end{tabular}
\end{table}

\end{document}